\documentclass{article}

\usepackage{microtype}
\usepackage{graphicx}
\usepackage{subcaption}
\usepackage{booktabs} 

\usepackage[utf8]{inputenc} 
\usepackage[T1]{fontenc}    
\usepackage{url}            
\usepackage{booktabs}       
\usepackage{amsfonts}       
\usepackage{nicefrac}       
\usepackage{microtype}      
\usepackage{xcolor}         
\usepackage{amssymb}
\usepackage{natbib}
\usepackage{color}
\usepackage{amsfonts}       
\usepackage{url}
\usepackage{amsmath}
\usepackage{amsthm}
\usepackage{wrapfig}
\usepackage{mathtools}
\usepackage{tikz}
\usepackage{algorithmic}
\usepackage{footmisc}
\usepackage{bibentry}
\nobibliography*
\usepackage{physics}
\usepackage{mathrsfs}
\mathtoolsset{showonlyrefs}
\usepackage{etoolbox}
\usepackage{makecell}
\usepackage{enumerate}
\usepackage{lastpage}
\usepackage{float}
\usepackage{thmtools, thm-restate}
\definecolor{dark-blue}{RGB}{0,0,191}

\newcommand{\explain}[1]{\tag*{(#1)}}
\newcommand{\explaind}[2]{\makebox[0.85\textwidth]{$\displaystyle#1$\hfill(#2)}}

\newcommand{\red}[1]{{\red{magenta}{#1}}}

\newcommand{\fW}{\mathcal{W}}

\newcommand{\fO}{\mathcal{O}}

\newcommand{\R}{\mathbb{R}}
\newcommand{\N}{\mathbb{N}}
\newcommand{\E}{\mathbb{E}}

\newcommand{\dsf}{\delimitershortfall=-1pt }

\newcommand*{\ymax}{y^{\text{max}}}

\usepackage{hyperref}

\usepackage[accepted]{icml2026}

\usepackage{amsmath}
\usepackage{amssymb}
\usepackage{mathtools}
\usepackage{amsthm}

\usepackage[capitalize,noabbrev]{cleveref}

\theoremstyle{plain}
\newtheorem{theorem}{Theorem}[section]
\newtheorem*{xtheorem}{Theorem}

\newtheorem{lemma}[theorem]{Lemma}

\newtheorem{corollary}[theorem]{Corollary}
\theoremstyle{definition}
\newtheorem{definition}[theorem]{Definition}
\newtheorem{assumption}[theorem]{Assumption}
\newtheorem{xassumption}{Assumption}[section]
\theoremstyle{remark}
\newtheorem{remark}[theorem]{Remark}

\usepackage[textsize=tiny]{todonotes}

\icmltitlerunning{Two-Timescale Markovian Stochastic Approximation}

\begin{document}

\twocolumn[
  \icmltitle{Convergence of Two-Timescale Markovian Stochastic Approximations \\ with Applications in Reinforcement Learning}



  \icmlsetsymbol{equal}{*}

  \begin{icmlauthorlist}
    \icmlauthor{Vagul Mahadevan}{Metron}
    \icmlauthor{Claire Chen}{Caltech}
    \icmlauthor{Shuze Daniel Liu}{MIT,Purdue}
    \icmlauthor{Shangtong Zhang}{UVA}
  \end{icmlauthorlist}

  \icmlaffiliation{UVA}{Department of Computer Science, University of Virginia, Charlottesville, VA, USA}
  \icmlaffiliation{Metron}{Metron, Reston, VA, USA}
  \icmlaffiliation{MIT}{Data Science Lab, MIT, Cambridge, MA, USA}
  \icmlaffiliation{Purdue}{Mitch Daniels School of Business, Purdue University, West Lafayette, IN, USA}
  \icmlaffiliation{Caltech}{Division Office Physics, Math and Astronomy, California Institute of Technology, Pasadena, CA, USA}

  \icmlcorrespondingauthor{\\Vagul Mahadevan}{vagulgm@gmail.com}
  \icmlcorrespondingauthor{\\Shangtong Zhang}{shangtong@virginia.edu}

  \icmlkeywords{Two-timescale Stochastic Approximation, Stochastic Approximation, ODE Method, Reinforcement Learning, Machine Learning, ICML}

  \vskip 0.3in
]



\printAffiliationsAndNotice{}  

\begin{abstract}
This work studies the convergence of two-timescale stochastic approximations (SA),
a class of iterative algorithms that update two sets of parameters in fast and slow timescales respectively.
Notable examples of two-timescale SA in reinforcement learning (RL) include temporal difference learning with gradient correction (TDC) and actor-critic methods.
Previously, the stability (i.e., boundedness) and convergence of two-timescale SA were only established under i.i.d. noise. 
This work instead establishes the stability and convergence of two-timescale SA under Markovian noise,
a setup that is more realistic in RL.
Notably,
we do not need to use any projection operator and the noise does not need to live in a compact space.
Our key technical novelty is to control the fast timescale parameter with the running max of the slow timescale parameter,
instead of with the current slow timescale parameter, as most prior works do.
As a key application, we establish the first almost sure convergence of TDC with eligibility traces under off-policy learning with linear function approximation.
\end{abstract}

\section{Introduction}

Stochastic approximation (SA, \citet{robbins1951stochastic,benveniste1990MP,kushner2003stochastic,borkar2009stochastic}) concerns algorithms that recursively and randomly update a vector of parameters. 
Prominent SA algorithms include stochastic gradient descent \citep{kiefer1952Stochastic} and temporal difference learning \citep{sutton1988learning}. 
Many SA algorithms involve computations of two sets of parameters.
One set of parameters, referred to as the fast timescale, is updated with a large step size.
The other set of parameters, referred to as the slow timescale, is updated with a small step size.
A key characteristic is that the smaller step size is asymptotically negligible compared with the larger step size; i.e., asymptotically, the fast timescale parameter is updated as if the slow timescale parameter were static.
The two-timescale structure can therefore be regarded as a special implementation of a nested loop,
where the fast timescale corresponds to the inner loop while the slow timescale corresponds to the outer loop.
The advantage of the two-timescale implementation over a nested loop implementation is that the slow timescale (the outer loop) is continually updated and does not need to wait for the completion of the fast timescale (the inner loop).
These SA algorithms are called \emph{two-timescale} SA \citep{borkar1997stochastic} and are common in 
reinforcement learning (RL, \citet{sutton2018reinforcement}). For example, in actor-critic algorithms \citep{sutton2000policy,konda2002thesis}, the fast timescale is the critic that estimates the value of the agent's policy, and the slow timescale is the actor, which is the policy itself.

\begin{table*}[ht]
\caption{
    Comparison between our work and representative works regarding stability and almost sure convergence of SA.
    Single-timescale is a special case of two-timescale, and i.i.d. noise is a special case of Markovian noise.
    By ``coupled dynamics,'' we mean the fast and slow parameters impact each other, rather than only the slow parameters affecting the fast parameters. Notably, our work is the only one that applies to TDC($\lambda$).
    }
\label{table: contributions}
\begin{center}
\begin{tabular}{p{3cm}ccccccc}
\toprule
& \shortstack{Single\\Timescale} 
& \shortstack{Two\\Timescale} 
& \shortstack{Coupled\\Dynamics} 
& \shortstack{No\\Projection} 
& \shortstack{i.i.d.\\Noise} 
& \shortstack{Markovian\\Noise} 
& \shortstack{Noncompact\\Noise} \\
\midrule
\citet{borkar2009stochastic} & \checkmark &  &  & \checkmark & \checkmark &  &  \\ \hline
\addlinespace[4pt]
\citet{lakshminarayanan2017stability} & \checkmark & \checkmark & \checkmark & \checkmark & \checkmark &  &  \\ \hline
\addlinespace[4pt]
\citet{karmakar2021stochastic} & \checkmark & \checkmark &  & \checkmark & \checkmark & \checkmark &  \\ \hline
\addlinespace[4pt]
\citet{liu2025ode} & \checkmark &  &  & \checkmark & \checkmark & \checkmark & \checkmark \\ \hline
\addlinespace[4pt]
\citet{panda2025two} & \checkmark & \checkmark & \checkmark &  & \checkmark & \checkmark & \checkmark \\ \hline
\addlinespace[4pt]
\textbf{Our Work} & \checkmark & \checkmark & \checkmark & \checkmark & \checkmark & \checkmark & \checkmark \\
\bottomrule
\end{tabular}
\end{center}
\end{table*}

A fundamental challenge of the theoretical analysis of two-timescale SA is showing almost sure convergence.
The seminal work \citet{borkar1997stochastic} establishes almost sure convergence when (i) the noise in the two-timescale SA is i.i.d. and (ii) the stability of the two-timescale SA is assumed.
Here, by stability, we mean the almost sure boundedness of the parameters. 
It is well known that stability is usually a prerequisite for establishing almost sure convergence with ODE approaches \citep{kushner2003stochastic,borkar2009stochastic} and in many works on two-timescale SA, e.g., \citet{karmakar2018two}, stability is simply assumed.
However, for many RL algorithms instantiating two-timescale SA,
the noise is a Markov chain, and the stability cannot be assumed directly. Efforts have been made to relax both (i) and (ii).
For example, \citet{lakshminarayanan2017stability} establish the stability of two-timescale SA with i.i.d. noise.
\citet{borkar2025ode,liu2025ode} establish the stability of single-timescale SA with Markovian noise.
However, to our knowledge, no prior work can establish the stability (and thus convergence) of two-timescale SA with Markovian noise,
a gap that this work closes.
Prior attempts include \citet{karmakar2021stochastic,panda2025two}.
However, 
\citet{karmakar2021stochastic} assume that the two sets of parameters are not coupled and the Markovian noise evolves in a compact space.
\citet{panda2025two} use additional projections to enforce boundedness (and thus stability).
Consequently,
none of them applies to representative two-timescale RL algorithms such as temporal difference learning with gradient correction and eligibility traces (TDC($\lambda$), \citet{sutton2009fast,yu2017convergence}).
By contrast,
our result requires milder assumptions,
as can be seen in Table~\ref{table: contributions}.
To summarize, this work makes the following three contributions:
\begin{enumerate}
    \item We establish the stability of two-timescale SA with Markovian noise under mild assumptions (Theorem \ref{thm: z stability}) without using additional projections. 
    \item Building on the stability, we further establish the almost sure convergence (Theorem \ref{cor: convergence full}), where the Markovian noise can evolve in a possibly unbounded and uncountable space.
    \item Applying our new theoretical results, we establish the first almost sure convergence of TDC($\lambda$) under off-policy learning and linear function approximation (Theorem \ref{thm: tdc converge}).
\end{enumerate}

\subsection{Technical Innovation}

This work establishes the stability of two-timescale SA by extending the ODE$@\infty$ approach in \citet{borkar2000ode}.
Essentially,
this work can be viewed as a combination of \citet{lakshminarayanan2017stability} and \citet{liu2025ode}.
Techniques from \citet{lakshminarayanan2017stability} are used to address (i) and techniques from \citet{liu2025ode} are used to address (ii).
However, to enable this seemingly straightforward combination, we need to introduce a major methodological innovation. Namely, we connect the parameters in the two timescales by \emph{bounding the parameter in the fast timescale with the running max of the parameter in the slow timescale}.
This methodology is a novel contribution to the literature since prior works on two-timescale SA, such as \citet{kushner2003stochastic,mokkadem2006convergence,lakshminarayanan2017stability,dalal2018finite,yaji2020stochastic,doan2021finite,zeng2024two,panda2025two}, either use stronger assumptions or show that the parameter in one timescale is bounded by the parameter in the other timescale at the same step.
These requirements cannot be established when combining \citet{lakshminarayanan2017stability} and \citet{liu2025ode}.
By contrast,
we show that a weaker condition is sufficient for combining \citet{lakshminarayanan2017stability} and \citet{liu2025ode}.
Specifically,
the parameter in the fast timescale does not need to be bounded by the parameter in the slow timescale at the same step.
Instead, 
it only needs to be bounded by the maximal parameter in the slow timescale seen so far,
as the rescaling scheme in \citet{lakshminarayanan2017stability}
uses the maximal scaling factor seen thus far. Precisely speaking,
the methodological innovation is Lemma \ref{thm: x stability}, which tracks the maximal slow parameter previously encountered.

\section{Background}

In this work, we study the general two-timescale SA scheme: given an initial $x_0 \in \R^{d_1}$ and $y_0 \in \R^{d_2}$, we recursively generate sequences of vectors $\qty{x_n}$ and $\qty{y_n}$ by 
\begin{align}
    x_{n+1} &= x_n + \alpha(n)H(x_n, y_n, W_{n+1}) \label{eq: x n updates} \\
    y_{n+1} &= y_n + \beta(n)G(x_n, y_n, W_{n+1}) \label{eq: y n updates} 
\end{align}
We also define for convenience the concatenation $z_n \doteq (x_n, y_n)$. Each $z_n$ is an element of $\R^{d_1+d_2}$.

In this scheme, $\qty{\alpha(n)}_{n=0}^\infty$ and $\qty{\beta(n)}_{n=0}^\infty$ are sequences of deterministic learning rates and
$\qty{W_n}_{n=1}^\infty$ is a sequence of random noise in a general space $\fW$ (not necessarily compact), while $H:\R^{d_1+d_2} \times \fW \to \R^{d_1}$ is a function that maps the current iterate $z_n$ and noise $W_{n+1}$ to the actual incremental update of $x$ (the fast timescale iterate) and $G:\R^{d_1+d_2} \times \fW \to \R^{d_2}$ maps $z_n$ and $W_{n+1}$ to the incremental update of $y$ (the slow timescale iterate).

This is a two-timescale scheme since the stepsizes satisfy $\lim_{n \rightarrow \infty} \frac{\beta(n)}{\alpha(n)} = 0$.
Thus, since we can consider $y_n$ to remain almost constant relative to $x_n$ for large $n$ in the fast timescale, the behavior of $x_n$ can be studied by analyzing the ordinary differential equation (ODE)
\begin{align}
    \dv{x(t)}{t} = h(x(t), y) \label{eq: intro ode}
\end{align}
where $y$ is a fixed constant and $h(x, y) \doteq \E[H(x, y, \omega)]$ (with respect to a specific probability distribution). In the slow timescale, we can consider $x_n$ to have already converged to a value that depends on $y_n$, so we study the ODE
\begin{align}
    \dv{y(t)}{t} = g(\lambda(y(t)), y(t)) \label{eq: intro ode y}
\end{align}
where $\lambda(y)$ denotes the equilibrium of $\eqref{eq: intro ode}$ (i.e., the $x$ such that $h(x,y)=0$ for fixed $y$) and $g(x, y) \doteq \E[G(x, y, \omega)]$. The asymptotic behavior of the discrete, stochastic iterates $\qty{z_n}$ can then be characterized by the continuous, deterministic trajectories of the ODEs \eqref{eq: intro ode} and \eqref{eq: intro ode y}. This is called the ODE method (see \citet{borkar2009stochastic} for a comprehensive background). 
For this ODE approximation to be valid, the iterates must be bounded; this is known as \textit{stability}. One needs to show 
\begin{align}
    \sup_{n} \norm{z_n} < \infty \qq{a.s.,}
\end{align}
which is widely considered challenging \citep{borkar2009stochastic}, but stability is a critical prerequisite for convergence and many works in the realm of two-timescale SA rely on it \citep{ borkar2009stochastic, karmakar2018two,borkar2025stochastic,hu2024central,borkar2025stochastic}.

\begin{figure*}[t]
\centering

\begin{tikzpicture}[
    font=\small,
    node distance=0.6cm and 0.8cm,
    outer/.style={
        draw,
        rounded corners,
        thick,
        inner sep=6pt
    },
    inner/.style={
        draw,
        rounded corners,
        inner sep=4pt,
        align=left
    },
    theorem/.style={
        draw,
        rounded corners,
        minimum width=1.7cm,
        minimum height=0.8cm,
        align=center
    }
]
\node[inner] (assump) {
\hyperref[appendix: assumptions]{Assumptions}\\
Fix a Sample Path \\
Assume for Contradiction: \\

\makebox[7cc][c]{$\sup_n \|z_n\| = \infty$}
};
\node[outer, right=of assump] (fastouter) {

\begin{minipage}{10.5cm}
\centering
\hyperref[sec main proof stability]{\textbf{Fast Timescale Stability Analysis}}

\vspace{0.4em}

\begin{tikzpicture}[node distance=0.5cm and 0.5cm]

\node[inner] (defs) {
\hyperref[sec fast ts setup]{Definitions}\\
\hyperref[sec: construct sequence]{Equicontinuity}\\
\hyperref[sec: sequence property]{Diminishing Disc. Error}\\
\hyperref[sec: ode external inp]{ODEs w/ External Input}
};

\node[theorem, right=of defs] (thm41) {
\hyperref[thm: x stability]{Lemma 3.1}
};

\node[theorem, right=of thm41] (bridge) {
\hyperref[sec: bridging]{Connecting the Timescales}
};

\draw[->, thick] (defs) -- (thm41);
\draw[->, thick] (thm41) -- (bridge);

\end{tikzpicture}
\end{minipage}

};

\draw[->, thick] (assump) -- (fastouter);


\node[outer, below=of assump] (slowouter) {

\begin{minipage}{6.2cm}
\centering
\hyperref[appendix: proof z stab]{\textbf{Slow Timescale Stability Analysis}}

\vspace{0.4em}

\begin{tikzpicture}[node distance=0.5cm and 0.5cm]

\node[inner] (slowdefs) {
\hyperref[sec slow timescale setup]{Definitions}\\
\hyperref[sec slow iterates]{Equicontinuity}\\
\hyperref[sec: slow dim disc]{Diminishing Disc. Error}\\
\hyperref[sec: finishing the thing]{Contradiction}
};

\node[theorem, right=of slowdefs] (thm42) {
\hyperref[thm: z stability]{Theorem 3.2}
};

\draw[->, thick] (slowdefs) -- (thm42);

\end{tikzpicture}
\end{minipage}

};

\draw[->, thick] (fastouter) -- (slowouter);


\node[outer, right=of slowouter] (convouter) {

\begin{minipage}{5.0cm}
\centering
\hyperref[appendix: convergence proof full]{\textbf{Two-Timescale Convergence}}

\vspace{0.4em}

\begin{tikzpicture}[node distance=0.5cm and 0.5cm]

\node[inner] (combine) {
\hyperref[sec fast convergence]{Fast Timescale}\\
\hyperref[sec slow convergence]{Slow Timescale}
};

\node[theorem, right=of combine] (thm5) {
\hyperref[cor: convergence full]{Theorem 3.3}
};

\draw[->, thick] (combine) -- (thm5);

\end{tikzpicture}
\end{minipage}

};

\draw[->, thick] (slowouter) -- (convouter);


\node[outer, right=of convouter] (tlcouter) {

\begin{minipage}{3.0cm}
\centering
\hyperref[appendix: tdc]{\textbf{TDC($\lambda$) Convergence}}

\vspace{0.5em}

\begin{tikzpicture}

\node[theorem] (thm 7) {
\hyperref[thm: tdc converge]{Theorem 7.2}
};
\end{tikzpicture}

\end{minipage}

};

\draw[->, thick] (convouter) -- (tlcouter);

\end{tikzpicture}

\caption{High-level proof roadmap of this work.}
\label{fig:proof-roadmap}

\end{figure*}

\section{Main Results}
\label{sec main result}

We begin by giving a structural overview of the rest of the paper (see Figure \ref{fig:proof-roadmap} for a high-level visual). In this section, we will describe our assumptions and present our main results on stability and convergence of two-timescale algorithms. In Section \ref{sec x stab}, we give a detailed overview of the proof of Lemma \ref{thm: x stability}, which states that the current fast timescale iterate is bounded by the largest slow timescale iterate seen previously. This proof is a fast timescale analysis. In Section \ref{sec slow stability proof}, we give an overview of the proof of Theorem \ref{thm: z stability}, which establishes stability (boundedness) over all the iterates with probability one. This is a slow timescale analysis. We then utilize the stability results to prove Theorem \ref{cor: convergence full}, which shows the convergence of two-timescale SA algorithms under Markovian noise, in Section \ref{sec: cor proof}. Finally, in Section \ref{sec: tdc proof}, we apply our results to prove the convergence of the off-policy RL algorithm TDC($\lambda$).

\subsection{Assumptions} We now briefly discuss our assumptions (the full details are in Appendix \ref{appendix: assumptions}). 

In \textbf{Assumption \ref{assumption: stationary distribution}}, we assume that our Markov chain $\qty{W_n}$ has a unique stationary distribution, a standard assumption in RL. 

In \textbf{Assumption \ref{assumption: learning ratios}}, we also assume our learning rates follow standard SA conditions. Importantly, since we are analyzing a two-timescale scheme, we have $\lim_{n \rightarrow \infty} \frac{\beta(n)}{\alpha(n)} = 0$. 

Then, in \textbf{Assumption \ref{assumption: H c H infty}}, we make assumptions about the functions $H$ and $G$.
For any $c \in [1, \infty)$, define
    \begin{align}
        H_c(x, y, w) \doteq \frac{H(cx, cy, w)}{c},  G_c(x, y, w) \doteq \frac{G(cx, cy, w)}{c}.
    \end{align}
The functions $H_c$ and $G_c$ are rescaled versions of the functions $H$ and $G$ and will be used to construct rescaled iterates, a key technique in the ODE method (see, e.g., \citet{borkar2000ode,borkar2009stochastic}).
Just as in those works, we need the existence of some sort of limiting functions $H_{\infty}$ and $G_{\infty}$  for $H_c$ and $G_c$ respectively, when $c \to \infty$. 

In \textbf{Assumption \ref{assumption: H Lipschitz}}, we take $H_c,G_c,H_{\infty},G_{\infty}$ to be Lipschitz, so that their growth is well-characterized and to guarantee existence and uniqueness of the ODEs of interest, and define their respective expectations $h_c,g_c,h_{\infty},g_{\infty}$ with respect to the stationary distribution of the Markov chain. 

Then, in \textbf{Assumption \ref{assumption: lim h,g uniformly convergent}}, we define the limiting ODEs, which are central to the ODE method: 
    \begin{align}
        \frac{dx(t)}{dt} = h_{\infty}(x(t), y), \quad \frac{dy(t)}{dt} = g_{\infty}(\lambda_{\infty}(y(t)), y(t)) 
    \end{align}
    which have the unique globally asymptotically stable equilibria  $\lambda_{\infty}(y)$ (where $\lambda_{\infty}: \mathbb{R}^{d_2} \rightarrow \mathbb{R}^{d_1}$ is a homogeneous Lipschitz map with $\lambda_{\infty}(0) = 0$) and 0 respectively. We define similar objects for $h$ and $g$.

    Finally, in \textbf{Assumption \ref{assumption: lln}}, we take a natural assumption showing that over long timescales, the average of certain functions is close to the expectation. This regularity assumption about the noise is central to the averaging technique of \citet{kushner2003stochastic} that we apply.

    These assumptions are well-supported in literature that studies stability of SA \citep{borkar2009stochastic,lakshminarayanan2017stability,liu2025ode}. We can easily verify that important RL algorithms like TDC($\lambda$) satisfy them rather than simply assuming that stability holds, so our results are widely applicable to general methods.

    \subsection{Results}
    
    We present our first result, which is a \textit{key methodological innovation}: 
  
\begin{lemma}[Max Slow Iterate Controls Fast Iterate]
    \label{thm: x stability}
    Let the Assumptions in Appendix \ref{appendix: assumptions} hold. 
    Then, there exists a sample path dependent constant $K$ such that for all $n$,
    \begin{align}
    \norm{x_n} \leq K (1 + \norm{\ymax_n}) \quad a.s. 
    \end{align}
    where $\ymax_n \doteq y_m$ such that $m = \arg\max_{i \leq n} \norm{y_i}$ (with ties broken arbitrarily).
\end{lemma}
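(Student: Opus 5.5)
We argue by contradiction on a fixed sample path, adapting the ODE$@\infty$ rescaling argument of \citet{borkar2000ode,lakshminarayanan2017stability} on the fast timescale with the Markovian-noise averaging of \citet{liu2025ode}. Suppose no such $K$ exists. Then along the path the ratio $\norm{x_n}/(1+\norm{\ymax_n})$ is unbounded, so $\sup_n\norm{x_n}=\infty$. Divide the fast-timescale clock $t(n)=\sum_{i<n}\alpha(i)$ into windows $[T_k,T_{k+1})$ of length roughly $T$. The horizon $T$ is chosen from the global asymptotic stability of $\dot x=h_\infty(x,y)$ toward $\lambda_\infty(y)$, uniformly over $\norm{y}\le 1$ and over initial conditions in the unit ball (using Lipschitz continuity and homogeneity of $\lambda_\infty$).

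At the start $n_k$ of each window we rescale by
\[
r_k \doteq \max\{1,\norm{x_{n_k}},\norm{\ymax_{n_k}}\}.
\]
This is the crucial choice: the scale uses the running max of the slow iterate rather than $\norm{y_{n_k}}$, so it is monotone in the slow component. Within the window we set $\hat x_n=x_n/r_k$ and $\hat y_n=y_n/r_k$. Then $\hat y$ stays of order one on the window, and since $\beta(n)/\alpha(n)\to0$, $\hat y$ is asymptotically frozen on the fast clock. The rescaled pair follows $H_{r_k},G_{r_k}$.

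The key steps, in order, are as follows.
\begin{enumerate}
\item Show the rescaled iterates on each window are uniformly bounded. This uses Lipschitz growth of $H_c$ and a discrete Gronwall bound, as in the equicontinuity step.
\item Show that the Markovian discretization and noise error on each window vanishes as $k\to\infty$. This uses Assumption~\ref{assumption: lln} applied to $H_{r_k}$, with Lipschitz-in-$z$ control. The uniform convergence $h_c\to h_\infty$ lets us replace $h_{r_k}$ by $h_\infty$.
\item Along a subsequence with $r_k\to\infty$, apply Arzel\`a--Ascoli to the interpolated trajectories. This gives a limit solving $\dot x=h_\infty(x,\bar y)$ with $\bar y$ constant and $\norm{\bar y}\le1$.
\end{enumerate}

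By the choice of $T$, at the window's end the limit satisfies
\[
\norm{x(T)}\le \tfrac12\norm{x(0)} + C\norm{\bar y},
\]
where the second term comes from $\norm{\lambda_\infty(\bar y)}\le L\norm{\bar y}$. Transferring this to the iterates gives, for large $k$,
\[
\norm{x_{n_{k+1}}}\le \tfrac34\norm{x_{n_k}} + C'(1+\norm{\ymax_{n_{k+1}}}).
\]
Here we used $\norm{y_n}\le\norm{\ymax_{n_{k+1}}}$ for all $n$ in the window. Because $\norm{\ymax_n}$ is nondecreasing, iterating this contraction gives $\norm{x_{n_k}}\le K'(1+\norm{\ymax_{n_k}})$. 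Within a window, the Gronwall bound from step 1 extends this to all $n$ at the cost of a larger constant $K$. Windows where $r_k$ stays bounded are handled directly by the same Gronwall bound. This contradicts the assumed unboundedness of the ratio.

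The main obstacle is step 2 combined with the transfer. The noise is Markovian and noncompact, so the averaging error must be controlled uniformly over rescalings whose factor $r_k$ changes between windows. It must also be controlled while $\hat y$ drifts slightly within a window. I would first try to bound this error by the Lipschitz constant times $(1+\norm{\hat z})$, times the LLN residual of Assumption~\ref{assumption: lln}, following \citet{liu2025ode}. Monotonicity of $r_k$ through $\ymax$ keeps these residuals summable across windows.
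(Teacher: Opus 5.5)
Your route differs from the paper's. You aim for a per-window contraction $\norm{x_{n_{k+1}}}\le\frac34\norm{x_{n_k}}+C'(1+\norm{\ymax_{n_{k+1}}})$ and iterate it. The paper instead proves a no-growth lemma (Lemma~\ref{lemma: endThm1a}: if $\norm{\bar x(T_n)}>C_1(1+\norm{\bar y(T_n)})$ then $r_{n+1}=r_n$) and closes with the growth bound of Lemma~\ref{lemma: bar z bounded} and an induction.

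The gap is at ``transferring this to the iterates gives, for large $k$.'' Steps 2--3 give the comparison with $\dot x=h_\infty(x,\bar y)$ only along subsequences with $r_k\to\infty$. Replacing $h_{r_k}$ by $h_\infty$ needs $r_k$ large, and the averaging (as in the paper) is carried out after freezing at an Arzel\`a--Ascoli limit and letting $H_r\to H_\infty$. Your suggested bound ``Lipschitz constant times $(1+\norm{\hat z})$ times the LLN residual'' does not work as stated, because Assumption~\ref{assumption: lln} is pointwise in $(x,y)$ while the evaluation points $r_k\hat z_n$ move.

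Your recursion, however, must hold for \emph{every} large $k$. Your $r_k=\max\{1,\norm{x_{n_k}},\norm{\ymax_{n_k}}\}$ is not monotone, contrary to your last paragraph, because $\norm{x_{n_k}}$ can drop. So $\sup_k r_k=\infty$ does not give $r_k\to\infty$, and infinitely many windows may have moderate $r_k$, where neither the limit ODE nor a fixed Gronwall constant is available. ``Windows where $r_k$ stays bounded are handled by Gronwall'' does not close this: you need one threshold $R$, independent of $k$, above which the contraction holds. Summability of residuals across windows is irrelevant; each window's relative error only has to be eventually below $\frac14$.

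The paper sidesteps this with $r_n=\max\{1,r_{n-1},\norm{\bar z(T_n)}\}$. Once $\sup_n r_n=\infty$, $r_n\to\infty$ along every subsequence, so every subsequence has a further subsequence with vanishing discretization error, and Lemma~\ref{lemma: subsubsequence convergence} upgrades this to the whole sequence.

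You can repair your argument without monotonicity by a contradiction over ``bad'' windows. Let $C''=C+\frac14$, with $C$ from your limit inequality $\norm{x(T)}\le\frac12\norm{x(0)}+C\norm{\bar y}$. Claim: there are $R$ and $k_0$ such that every window $k\ge k_0$ with $r_k>R$ satisfies the contraction with constant $C''$. If not, extract bad windows with $r_{k_j}\to\infty$ and run steps 1--3 along them. The identity $\max\{1/r_k,\norm{x_{n_k}}/r_k,\norm{\ymax_{n_k}}/r_k\}=1$ lets you absorb the vanishing error into $\frac14(1+\norm{x_{n_k}}+\norm{\ymax_{n_k}})$, contradicting badness.

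Windows with $r_k\le R$ satisfy $\norm{x_{n_{k+1}}}\le 2AR+B$, by the growth bound $\norm{\bar z(T_{n+1})}\le A\norm{\bar z(T_n)}+B$ of Lemma~\ref{lemma: bar z bounded} and $\norm{z_{n_k}}\le 2r_k$. Then $C'=\max\{C'',2AR+B\}$ works for all $k\ge k_0$, and your iteration (using that $\norm{\ymax_n}$ is nondecreasing) yields the lemma directly, with no contradiction framing needed.

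Patched this way, your normalization gives a cleaner linear recursion than the paper's case split and induction. The cost is this extra compactness step, which the paper gets for free from monotonicity of $r_n$.
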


We discuss its proof in Section \ref{sec x stab}. Intuitively, this result tells us that the size of the fast timescale iterates is bounded by the largest slow timescale iterate seen thus far. In the long history of the ODE method, no previous work has attempted to establish and use a result bounding one timescale by the maximum of the other to tie the timescales together (see the introduction for more detail). Having established that the slow timescale iterates control the fast timescale iterates, we are able to prove stability:

\begin{theorem}[Stability of Two-Timescale Iterates]\label{thm: z stability}
    The iterates $\qty{z_n}$ are stable, i.e.,
    \begin{align}
\sup_n \norm{z_n} < \infty \quad a.s.  
\end{align}
\end{theorem}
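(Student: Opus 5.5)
The plan is to run the Borkar--Meyn ODE$@\infty$ argument on the slow timescale, using Lemma~\ref{thm: x stability} to reduce stability of $\{z_n\}$ to stability of $\{y_n\}$. Indeed, if $\sup_n \norm{y_n} < \infty$ on a sample path, then $\norm{\ymax_n}$ is bounded uniformly in $n$. Lemma~\ref{thm: x stability} then gives $\sup_n \norm{x_n} \le K(1+\sup_n\norm{y_n}) < \infty$. So it suffices to fix a sample path in the probability-one set where Lemma~\ref{thm: x stability} and Assumption~\ref{assumption: lln} hold, and to suppose for contradiction that $\sup_n \norm{y_n} = \infty$.

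\textbf{Construction.} Work on the slow clock $\tau(n) = \sum_{k<n}\beta(k)$ and partition it into intervals $[T_k, T_{k+1})$ of length roughly $T$. Here $T$ is chosen so that every trajectory of $\dot y = g_\infty(\lambda_\infty(y),y)$ started in the unit ball enters a ball of radius $1/4$ by time $T$. This is possible by global asymptotic stability of the origin, Lipschitz continuity, and homogeneity. On each interval, rescale by
\begin{align}
r_k \doteq \max\bigl(1, \norm{\ymax_{n(T_k)}}\bigr),
\end{align}
which is the \emph{running max} rather than the current iterate, so that it is compatible with Lemma~\ref{thm: x stability}. Set $\hat y_n = y_n/r_k$ and $\hat x_n = x_n/r_k$.

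\textbf{Tracking and contradiction.} By Lemma~\ref{thm: x stability}, $\norm{\hat x_n} \le 2K$ at the start of each interval, and then within the interval via the Lipschitz growth of $H_{r_k}$ and a Gronwall bound. The rescaled iterates follow updates driven by $H_{r_k}$ and $G_{r_k}$. I would show two things:
\begin{enumerate}
\item The fast rescaled component tracks $\lambda_{r_k}(\hat y)$. This is a fast-timescale argument, since $\beta/\alpha\to0$.
\item The linear interpolation of $\hat y$ on $[T_k,T_k+T]$ is uniformly bounded and equicontinuous, and its discretization and noise errors vanish.
\end{enumerate}
The noise errors are handled with the averaging assumption (Assumption~\ref{assumption: lln}) applied to the rescaled functions, as in \citet{liu2025ode}. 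Along a subsequence with $r_k\to\infty$, Arzel\`a--Ascoli and the uniform convergence $h_c\to h_\infty$, $g_c\to g_\infty$ (Assumption~\ref{assumption: lim h,g uniformly convergent}) give a limit that solves the limiting slow ODE from an initial point of norm at most $1$. Hence $\norm{\hat y}$ at the end of the interval is below $1/2$ for large $k$.

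This yields the standard dichotomy for large $k$. Either the running max did not increase over the interval, in which case $\norm{y_{n(T_{k+1})}} \le \tfrac12 r_k$. Or it increased, but only by a bounded factor, by the Gronwall bound. Iterating, once $y$ leaves a large ball it is pulled back geometrically before the running max can grow. This gives a uniform bound on the running max, contradicting $\sup_n\norm{y_n}=\infty$.

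\textbf{Main obstacle.} The delicate step is the limit identification. Because the rescaling uses the running max, $\norm{\hat y_{n(T_k)}}$ may be much less than $1$. The fast component must still be shown to equilibrate to $\lambda_\infty(\hat y)$ uniformly, which requires the external-input ODE result for $\dot x = h_\infty(x,y(t))$ with slowly varying $y(t)$. I would also need to argue the contradiction in the subcase where the running max is attained at a time strictly before $T_k$; there the geometric contraction must be measured against $r_k$ rather than $\norm{y_{n(T_k)}}$. I would handle this by tracking $\norm{\hat y}$ relative to $r_k$ and showing that $r_{k+1}=r_k$ eventually on the contradiction subsequence.
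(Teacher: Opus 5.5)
Your route is the paper's: a slow-timescale ODE$@\infty$ argument with a running-max scale factor (the paper uses $\max\{1,\norm{z^{max}_{m(T_n)}}\}$, which by Lemma~\ref{thm: x stability} is comparable to your $y$-running max), Lemma~\ref{thm: x stability} to control the fast iterate, equilibration of the rescaled fast component to $\lambda_\infty(\hat y)$, Arzel\`a--Ascoli, and identification with the limiting slow ODE.

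\textbf{The closing dichotomy does not go through as written.} With running-max rescaling, ``$\norm{\hat y}<1/2$ at the end of the interval'' holds whether or not the running max grew inside the interval. Also, ``grew by a bounded factor'' is compatible with $r_{k+1}=Ar_k$ on every interval, because a trajectory can overshoot mid-interval and still end small. So these two facts do not bound $r_k$.

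What forces $r_{k+1}=r_k$ is control of $\sup_t\norm{\hat y_k(t)}$ over a whole interval. The ingredient you never use for this is Lyapunov stability of $0$ for $\dot y=g_\infty(\lambda_\infty(y),y)$. The paper does three things:
\begin{enumerate}
\item It takes $\delta$ so that trajectories entering $B(0,\delta)$ never leave a small ball.
\item It chooses $T$ so that every trajectory from the unit ball lies in $B(0,\delta/3)$ for \emph{all} $t\ge T$, not merely enters $B(0,1/4)$ by time $T$.
\item It uses $r_{k+1}\ge r_k$ to get $\norm{\hat y_{k+1}(0)}\le\norm{\hat y_k(T_{k+1}-T_k)}<\delta$.
\end{enumerate}
The whole next interval then stays in $B(0,1/2)$, so the running max cannot increase. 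Alternatively, homogeneity gives an overshoot constant $M$ for the limiting flow on $[0,T+1]$. The ratio $\norm{y(T_k)}/r_k$ then contracts geometrically until $M$ times it is below $1$.

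Either way, this must hold for every large $k$, not ``on the contradiction subsequence'': $r_{k+1}=r_k$ along a subsequence says nothing about $\sup_k r_k$. You get the full-sequence statement because $r_k$ is monotone. Hence $\sup_k r_k=\infty$ gives $r_k\to\infty$ along every subsequence, and Lemma~\ref{lemma: subsubsequence convergence} upgrades the vanishing discretization error to the whole sequence, exactly as the paper does.

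\textbf{A second, smaller problem.} Bounding $\hat x$ inside a slow interval ``via the Lipschitz growth of $H_{r_k}$ and a Gronwall bound'' fails. A slow interval of length $T$ spans fast time $\sum\alpha(i)\to\infty$, since $\beta/\alpha\to0$, so any Gronwall factor along the fast recursion blows up.

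Instead, apply Lemma~\ref{thm: x stability} at every index, $\norm{x_n}\le K(1+\norm{\ymax_n})$. Then bound $\norm{\ymax_n}/r_k$ on the interval by a discrete Gronwall on the $y$-recursion, with $\norm{x_i}$ replaced by $K(1+\norm{\ymax_i})$. This is the paper's Lemma~\ref{lemma: slow gronwall}. It also supplies the bounded ratio between fast- and slow-timescale scale factors. That ratio is needed to transfer the fast equilibration (Lemma~\ref{lemma: close lambda}) into the slow rescaling.
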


See Section \ref{sec slow stability proof} for a discussion of the proof. Once the boundedness of the iterates is established, convergence can be shown. We can now guarantee that the iterates converge to a \textit{bounded, nonempty} invariant set as the iterates evolve in a compact set, ensuring the existence of subsequential limits.

\begin{theorem}[Convergence of Two-Timescale Updates]
\label{cor: convergence full}
Let Assumptions \ref{assumption: stationary distribution} - \ref{assumption: lln} hold. Then the iterates $\qty{x_n}$ generated by~\eqref{eq: x n updates} converge almost surely to the equilibrium depending on $\qty{y_n}$ as follows:
\begin{align}
    \lim_{n \rightarrow \infty} \norm{x_n - \lambda(y_n)} = 0
\end{align}
where $\lambda: \R^{d_2} \rightarrow \R^{d_1}$ is a Lipschitz map and $\lambda(y)$ is the globally asymptotically stable equilibrium of $\dv{x(t)}{t} = h(x(t), y)$. With this convergence established, we can then establish the main two-timescale convergence result:
\begin{align}
    \lim_{n \rightarrow \infty} \norm{z_n - (\lambda(y^*), y^*)} = 0. \label{eq: final convergence main text}
\end{align}

\end{theorem}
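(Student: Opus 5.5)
With stability (Theorem~\ref{thm: z stability}) in hand, I would follow the classical two-timescale ODE argument of \citet{borkar1997stochastic}, as in Chapter 6 of \citet{borkar2009stochastic}. The Markovian noise would be handled by the averaging (law-of-large-numbers) condition in Assumption~\ref{assumption: lln}, not by a martingale difference decomposition. Fix a sample path on which $\sup_n\norm{z_n}<\infty$ and the averaging conditions hold. All iterates then lie in a compact set $B$. On $B$, the Lipschitz property of $H,G$ gives uniform bounds on the increments in terms of a function of $W_{n+1}$ whose long-run averages are controlled.

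\textbf{Fast timescale.} Define the timescale $t(n)=\sum_{k<n}\alpha(k)$. Let $\bar z(t)$ be the piecewise-linear interpolation of $z_n$ on this timescale. I would write
\begin{align}
x_{n+1}=x_n+\alpha(n)\bigl[h(x_n,y_n)+(H(x_n,y_n,W_{n+1})-h(x_n,y_n))\bigr], \qquad y_{n+1}=y_n+\alpha(n)\bigl[\tfrac{\beta(n)}{\alpha(n)}G(x_n,y_n,W_{n+1})\bigr].
\end{align}
Since $\beta(n)/\alpha(n)\to0$ and $G$ is bounded on average over $B$, the $y$-increment is $o(1)$ in the $\alpha$-clock. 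Hence on every window $[t,t+T]$ the $y$-component is asymptotically constant.

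For the $x$-component, I would reuse the equicontinuity and ``diminishing discretization error'' estimates already developed in the stability proof, now without rescaling ($c=1$). The idea is to freeze $(x,y)$ over a window and invoke Assumption~\ref{assumption: lln} together with the Lipschitz continuity. This shows that the accumulated noise $\sum \alpha(k)(H(z_k,W_{k+1})-h(z_k))$ over $[t(n),t(n)+T]$ vanishes as $n\to\infty$. By Arzelà--Ascoli, the shifted trajectories $\bar z(t(n)+\cdot)$ have limit points. Each limit point solves $\dot x=h(x,y)$, $\dot y=0$.

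Since $\lambda(y)$ is the unique globally asymptotically stable equilibrium for each fixed $y$, the internally chain transitive invariant sets of this coupled system are contained in $\{(\lambda(y),y)\}$. I would need the asymptotic stability to be uniform over $y$ in compacts; this follows from the Lipschitz continuity of $\lambda$ and $h$, via a Lyapunov or continuity argument. The Benaïm/Borkar limit-set theorem then gives $\norm{x_n-\lambda(y_n)}\to0$.

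\textbf{Slow timescale.} Switch to the clock $s(n)=\sum_{k<n}\beta(k)$ and write
\begin{align}
y_{n+1}=y_n+\beta(n)\bigl[g(\lambda(y_n),y_n)+(g(x_n,y_n)-g(\lambda(y_n),y_n))+(G(z_n,W_{n+1})-g(z_n))\bigr].
\end{align}
The middle term is bounded by $L\norm{x_n-\lambda(y_n)}$, so it tends to zero by the fast-timescale result. The last term is again handled by the averaging assumption on $\beta$-windows. Therefore the interpolated $y$ is an asymptotic pseudo-trajectory of $\dot y=g(\lambda(y),y)$. By Assumption~\ref{assumption: lim h,g uniformly convergent}, this ODE has a unique globally asymptotically stable equilibrium $y^*$. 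Bounded iterates then give $y_n\to y^*$. Continuity of $\lambda$ combined with $\norm{x_n-\lambda(y_n)}\to0$ yields $x_n\to\lambda(y^*)$, which is the claim.

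\textbf{Main obstacle.} The hardest step is controlling the Markovian noise sums on the two different clocks when $\fW$ is noncompact. Assumption~\ref{assumption: lln} is stated for fixed parameters, while the iterates move. My first attempt would be a discretization over the compact set $B$: approximate $z_k$ by finitely many grid points, use the Lipschitz bound $\norm{H(z,w)-H(z',w)}\le L(w)\norm{z-z'}$ with $L(w)$ having controlled averages, and then apply the averaging assumption at each grid point. This is the same mechanism as in the ``diminishing discretization error'' step of the stability proof, so I expect it to transfer once the rescaling is removed.
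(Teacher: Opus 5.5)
Your proposal is correct and has the same architecture as the paper. You fix a path on which Theorem~\ref{thm: z stability} and the averaging conditions hold, run the fast timescale with $y$ frozen to get $\norm{x_n-\lambda(y_n)}\to 0$, and feed this into the slow-timescale analysis of $\dot y = g(\lambda(y),y)$. Where you differ is the limit-set tool used to close each timescale.

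\begin{itemize}
\item The paper follows Kushner and Yin. It takes the shifted interpolations $\bar z(t(n)+t)$ on all of $(-\infty,\infty)$ and repeats the equicontinuity and vanishing-discretization-error estimates for $t<0$, using the reverse-time Gronwall inequality and the convention $m(t)=0$ for $t\le 0$. It then shows directly that the limit set of the iterates is a bounded invariant set. That set is therefore contained in $\{(\lambda(y),y)\}$ on the fast timescale and equals $\{y^*\}$ on the slow one.
\item You need only forward-window estimates to make the interpolation a precompact asymptotic pseudo-trajectory. You then cite Bena\"{\i}m's theorem that its limit set is internally chain transitive.
\end{itemize}

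Your route saves the negative-time bookkeeping. In exchange it imports an external result whose conclusion, chain transitivity, is stronger than needed here. The paper's argument is self-contained.

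Two smaller differences are harmless. You route the slow-timescale bias through $g$ with the constant $L$; the paper routes it through $G(\cdot,W_{i+1})$ with $L(W_{i+1})$. Your grid argument for moving parameters also works, but only combined with the time partition you mention. This is because Assumption~\ref{assumption: lln} controls sums at a fixed parameter over contiguous index windows only.

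You can drop the step requiring asymptotic stability to be uniform over $y$ in compacts. It is not needed, and it is not automatic from continuity alone. The argument goes as follows:
\begin{itemize}
\item An internally chain transitive set is in particular compact and invariant.
\item Every point of a compact invariant set of $\dot x=h(x,y)$, $\dot y=0$ lies on a complete bounded trajectory with frozen $y$.
\item Global asymptotic stability forces such a trajectory to be the equilibrium $\lambda(y)$ itself. Otherwise its $\alpha$-limit set would be a compact invariant set that avoids $\lambda(y)$ by Lyapunov stability, yet contains it by global attraction.
\end{itemize}
This is exactly the fact the paper relies on, in the footnote to Corollary~\ref{cor: convergence fast}. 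The same argument gives $Y=\{y^*\}$ on the slow timescale.
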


The discussion of the proof is in Section \ref{sec: cor proof}. As with stability, we first establish a convergence result for the fast timescale and then the slow timescale separately. This type of result is the goal of the ODE method. In the proof of Theorem \ref{cor: convergence full}, we show convergence to bounded invariant sets, and the assumptions we place on the ODEs (in particular, the existence of unique globally asymptotically stable equilibria) characterize the invariant sets. Since $\dv{y(t)}{t} = g(\lambda(y(t)), y(t))$ has a unique globally asymptotically stable equilibrium, its invariant set is the singleton set $\qty{y^*}$, ensuring convergence to the optimal value.

\section{Bounding the Fast Timescale Iterates $x_n$} \label{sec x stab}

In this section, we will give a discussion of the proof of Lemma~\ref{thm: x stability}; see Appendix \ref{sec main proof stability} for the full details. To prove Lemma \ref{thm: x stability}, we conduct analysis in the fast timescale. This allows us to treat the slow timescale iterates as almost constant relative to the fast iterates. Note that the lemmas are derived on an arbitrary sample path $\qty{x_0,y_0, \qty{W_i}_{i=1}^\infty}$ such that 
the assumptions in Section~\ref{sec main result} hold, so we omit ``$a.s.$'' from the lemma statements for conciseness. 

\subsection{Setting up the Timeline}

We begin by splitting the positive real axis $[0, \infty)$ into chunks of length $\qty{\alpha(i)}_{i=0,1,\dots}$. 
We then collect these segments together into larger
intervals $\qty{[T_n, T_{n+1})}_{n=0,1,\dots}$, where the sequence $\qty{T_n}$ has the property that as $n$ grows large, we have $T_{n+1} - T_n \approx T$.
We define 
\begin{align}
    t(0) \doteq 0, \quad t(n) \doteq \sum_{i=0}^{n-1} \alpha(i) \qq{$n=1,2,\dots$}
\end{align}
For all $T > 0$, define 
\begin{align}
m(T) = \max\qty{{i | T \geq t(i)}} 
\end{align} as the maximal $i$ where $t(i)$ is no greater than $T$. We define 
\begin{align}
T_0 = 0, \quad T_{n+1} = t(m(T_n+T) + 1) .
\end{align}
Intuitively,
$T_{n+1}$ is a little to the right of $T_n + T$ on the real axis. As $n$ grows large,  the interval $[T_n, T_{n+1})$ contains more time steps as $\alpha(i)$ is decreasing to $0$.

\subsection{Defining the Scaled Iterates}

We start by fixing a sample path $\qty{x_0,y_0,\qty{W_i}_{i=1}^\infty}$. We take $\bar{z}(t)$ to be the piecewise constant interpolation of $z_n$ at points $\qty{t(n)}_{n=0,1,\dots}$, i.e.,
\begin{align}
\bar z(t) \doteq (\bar{x}(t),\bar y(t)) \doteq  
\begin{cases}
(x_0, y_0) & t \in [0,t(1)) \\
(x_1, y_1) & t \in [t(1),t(2)) \\
(x_2, y_2) & t \in [t(2),t(3)) \\
\ldots &
\end{cases}
\end{align}

Now we describe our rescaling of $\bar z(t)$.

\begin{definition}[Fast Timescale Rescaling] \label{definition: z tilde n main text}
$\forall n \in \N, t \in [0,T+1) $, define 
\begin{align}
r_n &\doteq \max\qty{1, r_{n-1}, \norm{\bar{z}(T_n)}}, r_0 \doteq \max \qty{1, \norm{\bar{z}(0)}}, \\ \tilde{z}_n(t)  &\doteq (\tilde{x}_n(t), \tilde{y}_n(t)) \doteq \frac{\bar{z}(T_n + t)}{r_n}.
\end{align} 
\end{definition}
This implies 
\begin{align}
\forall n \in \N, \norm{\tilde{z}_n(0) } \leq 1.
\end{align}

Through the definition, we ensure that the sequence $\qty{r_n}$ is monotonic, forcing the (to be defined) ODE discretization error to diminish over the entire sequence rather than just a subsequence, which is necessary in the two-time scale case. 

We can regard $\tilde{z}_n(t)$ as the Euler discretization of $z_n(t)$ defined below.
\begin{definition} 
    $\forall n \in \N, t\in [0,T+1)$, define $z_n(t) = (x_n(t), y_n(t))$ as the solution to the ODE system 
    \begin{align}
    \frac{d x_n(t)}{d t} = h_{r_n}(x_n(t), y_n(t)), \quad \frac{d y_n(t)}{d t} = 0 
    \end{align}
    with initial condition $z_n(0) = \tilde{z}_n(0).$
\end{definition}
    
    We want to show that the error of the discretization diminishes asymptotically.
    Precisely speaking, the discretization error is defined as $f_n(t) \doteq \tilde{z}_n(t) - z_n(t)$,
    and we want to show that $f_n(t)$ diminishes to 0 uniformly in $t$ as $n\to\infty$. So, we analyze the following three sequences of functions:
    \begin{align}
        \label{eq three seq main text}
        \qty{\tilde z_n(t)}_{n=0}^\infty, \qty{z_n(t)}_{n=0}^\infty, \qty{f_n(t)}_{n=0}^\infty.
    \end{align}
    In particular,
    we show that they are all \textit{equicontinuous in the extended sense}, as this is required to apply the Arzelà-Ascoli theorem (Appendix \ref{appendix: AA theorem}). We defer the relevant definitions, statements, and proofs to the Appendix.

\subsection{Selecting a Convergent Subsequence}

To prove stability, we need to show that $\sup_n \norm{z_n} < \infty.$ Observe the inequality $\norm{z_{m(T_n)}} = \norm{\bar{z}(T_n)}  \leq  r_n.$ Therefore, if we had $\sup_n r_n < \infty$, the result would come instantly. So, we assume that $\sup_n r_n = \infty$ to show that Lemma \ref{thm: x stability} holds even in this case. According to the Arzelà-Ascoli theorem in the extended sense (Appendix~\ref{appendix: AA theorem}), a sequence of
equicontinuous functions always has a subsequence of functions that uniformly converges to a continuous limit. We use this to identify a subsequence of interest.

\begin{lemma}\label{lemma: three convergence}
    Suppose $\sup_n r_n = \infty$.
    Take an arbitrary subsequence $\qty{n_{k,0}}_{k=0}^\infty \subseteq \qty{0, 1, 2, \dots}$. Then there is a subsequence $\qty{n_k}_{k=0}^\infty \subseteq \qty{n_{k,0}}_{k=0}^\infty$ such that there exist some continuous functions $f^{\lim}(t)$ and $\tilde{z}^{\lim}(t)$ such that $\forall t \in [0, T+1)$,
\begin{align}
\lim_{k \to \infty} f_{n_k}(t) = f^{\lim}(t), \quad \lim_{k \to \infty} \tilde{z}_{n_k}(t) =& \tilde{z}^{\lim}(t), 
\end{align}
where both convergences are uniform in $t$ on $[0, T+1)$.
Furthermore,
let $z^{\lim}(t) = (x^{\lim}(t),y^{\lim}(t))$ denote the unique solution to the ODE system 
\begin{align}
    \frac{d x^{\lim}(t)}{d t} &= h_{\infty}(x^{\lim}(t), y^{\lim}(t)), \quad \frac{d y^{\lim}(t)}{d t} = 0 
\end{align}
with the initial condition $z^{\lim}(0) = \tilde{z}_n^{\lim}(0).$
Then $\forall t \in [0, T+1)$, we have
    \begin{align}
\lim_{k \to \infty} z_{n_k}(t)   =   z^{\lim}(t),
    \end{align}
where the convergence is uniform in $t$ on $[0, T+1)$.
\end{lemma}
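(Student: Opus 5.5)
The plan is a diagonal compactness argument followed by an ODE-stability argument. The paper has already asserted (with proofs deferred to the appendix) that the three sequences in \eqref{eq three seq main text} are equicontinuous in the extended sense on $[0,T+1)$. Uniform boundedness of $\qty{\tilde z_n(0)}$ is immediate from $\norm{\tilde z_n(0)}\le 1$.

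\textbf{Step 1: extract the subsequence.} Starting from $\qty{n_{k,0}}$, I will apply the Arzelà--Ascoli theorem in the extended sense (Appendix~\ref{appendix: AA theorem}) to $\qty{\tilde z_{n_{k,0}}}$. This gives a subsequence along which $\tilde z_{n}$ converges uniformly on $[0,T+1)$ to a continuous $\tilde z^{\lim}$. I then apply the theorem once more, to $\qty{f_n}$ along that subsequence. Note that $f_n(0)=0$ for every $n$, so the family is bounded at the origin. This yields a further subsequence $\qty{n_k}$ and a continuous limit $f^{\lim}$. Since $z_n=\tilde z_n-f_n$, it follows at once that $z_{n_k}$ converges uniformly to $\tilde z^{\lim}-f^{\lim}$. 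The remaining task is to identify this limit with the solution $z^{\lim}$ of the limiting ODE.

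\textbf{Step 2: identify the limit.} Here I use $\sup_n r_n=\infty$ together with the monotonicity of $r_n$, which give $r_{n_k}\to\infty$. Writing the ODE in integral form,
\begin{align}
x_{n_k}(t)=x_{n_k}(0)+\int_0^t h_{r_{n_k}}(x_{n_k}(s),y_{n_k}(0))\,ds,\qquad y_{n_k}(t)=y_{n_k}(0),
\end{align}
I will use two facts. First, Assumption~\ref{assumption: lim h,g uniformly convergent} gives $h_c\to h_\infty$ uniformly on compacts. Second, $h_\infty$ is Lipschitz (Assumption~\ref{assumption: H Lipschitz}). Together these let me pass to the limit inside the integral, since the $z_{n_k}$ are uniformly bounded: they converge uniformly to a continuous function on a bounded interval. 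Alternatively, I can use a Grönwall comparison. I compare $z_{n_k}$ with the solution of the $h_\infty$ system started at $z_{n_k}(0)=\tilde z_{n_k}(0)$, then with the one started at $\tilde z^{\lim}(0)$. The resulting bound is
\begin{align}
\norm{x_{n_k}(t)-x^{\lim}(t)}\le \Big(\norm{\tilde z_{n_k}(0)-\tilde z^{\lim}(0)}+(T+1)\sup_{\norm{z}\le R}\norm{h_{r_{n_k}}(z)-h_\infty(z)}\Big)e^{L(T+1)},
\end{align}
where $R$ bounds all trajectories, via the uniform Lipschitz constant $L$ of $h_c$ and linear growth from $\norm{z(0)}\le 1$. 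Both terms vanish, which gives uniform convergence of $z_{n_k}$ to $z^{\lim}$. By uniqueness of limits, this also shows $\tilde z^{\lim}-f^{\lim}=z^{\lim}$.

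\textbf{Main obstacle.} I expect the main obstacle to be the uniform convergence $h_{r}\to h_\infty$ on a fixed compact ball, together with the a priori bound $R$ independent of $n$. For $R$, the uniform-in-$c$ Lipschitz constant of $h_c$ and the bound $\norm{h_c(0,0)}\le \norm{h(0,0)}$ for $c\ge1$ give linear growth, and Grönwall then bounds trajectories by $(1+C)e^{L(T+1)}$. Equicontinuity itself is assumed from the appendix; the genuinely hard part, showing $f^{\lim}\equiv 0$, is not claimed by this lemma.
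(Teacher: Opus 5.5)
Your proposal is correct and follows essentially the paper's proof: two applications of the extended Arzel\`a--Ascoli theorem (the paper extracts along $f$ first and then $\tilde z$, the reverse of your order), $r_{n_k}\to\infty$ from monotonicity of $r_n$, and a Gronwall comparison, which is exactly the paper's Lemma~\ref{lemma: z n limit}. The only cosmetic difference is the intermediate term: the paper inserts $h_{r_{n_k}}(z^{\lim}(s))$, so it needs only the bound on $z^{\lim}$ (Lemma~\ref{lemma: z lim bound}), whereas you need a uniform radius $R$ for all $z_{n_k}$, which the paper also supplies as Lemma~\ref{lemma: bound z}.
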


Its proof is in Appendix \ref{appendix: three convergence}. We will use the subsequence $\qty{n_k}$ intensively.

\subsection{Diminishing Discretization Error}

We show that $\lim_{n \to \infty} \norm{f_n(t)} = 0$ for all $t \in [0, T+1)$. We start by first proving that the discretization error diminishes along the sequence $\qty{n_k}$, i.e., that
\begin{align}
 \lim_{k \rightarrow \infty} \norm{f_{n_k}(t)} = \norm{f^{\lim}(t)} = 0.
\end{align} 
This means $\tilde{z}_{n_k}(t)$ is close to $z_{n_k}(t)$ as $k \rightarrow \infty$. From Lemma \ref{lemma: f k 0}, we have $\lim_{k \to \infty}  \norm{f_{n_k}(t)} = 0$
uniformly in $t$ on $[0, T+1)$, so the discretization error goes to 0 on $\qty{n_k}$.

Because we are working with a two-timescale scheme, convergence to 0 along a subsequence will not be sufficient. As we defined $r_n$ to be monotonic, this enabled us to chose an arbitrary subsequence $\qty{f_{n_{k,0}}}$ which itself had a subsequence $\qty{f_{n_k}}$ that converges to $0$. We can now use Lemma \ref{lemma: subsubsequence convergence} to upgrade to full sequence convergence, confirming that $\qty{f_n}$ also converges to $0$. Thus, the discretization error diminishes along the entire sequence. That is,
\begin{align}
    \lim_{k \to \infty} \norm{f_n(t)} = 0 
\end{align}
for all $t \in [0, T+1)$.

\subsection{ODEs with External Inputs.}

In the previous section, we showed that the discrete iterates generated by the two-timescale stochastic algorithm approximate the deterministic trajectories of certain ODEs. We now need to show that these trajectories of the ODEs exhibit certain behavior, including approaching their equilibria.

We present some notation concerning ODEs with external inputs. This is useful since, in the limit, we can treat the slow iterates as constant external inputs.

\begin{definition}[ODE Trajectory Notation]
    We use the notation $\eta^{y(t)}_{c}(t,x)$ to denote the solution of the ODE
\begin{align*}
    \dv{x(t)}{t} = h_c(x(t), y(t))
\end{align*}

with the initial condition $x$. Under this notation, $x_n(t)$ can be written $\eta_{r_n}^{\tilde{y}_n(0)}(t, \tilde{x}_n(0))$.
\end{definition}

The following lemma shows that within a certain amount of time, the trajectory of the ODE will be pulled close to the equilibrium determined by the external input.

 \begin{lemma}[Lemma 1 from Chapter 3.2 of \citet{borkar2009stochastic}]\label{lemma: ext inp 1}
Let $K \subset \R^{d_1}$ be compact and fix $y \in \R^{d_2}$. Given any $\epsilon > 0$, there exists a $T_{\epsilon} > 0$ such that for all initial conditions $x \in K$, we have $\eta^y_{\infty}(t,x) \in B(\lambda_{\infty}(y), \epsilon)$\footnote{$B(x, r)$ denotes the open ball centered at $x$ with radius $r$.} for all $t \geq T_{\epsilon}$. 
\end{lemma}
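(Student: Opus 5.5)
This is Lemma 1 of Chapter 3.2 in \citet{borkar2009stochastic}, adapted to the limiting field $h_\infty$. The plan is a standard compactness argument built on the global asymptotic stability of $\lambda_\infty(y)$ for the frozen-input ODE $\dot x = h_\infty(x,y)$ (Assumption~\ref{assumption: lim h,g uniformly convergent}), together with continuous dependence on initial conditions, which holds because $h_\infty(\cdot,y)$ is Lipschitz (Assumption~\ref{assumption: H Lipschitz}). Throughout, $y$ is fixed, so we are dealing with an autonomous ODE in $x$ alone.

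\textbf{Pointwise entrance time.} Global asymptotic stability has two parts. Lyapunov stability of $\lambda_\infty(y)$ gives a $\delta>0$ such that any trajectory starting in $B(\lambda_\infty(y),\delta)$ stays in $B(\lambda_\infty(y),\epsilon)$ for all future times. Global attractivity gives, for each $x\in K$, a time $\tau_x$ with $\eta^y_\infty(\tau_x,x)\in B(\lambda_\infty(y),\delta/2)$.

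\textbf{Neighborhood of each initial condition.} By Gronwall's inequality,
\begin{align}
\norm{\eta^y_\infty(\tau_x,x')-\eta^y_\infty(\tau_x,x)} \le e^{L\tau_x}\norm{x'-x},
\end{align}
where $L$ is the Lipschitz constant of $h_\infty$. So there is an open ball $U_x$ around $x$ such that for every $x'\in U_x$, $\eta^y_\infty(\tau_x,x')\in B(\lambda_\infty(y),\delta)$. By Lyapunov stability and the semigroup property, $\eta^y_\infty(t,x')\in B(\lambda_\infty(y),\epsilon)$ for all $t\ge\tau_x$.

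\textbf{Uniform time from compactness.} The sets $\{U_x\}_{x\in K}$ cover $K$. Since $K$ is compact, finitely many $U_{x_1},\dots,U_{x_N}$ suffice. Set $T_\epsilon\doteq\max_i \tau_{x_i}$. Any $x\in K$ lies in some $U_{x_i}$, and then $\eta^y_\infty(t,x)\in B(\lambda_\infty(y),\epsilon)$ for all $t\ge\tau_{x_i}$, in particular for all $t\ge T_\epsilon$.

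The main point needing care is the second step: the pointwise entrance times $\tau_x$ must be upgraded to a uniform one. The trick is to require entrance into the smaller ball $B(\lambda_\infty(y),\delta)$ rather than the $\epsilon$-ball, so that Lyapunov stability keeps the trajectory inside afterwards. Continuity in initial conditions is only needed at the single finite time $\tau_x$, where Gronwall applies directly.
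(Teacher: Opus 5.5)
Your proposal is correct and follows the paper's proof essentially step for step. Both arguments use Lyapunov stability to get $\delta$ and global attractivity for a pointwise entrance time into $B(\lambda_\infty(y),\delta/2)$. Both then apply Gronwall at that single finite time to obtain a neighborhood of initial conditions that enter $B(\lambda_\infty(y),\delta)$, and finish with a finite subcover of $K$, taking the maximal entrance time. The only cosmetic difference is that the paper uses $\epsilon/2$ where you use $\epsilon$ in the Lyapunov step.
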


The next lemma shows that if the external input is close to some constant $y$, then the trajectory remains close to the trajectory we would obtain if $y$ were the external input.

\begin{lemma}[Lemma 2 from Chapter 3.2 of \citet{borkar2009stochastic}]\label{lemma: ext inp 2}
    Let $y \in \R^{d_2}, [0, T]$ be a given time interval, and $\rho$ be a small positive constant with $y'(t) \in B(y, \rho)$. Then,
    \begin{align*}
        \norm{\eta^{y'(t)}_{c}(t,x) - \eta^{y}_{\infty}(t,x)} \leq& (L\rho + \epsilon(c))Te^{LT}.
    \end{align*}
    for any initial $x \in \R^{d_1}$ and for all $t \in [0, T]$.
\end{lemma}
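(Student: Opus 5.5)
We will prove this by a direct comparison of the two trajectories in integral form, followed by Gronwall's inequality. Here $L$ denotes a common Lipschitz constant of $h_\infty$ (and of $h_c$) from Assumption~\ref{assumption: H Lipschitz}. We take $\epsilon(c) \doteq \sup_{x,y}\norm{h_c(x,y) - h_\infty(x,y)}$, which tends to $0$ as $c\to\infty$ by Assumption~\ref{assumption: lim h,g uniformly convergent}. If that assumption only provides uniform convergence on compact sets, we take the supremum over a compact set containing both trajectories on $[0,T]$. Such a set exists because Lipschitz vector fields give trajectories on $[0,T]$ of at most linear growth, $\norm{\eta(t,x)} \le (\norm{x} + C T)e^{LT}$, uniformly in $c$ and in $y'(\cdot)\in B(y,\rho)$.

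Write $a(t) \doteq \eta^{y'(t)}_c(t,x)$ and $b(t) \doteq \eta^{y}_\infty(t,x)$. Both start at $x$, so
\begin{align}
a(t) - b(t) = \int_0^t \big[h_c(a(s), y'(s)) - h_\infty(b(s), y)\big]\, ds .
\end{align}
We split the integrand into three telescoping pieces:
\begin{align}
&\big[h_c(a(s), y'(s)) - h_\infty(a(s), y'(s))\big] + \big[h_\infty(a(s), y'(s)) - h_\infty(a(s), y)\big] \\
&\quad + \big[h_\infty(a(s), y) - h_\infty(b(s), y)\big].
\end{align}
The first piece is bounded in norm by $\epsilon(c)$. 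The second is bounded by $L\norm{y'(s)-y} \le L\rho$, using Lipschitz continuity in the second argument and $y'(s)\in B(y,\rho)$. The third is bounded by $L\norm{a(s)-b(s)}$.

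Setting $e(t) \doteq \norm{a(t)-b(t)}$, this gives for $t\in[0,T]$
\begin{align}
e(t) \le (L\rho + \epsilon(c))\,t + L\int_0^t e(s)\,ds \le (L\rho + \epsilon(c))\,T + L\int_0^t e(s)\,ds .
\end{align}
Gronwall's inequality with the constant $(L\rho+\epsilon(c))T$ then yields $e(t) \le (L\rho + \epsilon(c))\,T e^{Lt} \le (L\rho+\epsilon(c))\,T e^{LT}$, which is the claim.

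The only delicate point is the uniformity of $\epsilon(c)$. If the convergence $h_c \to h_\infty$ is globally uniform, the argument is routine. Otherwise, the a priori boundedness of both trajectories must be established before the splitting, so that $\epsilon(c)$ is taken over a fixed compact set. For that set to be independent of $x$, we would restrict to $x$ in a compact set, as is the case in the intended applications where $\norm{\tilde z_n(0)}\le 1$.
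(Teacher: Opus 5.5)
Your argument is correct and is essentially the paper's proof: write both trajectories in integral form, split the integrand with the triangle inequality, and apply Gronwall with constant $(L\rho+\epsilon(c))T$. The only difference is the order of the telescoping. The paper applies the Lipschitz bound for $h_c$ to the gap $e(s)$ and compares $h_c$ with $h_\infty$ along the fixed curve $\eta^{y}_\infty(\cdot,x)$, which depends on neither $c$ nor $y'(\cdot)$. This makes the compact set needed for $\epsilon(c)$ immediate, so your uniform-in-$c$ a priori growth bound is unnecessary there. As you note, $\epsilon(c)$ still depends on where $x$ lies, a point the paper leaves unstated.
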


\subsection{Completing the Proof.}

The most involved result left is Lemma \ref{lemma: endThm1a main text}, which tells us that if the $x$ component is too much larger than the $y$ component, the norm of the iterate at $T_{n+1}$ cannot be larger than that of the iterate at $T_n$. We use the results about ODEs with external inputs to ensure that the $x$ component of the trajectory gets pulled to the equilibrium, which is close to $0$.

\begin{lemma}[Limiting Growth of $\norm{\bar{z}(T_n)}$]\label{lemma: endThm1a main text}
There exists a $C_1 > 1$ such that if $\norm{\bar{x}(T_n)} > C_1(1+\norm{\bar{y}(T_n)})$, then $\norm{\bar{z}(T_{n+1})} \leq \norm{\bar{z}(T_n)}$. Consequently, $r_{n+1} = r_n$.
\end{lemma}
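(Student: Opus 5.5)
We argue by contradiction, working on the fixed sample path and in the regime $\sup_n r_n = \infty$; if $\sup_n r_n<\infty$, the norms $\norm{\bar z(T_n)}$ are bounded and one can take $C_1$ larger than the resulting bound, so the hypothesis can only hold for finitely many $n$ and the lemma is handled by enlarging $C_1$. Suppose no $C_1$ works. Then for every $j\in\N$ there is an index $n_j$, which can be chosen increasing with $r_{n_j}\to\infty$, such that
\begin{align}
\norm{\bar x(T_{n_j})} > j\,(1+\norm{\bar y(T_{n_j})}) \quad\text{and}\quad \norm{\bar z(T_{n_j+1})} > \norm{\bar z(T_{n_j})}.
\end{align}
Since $\norm{\bar z(T_{n_j+1})}>\norm{\bar z(T_{n_j})}$ forces $r_{n_j}$ to be attained at a time no later than $T_{n_j}$, we have $r_{n_j}\ge \norm{\bar z(T_{n_j})}$. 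First apply Lemma \ref{lemma: three convergence} to $\qty{n_j}$ to extract a further subsequence $\qty{n_k}$ along which $\tilde z_{n_k}\to\tilde z^{\lim}$ and $z_{n_k}\to z^{\lim}$ uniformly on $[0,T+1)$, with $f_{n_k}\to 0$ as shown in the previous subsection.

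Next, identify the limit's initial condition. Dividing the first inequality by $r_{n_k}$ gives $\norm{\tilde x_{n_k}(0)} > k(1/r_{n_k}+\norm{\tilde y_{n_k}(0)})$. Since $\norm{\tilde x_{n_k}(0)}\le 1$, it follows that $\tilde y_{n_k}(0)\to 0$, so $y^{\lim}\equiv 0$ and $\lambda_\infty(y^{\lim})=0$ by homogeneity. The harder point is the normalization: we need $\norm{\tilde z_{n_k}(0)} = \norm{\bar z(T_{n_k})}/r_{n_k}$ to stay bounded below. The plan is to write the contradiction hypothesis as $\norm{\bar z(T_{n_k+1})}/\norm{\bar z(T_{n_k})}>1$ and work with the ratio $\rho_k \doteq \norm{\bar z(T_{n_k})}/r_{n_k}\in(0,1]$.

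\emph{Case $\rho_k\not\to 0$.} Pass to a subsequence with $\rho_k\to\rho>0$. Then $x^{\lim}(0)$ has norm $\rho$, and $x^{\lim}(t)=\eta^0_\infty(t,x^{\lim}(0))$. By Lemma \ref{lemma: ext inp 1}, applied with compact set the closed unit ball and $\epsilon=\rho/4$, choosing $T\ge T_\epsilon$ (which is permitted because $T$ is chosen after the limiting ODE data, as in \citet{borkar2000ode}) gives $\norm{x^{\lim}(t)}<\rho/4$ for $t\in[T,T+1)$. The time $T_{n_k+1}-T_{n_k}\to T$ lies in this window for large $k$. Combining with $f_{n_k}\to 0$, $z_{n_k}\to z^{\lim}$ uniformly, and $\tilde y\to 0$ (Lemma \ref{lemma: ext inp 2} controls any drift of $y$ on the fast scale, since it is negligible), we get $\norm{\bar z(T_{n_k+1})}/r_{n_k} < \rho/2 < \norm{\bar z(T_{n_k})}/r_{n_k}$ for large $k$. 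This is a contradiction.

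\emph{Case $\rho_k\to 0$.} Here we rescale instead by $\norm{\bar z(T_{n_k})}$, which also tends to infinity because $\norm{\bar x(T_{n_k})}>k$. We redo the equicontinuity and discretization arguments of the previous subsections with this local scale. This is the step I expect to be the main obstacle, because the earlier error estimates were proved for the monotone scale $r_n$. I would first try to observe that those proofs use only that the scale diverges along the chosen subsequence and bounds the initial condition, which holds here with initial norm exactly $1$. The same contraction argument then yields $\norm{\bar z(T_{n_k+1})}<\norm{\bar z(T_{n_k})}$ eventually, again a contradiction.

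Finally, $r_{n+1}=\max\{r_n,\norm{\bar z(T_{n+1})}\}$ and $\norm{\bar z(T_{n+1})}\le\norm{\bar z(T_n)}\le r_n$, so $r_{n+1}=r_n$.
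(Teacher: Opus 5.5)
Your route differs from the paper's, and it works once one step is repaired. The paper argues directly at the monotone scale $r_n$. It fixes $T=T_{1/4}$ from Lemma~\ref{lemma: ext inp 1} (unit ball, $y=0$). It then combines Lemma~\ref{lemma: ext inp 2} with the \emph{whole-sequence} vanishing of $f_n$, which is the payoff of making $r_n$ monotone, to get $\norm{\tilde z_n(T_{n+1}-T_n)}\le 1$ for all $n$ beyond some $n_0$. Finally it takes $C_1>\max_{i\le n_0}\norm{\bar x(T_i)}$, so small $n$ are excluded vacuously.

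That argument only yields $\norm{\bar z(T_{n+1})}\le r_n$, i.e.\ $r_{n+1}=r_n$. This is all Lemma~\ref{lemma: endThm1b} uses, but it is weaker than the inequality $\norm{\bar z(T_{n+1})}\le\norm{\bar z(T_n)}$ in the statement you were given.

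Your contradiction argument with the local normalization $\norm{\bar z(T_{n_k})}$ is the classical Borkar--Meyn pattern. It needs only subsequential convergence, so it never uses monotonicity of the scale, and it does deliver the stronger inequality. The step you flag as the main obstacle is not actually an obstacle. The following ingredients use only that the scale diverges along the chosen subsequence and that the rescaled initial condition has norm at most $1$:
\begin{itemize}
\item boundedness of the rescaled iterates (discrete Gronwall),
\item equicontinuity and Arzel\`a--Ascoli,
\item the Moore--Osgood double limit,
\item the $\beta$-term estimate of Lemma~\ref{lemma: third term disc error}.
\end{itemize}
This is exactly the non-monotone, subsequential setting of \citet{liu2025ode}. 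The paper used monotonicity of $r_n$ only to upgrade convergence to the whole sequence, which your argument does not need.

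What does need fixing is your case $\rho_k\not\to 0$. Choosing $T\ge T_{\rho/4}$ is circular. The quantity $\rho$ is not limiting-ODE data: it is defined through $T_n$, $r_n$ and a subsequence extracted from them, all of which depend on $T$. There are two easy repairs.
\begin{itemize}
\item Use positive homogeneity. The identity $h_c(ax,ay)=a\,h_{ac}(x,y)$ gives $h_\infty(ax,ay)=a\,h_\infty(x,y)$ for $a>0$, hence $\eta^0_\infty(t,ax)=a\,\eta^0_\infty(t,x)$. The fixed $T=T_{1/4}$ then already gives $\norm{x^{\lim}(t)}\le\rho/4$ for $t\ge T$.
\item More simply, drop the case split and normalize by $\norm{\bar z(T_{n_k})}$ throughout. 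The limiting initial condition then has norm exactly $1$, and $T=T_{1/4}$ is the same constant the paper uses.
\end{itemize}
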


Lemma \ref{lemma: endThm1b main text} tells us that the iterate at each $T_n$ is bounded by the largest $y$ component seen so far in the sequence of times $T_n$. To do this, we use an inductive argument. Intuitively, we combine the previous lemma, which prevents $z$ from growing if the $x$ component is too much larger than the $y$ component, with Lemma \ref{lemma: bar z bounded} which prevents $z$ from growing too much within one period $T$, ensuring that $x$ can never get too much larger than $y$.

\begin{lemma}[Boundedness of $\norm{\bar{z}(T_n)}$]\label{lemma: endThm1b main text}
    There exists a constant $C_2$ such that for all $n$,
    \begin{align*}
        \norm{\bar{z}(T_n)} \leq C_1C_2(\max_{m\leq n} \norm{\bar{y}(T_m)}+1).
    \end{align*}
\end{lemma}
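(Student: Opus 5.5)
We argue by induction on $n$, combining Lemma~\ref{lemma: endThm1a main text} with the one-period growth bound of Lemma~\ref{lemma: bar z bounded}. First we fix $C_2$. Lemma~\ref{lemma: bar z bounded} should give a (sample-path dependent) constant, call it $C_2' \geq 1$, such that $\norm{\bar z(T_{n+1})} \leq C_2' \max\{1,\norm{\bar z(T_n)}\}$ for all $n$: within one period of length about $T$, the rescaled iterate $\tilde z_n$ stays bounded uniformly in $n$. We then set $C_2 \doteq \max\{2C_2', \norm{\bar z(0)}\}$, enlarged further if needed so that the base case and the two-case estimate below close. Since $C_1>1$, the base case $n=0$ holds because $\norm{\bar z(T_0)} \le C_2 \le C_1C_2(\norm{\bar y(T_0)}+1)$.

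For the inductive step, write $M_n \doteq \max_{m\le n}\norm{\bar y(T_m)}$, so that $M_n$ is nondecreasing. Assume $\norm{\bar z(T_n)}\le C_1C_2(M_n+1)$ and split on whether the hypothesis of Lemma~\ref{lemma: endThm1a main text} holds at time $T_n$.
\begin{enumerate}
\item[(a)] If $\norm{\bar x(T_n)} > C_1(1+\norm{\bar y(T_n)})$, Lemma~\ref{lemma: endThm1a main text} gives $\norm{\bar z(T_{n+1})}\le\norm{\bar z(T_n)}\le C_1C_2(M_n+1)\le C_1C_2(M_{n+1}+1)$.
\item[(b)] Otherwise $\norm{\bar z(T_n)}\le\norm{\bar x(T_n)}+\norm{\bar y(T_n)}\le C_1(1+\norm{\bar y(T_n)})+\norm{\bar y(T_n)}\le 2C_1(1+M_n)$. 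The one-period growth bound then yields $\norm{\bar z(T_{n+1})}\le C_2'\cdot 2C_1(1+M_n)\le C_1C_2(M_{n+1}+1)$ by the choice $C_2\ge 2C_2'$.
\end{enumerate}
In both cases the induction closes.

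The main obstacle is obtaining the growth bound in the multiplicative form $\norm{\bar z(T_{n+1})}\le C_2'(1+\norm{\bar z(T_n)})$ with $C_2'$ independent of $n$. Lemma~\ref{lemma: bar z bounded} controls $\sup_{t\in[0,T+1)}\norm{\tilde z_n(t)}$, which is normalized by $r_n$ rather than by $\norm{\bar z(T_n)}$. If only the $r_n$-normalized bound is available, we obtain $\norm{\bar z(T_{n+1})}\le C r_n$. To handle this, we would strengthen the induction hypothesis to the running-max form $r_n\le C_1C_2(M_n+1)$. This is natural, since $r_n$ is itself a running max and $M_n$ is monotone. Case (a) then uses the conclusion $r_{n+1}=r_n$ of Lemma~\ref{lemma: endThm1a main text}. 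In case (b) we need $r_{n+1}=\max\{r_n,\norm{\bar z(T_{n+1})}\}$ to be bounded, and for the term $\norm{\bar z(T_{n+1})}$ we should redo the Gronwall estimate from the proof of Lemma~\ref{lemma: bar z bounded} starting from $\bar z(T_n)$ itself. Lipschitzness of $H$ and $G$ (Assumption~\ref{assumption: H Lipschitz}) gives linear growth over a window of length at most $T+1$, with the Markov-noise term absorbed as in the equicontinuity argument. This yields $\norm{\bar z(T_{n+1})}\le C_2'(1+\norm{\bar z(T_n)})$ directly, after which the two-case argument goes through unchanged and gives the bound on $\norm{\bar z(T_n)}$ stated in Lemma~\ref{lemma: endThm1b main text}.
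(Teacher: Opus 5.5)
Your proposal is correct and follows the paper's proof: the same induction, split on whether $\norm{\bar x(T_n)}>C_1(1+\norm{\bar y(T_n)})$, using $r_{n+1}=r_n$ from Lemma~\ref{lemma: endThm1a main text} in one case and the one-period growth bound in the other. The paper's first case is exactly your running-max strengthening, since the appendix proof of that lemma only yields $\norm{\bar z(T_{n+1})}\le r_n$. The obstacle you anticipate does not arise, because Lemma~\ref{lemma: bar z bounded} is already stated in unnormalized form, $\norm{\bar z(T_{n+1})}\le A\norm{\bar z(T_n)}+B$, so no Gronwall estimate needs to be redone.
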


Finally, while we have ensured that for all $T_n$, $\bar{z}(T_n)$ is not too much larger than $\bar{y}(T_n)$, we need the result to also hold for the iterates that lie in between $T_n$ and $T_{n+1}$. Lemma \ref{lemma: endThm1c main text} takes care of this.

\begin{lemma}[Boundedness of $\norm{x_n}$]\label{lemma: endThm1c main text}
There exists a constant $C_3$ such that for all $n$,
\begin{align*}
    \norm{x_n} \leq C_1C_2C_3(\norm{y^{\text{max}}_n} + 1),
\end{align*}
\end{lemma}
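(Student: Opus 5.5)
We propose to extend the bound of Lemma \ref{lemma: endThm1b main text}, which holds only at the grid times $T_n$, to every index by controlling the growth of the iterates inside a single window $[T_m, T_{m+1})$. Fix an arbitrary $n$ and let $m$ be the index with $t(n) \in [T_m, T_{m+1})$, so that $x_n = \bar{x}(t(n))$ with $t(n) = T_m + s$ for some $s \in [0, T+1)$. The plan is to bound $\norm{x_n}$ in terms of $\norm{\bar{z}(T_m)}$, then use Lemma \ref{lemma: endThm1b main text} to bound $\norm{\bar{z}(T_m)}$ by $\max_{m' \le m} \norm{\bar{y}(T_{m'})} + 1$. Finally, we observe that each $\bar{y}(T_{m'})$ with $m' \le m$ equals some $y_i$ with $i \le n$, so $\norm{\bar{y}(T_{m'})} \le \norm{y^{\text{max}}_n}$.

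The first step is a within-window growth bound of the form
\begin{align*}
\sup_{t \in [0,T+1)} \norm{\bar{z}(T_m + t)} \le C_3 \max\{1, \norm{\bar{z}(T_m)}\}
\end{align*}
for a sample-path-dependent constant $C_3$ and all $m$. This is the content of Lemma \ref{lemma: bar z bounded}, already used for Lemma \ref{lemma: endThm1b main text}, and we would invoke it directly.

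If we had to reprove it, we would use the Lipschitz bound from Assumption \ref{assumption: H Lipschitz}, namely $\norm{H(z,w)} \le L(1+\norm{z})(1+\text{something in } w)$. The $w$-dependent part is controlled on the sample path through the averaging condition in Assumption \ref{assumption: lln}: sums $\sum \alpha(i)(\cdot)(W_{i+1})$ over a window of length at most $T+1$ are bounded uniformly in $m$. A discrete Gronwall argument in the rescaled variable $\tilde{z}_m(t) = \bar{z}(T_m+t)/r_m$ then gives $\sup_t \norm{\tilde{z}_m(t)} \le C$ uniformly in $m$. The same bound holds for the slow component, since $\beta(i) \le \alpha(i)$ eventually.

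The main obstacle is the normalisation. Lemma \ref{lemma: bar z bounded} naturally bounds things in terms of $r_m$, the running maximum of $\norm{\bar{z}(T_{m'})}$, rather than $\norm{\bar{z}(T_m)}$ itself. This is harmless: Lemma \ref{lemma: endThm1b main text} applies at every $m' \le m$, so
\begin{align*}
r_m \le \max\Big\{1, C_1C_2\big(\max_{m'' \le m} \norm{\bar{y}(T_{m''})} + 1\big)\Big\},
\end{align*}
using monotonicity of the inner maximum in $m'$. Combining gives $\norm{x_n} \le C_3 r_m \le C_1C_2C_3(\norm{y^{\text{max}}_n}+1)$ after enlarging $C_3$ to absorb the constant $1$, using $C_1 > 1$ and taking $C_2 \ge 1$ without loss of generality.

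The only remaining care is at indices where the Lipschitz constants of $H_c$ are not yet uniform, or where $\alpha(i)$ is still large in early steps. Finitely many indices can be absorbed by enlarging $C_3$, since each $x_n$ is finite on the fixed sample path.
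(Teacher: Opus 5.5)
Your proposal is correct and is essentially the paper's argument. Lemma~\ref{lemma: bar z bounded} gives $\norm{x_n} \le \norm{z_n} \le A\norm{\bar{z}(T_m)} + B$ for the window $[T_m, T_{m+1})$ containing $t(n)$. Lemma~\ref{lemma: endThm1b main text} then bounds $\norm{\bar{z}(T_m)}$, and each $\bar{y}(T_{m'})$ with $m' \le m$ is an earlier iterate, so it is bounded by $\norm{\ymax_n}$.

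Your detour through $r_m$ is unnecessary, since Lemma~\ref{lemma: bar z bounded} is already stated in terms of $\norm{\bar{z}(T_m)}$, but it does no harm. Your explicit check that $m(T_{m'}) \le n$ is cleaner than the paper's version, which conflates the window index with the iterate index in $\ymax_n$.
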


By setting $K$ equal to $C_1C_2C_3$, this concludes the proof of Lemma \ref{thm: x stability}.

\paragraph*{Connecting the Timescales.} We present the following result, showing that eventually, in the fast timescale, the fast iterates will track the equilibria $\lambda_{\infty}(\tilde{y}_n(t))$. In the fast timescale analysis, the slow iterates were regulated by the fact that the slow step sizes are small. However, in the slow timescale analysis, the fast step sizes can grow large. This result regulates the behavior of the fast iterates.

\begin{lemma}\label{lemma: close lambda main text}
For any $\epsilon > 0$, there is some $T_{\epsilon}$ and some $N_{\epsilon}$ such that for all $n > N_{\epsilon}$, $\norm{\tilde{x}_n(t) - \lambda_{\infty}(\tilde{y}_n(t))} \leq \epsilon$ for all $t \in [0, T_{\epsilon}]$.
\end{lemma}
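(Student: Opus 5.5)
The plan is to work on the fixed sample path, in the case $\sup_n r_n=\infty$ relevant to the stability argument. I will combine the diminishing discretization error with the two lemmas on ODEs with external inputs (Lemmas \ref{lemma: ext inp 1} and \ref{lemma: ext inp 2}).

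\emph{Step 1: uniform compactness.} First I will show that the rescaled iterates lie in a fixed compact set. We have $\norm{\tilde z_n(0)}\le 1$. By Lemma \ref{thm: x stability} and the bound of Lemma \ref{lemma: endThm1b main text}, together with the Lipschitz growth of $H_c,G_c$ and a Gronwall argument over a window of length $T+1$, we get $\sup_n\sup_{t\in[0,T+1)}\norm{\tilde z_n(t)}\le C$ for some sample-path constant $C$. Set $K_0=\{x:\norm{x}\le C\}$.

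\emph{Step 2: attraction and tracking along a single window.}
\begin{itemize}
\item Apply Lemma \ref{lemma: ext inp 1} to the limiting ODE $\dot x=h_\infty(x,y)$. Since $\lambda_\infty$ is Lipschitz and homogeneous and $y$ ranges over the compact ball of radius $C$, the standard compactness argument (continuity of the flow in $(y,x)$) lets me pick $T'_{\epsilon}$ uniformly in $y$. Then $\norm{\eta^y_\infty(t,x)-\lambda_\infty(y)}<\epsilon/4$ for all $t\ge T'_{\epsilon}$ and all $x\in K_0$, $\norm{y}\le C$.
\item Since $\beta(n)/\alpha(n)\to0$, the slow component moves by $o(1)$ over a fast window of fixed length: $\sup_{t\in[0,T+1)}\norm{\tilde y_n(t)-\tilde y_n(0)}\to0$.
\item The discretization error $f_n\to0$ uniformly.
\item $\epsilon(r_n)\to0$, since $r_n\to\infty$ monotonically.
\end{itemize}
Feeding these into Lemma \ref{lemma: ext inp 2} gives $\norm{\tilde x_n(t)-\eta^{\tilde y_n(0)}_\infty(t,\tilde x_n(0))}<\epsilon/4$ for large $n$. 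Adding the Lipschitz bound $\norm{\lambda_\infty(\tilde y_n(t))-\lambda_\infty(\tilde y_n(0))}<\epsilon/4$ shows that $\norm{\tilde x_n(t)-\lambda_\infty(\tilde y_n(t))}\le\epsilon$ for all $t\in[T'_{\epsilon},T+1)$.

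\emph{Step 3: covering the initial layer $[0,T'_{\epsilon})$ of window $n$.} Take $T\ge T'_{\epsilon}$. A time $t\in[0,T_\epsilon]$ in window $n$ corresponds to real time $T_n+t$, which is at least $T'_{\epsilon}$ past $T_{n-1}$. It is therefore already controlled by window $n-1$, provided $T_{n}-T_{n-1}+t<T+1$. Choosing $T_\epsilon$ small relative to $1$ (or enlarging the window) makes this hold. Converting between windows only introduces the factor $r_{n-1}/r_n\le 1$, and $\lambda_\infty$ is homogeneous. So $\tilde x_n(t)-\lambda_\infty(\tilde y_n(t))=\frac{r_{n-1}}{r_n}\bigl(\tilde x_{n-1}(s)-\lambda_\infty(\tilde y_{n-1}(s))\bigr)$, and this has norm at most $\epsilon$. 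This gives the claim with $N_\epsilon$ taken as the max of the thresholds above plus one.

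\emph{Main obstacle.} I expect the main difficulty to be making $T'_{\epsilon}$ uniform and justifying the uniform bound of Step 1. The uniformity of $T'_\epsilon$ over the external input $y$ needs joint continuity of the limiting flow and global asymptotic stability. Step 1 is exactly where the running-max control from Lemma \ref{thm: x stability} is essential, since it gives $\norm{\tilde x_n}$ bounded relative to $r_n$.
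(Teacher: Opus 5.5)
Your argument is sound and proves the statement as written, but it takes a different route from the paper.

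The paper transfers a single point. It uses uniform attraction (Lemma~\ref{lemma: ext inp 3}) to get $\tilde x_n$ within $\delta$ of $\lambda_\infty(\tilde y_n)$ at the end of window $n$. It then passes this to $\tilde x_{n+1}(0)$ through homogeneity and $r_n/r_{n+1}\le 1$. Finally it invokes uniform Lyapunov stability (Lemma~\ref{lemma: ext inp 4}), so the trajectory started within $\delta$ stays in the $\epsilon$-ball over the whole window, with the fast window length set to $T_\epsilon$.

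You instead transfer a whole interval. Because $\tilde z_{n-1}$ lives on $[0,T+1)$ while $T_n-T_{n-1}\to T$, its attracted tail overlaps the head of window $n$. The identity $\tilde x_n(t)-\lambda_\infty(\tilde y_n(t))=\frac{r_{n-1}}{r_n}\bigl(\tilde x_{n-1}(s)-\lambda_\infty(\tilde y_{n-1}(s))\bigr)$ then covers $[0,T_\epsilon]$ with no ``start close, stay close'' lemma; a single uniform attraction statement suffices.

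The price is that the overlap has length only about $1$, so your $T_\epsilon$ must be below $1$. Lemma~\ref{lemma: slow, x close lambda y}, by contrast, consumes the bound on an entire fast window. To recover that, lengthen the overlap rather than $T$: increasing $T$ does not help, since the excess past $T_n-T_{n-1}$ stays about one unit. Define $\tilde z_n$ on $[0,T+M)$ with $M>T'_\epsilon$; the discretization-error argument is unchanged, and your Steps~2 and~3 together then cover all of window $n$.

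Two smaller points. First, the uniform choice of $T'_\epsilon$ over $y$ does not follow from continuity of the flow and compactness alone. Continuity only controls finite horizons point by point, and a single uniform entry time is exactly what is at stake. You need the restart argument behind Lemma~\ref{lemma: ext inp 3}: Lyapunov stability at a fixed $y_0$, plus finite-horizon closeness for nearby $y$ from Lemma~\ref{lemma: ext inp 2}, iterated over chunks. Run it with initial conditions in the unit ball rather than only in $\lambda_\infty(K_y)$.

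Second, Step~1 does not need Lemma~\ref{thm: x stability}. The bound on $\tilde z_n$ is Lemma~\ref{lemma: bound z hat}, which follows from $\norm{\tilde z_n(0)}\le 1$ and discrete Gronwall. The running-max control matters in the slow-timescale analysis, not here.
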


\section{Stability of the Two-Timescale Iterates} \label{sec slow stability proof}

We give the proof of Theorem \ref{thm: z stability}, restated below:

\begin{xtheorem}[\ref{thm: z stability} Restated]
        The iterates $\qty{z_n}$ are stable, i.e.,
    \begin{align}
\sup_n \norm{z_n} < \infty \quad a.s.  
\end{align}
\end{xtheorem}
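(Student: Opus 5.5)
The plan is a slow-timescale ODE$@\infty$ argument by contradiction. We fix a sample path on which the assumptions hold and suppose $\sup_n \norm{z_n} = \infty$. By Lemma \ref{thm: x stability}, $\norm{x_n} \le K(1+\norm{\ymax_n})$, so this forces $\sup_n \norm{y_n} = \infty$. It therefore suffices to show the slow iterates stay bounded.

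\textbf{Setup.} We mirror the fast-timescale construction on the slow clock.
\begin{itemize}
\item Define $s(0)=0$ and $s(n)=\sum_{i<n}\beta(i)$, and build times $S_n$ with $S_{n+1}-S_n \approx T$ exactly as $T_n$ was built from $\alpha$.
\item Let $\hat z(t)$ be the piecewise constant interpolation of $z_n$ on the slow clock.
\item Rescale by a monotone factor $\hat r_n \doteq \max\{1,\hat r_{n-1}, \norm{\hat y(S_n)}\}$, using only the slow component so that $\norm{\hat y_n(0)}\le 1$ for $\hat y_n(t)\doteq \hat y(S_n+t)/\hat r_n$. Set $\hat x_n(t)\doteq \hat x(S_n+t)/\hat r_n$.
\end{itemize}
Lemma \ref{thm: x stability} gives $\norm{\hat x_n(t)}\le K(1/\hat r_n + \norm{\ymax}/\hat r_n)$. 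To make this useful we need $\norm{\ymax}$ up to step $m(S_n+t)$ to be $O(\hat r_n)$. For the past this holds because $\hat r_n$ is a running max. Within the window, a discrete Gronwall bound using the Lipschitz growth of $G$ and the bound on $x$ by $\ymax$ shows $\norm{\hat y(S_n+t)}\le C\hat r_n$ for $t\in[0,T+1)$ (the analogue of Lemma \ref{lemma: bar z bounded}). Hence the rescaled pair is uniformly bounded on each window.

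\textbf{Limiting ODE.} We compare $\hat y_n$ to the solution of $\dot y = g_{\hat r_n}(\lambda_{\hat r_n}(y),y)$, or more directly of $\dot y = g_{\infty}(\lambda_\infty(y),y)$, with $y(0)=\hat y_n(0)$. We decompose the increment of $\hat y_n$ into three parts:
\begin{itemize}
\item a drift term $G_{\hat r_n}(\lambda_\infty(\hat y_n),\hat y_n,W)$;
\item an error term $G_{\hat r_n}(\hat x_n,\hat y_n,W)-G_{\hat r_n}(\lambda_\infty(\hat y_n),\hat y_n,W)$;
\item a noise term from replacing $G$ by $g$, handled with Assumption \ref{assumption: lln} as in \citet{liu2025ode}.
\end{itemize}
The error term is controlled by Lipschitzness and Lemma \ref{lemma: close lambda main text}. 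That lemma says that, on the fast clock, $\tilde x_n$ tracks $\lambda_\infty(\tilde y_n)$ after a transient. Since each slow window of length $T$ corresponds to a fast-clock interval of length $\to\infty$ (because $\beta/\alpha\to 0$), the transient occupies a vanishing fraction of the window. Its contribution is $O(\epsilon)$ plus a term $\to 0$.

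Equicontinuity and Arzel\`a--Ascoli (Appendix \ref{appendix: AA theorem}) then give, along subsequences, uniform convergence of $\hat y_{n_k}$ to a solution of $\dot y = g_\infty(\lambda_\infty(y),y)$ started in the unit ball. Monotonicity of $\hat r_n$ with Lemma \ref{lemma: subsubsequence convergence} upgrades this to the full sequence.

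\textbf{Contradiction.} Since $0$ is globally asymptotically stable for the limiting ODE and $\lambda_\infty$ is homogeneous, the flow is scale-invariant. So there is $T$ such that every trajectory starting in the closed unit ball lies in $B(0,1/4)$ at time $T$. Because $\sup_n\hat r_n=\infty$, for large $n$ with $\hat r_n = \norm{\hat y(S_n)}$ large we obtain $\norm{\hat y(S_{n+1})}\le \tfrac12 \norm{\hat y(S_n)}$. This gives the standard Borkar--Meyn dichotomy: $\norm{\hat y(S_n)}$ cannot grow without bound, because every time it reaches a new running max it is halved in the next window, and within a window growth is bounded by the factor $C$. Hence $\sup_n\norm{\hat y(S_n)}<\infty$, then $\sup_n\norm{y_n}<\infty$ by the intra-window bound, and $\sup_n\norm{x_n}<\infty$ by Lemma \ref{thm: x stability}. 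This contradicts the assumption and proves the theorem.

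\textbf{Main obstacle.} The hard step is transferring Lemma \ref{lemma: close lambda main text} from the fast clock to the slow clock uniformly over a slow window. The fast rescaling uses $r_n$, which is based on $\norm{z}$, while the slow rescaling uses $\hat r_n$. These must be shown comparable via Lemma \ref{thm: x stability}: $r$ at the corresponding fast times is within a factor $K$ of $\hat r$, up to the running max. I would first try to show $\norm{\hat x(t)/\hat r_n - \lambda_\infty(\hat y(t)/\hat r_n)}\to 0$ uniformly on $[\delta, T+1)$ for each $\delta>0$, using homogeneity of $\lambda_\infty$ to move between the two scalings. The initial $\delta$-piece, where only the crude bound $K$ is available, would be absorbed as an $O(\delta)$ error.
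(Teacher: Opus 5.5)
Your architecture matches the paper's. It is a slow-clock ODE@$\infty$ argument by contradiction, with Lemma~\ref{thm: x stability} controlling $x$, Lemma~\ref{lemma: close lambda main text} transferred to the slow clock via homogeneity of $\lambda_\infty$ and a bound on the ratio of fast to slow scaling factors, Arzel\`a--Ascoli plus the sub-subsequence lemma, and a Lyapunov contradiction.

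\textbf{The gap is your choice of slow scaling factor.} You define $\hat r_n=\max\{1,\hat r_{n-1},\norm{\hat y(S_n)}\}$, which records $y$ only at the grid indices $m(S_j)$. You then assert that $\norm{\ymax}$ up to step $m(S_n)$ is $O(\hat r_n)$ ``because $\hat r_n$ is a running max.'' That is false for this $\hat r_n$.
\begin{itemize}
\item Lemma~\ref{thm: x stability} bounds $x_i$ by the maximum over \emph{all} past iterates, including intra-window peaks of earlier windows.
\item The natural a priori control on those peaks is the Gronwall bound of Lemma~\ref{lemma: slow gronwall}, $\norm{\ymax_l}\le A\norm{\ymax_{m(S_j)}}+B$. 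Its right-hand side is again the all-iterate maximum.
\item Writing $M_j\doteq\norm{\ymax_{m(S_j)}}$, this only gives $M_{j+1}\le AM_j+B$, which does not yield $M_n=O(\hat r_n)$.
\end{itemize}
Consequently you lose two things. You lose the uniform bound on $\hat x_n$ that equicontinuity and Arzel\`a--Ascoli require. You also lose the bound on the ratio of the fast factor $r^\alpha$ to $\hat r_n$ that your homogeneity transfer requires. The fast factor samples $\norm{z}$ on a grid that is asymptotically dense in slow time, so it sees exactly the intra-window peaks your $\hat r_n$ misses.

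\textbf{How the paper avoids this.} It takes the slow factor to be the running max over all iterates, $r_n=\max\{1,\norm{z^{max}_{m(T_n)}}\}$, and explicitly flags this as the device linking the timescales. Lemma~\ref{lemma: slow gronwall} then bounds the fast factor at any fast time inside slow window $n$ by $Cr_n+D$, so the ratio is at most $2C$ for large $n$.

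This repair, however, invalidates your endpoint-only contradiction. Once the factor includes intra-window iterates, showing $\norm{\hat y(S_{n+1})}\le\tfrac12\hat r_n$ no longer stops it from growing. Global asymptotic stability does not make $\norm{y(t)}$ monotone, so a limiting trajectory started in the unit ball may exceed norm $1$ before decaying. You need the paper's two-step ending:
\begin{itemize}
\item Use the previous window's endpoint together with $r_n\ge r_{n-1}$ to get $\norm{\tilde y_n(0)}<\delta$.
\item Use Lyapunov stability (equivalently, a uniform overshoot constant from your scale-invariance remark) to keep $\tilde y_n(t)$, and through tracking $\tilde x_n(t)$, inside $B(0,\tfrac12)$ on the whole window. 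Then $r_{n+1}=r_n$ for all large $n$.
\end{itemize}
With these two changes your argument coincides with the paper's. Your $[\delta,T+1)$ transient splitting also becomes unnecessary. Lemma~\ref{lemma: close lambda main text} already holds on every fast window for large $n$, and the bounded ratio transfers it to all of $[0,T]$.
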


 We now conduct analysis in the slow timescale, but the structure is very similar to the fast timescale analysis, so we only highlight the key differences. Note that we are still working with the same sample path $\qty{x_0,y_0, \qty{W_i}_{i=1}^\infty}$ from the last section.

We again split the positive real axis $[0, \infty)$, but this time into chunks of length $\qty{\beta(i)}_{i=0,1,\dots}$ instead of $\qty{\alpha(i)}_{i=0,1,\dots}$. We then redefine the scaled iterates for the slow timescale. This time the scaling factor is defined differently, so that it will be at least as large as the largest iterate seen so far:
\begin{definition}[Slow Timescale Rescaling]
   \begin{align}
r_n \doteq \max\qty{1, \norm{z^{max}_{m(T_n)}}} \label{def: slow r n main text}
\end{align}  
where $z^{max}_n \doteq z_m$ such that $m = \arg\max_{i \leq n} \norm{z_i}$ (with ties broken arbitrarily).
\end{definition}

Defining the scaling factor this way in the slow timescale is an innovation. It ensures that at each time $T_n$, the iterate, when scaled by the slow timescale's $\qty{r_n}$ values, is no larger than when scaled by the appropriate fast timescale's $\qty{r_n}$ values. We need this to link the timescales in Lemma \ref{lemma: slow, x close lambda y main text}.

We can regard $\tilde{y}_n(t)$ as the Euler's discretization of $y_n(t)$ defined below.

\begin{definition} 
    $\forall n \in \N, t\in [0,T+1)$, define $y_n(t)$ as the solution to the ODE
    \begin{align}
    \frac{d y_n(t)}{d t} = g_{r_n}(\lambda_{\infty}(y_n(t)), y_n(t)) \label{eq: slow y n main text}
    \end{align}
    with initial condition $y_n(0) = \tilde{y}_n(0).$
\end{definition}
The discretization error is then defined as $
    f_n(t) \doteq \tilde{y}_n(t) - y_n(t)$; just considering the slow component. Again, we show equicontinuity (here Lemma \ref{thm: x stability} plays a big role in controlling the growth of the fast iterates) and apply Arzelà-Ascoli to obtain a convergent subsequence.

    To show that the discretization error diminishes, we must prove an involved result. It requires us to work in both timescales to show that although the iterates get rescaled less often in the slow timescale, they still do not grow too much faster than the iterates in the fast timescale. So, we show that even in the slow timescale, the fast timescale iterates converge to the equilibria dependent on the slow timescale iterates, formalizing the intuition that in the slow timescale, we can consider the fast iterates to have converged:

\begin{lemma}[Fast Iterates Approach Equilibrium]\label{lemma: slow, x close lambda y main text}
    For all $\epsilon > 0$, there exists some $N$ such that for all $n > N$, 
    \begin{align}
        \norm{\tilde{x}_n(t) - \lambda_{\infty}(\tilde{y}_n(t))} < \epsilon
    \end{align}
    for all $t \in [0, T]$.
\end{lemma}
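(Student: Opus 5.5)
We reduce the statement to the fast-timescale result, Lemma~\ref{lemma: close lambda main text}, by comparing the two rescalings. Fix $\epsilon>0$. Here $\tilde x_n(t),\tilde y_n(t)$ denote the slow-timescale rescaled iterates: the interpolation $\bar z$ on the $\beta$-clock, divided by the slow $r_n=\max\{1,\norm{z^{max}_{m(T_n)}}\}$. For each $t\in[0,T]$, let $j=j(n,t)$ be the raw index with $\bar z(T_n+t)=z_j$ on the slow clock. We will control $\norm{x_j-\lambda_\infty(y_j)}$ at the raw level and then divide by the slow $r_n$.

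\textbf{Step 1: transfer to the fast clock.} Consider the fast-timescale construction of Section~\ref{sec x stab}, with rescaling factors $r^{f}_k$ and times $T^f_k$. Take the fast block $k$ with $T^f_k\le t^f(j)<T^f_{k+1}$. In that block the fast-rescaled iterate is $z_j/r^f_k$. Lemma~\ref{lemma: close lambda main text}, applied with horizon $T$ and $\epsilon'$ in place of $\epsilon$, together with homogeneity of $\lambda_\infty$, gives for large $j$
\begin{align}
\norm{x_j-\lambda_\infty(y_j)}\le \epsilon'\, r^f_k .
\end{align}
If Lemma~\ref{lemma: close lambda main text} only covers $[0,T_\epsilon]$, we take the fast block length $T\ge T_{\epsilon'}$; this is allowed since $T$ in the fast analysis is arbitrary.

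\textbf{Step 2: compare the scalings.} We need $r^f_k\le C\, r_n$ with $C$ independent of $n$, which is the stated purpose of defining the slow $r_n$ via $z^{max}$. By construction, $r^f_k=\max\{1,\norm{\bar z(T^f_i)}:i\le k\}\le \max\{1,\norm{z^{max}_{j}}\}$. Hence what we need is
\begin{align}
\norm{z^{max}_j}\le C\,\max\{1,\norm{z^{max}_{m(T_n)}}\}.
\end{align}
In words, within a slow window of length $T$ the running max of $\norm{z}$ grows by at most a constant factor.

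For the $y$ part this follows from Lipschitz growth of $G$, the Gronwall-type bound in the slow equicontinuity lemma, and Lemma~\ref{thm: x stability}. The latter gives $\norm{x_i}\le K(1+\norm{y^{max}_i})$, so
\begin{align}
\norm{G(z_i,W_{i+1})}\lesssim (1+\norm{y^{max}_i})(1+\norm{W_{i+1}}).
\end{align}
A discrete Gronwall argument over $\sum\beta(i)\approx T$ then gives $\norm{y^{max}_j}\le C'(1+\norm{y^{max}_{m(T_n)}})$. Here the noise-averaging assumption (Assumption~\ref{assumption: lln}) is needed to keep $\sum\beta(i)\norm{W_{i+1}}$ of order $T$. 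For the $x$ part we apply Lemma~\ref{thm: x stability} once more: $\norm{x_j}\le K(1+\norm{y^{max}_j})$. Combining, $\norm{z^{max}_j}\le C(1+\norm{z^{max}_{m(T_n)}})\le 2C\,r_n$.

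\textbf{Step 3: conclude.} Dividing by $r_n$ and using homogeneity of $\lambda_\infty$ gives
\begin{align}
\norm{\tilde x_n(t)-\lambda_\infty(\tilde y_n(t))}\le 2C\epsilon' \quad\text{for all } t\in[0,T],\ n>N.
\end{align}
Choosing $\epsilon'=\epsilon/(4C)$ finishes the proof.

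\textbf{Main obstacle.} The delicate case is when $\sup_n r^f_n<\infty$ on this path, because then Lemma~\ref{lemma: close lambda main text} (proved under $\sup r^f=\infty$) is unavailable. In that case the iterates are already bounded, and the slow analysis is by contradiction assuming $r_n\to\infty$. Then
\begin{align}
\norm{x_j-\lambda_\infty(y_j)}/r_n\le C/r_n\to0
\end{align}
trivially, so this case is immediate. The remaining technical issue is ensuring that $N$ is uniform in $t$, i.e.\ that "large $j$" in Step 1 is implied by $n>N$; this holds because $j\ge m(T_n)\to\infty$.
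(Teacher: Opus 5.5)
Your proposal is essentially the paper's proof. You bound the fast scaling factor by the running max, $r^{\alpha}_l\le\max\{1,\norm{z^{max}_j}\}$, and control the growth of $\norm{z^{max}}$ over one slow window; this is exactly Lemma~\ref{lemma: slow gronwall}, $\norm{z^{max}_l}\le C\norm{z^{max}_{m(T_n)}}+D$, and gives $r^{\alpha}_l\le 2C\,r^{\beta}_n$. You then apply Lemma~\ref{lemma: close lambda} with $\epsilon/(2C)$ and transfer the bound back via homogeneity of $\lambda_\infty$. As in the paper, the fast block length should be set equal to $T_{\epsilon'}$, not merely $T\ge T_{\epsilon'}$, so that the lemma's window covers the whole block.

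One step in your own rederivation of the growth bound needs repair. $\fW$ need not be normed, and Assumption~\ref{assumption: lln} gives no control of $\sum_i\beta(i)\norm{W_{i+1}}$. Instead, use $\norm{G(z_i,W_{i+1})-G(0,W_{i+1})}\le L(W_{i+1})\norm{z_i}$, and keep $\sum_i\beta(i)G(0,W_{i+1})$ as a vector sum before taking norms. Both pieces are then bounded over a slow window by Assumption~\ref{assumption: lln}, exactly as in Lemma~\ref{lemma: def C H}.
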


We can now deal with the error term \eqref{eq: slow disc error first term} unique to the slow timescale, and show the discretization error decreases to zero \eqref{eq: slow diminishing disc error}. We then obtain ODE results similar to single-timescale stability results (Chapter 3 of \citet{borkar2009stochastic}). As the limiting ODE is attracted to its globally asymptotically stable equilibrium, 0, our iterates cannot grow without bound. Thus, the sequence $r_n$ is bounded, creating a contradiction. We conclude that Theorem \ref{thm: z stability} holds true.

\section{Convergence of the Two-Timescale Iterates} \label{sec: cor proof}

We discuss the proof of Theorem \ref{cor: convergence full}, restated below:

\begin{xtheorem}[\ref{cor: convergence full} Restated] 
Let Assumptions \ref{assumption: stationary distribution} - \ref{assumption: lln} hold. Then the iterates $\qty{x_n}$ generated by~\eqref{eq: x n updates} converge almost surely to the equilibrium depending on $\qty{y_n}$ as follows:
\begin{align}
    \lim_{n \rightarrow \infty} \norm{x_n - \lambda(y_n)} = 0
\end{align}
where $\lambda: \R^{d_2} \rightarrow \R^{d_1}$ is a Lipschitz map and $\lambda(y)$ is the globally asymptotically stable equilibrium of $\dv{x(t)}{t} = h(x(t), y)$. With this convergence established, we can then establish the main two-timescale convergence result:
\begin{align}
    \lim_{n \rightarrow \infty} \norm{z_n - (\lambda(y^*), y^*)} = 0. 
\end{align}
\end{xtheorem}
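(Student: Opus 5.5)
Since Theorem~\ref{thm: z stability} gives $\sup_n \norm{z_n} < \infty$ almost surely, I fix a sample path on which stability and the assumptions (in particular Assumption~\ref{assumption: lln}) hold. On this path all iterates lie in a compact set $\mathcal{K}$. The plan is then to rerun the classical two-timescale ODE argument of \citet{borkar1997stochastic} (Chapter 6 of \citet{borkar2009stochastic}) on the unscaled iterates. No rescaling is needed now, so $h$, $g$, $\lambda$ and the unique equilibrium $y^*$ replace $h_\infty$, $g_\infty$, $\lambda_\infty$ and $0$. The Markovian noise is handled by the same averaging device used to prove diminishing discretization error in the stability proofs. The proof has two stages: fast timescale first, then slow.

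\textbf{Fast timescale.} I use the timeline $t(n)=\sum_{i<n}\alpha(i)$ and the interpolation $\bar z(t)$. I rewrite the $y$-update as
\[
y_{n+1} = y_n + \alpha(n)\,\tfrac{\beta(n)}{\alpha(n)}\,G(x_n,y_n,W_{n+1}).
\]
On $\mathcal{K}$, $G$ is bounded by the Lipschitz assumptions, and $\beta(n)/\alpha(n)\to0$. So in the $\alpha$-timescale the $y$-increments vanish, and $\bar y$ is asymptotically constant on windows $[T_n,T_n+T]$.

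For the shifted trajectories $\bar z(T_n+\cdot)$, I repeat the equicontinuity and Arzel\`a--Ascoli argument of Lemma~\ref{lemma: three convergence} with $r_n\equiv1$. The averaging of $H(x,y,W)$ toward $h(x,y)$ via Assumption~\ref{assumption: lln} then shows that every subsequential limit solves $\dot x=h(x,y)$, $\dot y=0$. So $\bar z$ is an asymptotic pseudotrajectory of this system.

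Combining Lemma~\ref{lemma: ext inp 1} (with $h$ in place of $h_\infty$, uniformly over initial conditions in the compact set) and Lemma~\ref{lemma: ext inp 2}, I get the following: for any $\epsilon$ there exist $T_\epsilon$ and $N$ such that for $n>N$, $\bar x$ is within $\epsilon$ of $\lambda(\bar y)$ after time $T_\epsilon$ into each window. Because $\lambda$ is Lipschitz and $\bar y$ barely moves on a window, this yields $\norm{x_n-\lambda(y_n)}\to0$. This is the same argument as Lemma~\ref{lemma: close lambda main text}, without the scaling.

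\textbf{Slow timescale.} I switch to the timeline $\sum_{i<n}\beta(i)$ and decompose
\[
G(x_n,y_n,W_{n+1}) = g(\lambda(y_n),y_n) + \bigl[g(x_n,y_n)-g(\lambda(y_n),y_n)\bigr] + \bigl[G(x_n,y_n,W_{n+1})-g(x_n,y_n)\bigr].
\]
The first bracket is at most $L\norm{x_n-\lambda(y_n)}$ by the Lipschitz property, so it tends to $0$ by the fast-timescale stage. The second bracket is the Markovian noise; its averaged effect over windows of length $T$ vanishes by Assumption~\ref{assumption: lln}, used as in the slow-timescale discretization-error argument.

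Hence $\bar y$ is an asymptotic pseudotrajectory of $\dot y=g(\lambda(y),y)$. By the Bena\"im-type limit-set theorem (Theorem 2 of Chapter 2 of \citet{borkar2009stochastic}), the limit set of $\{y_n\}$ is a compact internally chain transitive set of this ODE. Since $y^*$ is the unique globally asymptotically stable equilibrium, such a set is $\{y^*\}$. Therefore $y_n\to y^*$, and by continuity of $\lambda$ together with the first claim, $x_n\to\lambda(y^*)$.

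\textbf{Main obstacle.} I expect the hardest step to be the Markovian averaging on the slow timescale. The noise enters through $G(x_n,y_n,W_{n+1})$ with $x_n$ moving on the fast scale, so Assumption~\ref{assumption: lln} has to be applied with a parameter that varies within a window. My first approach is to freeze $(x,y)$ at the window's start, control the freezing error by Lipschitz continuity in $z$ (using a noise-dependent Lipschitz constant whose averages are bounded by Assumption~\ref{assumption: lln}) times the in-window displacement, and then apply Assumption~\ref{assumption: lln} to the frozen function. This mirrors the treatment of error term \eqref{eq: slow disc error first term} in the stability proof.
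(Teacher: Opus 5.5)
\textbf{The gap is in the step you flag as hardest.} Your decomposition leaves the Markov noise evaluated at $(x_n,y_n)$. You then propose to freeze $(x,y)$ at the start of a slow window and pay the Lipschitz constant times the in-window displacement. This does not work as written, for two reasons.
\begin{itemize}
\item In slow time, $x_n$ is driven at rate $\alpha(n)/\beta(n)\to\infty$. Its displacement over a slow window, however short, is therefore not small a priori; stability only makes it bounded.
\item Even $y$ moves by $O(T)$ over a full slow window, so a single freeze cannot work in any case.
\end{itemize}
What is small is not the motion of $x_n$ but its distance to $\lambda(y_n)$, and your first stage already supplies that. The fix, which is exactly what the paper does (Lemma~\ref{cor-lemma: slow dim disc lambda}, mirroring Lemma~\ref{lemma: slow dim disc lambda}), is to replace $x_i$ by $\lambda(y_i)$ inside $G$ before any averaging:
\[
\Bigl\lVert \sum_i \beta(i)\bigl[G(x_i,y_i,W_{i+1})-G(\lambda(y_i),y_i,W_{i+1})\bigr]\Bigr\rVert \le \Bigl(\sup_{i\ge m}\norm{x_i-\lambda(y_i)}\Bigr)\sum_i \beta(i)L(W_{i+1}) \to 0 .
\]
This tends to zero because window sums of $\beta(i)L(W_{i+1})$ are bounded by Assumption~\ref{assumption: lln}. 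The remaining noise $G(\lambda(y_i),y_i,W_{i+1})-g(\lambda(y_i),y_i)$ depends only on the slowly varying $y_i$. The standard averaging then goes through: freeze on sub-windows of length $\Delta\to0$ and apply Assumption~\ref{assumption: lln}, as in Lemma~\ref{lemma: double limit 2}. With this substitution your deterministic bracket $g(x_n,y_n)-g(\lambda(y_n),y_n)$ becomes unnecessary.

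\textbf{The rest is sound and follows the paper's architecture, with two variations.}
\begin{itemize}
\item \emph{Fast stage.} You conclude with a tracking argument modeled on Lemma~\ref{lemma: close lambda}. That argument needs attraction times and Lyapunov radii that are uniform over $y$ in a compact set, i.e.\ the unscaled analogues of Lemmas~\ref{lemma: ext inp 3} and~\ref{lemma: ext inp 4}. The paper instead shows that the limit set of $\{z_n\}$ is a bounded invariant set of $\dot x=h(x,y)$, $\dot y=0$. Since $y$ is constant along trajectories, such a set lies in the graph of $\lambda$ using only global asymptotic stability for each fixed $y$, which is lighter.
\item \emph{Slow stage.} The asymptotic-pseudotrajectory / internally-chain-transitive route is a valid substitute for the paper's direct invariant-set argument. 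You should cite it in Bena\"im's pseudotrajectory form, however, since Borkar's Chapter~2 statement assumes martingale-difference noise.
\end{itemize}

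\textbf{One small slip in the fast stage.} With noncompact Markov noise, $G$ is not bounded on $\mathcal K$; you only have $\norm{G(z,w)}\le L(w)\norm{z}+\norm{G(0,w)}$. What actually freezes $\bar y$ on the fast timescale is that $\beta$-weighted window sums of $L(W_{i+1})$ and of $G(0,W_{i+1})$ vanish. This follows from Assumption~\ref{assumption: lln} together with $\beta(n)/\alpha(n)\to0$ (cf.~\eqref{eq: beta L property bounded}).
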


The proof mirrors the stability proof but is simpler. We first show a convergence result in the fast timescale. Namely, we show that the fast iterates $x_n$ converge to $\lambda(y_n)$. Then, we obtain the full convergence result through analysis in the slow timescale. The difference is that we no longer need to use rescaling, since we know that the iterates are bounded by Theorem \ref{thm: z stability}, and we now prove results on $t \in (-\infty, \infty)$ as this simplifies showing convergence to an invariant set.

First, we show that the discretization error between $\qty{z_n}$ and the ODE system
    \begin{align}
    \dv{x(t)}{t} = h(x(t), y(t)), \quad \dv{y(t)}{t} = 0 \label{cor eq original ode system main text}
    \end{align}
diminishes in the fast timescale. Then, due to Theorem \ref{thm: z stability}, we argue that since the set of limit points $Z$ of $\qty{z_n}$ is bounded and nonempty, it is an invariant set of \eqref{cor eq original ode system main text}. We then show that no subsequence of $\qty{z_n}$ converges to a point not in $Z$, confirming that $\qty{z_n}$ converges to $Z$. Since the invariant set of \eqref{cor eq original ode system main text} is $\qty{(\lambda(y), y): y \in \R^{d_2}}$, we have
\begin{align}
    \lim_{n \rightarrow \infty} \norm{x_n - \lambda(y_n)} = 0.
\end{align}
We then pursue the slow timescale argument and show that $\qty{y_n}$ converges to the invariant set of the ODE
    \begin{align}
        \label{cor eq g ode main text}
    \dv{y(t)}{t} &= g(\lambda(y(t)), y(t))
    \end{align}
which is the singleton set containing $y^*$, the equilibrium of the ODE. This, combined with the fast timescale convergence result, yields our desired result:
\begin{align}
    \lim_{n \rightarrow \infty} \norm{z_n - (\lambda(y^*), y^*)} = 0. 
\end{align}

\section{Applications in Reinforcement Learning} \label{sec: tdc proof}

In this section, we give some context on RL and prove the convergence of an important off-policy RL algorithm, TDC($\lambda$), using our main results (Section \ref{sec main result}). 

We consider a Markov Decision Process (MDP) with state space $S$, action space $A$, reward function $r: S \times A \rightarrow \R$, transition function $p: S \times S \times A \rightarrow [0,1]$, and discount factor $\gamma \in [0, 1)$. At time $t$, an agent with a control policy $\pi: S \times A \rightarrow [0,1]$ will, given the current state $S_t$, sample an action $A_t \sim \pi(\cdot|S_t)$. Upon taking action $A_t$, the environment yields the reward $R_t \doteq r(S_t, A_t)$ to the agent and transitions to the next state $S_t \sim p(\cdot | S_t,A_t)$. We define the return at time $t$ as $G_t \doteq \sum_{i=t}^{\infty} \gamma ^{i-1}R_t$. Based on this, we can define the value function $v_{\pi}(s) \doteq \E_{\pi,p}[G_t|S_t=s]$. Since the state space $S$ could be quite large, we wish to approximate $v_{\pi}(s)$ using a vector of parameters $\theta \in \R^K$ where $K \ll |S|$. Specifically, we define $v_{\theta,\pi}(s) \doteq \phi(s)^\top \theta \approx v_{\pi}(s)$, where $\phi: S \rightarrow \R^K$ is the function which extracts the features from the state $s$.

The task of computing or approximating $v_{\pi}(s)$ is called \textit{policy evaluation}, and it is one of the most important problems in RL. Temporal Difference (TD) learning methods \citep{sutton1988learning} have been the most effective at this task. They operate through \textit{bootstrapping}, the practice of updating estimates using other estimates. In the case of TD methods, they update the estimated value of a current state using the estimated value of sampled future states.

Another important technique is \textit{off-policy learning}, where we estimate the value of a target policy $\pi$ by using data from a behavior policy $\mu$. This has many advantages as it enables us to run many policies in parallel and train offline, improve data-efficiency \citep{sutton2011horde, liu2024efficient, liu2025doubly, liu2025efficient}, and enforce safety by using a safe behavior policy to evaluate a riskier target policy \citep{dulac2019challenges, chen2025efficient}. 

Unfortunately, when combining bootstrapping and off-policy learning with function approximation (necessary in large state spaces), naive TD methods can diverge. This pitfall is called the deadly triad \citep{sutton2018reinforcement,zhang2022thesis}. Gradient temporal difference learning (GTD) was developed to break the deadly triad \citep{sutton2008convergent}. While convergent, it is slow. Temporal Difference learning with gradient correction (TDC) was first proposed in \citet{sutton2009fast} as a modification of GTD that is nearly as fast as standard TD but still convergent. In addition to having the parameter vector $\theta$ for the linear approximation of the value function, it has a second parameter vector $\nu$ to estimate one of the expectations in the algorithm. TDC is a two-timescale algorithm as the updates to $\nu$ run on a faster timescale, which empirically leads to faster convergence of the algorithm. 
In this work,
we study TDC($\lambda$),
i.e., TDC with eligibility traces.
Eligibility traces are a powerful tool for credit assignment, a critical challenge in RL, and have been a fundamental part of RL since the inception of the field \citep{barto1981landmark}. Although eligibility traces empirically speed up convergence \cite{sutton2018reinforcement}, they can introduce difficulties in analysis. 

\begin{definition}
[TDC($\lambda$)]
    \begin{align}
    \label{eq gtd main text}
    e_t =& \lambda \gamma \rho_{t-1} e_{t-1} + \phi_t, \\
    \delta_t =& R_{t+1} + \gamma \phi_{t+1}^\top \theta_t - \phi_t^\top \theta_t, \\
    \nu_{t+1} =& \nu_t + \alpha_t \left(\rho_t \delta_t e_t - \phi_t \phi_t^\top \nu_t\right), \\
    \theta_{t+1} =& \theta_t + \beta_t(\rho_t \delta_t e_t - \rho_t(1-\lambda)\gamma \phi_{t+1} e_t^\top \nu_t).
\end{align}
\end{definition}

The eligibility trace $e_t$ is an exponential average of the features $\phi_t \doteq \phi(S_t)$ (weighted by importance sampling ratio $\rho_t$, which is necessary in off-policy learning). $\delta_t$ is the TD error. $\theta_t$ and $\nu_t$ parametrize the approximations of the value function and of the gradient correction respectively.
The version of TDC($\lambda$) we study in Definition~\eqref{eq gtd main text} is referred to as GTDb in \citet{yu2017convergence}.

\begin{theorem}\label{thm: tdc converge}
Take the assumptions in Appendix \ref{appendix: tdc}. Then, by applying Theorem \ref{cor: convergence full}, TDC with eligibility traces converges almost surely.
\end{theorem}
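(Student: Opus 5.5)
We will cast TDC($\lambda$) as an instance of the two-timescale scheme \eqref{eq: x n updates}--\eqref{eq: y n updates} and verify Assumptions \ref{assumption: stationary distribution}--\ref{assumption: lln}, so that Theorem \ref{cor: convergence full} applies directly. The fast iterate is $x_n \doteq \nu_n$ and the slow iterate is $y_n \doteq \theta_n$. The noise is the augmented chain $W_{n+1} \doteq (S_n, A_n, S_{n+1}, e_n)$, whose component $e_n$ lives in a noncompact (possibly unbounded) subset of $\R^K$. Writing $\delta_n = R_{n+1} + (\gamma\phi_{n+1}-\phi_n)^\top\theta_n$, the update maps are
\begin{align}
H(\nu,\theta,w) &= \rho\, e\,(r + (\gamma\phi' - \phi)^\top\theta) - \phi\phi^\top\nu, \\
G(\nu,\theta,w) &= \rho\, e\,(r + (\gamma\phi' - \phi)^\top\theta) - \rho(1-\lambda)\gamma\,\phi' e^\top \nu.
\end{align}
Both are affine in $(\nu,\theta)$. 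Hence $H_c \to H_\infty$ (drop the reward term) uniformly, with rate $\norm{\rho e r}/c$. The Lipschitz constants are of the form $L(w) = C(1+\rho\norm{e})$, which handles Assumptions \ref{assumption: H c H infty} and \ref{assumption: H Lipschitz}.

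For Assumption \ref{assumption: stationary distribution}, we take a finite MDP in which the behavior policy $\mu$ induces an irreducible aperiodic chain. Following \citet{yu2017convergence}, we take $\lambda$ small enough (or bounded $\rho$ with $\lambda\gamma\max\rho<1$ in the relevant sense) that the trace chain $\{(S_n,A_n,S_{n+1},e_n)\}$ has a unique invariant distribution and $\sup_n \E\norm{e_n}<\infty$. The expectations are then
\begin{align}
h(\nu,\theta) = b + A\theta - C\nu, \qquad g(\nu,\theta) = b + A\theta - \bar C^\top\nu,
\end{align}
where $A = \Phi^\top D_\mu(P_\pi^\lambda\gamma - I)\Phi$-type matrices are defined as in \citet{yu2017convergence}, $C = \Phi^\top D_\mu\Phi \succ 0$ (assuming linearly independent features), and $\bar C$ is the appropriate cross-covariance of $\phi'$ and $e$.

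For Assumption \ref{assumption: lim h,g uniformly convergent}, the fast ODE $\dot\nu = b + A\theta - C\nu$ has the unique globally asymptotically stable equilibrium $\lambda(\theta) = C^{-1}(b+A\theta)$, and $\lambda_\infty(\theta) = C^{-1}A\theta$ is linear, homogeneous and Lipschitz. The slow ODE becomes $\dot\theta = (I - \bar C^\top C^{-1})(b + A\theta)$. We will use the standard TDC identity, which for GTDb gives $I-\bar C^\top C^{-1}$ acting so that the slow matrix equals $-A^\top C^{-1}A$ up to the exact bookkeeping in \citet{yu2017convergence}; this matrix is negative definite when $A$ is nonsingular. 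It follows that both $\dot\theta = -A^\top C^{-1}A\theta$ and the $b$-shifted version have unique globally asymptotically stable equilibria, namely $0$ and $\theta^* = -A^{-1}b$ (the off-policy TD($\lambda$) fixed point).

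The main obstacle is Assumption \ref{assumption: lln}: the almost-sure averaging of $H$, $G$ (and of $L(W)$) along the trace chain, uniformly over compact parameter sets. Since the maps are affine, it suffices to prove strong laws of large numbers for finitely many functions of the form $\rho_n e_n \phi_n^\top$, $\rho_n e_n R_{n+1}$, $\rho_n\phi_{n+1}e_n^\top$, $\phi_n\phi_n^\top$, and $\rho_n\norm{e_n}$. We would invoke the ergodic theorem for the trace iterates established by \citet{yu2017convergence}, which holds under bounded $\rho$ even when $e_n$ has unbounded second moments. Uniformity over compacts then follows from linearity in the parameters. Once all assumptions are checked, Theorem \ref{cor: convergence full} gives $\nu_n - \lambda(\theta_n)\to 0$ and $\theta_n \to \theta^*$ almost surely.
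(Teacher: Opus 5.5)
Your route is the paper's: the same embedding $x=\nu$, $y=\theta$, $W_{t+1}=(S_t,A_t,S_{t+1},e_t)$, the same affine $H,G$ with $H_\infty,G_\infty$ obtained by dropping $b(w)=\rho r e$, the same $\lambda(\theta)=C^{-1}(A\theta+b)$ and $\lambda_\infty(\theta)=C^{-1}A\theta$, and the same slow matrix $-A^\top C^{-1}A$. The gap is that the one step where the TDC structure actually matters is left as ``up to the exact bookkeeping.''

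With $D\doteq\E_{d_{\fW}}[\rho(1-\lambda)\gamma\, e\,(\phi')^\top]$ (your $\bar C$), the slow mean field is $(I-D^\top C^{-1})(A\theta+b)$. For an arbitrary cross-moment $D$, nothing forces this to be stable. What you need is the identity $D=A+C$. Together with $C^\top=C$ it gives $I-D^\top C^{-1}=-A^\top C^{-1}$, and negative definiteness of $-A^\top C^{-1}A$ then follows.

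The paper reads $D=A+C$ off Yu's closed forms $A=\Phi^\top D_\mu(P_{\pi,\lambda}-I)\Phi$, $C=\Phi^\top D_\mu\Phi$, $D=\Phi^\top D_\mu P_{\pi,\lambda}\Phi$. You can also derive it directly:
\begin{itemize}
\item Since $A_t$ is conditionally independent of $e_t$ given $S_t$ and $\E[\rho_t\mid S_t]=1$, you get $\E[\rho e\phi^\top]=\E[e\phi^\top]$.
\item Stationarity applied to $e_t=\lambda\gamma\rho_{t-1}e_{t-1}+\phi_t$ gives $\E[e\phi^\top]=\lambda\gamma\E[\rho e(\phi')^\top]+C$.
\item Hence $A=\gamma\E[\rho e(\phi')^\top]-\E[\rho e\phi^\top]=D-C$.
\end{itemize}
This identity has to appear explicitly in your proof.

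Second, your check of Assumption~\ref{assumption: stationary distribution} imports hypotheses the theorem does not have: aperiodicity, and $\lambda$ small or $\lambda\gamma\max\rho<1$. Neither is needed. Lemma~\ref{lemma: yu invariance} (Yu's Theorem~2.1) gives the unique invariant measure, finiteness of stationary expectations of functions Lipschitz in $e$, and the strong law, using only finite $\mathcal S,\mathcal A$, irreducibility and $\mu(a\mid s)>0$. In particular $\sup_t\E\|e_t\|<\infty$ for every $\lambda\in[0,1]$, because products of importance ratios have conditional mean one. Worse, $\lambda\gamma\max\rho<1$ makes the traces uniformly bounded. You would then be proving the statement only in the compact-noise regime that the paper is designed to go beyond, and it contradicts your later remark about unbounded second moments. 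Drop it.

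One caveat on Assumption~\ref{assumption: lln} applies equally to the paper's own one-line verification. The ergodic theorem gives only $\frac1n\sum_{i\le n}(\gamma(W_i)-\E\gamma)\to0$. But the condition $(\beta(i)-\beta(i+1))/\beta(i)=\mathcal O(\beta(i))$ forces $\beta(n)\gtrsim 1/n$, and hence $n\alpha(n)\to\infty$. So the $\alpha$-weighted requirement $\alpha(n)\sum_{i\le n}(\gamma(W_i)-\E\gamma)\to0$ needs a rate beyond the SLLN, such as the LIL of the paper's remark. That rate is not automatic when $e$ may have infinite variance under $d_{\fW}$.
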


\begin{remark}
    One can apply Theorem \ref{cor: convergence full} similarly to show that two-timescale GTD with eligibility traces (GTDa in \citet{yu2017convergence}) also converges.
\end{remark}

The assumptions are not very strong. We assume the state space $S$ and the action space $A$ are finite, the Markov chain $\qty{S_t}$ is irreducible, and that from any state, all possible actions have a positive probability of being chosen (as this is an off-policy algorithm, this ensures coverage of the target policy we are estimating the value of). These assumptions are well-established in the literature \citep{yu2017convergence,liu2025ode}. From this, it is easy to verify that TDC($\lambda$) satisfies all assumptions in Appendix \ref{appendix: assumptions}, so Theorem \ref{cor: convergence full} applies. Applying Theorem \ref{cor: convergence full}, we confirm that TDC with eligibility traces converges (see Appendix \ref{appendix: tdc} for the detailed proof).

\section{Related Work}

This work is primarily concerned with the almost sure convergence of two-timescale SA in an asymptotic sense without a rate.
For single-timescale SA,
almost sure convergence rates are also established for specific algorithms such as $Q$-learning \citep{szepesvari1997asymptotic}, linear $Q$-learning \citep{liu2025extensions}, and linear TD \citep{tadic2002almost},
for linear SA \citep{chong1999noise,tadic2004almost,kouritzin2015convergence},
and for general SA under various noise conditions \citep{koval2003law,vidyasagar2023convergence,karandikar2024convergence}.
The most general ones are \citet{qian2024almost,liu2026almost},
which establish almost sure convergence rates for general contractive SA under Markovian noise.
Extending these results to two-timescale SA is a direction for future work.
The asymptotic almost sure convergence of general contractive single-timescale Markovian SA is also recently formally verified in Lean by \citet{zhang2025towards}.
Extending this formal verification framework to two-timescale SA is also a direction for future work.

In addition to almost sure convergence,
there are many other modes of convergence, e.g., $L^2$ convergence.
The $L^2$ convergence rates of two-timescale SA are established under various noise and algorithmic conditions \citep{doan2021finite,doan2021finiteb,doan2022nonlinear}.
One particularly relevant work is \citet{chandak2025finite}.
The main result of \citet{chandak2025finite} is an $L^2$ convergence rate for two-timescale Markovian SA.
But \citet{chandak2025finite} also have a claim about the almost sure convergence of two-timescale Markovian SA,
exactly overlapping with our main result.
We believe that their claim about almost sure convergence does not hold.
Particularly,
their proof of almost sure convergence relies crucially on their claim that if the iterates are bounded in expectation, then they are bounded almost surely (i.e., $\sup_n \E\qty[\norm{x_n}^2] < \infty \implies \sup_n \norm{x_n} < \infty$ a.s.), which is false.
A counterexample (e.g., $x_n = \sqrt{n}$ with probability $1/n$ and $0$ otherwise) can be easily constructed by the second Borel-Cantelli lemma.

TDC($\lambda$) is one representative two-timescale RL algorithm.
Other two-timescale RL algorithms include actor-critic algorithms \citep{sutton2000policy,konda1999actor,zhang2019provably,zhang2022global},
average-reward gradient TD algorithms \citep{zhang2021average},
and TD with target networks \citep{zhang2021breaking}.
Investigating whether our results can be applied to these algorithms is a possible future work.
Many previous convergence analyses of TDC do not handle eligibility traces \citep{sutton2008convergent,xu2019two}, i.e., they set $\lambda = 0$.
The only prior work that handles eligibility traces properly for TDC is \citet{yu2017convergence}, which, however, only analyzes projected variants of TDC.
The fundamental reason is that 
even if the state space of the Markov chain $\qty{(S_t, A_t)}$ is finite, with eligibility traces, we must instead consider the chain $\qty{(S_t, A_t, e_t)}$, which now evolves in an uncountable space. 
Even worse,
off-policy learning makes the trace $e_t$ unbounded as well, as the product of the importance sampling ratio can be unbounded.
Prior works resort to additional projection \citep{yu2015convergence,yu2017convergence} or truncation \citep{zhang2022truncated} to handle the unboundedness of the eligibility traces.
This work instead directly handles the unboundedness via averaging following the techniques in \citet{kushner2003stochastic,liu2025ode}.

\section{Conclusion}

This work establishes the first almost sure convergence results for general two-timescale Markovian SA under mild assumptions.
This work can be viewed as a combination of \citet{lakshminarayanan2017stability} and \citet{liu2025ode},
both of which are extensions of the seminal work \citet{borkar2000ode}.
Key to our analysis is a methodological innovation that bounds the faster iterates with the running max of the slower iterates.
On the other hand,
\citet{borkar2000ode} are recently extended by \citet{borkar2025ode} to include not only Markovian noise but also (functional) central limit theorems.
A fruitful direction for future work is thus to investigate whether our methodological innovation can be applied to extend the (functional) central limit theorems of \citet{borkar2025ode} to two-timescale SA.

\section*{Acknowledgements}

This work is supported in part by the US National Science Foundation under the awards III-2128019, SLES-2331904, and CAREER-2442098, the Commonwealth Cyber Initiative's Central Virginia Node under the award VV-1Q26-001, and a Cisco Faculty Research Award.
Additionally, we thank the reviewers whose comments and suggestions improved the quality of our work.

\section*{Impact Statement}

This paper presents work whose goal is to advance the field of Machine
Learning. There are many potential societal consequences of our work, none
which we feel must be specifically highlighted here.

\bibliography{bibliography}

\begin{thebibliography}{59}
\providecommand{\natexlab}[1]{#1}
\providecommand{\url}[1]{\texttt{#1}}
\expandafter\ifx\csname urlstyle\endcsname\relax
  \providecommand{\doi}[1]{doi: #1}\else
  \providecommand{\doi}{doi: \begingroup \urlstyle{rm}\Url}\fi

\bibitem[Barto \& Sutton(1981)Barto and Sutton]{barto1981landmark}
Barto, A.~G. and Sutton, R.~S.
\newblock Landmark learning: An illustration of associative search.
\newblock \emph{Biological Cybernetics}, 1981.

\bibitem[Benveniste et~al.(1990)Benveniste, M{\'{e}}tivier, and Priouret]{benveniste1990MP}
Benveniste, A., M{\'{e}}tivier, M., and Priouret, P.
\newblock \emph{Adaptive Algorithms and Stochastic Approximations}.
\newblock Springer, 1990.

\bibitem[Borkar et~al.(2025)Borkar, Chen, Devraj, Kontoyiannis, and Meyn]{borkar2025ode}
Borkar, V., Chen, S., Devraj, A., Kontoyiannis, I., and Meyn, S.
\newblock The {ODE} method for asymptotic statistics in stochastic approximation and reinforcement learning.
\newblock \emph{The Annals of Applied Probability}, 2025.

\bibitem[Borkar(1997)]{borkar1997stochastic}
Borkar, V.~S.
\newblock Stochastic approximation with two time scales.
\newblock \emph{Systems \& Control Letters}, 1997.

\bibitem[Borkar(2009)]{borkar2009stochastic}
Borkar, V.~S.
\newblock \emph{Stochastic Approximation: A Dynamical Systems Viewpoint}.
\newblock Hindustan Book Agency, 2009.

\bibitem[Borkar(2025)]{borkar2025stochastic}
Borkar, V.~S.
\newblock Stochastic approximation with two time scales: The general case.
\newblock \emph{Stochastic Processes and their Applications}, 2025.

\bibitem[Borkar \& Meyn(2000)Borkar and Meyn]{borkar2000ode}
Borkar, V.~S. and Meyn, S.~P.
\newblock The {O.D.E.} method for convergence of stochastic approximation and reinforcement learning.
\newblock \emph{SIAM Journal on Control and Optimization}, 2000.

\bibitem[Chandak et~al.(2025)Chandak, Haque, and Bambos]{chandak2025finite}
Chandak, S., Haque, S.~U., and Bambos, N.
\newblock Finite-time bounds for two-time-scale stochastic approximation with arbitrary norm contractions and markovian noise.
\newblock In \emph{Proceedings of the IEEE Conference on Decision and Control}, 2025.

\bibitem[Chen et~al.(2025)Chen, Liu, and Zhang]{chen2025efficient}
Chen, C., Liu, S., and Zhang, S.
\newblock Efficient policy evaluation with safety constraint for reinforcement learning.
\newblock In \emph{Proceedings of the International Conference on Learning Representations}, 2025.

\bibitem[Chong et~al.(1999)Chong, Wang, and Kulkarni]{chong1999noise}
Chong, E.~K., Wang, I.-J., and Kulkarni, S.~R.
\newblock Noise conditions for prespecified convergence rates of stochastic approximation algorithms.
\newblock \emph{IEEE Transactions on Information Theory}, 1999.

\bibitem[Dalal et~al.(2018)Dalal, Sz{\"o}r{\'e}nyi, Thoppe, and Mannor]{dalal2018finite}
Dalal, G., Sz{\"o}r{\'e}nyi, B., Thoppe, G., and Mannor, S.
\newblock Finite sample analyses for {TD(0}) with function approximation.
\newblock In \emph{Proceedings of the {AAAI} Conference on Artificial Intelligence}, 2018.

\bibitem[Doan(2021{\natexlab{a}})]{doan2021finite}
Doan, T.~T.
\newblock Finite-time convergence rates of nonlinear two-time-scale stochastic approximation under markovian noise.
\newblock \emph{arXiv Preprint}, 2021{\natexlab{a}}.

\bibitem[Doan(2021{\natexlab{b}})]{doan2021finiteb}
Doan, T.~T.
\newblock Finite-time analysis and restarting scheme for linear two-time-scale stochastic approximation.
\newblock \emph{SIAM Journal on Control and Optimization}, 2021{\natexlab{b}}.

\bibitem[Doan(2022)]{doan2022nonlinear}
Doan, T.~T.
\newblock Nonlinear two-time-scale stochastic approximation: Convergence and finite-time performance.
\newblock \emph{IEEE Transactions on Automatic Control}, 2022.

\bibitem[Dulac-Arnold et~al.(2019)Dulac-Arnold, Mankowitz, and Hester]{dulac2019challenges}
Dulac-Arnold, G., Mankowitz, D., and Hester, T.
\newblock Challenges of real-world reinforcement learning.
\newblock \emph{arXiv Preprint}, 2019.

\bibitem[Hu et~al.(2024)Hu, Doshi, and Eun]{hu2024central}
Hu, J., Doshi, V., and Eun, D.~Y.
\newblock Central limit theorem for two-timescale stochastic approximation with markovian noise: Theory and applications.
\newblock In \emph{Proceedings of the International Conference on Artificial Intelligence and Statistics}, 2024.

\bibitem[Karandikar \& Vidyasagar(2024)Karandikar and Vidyasagar]{karandikar2024convergence}
Karandikar, R.~L. and Vidyasagar, M.
\newblock Convergence rates for stochastic approximation: Biased noise with unbounded variance, and applications.
\newblock \emph{Journal of Optimization Theory and Applications}, 2024.

\bibitem[Karmakar \& Bhatnagar(2018)Karmakar and Bhatnagar]{karmakar2018two}
Karmakar, P. and Bhatnagar, S.
\newblock Two time-scale stochastic approximation with controlled markov noise and off-policy temporal-difference learning.
\newblock \emph{Mathematics of Operations Research}, 2018.

\bibitem[Karmakar \& Bhatnagar(2021)Karmakar and Bhatnagar]{karmakar2021stochastic}
Karmakar, P. and Bhatnagar, S.
\newblock Stochastic approximation with iterate-dependent markov noise under verifiable conditions in compact state space with the stability of iterates not ensured.
\newblock \emph{IEEE Transactions on Automatic Control}, 2021.

\bibitem[Kiefer \& Wolfowitz(1952)Kiefer and Wolfowitz]{kiefer1952Stochastic}
Kiefer, J. and Wolfowitz, J.
\newblock Stochastic estimation of the maximum of a regression function.
\newblock \emph{Annals of Mathematical Statistics}, 1952.

\bibitem[Konda \& Tsitsiklis(1999)Konda and Tsitsiklis]{konda1999actor}
Konda, V. and Tsitsiklis, J.
\newblock Actor-critic algorithms.
\newblock In \emph{Advances in Neural Information Processing Systems}, 1999.

\bibitem[Konda(2002)]{konda2002thesis}
Konda, V.~R.
\newblock \emph{Actor-Critic Algorithms}.
\newblock PhD thesis, Massachusetts Institute of Technology, 2002.

\bibitem[Kouritzin \& Sadeghi(2015)Kouritzin and Sadeghi]{kouritzin2015convergence}
Kouritzin, M.~A. and Sadeghi, S.
\newblock Convergence rates and decoupling in linear stochastic approximation algorithms.
\newblock \emph{SIAM Journal on Control and Optimization}, 2015.

\bibitem[Koval \& Schwabe(2003)Koval and Schwabe]{koval2003law}
Koval, V. and Schwabe, R.
\newblock A law of the iterated logarithm for stochastic approximation procedures in d-dimensional euclidean space.
\newblock \emph{Stochastic Processes and Their Applications}, 2003.

\bibitem[Kushner \& Yin(2003)Kushner and Yin]{kushner2003stochastic}
Kushner, H. and Yin, G.~G.
\newblock \emph{Stochastic approximation and recursive algorithms and applications}.
\newblock Springer, 2003.

\bibitem[Lakshminarayanan \& Bhatnagar(2017)Lakshminarayanan and Bhatnagar]{lakshminarayanan2017stability}
Lakshminarayanan, C. and Bhatnagar, S.
\newblock A stability criterion for two timescale stochastic approximation schemes.
\newblock \emph{Automatica}, 2017.

\bibitem[Liu \& Zhang(2024)Liu and Zhang]{liu2024efficient}
Liu, S. and Zhang, S.
\newblock Efficient policy evaluation with offline data informed behavior policy design.
\newblock In \emph{Proceedings of the International Conference on Machine Learning}, 2024.

\bibitem[Liu et~al.(2025{\natexlab{a}})Liu, Chen, and Zhang]{liu2025doubly}
Liu, S., Chen, C., and Zhang, S.
\newblock Doubly optimal policy evaluation for reinforcement learning.
\newblock In \emph{Proceedings of the International Conference on Learning Representations}, 2025{\natexlab{a}}.

\bibitem[Liu et~al.(2025{\natexlab{b}})Liu, Chen, and Zhang]{liu2025ode}
Liu, S., Chen, S., and Zhang, S.
\newblock The {ODE} method for stochastic approximation and reinforcement learning with markovian noise.
\newblock \emph{Journal of Machine Learning Research}, 2025{\natexlab{b}}.

\bibitem[Liu et~al.(2025{\natexlab{c}})Liu, Chen, and Zhang]{liu2025efficient}
Liu, S.~D., Chen, C., and Zhang, S.
\newblock Efficient multi-policy evaluation for reinforcement learning.
\newblock In \emph{Proceedings of the AAAI Conference on Artificial Intelligence}, 2025{\natexlab{c}}.

\bibitem[Liu et~al.(2025{\natexlab{d}})Liu, Xie, and Zhang]{liu2025extensions}
Liu, X., Xie, Z., and Zhang, S.
\newblock Extensions of {Robbins-Siegmund} theorem with applications in reinforcement learning.
\newblock \emph{arXiv Preprint}, 2025{\natexlab{d}}.

\bibitem[Liu et~al.(2026)Liu, Xie, and Zhang]{liu2026almost}
Liu, X., Xie, Z., and Zhang, S.
\newblock Almost sure convergence rates of stochastic approximation and reinforcement learning via a {Poisson-Moreau} drift.
\newblock \emph{arXiv Preprint}, 2026.

\bibitem[Mokkadem \& Pelletier(2006)Mokkadem and Pelletier]{mokkadem2006convergence}
Mokkadem, A. and Pelletier, M.
\newblock Convergence rate and averaging of nonlinear two-time-scale stochastic approximation algorithms.
\newblock \emph{The Annals of Applied Probability}, 2006.

\bibitem[Panda \& Bhatnagar(2025)Panda and Bhatnagar]{panda2025two}
Panda, P. and Bhatnagar, S.
\newblock Two-timescale critic-actor for average reward mdps with function approximation.
\newblock In \emph{Proceedings of the AAAI Conference on Artificial Intelligence}, 2025.

\bibitem[Qian et~al.(2024)Qian, Xie, Liu, and Zhang]{qian2024almost}
Qian, X., Xie, Z., Liu, X., and Zhang, S.
\newblock Almost sure convergence rates and concentration of stochastic approximation and reinforcement learning with markovian noise.
\newblock \emph{arXiv Preprint}, 2024.

\bibitem[Robbins \& Monro(1951)Robbins and Monro]{robbins1951stochastic}
Robbins, H. and Monro, S.
\newblock A stochastic approximation method.
\newblock \emph{The Annals of Mathematical Statistics}, 1951.

\bibitem[Sutton(1988)]{sutton1988learning}
Sutton, R.~S.
\newblock Learning to predict by the methods of temporal differences.
\newblock \emph{Machine Learning}, 1988.

\bibitem[Sutton \& Barto(2018)Sutton and Barto]{sutton2018reinforcement}
Sutton, R.~S. and Barto, A.~G.
\newblock \emph{Reinforcement Learning: An Introduction (2nd Edition)}.
\newblock MIT press, 2018.

\bibitem[Sutton et~al.(1999)Sutton, McAllester, Singh, and Mansour]{sutton2000policy}
Sutton, R.~S., McAllester, D.~A., Singh, S.~P., and Mansour, Y.
\newblock Policy gradient methods for reinforcement learning with function approximation.
\newblock In \emph{Advances in Neural Information Processing Systems}, 1999.

\bibitem[Sutton et~al.(2008)Sutton, Szepesv{\'a}ri, and Maei]{sutton2008convergent}
Sutton, R.~S., Szepesv{\'a}ri, C., and Maei, H.
\newblock A convergent {$O (n)$} algorithm for off-policy temporal-difference learning with linear function approximation.
\newblock In \emph{Advances in Neural Information Processing Systems}, 2008.

\bibitem[Sutton et~al.(2009)Sutton, Maei, Precup, Bhatnagar, Silver, Szepesv{\'{a}}ri, and Wiewiora]{sutton2009fast}
Sutton, R.~S., Maei, H.~R., Precup, D., Bhatnagar, S., Silver, D., Szepesv{\'{a}}ri, C., and Wiewiora, E.
\newblock Fast gradient-descent methods for temporal-difference learning with linear function approximation.
\newblock In \emph{Proceedings of the International Conference on Machine Learning}, 2009.

\bibitem[Sutton et~al.(2011)Sutton, Modayil, Delp, Degris, Pilarski, White, and Precup]{sutton2011horde}
Sutton, R.~S., Modayil, J., Delp, M., Degris, T., Pilarski, P.~M., White, A., and Precup, D.
\newblock Horde: a scalable real-time architecture for learning knowledge from unsupervised sensorimotor interaction.
\newblock In \emph{Proceedings of the International Conference on Autonomous Agents and Multiagent Systems}, 2011.

\bibitem[Szepesv{\'a}ri(1997)]{szepesvari1997asymptotic}
Szepesv{\'a}ri, C.
\newblock The asymptotic convergence-rate of {Q-learning}.
\newblock In \emph{Advances in Neural Information Processing Systems}, 1997.

\bibitem[Tadi{\'c}(2002)]{tadic2002almost}
Tadi{\'c}, V.~B.
\newblock On the almost sure rate of convergence of temporal-difference learning algorithms.
\newblock \emph{IFAC Proceedings Volumes}, 2002.

\bibitem[Tadic(2004)]{tadic2004almost}
Tadic, V.~B.
\newblock On the almost sure rate of convergence of linear stochastic approximation algorithms.
\newblock \emph{IEEE Transactions on Information Theory}, 2004.

\bibitem[Tsitsiklis \& Van~Roy(1997)Tsitsiklis and Van~Roy]{tsitsiklis1997analysis}
Tsitsiklis, J.~N. and Van~Roy, B.
\newblock An analysis of temporal-difference learning with function approximation.
\newblock \emph{{IEEE} Transactions on Automatic Control}, 1997.

\bibitem[Vidyasagar(2023)]{vidyasagar2023convergence}
Vidyasagar, M.
\newblock Convergence of stochastic approximation via martingale and converse lyapunov methods.
\newblock \emph{Mathematics of Control, Signals, and Systems}, 2023.

\bibitem[Xu et~al.(2019)Xu, Zou, and Liang]{xu2019two}
Xu, T., Zou, S., and Liang, Y.
\newblock Two time-scale off-policy {TD} learning: Non-asymptotic analysis over markovian samples.
\newblock In \emph{Advances in Neural Information Processing Systems}, 2019.

\bibitem[Yaji \& Bhatnagar(2020)Yaji and Bhatnagar]{yaji2020stochastic}
Yaji, V.~G. and Bhatnagar, S.
\newblock Stochastic recursive inclusions in two timescales with nonadditive iterate-dependent markov noise.
\newblock \emph{Mathematics of Operations Research}, 2020.

\bibitem[Yu(2015)]{yu2015convergence}
Yu, H.
\newblock On convergence of emphatic temporal-difference learning.
\newblock In \emph{Proceedings of the Conference on Learning Theory}, 2015.

\bibitem[Yu(2017)]{yu2017convergence}
Yu, H.
\newblock On convergence of some gradient-based temporal-differences algorithms for off-policy learning.
\newblock \emph{arXiv Preprint}, 2017.

\bibitem[Zeng et~al.(2024)Zeng, Doan, and Romberg]{zeng2024two}
Zeng, S., Doan, T.~T., and Romberg, J.
\newblock A two-time-scale stochastic optimization framework with applications in control and reinforcement learning.
\newblock \emph{SIAM Journal on Optimization}, 2024.

\bibitem[Zhang(2022)]{zhang2022thesis}
Zhang, S.
\newblock \emph{Breaking the deadly triad in reinforcement learning}.
\newblock PhD thesis, University of Oxford, 2022.

\bibitem[Zhang(2025)]{zhang2025towards}
Zhang, S.
\newblock Towards formalizing reinforcement learning theory.
\newblock \emph{arXiv Preprint}, 2025.

\bibitem[Zhang \& Whiteson(2022)Zhang and Whiteson]{zhang2022truncated}
Zhang, S. and Whiteson, S.
\newblock Truncated emphatic temporal difference methods for prediction and control.
\newblock \emph{Journal of Machine Learning Research}, 2022.

\bibitem[Zhang et~al.(2020)Zhang, Liu, Yao, and Whiteson]{zhang2019provably}
Zhang, S., Liu, B., Yao, H., and Whiteson, S.
\newblock Provably convergent two-timescale off-policy actor-critic with function approximation.
\newblock In \emph{Proceedings of the International Conference on Machine Learning}, 2020.

\bibitem[Zhang et~al.(2021{\natexlab{a}})Zhang, Wan, Sutton, and Whiteson]{zhang2021average}
Zhang, S., Wan, Y., Sutton, R.~S., and Whiteson, S.
\newblock Average-reward off-policy policy evaluation with function approximation.
\newblock In \emph{Proceedings of the International Conference on Machine Learning}, 2021{\natexlab{a}}.

\bibitem[Zhang et~al.(2021{\natexlab{b}})Zhang, Yao, and Whiteson]{zhang2021breaking}
Zhang, S., Yao, H., and Whiteson, S.
\newblock Breaking the deadly triad with a target network.
\newblock In \emph{Proceedings of the International Conference on Machine Learning}, 2021{\natexlab{b}}.

\bibitem[Zhang et~al.(2022)Zhang, Tachet~des Combes, and Laroche]{zhang2022global}
Zhang, S., Tachet~des Combes, R., and Laroche, R.
\newblock Global optimality and finite sample analysis of softmax off-policy actor critic under state distribution mismatch.
\newblock \emph{Journal of Machine Learning Research}, 2022.

\end{thebibliography}
\bibliographystyle{icml2026}

%
%
\newpage
\appendix
\onecolumn
\section{Mathematical Background}

\begin{theorem}[\textbf{Gronwall Inequality}] (Lemma 6 in Section 11.2 in \citet{borkar2009stochastic})\label{theorem: Gronwall}
    For a continuous function $u(\cdot) \geq 0$ and scalars $C,K,T \geq 0$,
    \begin{align}
    u(t) \leq C+K\int_0^t u(s) ds \quad \forall t \in [0,T]
    \end{align}
    implies
    \begin{align}
    u(t) \leq Ce^{tK}, \forall t \in [0,T].    
    \end{align}
\end{theorem}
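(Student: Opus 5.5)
The plan is the standard integrating-factor argument. I would introduce the auxiliary function
\begin{align}
R(t) \doteq C + K\int_0^t u(s)\, ds, \qquad t \in [0,T],
\end{align}
which dominates $u$ by hypothesis: $u(t) \leq R(t)$ for all $t \in [0,T]$. Since $u$ is continuous, the fundamental theorem of calculus makes $R$ continuously differentiable on $[0,T]$ with $R'(t) = K u(t)$ and $R(0) = C$. Combining $K \geq 0$ with $u \leq R$ then gives the differential inequality
\begin{align}
R'(t) \leq K R(t), \qquad t \in [0,T].
\end{align}

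Next I would multiply by the integrating factor $e^{-Kt}$. Setting $\Phi(t) \doteq e^{-Kt} R(t)$, a direct computation gives
\begin{align}
\Phi'(t) = e^{-Kt}\bigl(R'(t) - K R(t)\bigr) \leq 0,
\end{align}
so $\Phi$ is nonincreasing on $[0,T]$ by the mean value theorem. Hence $\Phi(t) \leq \Phi(0) = C$, that is, $R(t) \leq C e^{Kt}$. Chaining this with $u(t) \leq R(t)$ yields $u(t) \leq C e^{tK}$ for every $t \in [0,T]$, as claimed.

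There is essentially no obstacle here. The only points needing care are sign conditions and regularity. The step $K u \leq K R$ uses $K \geq 0$. Differentiability of $R$ uses the continuity of $u$, which the statement assumes. The degenerate cases $K = 0$ (where the conclusion is immediate, $u \leq C$) and $C = 0$ (where the argument forces $u \equiv 0$) are covered by the same computation without modification. The nonnegativity of $u$ is not actually needed for the argument above, but it is harmless to keep.
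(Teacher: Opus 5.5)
Your argument is correct and is essentially the standard integrating-factor proof behind the cited Lemma 6 of \citet{borkar2009stochastic}; the paper gives no proof of its own, only that citation. That proof multiplies the integral term by an exponential integrating factor, and working with $R(t) = C + K\int_0^t u(s)\,ds$ as you do also avoids dividing by $K$, so $K=0$ needs no separate treatment. One small quibble: for $C=0$ your computation gives $u \le 0$, and concluding $u \equiv 0$ is exactly where the hypothesis $u \ge 0$ enters, though the stated bound itself does not need it.
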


\begin{theorem}[\textbf{Gronwall Inequality in the Reverse Time}]\label{theorem: Gronwall reverse}

For a continuous function $u(\cdot) \geq 0$ and scalars $C,K,T \geq 0$,
\begin{align}
u(t) \leq C + K\int_t^0 u(s) ds \quad \forall t \in [-T,0]    \label{eq: reverse Gronwall condtion}
\end{align}
implies
\begin{align}
u(t) \leq Ce^{-tK}, \forall t \in [-T,0].    
\end{align}
\end{theorem}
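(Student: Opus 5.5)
We prove the reverse-time Gronwall inequality by reducing it to the forward Gronwall inequality (Theorem~\ref{theorem: Gronwall}) through a time reflection. Define $v(s) \doteq u(-s)$ for $s \in [0,T]$. Then $v$ is continuous and nonnegative, since $u$ is, so $v$ satisfies the hypotheses of the forward statement as far as regularity is concerned.

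Next we rewrite the hypothesis \eqref{eq: reverse Gronwall condtion} in terms of $v$. Fix $s \in [0,T]$ and set $t = -s \in [-T,0]$. The hypothesis reads
\begin{align}
v(s) = u(-s) \leq C + K\int_{-s}^0 u(r)\, dr.
\end{align}
Substituting $r = -q$ (so $dr = -dq$, and the limits $-s \mapsto s$, $0 \mapsto 0$) gives
\begin{align}
\int_{-s}^0 u(r)\, dr = \int_0^s u(-q)\, dq = \int_0^s v(q)\, dq.
\end{align}
Hence $v(s) \leq C + K\int_0^s v(q)\,dq$ for all $s \in [0,T]$.

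This is exactly the hypothesis of Theorem~\ref{theorem: Gronwall}, with the same $C, K, T \geq 0$. Applying it yields $v(s) \leq Ce^{sK}$ for all $s\in[0,T]$. Translating back with $t = -s$, we obtain $u(t) = v(-t) \leq Ce^{-tK}$ for all $t \in [-T,0]$, which is the claim.

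There is no real obstacle here. The only care needed is in tracking the orientation of the integral under the substitution, so that $\int_t^0$ becomes $\int_0^{-t}$ with a nonnegative integrand and the correct sign. Continuity and nonnegativity are preserved by the reflection, so the forward result applies verbatim.
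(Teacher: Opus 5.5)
Your proof is correct: the reflection $v(s)=u(-s)$, with $\int_{-s}^{0}u(r)\,dr=\int_0^{s}v(q)\,dq$, turns the hypothesis into exactly the forward condition of Theorem~\ref{theorem: Gronwall}, and reflecting its conclusion $v(s)\le Ce^{sK}$ back gives $u(t)\le Ce^{-tK}$ on $[-T,0]$. The paper does not prove this statement itself but defers to Appendix A.2 of \citet{liu2025ode}; your time-reversal reduction to the forward Gronwall inequality is the standard argument, and as far as I recall it is essentially the one given in that reference.
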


For a proof, see Appendix A.2 of \citet{liu2025ode}.

\begin{theorem}[\textbf{Discrete Gronwall Inequality}] (Lemma 8 in Section 11.2 in \citet{borkar2009stochastic})\label{theorem: discrete Gronwall}
    For nonnegative sequences $\qty{x_n,n\geq 0}$ and $\qty{a_n, n\geq 0}$ and scalars $C,L \geq 0$,
    \begin{align}
    x_{n+1} \leq C + L\sum_{i=0}^n a_i x_i \quad \forall n
    \end{align}
    implies
    \begin{align}
    x_{n+1} \leq Ce^{L\sum_{i=0}^n a_i }  \quad \forall n.  
    \end{align}
\end{theorem}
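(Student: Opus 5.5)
The plan is the standard ``dominating partial sum'' induction. Define the auxiliary sequence
\begin{align}
S_{-1} \doteq C, \qquad S_n \doteq C + L\sum_{i=0}^n a_i x_i \quad (n \geq 0),
\end{align}
so that the hypothesis reads $x_{n+1} \leq S_n$ for all $n \geq 0$. The target is the bound $S_n \leq C e^{L\sum_{i=0}^n a_i}$. The conclusion then follows immediately from $x_{n+1} \leq S_n$.

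The key step is a one-step recursion for $S_n$. Since all quantities are nonnegative, $S_n = S_{n-1} + L a_n x_n$. If we know $x_n \leq S_{n-1}$, this gives
\begin{align}
S_n \leq S_{n-1}(1 + L a_n) \leq S_{n-1} e^{L a_n},
\end{align}
using $1 + u \leq e^u$. For $n \geq 1$, the inequality $x_n \leq S_{n-1}$ is exactly the hypothesis at index $n-1$. Iterating from $S_{-1} = C$ then yields $S_n \leq C \prod_{i=0}^n e^{L a_i} = C e^{L\sum_{i=0}^n a_i}$. I would write this as a formal induction on $n$, with inductive claim $S_{n-1} \leq C e^{L\sum_{i<n} a_i}$.

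The only delicate point, and the one I expect to need care, is the base case $n = 0$: it requires $x_0 \leq S_{-1} = C$. This is not implied by the displayed hypothesis, which only constrains $x_1, x_2, \dots$. Indeed, with $x_0$ arbitrarily large, the conclusion $x_1 \leq C e^{L a_0}$ can fail. So I would read the statement, as in Borkar's Lemma 8, with the hypothesis also holding at $n = -1$ (empty sum), i.e.\ $x_0 \leq C$, and state this explicitly.

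Alternatively, if one wishes to avoid that convention, the same argument goes through with $C$ replaced by $\max\qty{C, x_0}$ at no other cost. Either way, it is the form used in the rescaled-iterate arguments, where the initial rescaled iterate has norm at most $1$ and sits inside the constant $C$. Apart from this, the proof is a routine induction and needs no further machinery.
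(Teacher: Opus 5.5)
Your argument is correct. The paper gives no proof of its own here and simply cites Lemma 8 of Borkar; your induction, which bounds the partial-sum majorant via $S_n \le (1+La_n)S_{n-1} \le e^{La_n}S_{n-1}$, is the standard argument for that lemma.

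Your caveat is also right and should be kept. As printed, the statement is false: take $C=0$, $L=1$ and $a_n=x_n=1$ for all $n$. Then the hypothesis $1\le n+1$ holds, but the conclusion would force $x_{n+1}=0$. So $x_0\le C$ must be added. It does hold everywhere the paper invokes the lemma, namely Lemmas~\ref{lemma: bound z hat}, \ref{lemma: bar z bounded} and \ref{lemma: slow gronwall}. There the initial term is $\norm{\tilde z_n(0)}\le 1$, $0$, and $\norm{\ymax_{m(T_n)}}$ respectively, and each is dominated by the constant used.
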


\begin{theorem}
[The Arzelà-Ascoli Theorem in the Extended Sense on $[0,T)$]\label{appendix: AA theorem}

Let $\qty{t \in [0, T) \mapsto g_n(t)}$ be equicontinuous in the extended sense. 
Then,
there exists a subsequence $\qty{g_{n_k}(t)}$ that converges to some continuous limit $g^{\lim}(t)$, uniformly in $t$ on $[0, T)$.
\end{theorem}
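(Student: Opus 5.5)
The plan is the classical diagonal argument (as in Kushner \& Yin, Theorem 4.2.2), adapted to equicontinuity that holds only asymptotically. I take ``equicontinuous in the extended sense'' in its standard form: $\qty{g_n(0)}$ is bounded, and for every $\epsilon>0$ there is $\delta>0$ with
\begin{align}
\limsup_{n\to\infty}\ \sup_{s,t\in[0,T),\,\abs{t-s}\le\delta}\norm{g_n(t)-g_n(s)}\le\epsilon .
\end{align}
The key feature is that the modulus of continuity is controlled only in the $\limsup$, so each estimate will hold only for $n$ beyond an $\epsilon$-dependent threshold. I will handle this by always fixing $\epsilon$ (hence $\delta$) first and only then choosing indices.

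\emph{Step 1 (eventual uniform boundedness).} Fix $\epsilon=1$ and its $\delta$. There is $N$ such that for $n\ge N$, any two points of $[0,T)$ within $\delta$ have $g_n$-values within $2$. Since $[0,T)$ is bounded, every $t$ is joined to $0$ by at most $\lceil T/\delta\rceil$ steps of length $\le\delta$. Hence
\begin{align}
\sup_{n\ge N}\ \sup_{t\in[0,T)}\norm{g_n(t)}\le \sup_n\norm{g_n(0)}+2\lceil T/\delta\rceil<\infty .
\end{align}
Discarding the first $N$ functions does not affect the conclusion, so I may assume the family is uniformly bounded.

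\emph{Step 2 (diagonal extraction).} Enumerate $\mathbb{Q}\cap[0,T)=\qty{q_1,q_2,\dots}$. By Bolzano--Weierstrass, extract successive nested subsequences along which $g_n(q_1)$, then $g_n(q_2)$, and so on, converge. The diagonal subsequence $\qty{n_k}$ then makes $g_{n_k}(q)$ converge for every rational $q\in[0,T)$.

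\emph{Step 3 (uniform Cauchy property; the main step).} Fix $\epsilon>0$, take its $\delta$, and choose $K_0$ so that for $k\ge K_0$ the $\delta$-modulus of $g_{n_k}$ is at most $2\epsilon$. Since $[0,T)$ is bounded, pick finitely many rationals $r_1,\dots,r_M$ forming a $\delta$-net of $[0,T)$. Then choose $K_1\ge K_0$ with $\norm{g_{n_k}(r_j)-g_{n_l}(r_j)}\le\epsilon$ for all $j\le M$ and all $k,l\ge K_1$. For any $t$, select $r_j$ with $\abs{t-r_j}\le\delta$; the triangle inequality gives
\begin{align}
\norm{g_{n_k}(t)-g_{n_l}(t)}\le 2\epsilon+\epsilon+2\epsilon=5\epsilon
\end{align}
uniformly in $t$. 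So $\qty{g_{n_k}}$ is uniformly Cauchy and converges uniformly to some $g^{\lim}$. The delicate point, and the reason the order of choices matters, is that equicontinuity is only asymptotic: the threshold $K_0$ must be fixed after $\delta$ but before the finite net is used.

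\emph{Step 4 (continuity of the limit).} For $\abs{t-s}\le\delta$, we have
\begin{align}
\norm{g^{\lim}(t)-g^{\lim}(s)}=\lim_k\norm{g_{n_k}(t)-g_{n_k}(s)}\le\limsup_n\sup_{\abs{t'-s'}\le\delta}\norm{g_n(t')-g_n(s')}\le\epsilon .
\end{align}
Hence $g^{\lim}$ is (uniformly) continuous on $[0,T)$, which completes the plan.
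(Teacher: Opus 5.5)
Your proposal is correct and takes the same route the paper relies on. The paper does not prove this theorem itself; it defers to Appendix A.4 of \citet{liu2025ode}, which uses the standard diagonal argument behind Theorem 4.2.2 of \citet{kushner2003stochastic}: eventual uniform boundedness, diagonal extraction over the rationals, a uniform Cauchy estimate via a finite $\delta$-net, and continuity of the limit. Your ordering of choices ($\delta$, then the threshold $K_0$, then the net) is what makes equicontinuity that holds only in the $\limsup$ sufficient.
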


For a proof, see Appendix A.4 of \citet{liu2025ode}.

\begin{theorem}[Moore-Osgood Theorem for Interchanging Limits]\label{appendix: Moore-Osgood Theorem}
If $\lim_{n\to \infty} a_{n,m} = b_m$ uniformly in $m$ and $\lim_{m\to \infty} a_{n,m} = c_n$ for each large $n$, then both $\lim_{m \to \infty}b_m$ and $\lim_{n\to\infty}c_n$ exists and are equal to the double limit, i.e., 
\begin{align}
\lim_{m\to\infty}   \lim_{n\to \infty} a_{n,m} =   \lim_{n\to \infty} \lim_{m\to \infty} a_{n,m} = \lim \limits_{\begin{subarray}{l}
n \to \infty\\
m \to \infty
\end{subarray}} a_{n,m}.
\end{align}
\end{theorem}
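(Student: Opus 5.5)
The plan is the standard $\epsilon/3$ argument. First show that $\qty{c_n}$ is Cauchy, which gives a limit $L$ by completeness of the ambient space ($\R^d$ with any norm). Then show that $b_m \to L$. Finally show that $a_{n,m} \to L$ as $n,m \to \infty$ jointly. Throughout, $n$ is restricted to the "large" indices for which $c_n = \lim_{m\to\infty} a_{n,m}$ exists.

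\emph{Step 1 (uniform Cauchy in $n$).} Fix $\epsilon > 0$. By uniform convergence of $a_{n,m}$ to $b_m$ there is $N$ such that $\norm{a_{n,m} - b_m} < \epsilon/2$ for all $n \geq N$ and all $m$. The triangle inequality then gives $\norm{a_{n,m} - a_{n',m}} < \epsilon$ for all $n, n' \geq N$, uniformly in $m$. Letting $m \to \infty$ with $n, n'$ fixed, and using that the norm is continuous, we get $\norm{c_n - c_{n'}} \leq \epsilon$. Hence $\qty{c_n}$ is Cauchy and converges to some $L$.

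\emph{Step 2 ($b_m \to L$).} Split
\begin{align}
\norm{b_m - L} \leq \norm{b_m - a_{n,m}} + \norm{a_{n,m} - c_n} + \norm{c_n - L}.
\end{align}
Choose a single index $n$ large enough that the first term is below $\epsilon/3$ uniformly in $m$ (by the uniform convergence) and the third term is below $\epsilon/3$ (by Step 1). With this $n$ frozen, choose $M$ such that the middle term is below $\epsilon/3$ for all $m \geq M$. This yields $\lim_{m\to\infty} b_m = L = \lim_{n\to\infty} c_n$, so the two iterated limits agree.

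\emph{Step 3 (double limit).} For $n \geq N$ and $m \geq M$,
\begin{align}
\norm{a_{n,m} - L} \leq \norm{a_{n,m} - b_m} + \norm{b_m - L} < \epsilon,
\end{align}
where $N$ comes from the uniform convergence and $M$ from Step 2. Hence the joint limit exists and equals $L$.

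The only delicate point is the order of quantifiers in Step 2. The index $n$ must be fixed before $m$ is sent to infinity, and this is allowed precisely because the bound on the first term is uniform in $m$. Without uniformity, a single $n$ could not control $\norm{b_m - a_{n,m}}$ for all large $m$ simultaneously.
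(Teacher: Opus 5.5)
Your proof is correct. It is the standard argument: $\{c_n\}$ is Cauchy by the uniform bound, then the $\epsilon/3$ split with $n$ frozen before $m \to \infty$ gives $b_m \to L$, then the triangle inequality gives the joint limit. The paper gives no proof of its own (it lists the theorem in its mathematical-background appendix as a known result), so the only thing to tighten is Step 3: choose $N$ and $M$ for $\epsilon/2$ rather than $\epsilon$, so that the two terms really sum to less than $\epsilon$.
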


\begin{lemma}[Sub-subsequence Lemma]\label{lemma: subsubsequence convergence}
Let $x_n$ be a sequence in some metric space. If every subsequence $x_{n_k}$ itself has a subsequence $x_{n_{k_j}}$ that converges to the same limit $x$, then $x_n$ converges to $x$.
\end{lemma}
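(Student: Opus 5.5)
The statement immediately above is the Sub-subsequence Lemma (Lemma \ref{lemma: subsubsequence convergence}). I will prove it by contradiction, using only the definition of convergence in a metric space $(M,d)$.

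Suppose $x_n$ does not converge to $x$. Negating the definition of convergence gives some $\epsilon>0$ such that for every $N$ there is an $n\ge N$ with $d(x_n,x)\ge\epsilon$. From this I build a subsequence inductively. Pick $n_0$ with $d(x_{n_0},x)\ge\epsilon$. Given $n_{k-1}$, apply the negated definition with $N=n_{k-1}+1$ to obtain $n_k>n_{k-1}$ with $d(x_{n_k},x)\ge\epsilon$. The result is a strictly increasing index sequence $\qty{n_k}$ with $d(x_{n_k},x)\ge\epsilon$ for every $k$.

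By hypothesis, this subsequence has a further subsequence $x_{n_{k_j}}$ that converges to $x$. Hence for all sufficiently large $j$ we have $d(x_{n_{k_j}},x)<\epsilon$. But every term of $\qty{x_{n_k}}$, and so every term of the sub-subsequence, satisfies $d(\cdot,x)\ge\epsilon$, which is a contradiction. Therefore $x_n\to x$.

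No step here is a real obstacle. The only point needing care is that the limit $x$ is the same for all subsequences; this is what lets the contradiction close, since without it the conclusion fails (for example, $x_n=(-1)^n$). Because the paper applies the lemma to function sequences under uniform convergence on $[0,T+1)$, I will note that the same argument works with $d$ replaced by the sup-distance $\sup_{t}\norm{f(t)-g(t)}$, which is a metric on bounded functions. Convergence in that metric is exactly uniform convergence, so no extra work is needed.
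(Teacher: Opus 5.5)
Your proof is correct and follows the same route as the paper: assume $x_n \not\to x$, extract a subsequence staying at distance at least $\epsilon$ from $x$, and contradict this with the hypothesized sub-subsequence converging to $x$. Your version is slightly more careful, since you construct the bad subsequence inductively where the paper simply asserts that it exists.
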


\begin{proof}
    Suppose we have a sequence $x_n$ where every subsequence $x_{n_k}$ itself has a subsequence $x_{n_{k_j}}$ that converges to the same limit $x$. For contradiction, assume that $x_n$ does not converge to $x$, so there is a subsequence $x_{n_{k,0}}$ that is always at least some distance $\epsilon$ away from $x$. By assumption, $x_{n_{k,0}}$ has a subsequence $x_{n_{k,1}}$ that converges to $x$, which contradicts that $x_{n_{k,0}}$ is always at least some $\epsilon$ away from $x$. So, it must be that $x_n$ converges to $x$.
\end{proof}

\newpage
\section{Main Assumptions}\label{appendix: assumptions}

\begin{assumption}\label{assumption: stationary distribution}
The Markov chain $\qty{W_n}$ has a unique invariant probability measure (i.e., stationary distribution), denoted by ${d_\fW}$.
\end{assumption}
Although the uniqueness and even the existence of the invariant probability measure can be relaxed, we use Assumption~\ref{assumption: stationary distribution} for simplification. Additionally, due to the way updates are defined (\eqref{eq: x n updates} and \eqref{eq: y n updates}),
we start counting $\qty{W_n}$ from $n = 1$.

\begin{assumption} \label{assumption: learning ratios}
    The learning rates $\qty{\alpha(i)}$, $\qty{\beta(i)}$ are positive, decreasing, and satisfy
    \begin{align}
        \label{eq lr sum} \sum_{i=0}^{\infty} \alpha(i) = \sum_{i=0}^{\infty} \beta(i) &= \infty, \\
        \label{eq lim alpha beta 0} \lim_{i \to \infty} \alpha(i) = \lim_{i \to \infty} \beta(i) &= 0 \\
        \label{eq 1/n alpha} \frac{\alpha(i)- \alpha(i+1)}{\alpha(i)} =& \fO\left(\alpha(i)\right), \\
        \label{eq 1/n beta} \frac{\beta(i)- \beta(i+1)}{\beta(i)} =& \fO\left(\beta(i)\right), \\
        \label{eq lr ratio} \lim_{i \rightarrow \infty} \frac{\beta(i)}{\alpha(i)} = 0.
    \end{align}
\end{assumption}

These conditions on the learning rates are quite common in stochastic approximation. The last condition is necessary for the two-timescale formulation we are using. 

\begin{remark}
    For any $\alpha(n) = \frac{B_1}{(n+B_2)^{\gamma_{\alpha}}}, \beta(n) = \frac{B_3}{(n+B_4)^{\gamma_{\beta}}}$ with $\gamma_{\alpha}, \gamma_{\beta} \in (0.5, 1], \gamma_{\beta} > \gamma_{\alpha}$, and all $B_i$ positive,
    one can verify that all the conditions in Assumption $\ref{assumption: learning ratios}$ are satisfied.
\end{remark}

Now, we make assumptions about the functions $H$ and $G$.
For any $c \in [1, \infty)$, define
    \begin{align}
        H_c(x, y, w) &\doteq \frac{H(cx, cy, w)}{c} \label{def: H c} \\
        G_c(x, y, w) &\doteq \frac{G(cx, cy, w)}{c} \label{def: G c}.
    \end{align}

The functions $H_c$ and $G_c$ are rescaled versions of the functions $H$ and $G$ and will be used to construct rescaled iterates, a key technique in the ODE method (see, e.g., \citet{borkar2000ode,borkar2009stochastic,liu2025ode}).
Just as in those works, we need the existence of some sort of limiting functions for $H_c$ and $G_c$ when $c \to \infty$.

\begin{assumption} \label{assumption: H c H infty}
    
    There exist measurable functions $H_{\infty}(x,y,w)$ and $G_{\infty}(x,y,w)$, functions $\kappa_H(c), \kappa_G(c): \R \rightarrow \R$, and measurable functions $b_H(x,y,w), b_G(x,y,w)$ such that for all $x,y,w$
    \begin{align}
        H_c(x,y,w)-H_{\infty}(x,y,w) &= \kappa_H(c)b_H(x,y,w) \label{eq: H c H inf kappa} \\
        G_c(x,y,w)-G_{\infty}(x,y,w) &= \kappa_G(c)b_G(x,y,w) \label{eq: G c G inf kappa} \\
        \lim_{c\rightarrow \infty} \kappa_H(c) &= \lim_{c \rightarrow \infty} \kappa_G(c) = 0 \label{eq: kappa to 0}
    \end{align}
    There also exists a measurable function $L_b(w)$ such that for all $x, x', y, y', w$
    \begin{align}
        \norm{b_H(x,y,w) - b_H(x',y',w)} &\leq L_b(w)\norm{(x,y)-(x',y')}, \label{b_H lipschitz} \\
        \norm{b_G(x,y,w) - b_G(x',y',w)} &\leq L_b(w)\norm{(x,y)-(x',y')}. \label{b_G lipschitz}
    \end{align}
    Additionally, the expectation $L_b \doteq \E_{w \sim d_\fW} [L_b(w)]$ is well-defined and finite.
\end{assumption}

Assumption~\ref{assumption: H c H infty} provides details on how $H_c$ and $G_c$ converge to $H_\infty$ and $G_\infty$ when $c \to \infty$.

We now assume that the functions $H_c$ and $G_c$ are Lipschitz. This will guarantee that the corresponding ODEs exist and are unique.

\begin{assumption}\label{assumption: H Lipschitz}
    There exists a measurable function $L(w)$ such that for any $x,x',y,y',w$,
    \begin{align}
        \norm{H(x,y,w) - H(x',y',w)} &\leq L(w) \norm{(x,y) - (x',y')}, \label{eq: H L}\\
        \norm{H_\infty(x,y,w) - H_\infty(x',y',w)} &\leq L(w) \norm{(x,y) - (x',y')}, \label{eq: H L inf} \\
        \norm{G(x,y,w) - G(x',y',w)} &\leq L(w) \norm{(x,y) - (x',y')}, \label{eq: G L}\\
        \norm{G_\infty(x,y,w) - G_\infty(x',y',w)} &\leq L(w) \norm{(x,y) - (x',y')}. \label{eq: G L inf}
    \end{align}
    Moreover,
    the following expectations are well-defined and finite for any $x, y$:
    \begin{align}
    h(x,y) &\doteq \E_{w\sim d_\fW}[H(x,y,w)],  \label{def: h} \\
    h_\infty(x,y) &\doteq \E_{w \sim d_\fW}[H_{\infty}(x,y,w)], \\
    g(x,y) &\doteq \E_{w\sim d_\fW}[G(x,y,w)],  \label{def: g} \\
    g_\infty(x,y) &\doteq \E_{w \sim d_\fW}[G_{\infty}(x,y,w)], \\
    L &\doteq \E_{w \sim d_\fW}[L(w)].
    \end{align}
\end{assumption}

The functions $x,y \mapsto H_c(x, y, w)$ and $x,y \mapsto G_c(x, y, w)$ share the same Lipschitz constant $L(w)$ as the functions $x,y \mapsto H(x, y, w)$ and $x,y \mapsto G(x, y, w)$.
Similarly to~\eqref{def: H c} and~\eqref{def: G c}, we define
\begin{align}
    \label{def: h c}
    h_c(x, y) &\doteq \frac{h(cx, cy)}{c}, \\
    \label{def: g c}g_c(x, y) &\doteq \frac{g(cx, cy)}{c}.
\end{align}

The following assumption is necessary to ensure that the rescaled iterates can converge to the trajectory of the limiting ODE.

\begin{assumption} \label{assumption: lim h,g uniformly convergent}
    The ODE
    \begin{align}
        \frac{dx(t)}{dt} = h(x(t), y)
    \end{align}
    has a unique globally asymptotically stable equilibrium $\lambda(y)$ where $\lambda: \mathbb{R}^{d_2} \rightarrow \mathbb{R}^{d_1}$ is a Lipschitz map. Additionally, we assume that the ODE
    \begin{align}
        \frac{dy(t)}{dt} = g(\lambda(y(t)), y(t))
    \end{align}
    has a unique globally asymptotically stable equilibrium.
    
    As $c \rightarrow \infty$, $h_c(x,y)$ converges to $h_{\infty}(x,y)$ uniformly in $(x,y)$ on any compact subsets of $\mathbb{R}^{d_1+d_2}.$ The limiting ODE 
    \begin{align}
        \frac{dx(t)}{dt} = h_{\infty}(x(t), y) \label{eq h ode at limit}
    \end{align}
    has a unique globally asymptotically stable equilibrium $\lambda_{\infty}(y)$ where $\lambda_{\infty}: \mathbb{R}^{d_2} \rightarrow \mathbb{R}^{d_1}$ is a Lipschitz map. $\lambda_{\infty}$ is homogeneous, i.e., $\lambda_{\infty}(cy) = c \lambda_{\infty} (y)$. And $\lambda_{\infty}(0) = 0$.
    
    As $c \rightarrow \infty$, $g_c(x, y)$ converges uniformly to $g_{\infty}(x, y)$ on compact subsets of $\mathbb{R}^{d_1+d_2}.$ The limiting ODE
    \begin{align}
        \frac{dy(t)}{dt} = g_{\infty}(\lambda_{\infty}(y(t)), y(t)) \label{eq g ode at limit}
    \end{align}
    has 0 as its unique globally asymptotically stable equilibrium.
\end{assumption}

The map $\lambda_\infty$ tells us, for a particular external input $y$, what $x$ (the parameters of the inner loop) should converge to. This idea and the notation for it come from Chapter 6 of \citet{borkar2009stochastic}. 

\begin{assumption} \label{assumption: lln}
   Let $\gamma$ denote any of the following functions:
    \begin{align}
        \label{eq lln all g}
        w \mapsto& H(x, y, w) \quad (\forall x, y), \\
        w \mapsto& G(x, y, w) \quad (\forall x, y), \\
        w \mapsto& L_b(w), \\
        w \mapsto& L(w).
    \end{align}
    We have, for any initial condition $W_1$, 
    \begin{align}
        \lim_{n\to\infty} \alpha(n) \sum_{i=1}^n \qty( \gamma(W_i) - \E_{w\sim d_\fW}[\gamma(w)]) &= 0 \qq{a.s.} \\
        \lim_{n\to\infty} \beta(n) \sum_{i=1}^n \qty( \gamma(W_i) - \E_{w\sim d_\fW}[\gamma(w)]) &= 0 \qq{a.s.} \label{eq stronger lln}
    \end{align}
\end{assumption}

The assumption holds for all functions $\gamma$ on the same probability-one set.

\begin{remark}
    Once more, consider $\beta(n) = \frac{B_1}{(n + B_2)^ {\gamma_{\beta}}}$ as an example. For $\gamma_{\beta} = 1$,
    \eqref{eq stronger lln} is implied by the following Law of Large Numbers~\eqref{eq lln new}
    \begin{align}
        \label{eq lln new}
        \lim_{n\to\infty} \frac{1}{n} \sum_{i=1}^n\qty( \gamma(W_i) - \E_{w\sim{d_\fW}}\left[\gamma(w)\right]) = 0 \qq{a.s.} \tag{LLN}
    \end{align}
    For $\gamma_{\beta} \in (0.5, 1]$, 
    \eqref{eq stronger lln} is implied by the following Law of the Iterated Logarithm~\eqref{eq lil}
    \begin{align}
        \label{eq lil}
        \norm{\sum_{i=1}^n\qty( \gamma(W_n) - \E_{w\sim{d_\fW}}\left[\gamma(w)\right])} \leq \zeta \sqrt{n \log\log n}  \qq{a.s.,} \tag{LIL}
    \end{align}
where $\zeta$ is a sample path dependent finite constant. 
\end{remark}

\newpage
\section{Proof of Lemma~\ref{thm: x stability}}
\label{sec main proof stability}

This is a more detailed version of Section \ref{sec x stab}. In this section, we will prove Lemma~\ref{thm: x stability}. Section~\ref{sec fast ts setup} sets up the notation and explains our fast timescale analysis. Section~\ref{sec: construct sequence} defines the rescaled iterates and some important functions. Section~\ref{sec: convergent sub seq} assumes for contradiction that stability does not hold and identifies a resulting subsequence of interest. Section~\ref{sec: sequence property} demonstrates convergence along this subsequence and uses this to show that the convergence holds for the entire sequence. Section~\ref{sec: ode external inp} defines some notation and provides some results about ODEs with external inputs. Finally, Section~\ref{sec: completing proof} uses results from the previous sections to complete the proof.
Lemmas in this section are derived on an arbitrary sample path $\qty{x_0,y_0, \qty{W_i}_{i=1}^\infty}$ such that 
the assumptions in Appendix~\ref{appendix: assumptions} hold.
Thus, we omit ``$a.s.$'' from the lemma statements for conciseness.

\subsection{Splitting up the Timeline}
\label{sec fast ts setup}

Here, we perform a fast timescale analysis. First, we split the positive real axis $[0, \infty)$ into chunks of length $\qty{\alpha(i)}_{i=0,1,\dots}$. 
We then collect these segments together into larger
intervals $\qty{[T_n, T_{n+1})}_{n=0,1,\dots}$, where the sequence $\qty{T_n}$ has the property that as $n$ grows large, we have $T_{n+1} - T_n \approx T$.
We define 
\begin{align}
    t(0) \doteq& 0, \\
    t(n) \doteq& \sum_{i=0}^{n-1} \alpha(i) \qq{$n=1,2,\dots$}.
\end{align}
For all $T > 0$, define 
\begin{align}
m(T) = \max\qty{{i | T \geq t(i)}} \label{def: m}
\end{align} to be the maximal $i$ where $t(i)$ is no greater than $T$. 
To visualize,
$t(m(T))$ is a bit to the left of $T$ on the real axis.
So, $t(m(T))$ satisfies the following:
\begin{align}
& t(m(T)) \leq T < t(m(T) + 1) = t(m(T)) + \alpha(m(T)) \label{eq: t-m-inequality-1}, \\   
&  t(m(T)) > T -\alpha(m(T))\label{eq: t-m-inequality-2}.
\end{align}
Define 
\begin{align}
&T_0 = 0, \\   
&T_{n+1} = t(m(T_n+T) + 1) \label{def:T}.
\end{align}
Intuitively,
$T_{n+1}$ is a little to the right of $T_n + T$ on the real axis.

We define 
\begin{align}
\alpha(i) = \beta(i) =& 0 \quad \forall i < 0, \label{eq: alpha 0} \\
m(t) =& 0 \quad \forall t \leq 0,  \label{eq: m 0}
\end{align}
for simplifying notations. For any given function $f$ with domain $\fW$,
its asymptotic rate of change is defined as
\begin{align}
    \limsup_n \sup_{-\tau \leq t_1 \leq t_2 \leq \tau} \norm{\sum_{i = m(t(n) + t_1)}^{m(t(n) + t_2) - 1} \alpha(i) [f(W_{i+1}) - \E_{w\sim d_\fW}\qty[f(w)]] }.
\end{align}
This asymptotic rate of change helps us to describe the asymptotic regularity of $\qty{f(W_n)}$ and we lean on its usefulness in studying stochastic approximation. We refer the reader to Sections~5.3.2 and~6.2 of \citet{kushner2003stochastic} for a detailed exposition of this tool. In this work, we have deferred the contents concerning asymptotic rate of change to Appendices \ref{appendix: H h 0} and \ref{appendix: lim H uniformly convergent} since the statements and proofs are very similar to results in \citet{liu2025ode}.

\subsection{Defining the Scaled Iterates}\label{sec: construct sequence}

We start by fixing a sample path $\qty{x_0,y_0,\qty{W_i}_{i=1}^\infty}$. We will take $\bar{z}(t)$ to be the piecewise constant interpolation\footnote{It also works if we consider a piecewise linear interpolation following \citet{borkar2009stochastic}.
The piecewise linear interpolation, however, will significantly complicate the presentation.
We, therefore, follow \citet{kushner2003stochastic} and \citet{liu2025ode} and use the piecewise constant interpolation.
} of $z_n$ at points $\qty{t(n)}_{n=0,1,\dots}$, i.e.,

\begin{align}
\bar z(t) \doteq (\bar{x}(t),\bar y(t)) \doteq  
\begin{cases}
(x_0, y_0) & t \in [0,t(1)) \\
(x_1, y_1) & t \in [t(1),t(2)) \\
(x_2, y_2) & t \in [t(2),t(3)) \\
\vdots &
\end{cases}
\end{align}
By \eqref{def: m}, we also have
\begin{align}\label{def: bar z}
\bar{z}(t) \doteq  z_{m(t)}.
\end{align}

By \eqref{eq: x n updates}, $\forall n \geq 0$, we have
\begin{align}
&\bar{z}(t(n+1)) = \bar{z}(t(n))  + (\alpha(n)H(\bar{z}(t(n)) ,W_{n+1}), \beta(n)G(\bar{z}(t(n)) ,W_{n+1})). 
\end{align}
Now we describe our rescaling of $\bar z(t)$.

\begin{definition} \label{definition: z tilde n}
$\forall n \in \N, t \in [0,T+1) $, define 
\begin{align}
\tilde{z}_n(t)  \doteq (\tilde{x}_n(t), \tilde{y}_n(t)) &\doteq \frac{\bar{z}(T_n + t)}{r_n}
\end{align}

where 
\begin{align}
r_n \doteq \max\qty{1, r_{n-1}, \norm{\bar{z}(T_n)}} \label{def: r n}, r_0 \doteq \max\qty{1, \norm{\bar{z}(0)}}.
\end{align}  
\end{definition}
This implies 
\begin{align}\label{eq: hat-z-norm-1}
\forall n \in \N, \norm{\tilde{z}_n(0) } \leq 1.
\end{align}
Moreover\footnote{In this paper, we use the convention that $\sum_{k=i}^j \alpha(k) = \sum_{k=i}^j \beta(k) = 0$ when $j < i$.}, 

$\forall n \in \N, t \in [0,T+1) $, 
\begin{align}
\tilde{z}_n(t) &= \frac{\bar{z}(T_n) + \sum_{i = m(T_n)}^{m(T_n+t) - 1} (\alpha(i) H(\bar{z}(t(i)), W_{i+1}), \beta(i) G(\bar{z}(t(i)), W_{i+1}))}{r_n}\\
&= \tilde{z}_n(0) + \sum_{i = m(T_n)}^{m(T_n+t) - 1} (\alpha(i) H_{r_n}(\tilde{z}_n(t(i)-T_n), W_{i+1}), \beta(i) G_{r_n}(\tilde{z}_n(t(i)-T_n), W_{i+1})).
\end{align}

\begin{remark}
    Note that we depart from \citet{liu2025ode} in our definition of the rescaling factor $r_n$ \eqref{def: r n}. Through the definition, we ensure that the sequence $\qty{r_n}$ is monotonic, which enables us to obtain convergence over the entire sequence rather than just a subsequence.

    We also depart from prior works by defining the sequence of rescaled functions $\qty{t \mapsto \tilde{z}_n(t)}$ which share the domain $[0, T+1)$, as opposed to rescaling the function $\bar{z}(t)$ directly (often denoted as $\hat{z}$). This consistent, larger domain greatly simplifies our arguments, while prior works must handle the diminishing excess part $[T, T_{n+1} - T_n)$, which can get messy.
\end{remark}

We can regard $\tilde{z}_n(t)$ as the Euler's discretization of $z_n(t)$ defined below.

\begin{definition} \label{definition: z n}
    $\forall n \in \N, t\in [0,T+1)$, define $z_n(t) = (x_n(t), y_n(t))$ as the solution to the ODE system 
    \begin{align}
    \frac{d x_n(t)}{d t} = h_{r_n}(x_n(t), y_n(t)) \label{def: x n} \\
    \frac{d y_n(t)}{d t} = 0 \label{def: y n}
    \end{align}
    with initial condition 
    \begin{align}
    z_n(0) = \tilde{z}_n(0). \label{def: z n init} 
    \end{align}
\end{definition}
    
    We can also write 
    $z_n(t)$ as 
    \begin{align}
    z_n(t)   &=  \tilde{z}_n(0) + \int_0^t (h_{r_n}(z_n(s)), 0)ds. \label{def: z n integral}
    \end{align}
    Ideally,
    we would like to see
    that
    the error of the discretization diminishes asymptotically.
    Precisely speaking,
    the discretization error is defined as 
    \begin{align}
    f_n(t) \doteq \tilde{z}_n(t) - z_n(t) \label{def: f}
    \end{align}
    and we want to show that $f_n(t)$ diminishes to 0 uniformly in $t$ as $n\to\infty$.
    To accomplish this, we need to analyze the following three sequences of functions
    \begin{align}
        \label{eq three seq}
        \qty{t \mapsto \tilde z_n(t)}_{n=0}^\infty, \qty{z_n(t)}_{n=0}^\infty, \qty{f_n(t)}_{n=0}^\infty.
    \end{align}
    In particular,
    we show that they are all \textit{equicontinuous in the extended sense}. We defer the relevant definitions, statements, and proofs to Appendix \ref{appendix: equicontinuity} as they are quite similar to the analogous sections in \citet{liu2025ode}.

\subsection{A Convergent Subsequence}
\label{sec: convergent sub seq}

The ultimate goal would be to show that 
\begin{align}
\sup_n \norm{z_n} < \infty.
\end{align}
Observe the inequality
\begin{align}
    \forall n, \quad \norm{z_{m(T_n)}} = \norm{\bar{z}(T_n)}  \leq  r_n. \label{eq: z m T n leq r n}
\end{align}
Therefore, if we had 
\begin{align}\label{eq: slow sup r n bounded}
    \sup_n r_n < \infty,  
\end{align}
then the result would come easily. So, we assume for contradiction that $\sup_n r_n = \infty$. However, we can't obtain a contradiction in this section and must first show Lemma \ref{thm: x stability}. 

According to the Arzelà-Ascoli theorem in the extended sense (Theorem~\ref{appendix: AA theorem}), a sequence of
equicontinuous functions always has a subsequence of functions that uniformly converge to a continuous limit.
In the following, we use this to identify a particular subsequence of interest.

\begin{lemma}[\textbf{\ref{lemma: three convergence} Restated}]
    Suppose $\sup_n r_n = \infty$.
    Take an arbitrary subsequence $\qty{n_{k,0}}_{k=0}^\infty \subseteq \qty{0, 1, 2, \dots}$. Then there is a subsequence $\qty{n_k}_{k=0}^\infty \subseteq \qty{n_{k,0}}_{k=0}^\infty$ such that there exist some continuous functions $f^{\lim}(t)$ and $\tilde{z}^{\lim}(t)$ such that $\forall t \in [0, T+1)$,
\begin{align}
\lim_{k \to \infty} f_{n_k}(t) =& f^{\lim}(t)   \label{eq: f x r lim} , \\ 
\lim_{k \to \infty} \tilde{z}_{n_k}(t) =& \tilde{z}^{\lim}(t),  \label{eq: hat z r lim}
\end{align}
where both convergences are uniform in $t$ on $[0, T+1)$.
Furthermore,
let $z^{\lim}(t) = (x^{\lim}(t),y^{\lim}(t))$ denote the unique solution to the ODE system 
\begin{align}
    \frac{d x^{\lim}(t)}{d t} &= h_{\infty}(x^{\lim}(t), y^{\lim}(t)) \label{def: xlim} \\
    \frac{d y^{\lim}(t)}{d t} &= 0 \label{def: ylim}
\end{align}
with the initial condition
\begin{align}
z^{\lim}(0) = \tilde{z}_n^{\lim}(0),
\end{align}
in other words,
\begin{align}\label{eq: z lim t}
z^{\lim}(t) &= \tilde{z}^{\lim}(0) + \int_0^t (h_{\infty}(z^{\lim}(s)),0)ds.
\end{align}
Then $\forall t \in [0, T+1)$, we have
    \begin{align}
\lim_{k \to \infty} z_{n_k}(t)   =   z^{\lim}(t),
    \end{align}
where the convergence is uniform in $t$ on $[0, T+1)$.
\end{lemma}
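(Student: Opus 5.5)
The plan is to combine the extended Arzelà--Ascoli theorem (Theorem~\ref{appendix: AA theorem}) with the equicontinuity results deferred to Appendix~\ref{appendix: equicontinuity}, and then pass to the limit in the integral form~\eqref{def: z n integral}.

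\textbf{Step 1: extracting the subsequence.} Since $\qty{\tilde z_n}$ and $\qty{f_n}$ are equicontinuous in the extended sense on $[0,T+1)$, so are their restrictions to $\qty{n_{k,0}}$. I apply Theorem~\ref{appendix: AA theorem} to $\qty{\tilde z_{n_{k,0}}}$ to get a subsequence converging uniformly to a continuous $\tilde z^{\lim}$. Applying it once more, to $\qty{f_{\cdot}}$ along that subsequence, gives a further subsequence $\qty{n_k}$ along which $f_{n_k}\to f^{\lim}$ uniformly. Both limits survive passage to subsequences. Since $z_n=\tilde z_n-f_n$, this already gives $z_{n_k}\to \tilde z^{\lim}-f^{\lim}$ uniformly. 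What remains is to identify this limit as $z^{\lim}$.

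\textbf{Step 2: uniform boundedness and $r_{n_k}\to\infty$.} Because $r_n$ is monotone and $\sup_n r_n=\infty$, we have $r_n\to\infty$ along the full sequence, hence along $\qty{n_k}$. By~\eqref{eq: hat-z-norm-1}, $\norm{z_n(0)}\le 1$. The Lipschitz bound on $h$ gives $\norm{h_c(x,y)}\le \norm{h(0,0)}+L\norm{(x,y)}$ for $c\ge1$. Gronwall (Theorem~\ref{theorem: Gronwall}) then yields $\sup_{n,t\in[0,T+1)}\norm{z_n(t)}\le C$ for a constant $C$ independent of $n$. So all trajectories stay in a fixed compact ball $B$.

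\textbf{Step 3: identifying the limit.} Write $\hat z(t)=\tilde z^{\lim}(t)-f^{\lim}(t)$; note $\hat z(0)=\tilde z^{\lim}(0)$ because $f_n(0)=0$. I decompose
\begin{align}
z_{n_k}(t)-\hat z(0)-\int_0^t (h_\infty(\hat z(s)),0)ds
={}&[z_{n_k}(0)-\hat z(0)]+\int_0^t \big(h_{r_{n_k}}(z_{n_k}(s))-h_\infty(z_{n_k}(s)),0\big)ds\\
&+\int_0^t \big(h_\infty(z_{n_k}(s))-h_\infty(\hat z(s)),0\big)ds.
\end{align}
The three terms vanish as $k\to\infty$ for the following reasons.
\begin{itemize}
\item The first term vanishes by uniform convergence at $t=0$.
\item The second term vanishes by the uniform convergence $h_c\to h_\infty$ on the compact set $B$ (Assumption~\ref{assumption: lim h,g uniformly convergent}) together with $r_{n_k}\to\infty$.
\item The third term is bounded by $L(T+1)\sup_s\norm{z_{n_k}(s)-\hat z(s)}\to0$, using that $h_\infty$ is $L$-Lipschitz (take the expectation in~\eqref{eq: H L inf}).
\end{itemize}
Hence $\hat z$ satisfies the integral equation~\eqref{eq: z lim t}, so $\hat z$ solves~\eqref{def: xlim}--\eqref{def: ylim} with the stated initial condition. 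Uniqueness of solutions for Lipschitz $h_\infty$ gives $\hat z=z^{\lim}$, and therefore $z_{n_k}\to z^{\lim}$ uniformly.

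\textbf{Main obstacle.} The main obstacle is not the limit argument but the equicontinuity of $\qty{\tilde z_n}$ and $\qty{f_n}$ assumed in Step~1. Under Markovian, noncompact noise this requires controlling the sums $\sum\alpha(i)L(W_{i+1})$ and $\sum\beta(i)G_{r_n}(\cdot)$ via the asymptotic-rate-of-change bounds from Assumption~\ref{assumption: lln}. I take this as given from Appendix~\ref{appendix: equicontinuity}. One subtlety is that the $y$-component of $\tilde z_n$ moves with step $\beta(i)$ rather than zero; its increments are $\fO(\beta(i)/\alpha(i))$ relative to the fast scale and thus vanish, which is consistent with $dy^{\lim}/dt=0$.
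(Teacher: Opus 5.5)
Your proof is correct. Step~1 matches the paper's extraction; the paper simply takes the $f$-subsequence first and the $\tilde z$-subsequence second.

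Where you genuinely differ is in identifying the limit of $z_{n_k}$. The paper (Lemma~\ref{lemma: z n limit}) does not use the convergence of $f_{n_k}$ for this step. It compares $z_{n_k}$ directly with the already-defined solution $z^{\lim}$ via Gronwall, splitting $h_{r_{n_k}}(z_{n_k})-h_\infty(z^{\lim})$ as $[h_{r_{n_k}}(z_{n_k})-h_{r_{n_k}}(z^{\lim})]+[h_{r_{n_k}}(z^{\lim})-h_\infty(z^{\lim})]$. The first bracket is bounded by $L\norm{z_{n_k}-z^{\lim}}$. The second is handled by uniform convergence on a ball containing the trajectory of $z^{\lim}$ (Lemmas~\ref{lemma: z lim bound} and~\ref{lemma: h c z lim uniform}). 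This gives the explicit bound $\norm{z_{n_k}(t)-z^{\lim}(t)}\le (T+2)\delta e^{L(T+1)}$. That is a continuous-dependence argument: it needs only $\tilde z_{n_k}(0)\to\tilde z^{\lim}(0)$, and it works along any subsequence on which the initial conditions converge.

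Your route is compactness plus uniqueness instead. Step~1 hands you the candidate limit $\hat z=\tilde z^{\lim}-f^{\lim}$ for free. You then show it satisfies the limiting integral equation and conclude by uniqueness. Because your split places $h_{r_{n_k}}-h_\infty$ on $z_{n_k}$ rather than on $z^{\lim}$, you need the uniform bound on all $z_n$ from Step~2 (the paper's Lemma~\ref{lemma: bound z}) instead of the bound on $z^{\lim}$. Your argument also leans on the uniform convergence of $f_{n_k}$ for this step. In exchange, it records explicitly that $z^{\lim}=\tilde z^{\lim}-f^{\lim}$. That identity is exactly what makes the later claim $f^{\lim}=0$ equivalent to $\tilde z^{\lim}=z^{\lim}$.

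Your closing remark about the $y$-component drifting at rate $\beta(i)/\alpha(i)$ is not needed for this lemma. In the paper it is handled separately, in Lemma~\ref{lemma: third term disc error}.
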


Its proof is in Appendix \ref{appendix: three convergence}. We use the subsequence $\qty{n_k}$ intensively in the remaining proofs.

\subsection{Diminishing Discretization Error}
\label{sec: sequence property}

Recall that $f_n(t)$ denotes the discretization error between $\tilde {z}_n(t)$ and $z_n(t)$. In this section, we will show that $\lim_{n \to \infty} \norm{f_n(t)} = 0$ for all $t \in [0, T+1)$. We start by first proving that the discretization error diminishes along the sequence $\qty{n_k}$, i.e., that
\begin{align}
 \lim_{k \rightarrow \infty} \norm{f_{n_k}(t)} = \norm{f^{\lim}(t)} = 0.
\end{align} 
This means $\tilde{z}_{n_k}(t)$ is close to $z_{n_k}(t)$ as $k \rightarrow \infty$. 
For any $t \in [0, T+1)$,
we have
\begin{align}
&\lim_{k \rightarrow \infty} \norm{ f_{n_k}(t)} \\
=&  \lim_{k \rightarrow \infty} \Bigg\lVert \sum_{i = m(T_{n_k})}^{m(T_{n_k} +t) -1 } (\alpha(i)H_{r_{n_k}}(\tilde{z}_{n_k}(t(i)-T_{n_k}),W_{i+1}), \beta(i)G_{r_{n_k}} (\tilde{z}_{n_k}(t(i)-T_{n_k}),W_{i+1})) \\
&- \int_0^t (h_{r_{n_k}}(z_{n_k}(s)),0)ds \Bigg\rVert \explain{by \eqref{def: z n integral}}\\
\leq&\lim_{k \to \infty} \norm{ \sum_{i = m(T_{n_k})}^{m(T_{n_k} +t) -1 } \alpha(i)H_{r_{n_k}}(\tilde{z}_{n_k}(t(i)-T_{n_k}),W_{i+1}) - \int_0^t h_{r_{n_k}}(\tilde{z}^{\lim}(s))ds } \\
&+ \lim_{k \to \infty} \norm{\int_0^t h_{r_{n_k}}(\tilde{z}^{\lim}(s))ds  - \int_0^t h_{r_{n_k}}(z_{n_k}(s))ds } \label{eq: lim f k} \\
&+ \lim_{k \to \infty} \norm{ \sum_{i = m(T_{n_k})}^{m(T_{n_k} +t) -1 } \beta(i)G_{r_{n_k}}(\tilde{z}_{n_k}(t(i)-T_{n_k}),W_{i+1})}.
\end{align}

Here, we will bound the last term. This term is a novelty of the two-timescale setting and so there is no analogous result in \citet{liu2025ode}.

\begin{lemma}
\label{lemma: third term disc error}
$\forall t \in [0, T+1)$,
\begin{align}
    \lim_{k \to \infty} \norm{ \sum_{i = m(T_{n_k})}^{m(T_{n_k} +t) -1 } \beta(i)G_{r_{n_k}}(\tilde{z}_{n_k}(t(i)-T_{n_k}),W_{i+1})} = 0. \label{eq: third term disc error}
\end{align}
\end{lemma}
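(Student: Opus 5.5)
The plan is to bound the sum by a quantity of the form $\sup_i \frac{\beta(i)}{\alpha(i)}$ times a sum that stays bounded, and then use $\beta(i)/\alpha(i)\to 0$ from \eqref{eq lr ratio}. Fix $t \in [0, T+1)$ and write $i$ for indices in $[m(T_{n_k}), m(T_{n_k}+t)-1]$. First, control the integrand pointwise using the Lipschitz property \eqref{eq: G L}, which transfers to $G_c$ with the same constant $L(w)$:
\begin{align}
\norm{G_{r_{n_k}}(\tilde z, w)} \leq \norm{G_{r_{n_k}}(0, w)} + L(w)\norm{\tilde z} = \frac{\norm{G(0,w)}}{r_{n_k}} + L(w)\norm{\tilde z} \leq \norm{G(0,w)} + L(w)\norm{\tilde z},
\end{align}
using $r_{n_k}\ge 1$. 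Next, invoke the uniform boundedness of $\tilde z_{n}(t)$ on $[0,T+1)$, which is part of the equicontinuity-in-the-extended-sense results of Appendix \ref{appendix: equicontinuity}; it comes from the discrete Gronwall inequality (Theorem \ref{theorem: discrete Gronwall}) with $\norm{\tilde z_n(0)}\le 1$ and Assumption \ref{assumption: lln} applied to $L(w)$ and $\norm{H(0,w)}, \norm{G(0,w)}$. Note that \eqref{eq: G L} already controls $\norm{G(0,\cdot)}$, since $w\mapsto G(0,0,w)$ is among the functions in \eqref{eq lln all g}. This gives a sample-path constant $C$ with $\sup_{n}\sup_{s\in[0,T+1)}\norm{\tilde z_n(s)}\le C$.

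With this bound the target sum is at most
\begin{align}
\sup_{i\ge m(T_{n_k})}\frac{\beta(i)}{\alpha(i)} \cdot \sum_{i=m(T_{n_k})}^{m(T_{n_k}+t)-1}\alpha(i)\big(\norm{G(0,W_{i+1})} + C L(W_{i+1})\big).
\end{align}
The second factor is handled via the asymptotic rate of change. Split each summand as $\alpha(i)\big(\E[\gamma]\big) + \alpha(i)\big(\gamma(W_{i+1}) - \E[\gamma]\big)$ for $\gamma = L$ and $\gamma = \norm{G(0,\cdot)}$. The deterministic part is bounded by $(T+1)(\E\norm{G(0,w)} + CL)$, since $\sum_{i}\alpha(i) \le t \le T+1$. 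The fluctuation part has asymptotically zero rate of change by Assumption \ref{assumption: lln}, via the same argument deferred to Appendix \ref{appendix: H h 0}. Hence the second factor is bounded uniformly in large $k$. Meanwhile $m(T_{n_k})\to\infty$ because $\sup_n r_n = \infty$ forces $k\to\infty$ along infinitely many intervals and $T_n\to\infty$, so the first factor tends to zero and the limit is $0$.

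The main obstacle is the fluctuation step. Assumption \ref{assumption: lln} is stated only for specific functions, so I must check that $\norm{G(0,0,\cdot)}$ is a legitimate choice of $\gamma$, or else replace it by bounding $\norm{G(0,0,w)}$ through $G(0,0,w)$ itself. A second concern is whether the rate-of-change conclusion transfers to sums weighted by $\alpha(i)$ over a window of fast time length at most $T+1$; I expect this to follow exactly as in \citet{liu2025ode}. If the norm issue is troublesome, I would instead pull $\beta(i)/\alpha(i)$ out only after writing $G_{r_{n_k}}(\tilde z, W_{i+1})$ as $G_{r_{n_k}}(\tilde z, W_{i+1}) - g_{r_{n_k}}(\tilde z) + g_{r_{n_k}}(\tilde z)$, and apply the analog of the $H$-averaging lemma of Appendix \ref{appendix: H h 0} to $G$. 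The $g$ term is then trivially bounded by Lipschitz continuity and the bound $C$.
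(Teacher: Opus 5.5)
Your route differs from the paper's in how the $\beta$-weighted sum is shown to vanish. The paper bounds the sum by $C_{\hat z}\sum_i\beta(i)L(W_{i+1})$ via Lemma \ref{lemma: bound z hat} and then cites \eqref{eq: beta L property bounded}. That bound is proved by splitting $L(W_{i+1})=[L(W_{i+1})-L]+L$. The fluctuation vanishes by the $\beta$-weighted version of Assumption \ref{assumption: lln}, and the mean part vanishes because $\sum_i\beta(i)\to0$ over a fast window, by \eqref{eq: beta t_1 t_2}, itself a consequence of \eqref{eq lr ratio}. You instead pull $\sup_{i\ge m(T_{n_k})}\beta(i)/\alpha(i)$ out of a sum of nonnegative scalars and reuse the purely fast-timescale bound \eqref{eq: alpha L bound C_H}. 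For the $L$-term this is valid and slightly more economical, since it needs no averaging with $\beta$-weights over $\alpha$-windows. Two small points. First, $m(T_{n_k})\to\infty$ simply because $n_k\to\infty$ and $T_n\ge nT$; $\sup_n r_n=\infty$ plays no role there. Second, for the bound on $\tilde z_n$ you should just cite Lemma \ref{lemma: bound z hat}, not an LLN for $\|H(0,\cdot)\|$ and $\|G(0,\cdot)\|$.

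The step that fails as written is the constant term. The ratio trick needs nonnegative scalar summands, so you put the norm inside on $G(0,W_{i+1})$. You then need windowed averaging for $w\mapsto\|G(0,0,w)\|$, which is not among the functions in Assumption \ref{assumption: lln}. Averaging of the vector-valued $G(0,0,\cdot)$ does not transfer to its norm. Also, \eqref{eq: G L} is Lipschitz continuity in $(x,y)$ for fixed $w$, so it says nothing about $w\mapsto\|G(0,0,w)\|$. Your fallback $G_{r_{n_k}}=(G_{r_{n_k}}-g_{r_{n_k}})+g_{r_{n_k}}$ does not rescue this either. Lemma \ref{lemma: H h 0} is for a fixed argument, while $\tilde z_{n_k}(t(i)-T_{n_k})$ moves with $i$.

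The repair is to split off the constant before taking norms inside, and to keep the factor $1/r_{n_k}$. With all sums over $m(T_{n_k})\le i\le m(T_{n_k}+t)-1$,
\begin{align}
\Big\|\sum_i\beta(i)G_{r_{n_k}}(\tilde z_{n_k}(t(i)-T_{n_k}),W_{i+1})\Big\| \le C_{\hat z}\sum_i\beta(i)L(W_{i+1}) + \frac{1}{r_{n_k}}\Big\|\sum_i\beta(i)G(0,W_{i+1})\Big\|.
\end{align}
Your ratio argument kills the first term. The second is at most $C_G/r_{n_k}\to0$ by \eqref{eq: G c c_G} and $r_n\uparrow\infty$. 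Alternatively, split $G(0,W_{i+1})=[G(0,W_{i+1})-g(0)]+g(0)$ and use the assumed $\beta$-averaging for $w\mapsto G(0,0,w)$.

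Note that the paper's chain asserts $\|G_{r_{n_k}}(\tilde z,W_{i+1})\|\le L(W_{i+1})\|\tilde z\|$, silently dropping this same $G_{r_{n_k}}(0,\cdot)$ term. You were right to keep it; it just has to be handled with the norm outside the sum.
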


Its proof is in Appendix \ref{appendix: third term disc error}. We defer the rest of the argument to Appendix \ref{appendix: bound discret} since it is quite similar to the relevant section in \citet{liu2025ode}. At the end of it all, from Lemma \ref{lemma: f k 0}, we obtain
\begin{align}
\lim_{k \to \infty}  \norm{f_{n_k}(t)} = 0,    
\end{align}
for all $t \in [0, T+1)$ showing that the discretization error goes to 0 along $\qty{n_k}$.

Now, since we had chosen an arbitrary subsequence $\qty{f_{n_{k,0}}}$ and it has a subsequence $\qty{f_{n_k}}$ that converges to $0$, by Lemma \ref{lemma: subsubsequence convergence} we know that $\qty{f_n}$ also converges to $0$. Thus, the discretization error diminishes along the entire sequence. That is,
\begin{align}
    \lim_{k \to \infty} \norm{f_n(t)} = 0 \label{eq: whole sequence diminishing disc error}
\end{align}
for all $t \in [0, T+1)$.

\subsection{ODEs with External Inputs}
\label{sec: ode external inp}

This section contains some notation and results concerning ODEs with external inputs, which we need for the next section. First, we will define some new notation.

\begin{definition}\label{def: eta stuff}
    We use the notation $\eta^{y(t)}_{c}(t,x)$ to denote the solution to the ODE
\begin{align*}
    \dv{x(t)}{t} = h_c(x(t), y(t))
\end{align*}

with the initial condition $x$.
\end{definition}
 Note that under this notation, $x_n(t)$ $\eqref{definition: z n}$ can be written $\eta_{r_n}^{\tilde{y}_n(0)}(t, \tilde{x}_n(0))$. This notation (borrowed from \citet{lakshminarayanan2017stability}) is useful since it identifies a trajectory of an ODE parameterized by the rescaling factor $c$ and the external input $y(t)$.

The following lemma shows that within a certain amount of time, the trajectory of the ODE will be pulled close to the equilibrium determined by the external input.

 \begin{lemma}[\textbf{Lemma \ref{lemma: ext inp 1} Restated}]
Let $K \subset \R^{d_1}$ be compact and fix $y \in \R^{d_2}$. Given any $\epsilon > 0$, there exists a $T_{\epsilon} > 0$ such that for all initial conditions $x \in K$, we have $\eta^y_{\infty}(t,x) \in B(\lambda_{\infty}(y), \epsilon)$ for all $t \geq T_{\epsilon}$.
\end{lemma}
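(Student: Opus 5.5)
The plan is the standard "uniform attraction on compacts" argument for a globally asymptotically stable equilibrium. It combines Lyapunov stability of $\lambda_\infty(y)$, global attractivity, compactness of $K$, and continuous dependence of the solution on its initial condition. Throughout, $y$ is fixed. By Assumption~\ref{assumption: H Lipschitz}, the map $x \mapsto h_\infty(x,y)$ is Lipschitz with constant $L$, since $\norm{h_\infty(x,y)-h_\infty(x',y)} \le \E[L(w)]\norm{x-x'}$. Hence $\eta^y_\infty(t,x)$ exists and is unique for all $t \ge 0$. The Gronwall inequality (Theorem~\ref{theorem: Gronwall}) then gives, for every fixed $t$,
\begin{align}
\norm{\eta^y_\infty(t,x)-\eta^y_\infty(t,x')} \le e^{Lt}\norm{x-x'}.
\end{align}

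\textbf{Step 1 (stability neighbourhood).} Global asymptotic stability of $\lambda_\infty(y)$ (Assumption~\ref{assumption: lim h,g uniformly convergent}) includes Lyapunov stability. So, given $\epsilon>0$, there is $\delta>0$ such that $\norm{x'-\lambda_\infty(y)}<\delta$ implies $\eta^y_\infty(t,x')\in B(\lambda_\infty(y),\epsilon)$ for all $t\ge 0$. By the semigroup property $\eta^y_\infty(t+s,x)=\eta^y_\infty(t,\eta^y_\infty(s,x))$, once any trajectory enters $B(\lambda_\infty(y),\delta)$ at some time $s$, it stays in $B(\lambda_\infty(y),\epsilon)$ for all later times.

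\textbf{Step 2 (pointwise entry time and its openness).} By global attractivity, for each $x\in K$ there is $\tau_x$ with $\norm{\eta^y_\infty(\tau_x,x)-\lambda_\infty(y)}<\delta/2$. By the Gronwall estimate above, for every $x'$ in the open ball $U_x \doteq B(x,\delta e^{-L\tau_x}/2)$ we have $\norm{\eta^y_\infty(\tau_x,x')-\lambda_\infty(y)}<\delta$. Step 1 then gives $\eta^y_\infty(t,x')\in B(\lambda_\infty(y),\epsilon)$ for all $t\ge\tau_x$ and all $x'\in U_x$.

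\textbf{Step 3 (compactness).} The sets $\{U_x\}_{x\in K}$ form an open cover of $K$. Extract a finite subcover $U_{x_1},\dots,U_{x_N}$ and set $T_\epsilon\doteq\max_j \tau_{x_j}$. Any $x\in K$ lies in some $U_{x_j}$, so $\eta^y_\infty(t,x)\in B(\lambda_\infty(y),\epsilon)$ for all $t\ge\tau_{x_j}$, and in particular for all $t\ge T_\epsilon$.

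The only delicate point is Step 1. The argument relies on the paper's "globally asymptotically stable" including Lyapunov stability, not just attractivity; with that standard meaning, Step 1 holds. Otherwise the argument is routine and matches Lemma 1 of Chapter 3.2 in \citet{borkar2009stochastic}.
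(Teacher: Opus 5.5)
Your proof is correct and follows the same route as the paper's. Both arguments take a Lyapunov-stability neighbourhood of $\lambda_\infty(y)$, a pointwise entry time $T_x$ from global attractivity, a Gronwall bound $e^{LT_x}\|x-x_1\|$ that turns it into an open neighbourhood of $x$, and a finite subcover of $K$ with $T_\epsilon$ the maximum of the finitely many entry times; to guarantee $T_\epsilon>0$, take $T_\epsilon \doteq \max\{1, \max_j \tau_{x_j}\}$.
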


Its proof is in Appendix \ref{appendix: proof ext inp 1}. The next lemma shows that if the external input is close to some constant $y$, then the trajectory will remain close to the trajectory we would obtain if $y$ was the external input.

\begin{lemma}[\textbf{Lemma \ref{lemma: ext inp 2} Restated}]
    Let $y \in \R^{d_2}, [0, T]$ be a given time interval, and $\rho$ be a small positive constant with $y'(t) \in B(y, \rho)$. Then,
    \begin{align*}
        \norm{\eta^{y'(t)}_{c}(t,x) - \eta^{y}_{\infty}(t,x)} \leq& (L\rho + \epsilon(c))Te^{LT}.
    \end{align*}
    for any initial $x \in \R^{d_1}$ and for all $t \in [0, T]$.
\end{lemma}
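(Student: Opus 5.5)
The plan is to compare the two trajectories through their integral forms and close with the Gronwall inequality (Theorem~\ref{theorem: Gronwall}). Fix $x$, $c$, $y$, and an input $y'(\cdot)$ with $y'(t)\in B(y,\rho)$ on $[0,T]$. Set
\begin{align}
u(t) \doteq \norm{\eta^{y'(t)}_{c}(t,x) - \eta^{y}_{\infty}(t,x)}.
\end{align}
Both trajectories start at $x$, so
\begin{align}
u(t) \leq \int_0^t \norm{h_c(\eta^{y'}_c(s,x), y'(s)) - h_\infty(\eta^{y}_\infty(s,x), y)}\, ds .
\end{align}
I will split the integrand by adding and subtracting $h_\infty(\eta^{y'}_c(s,x), y'(s))$. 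This gives a \emph{rescaling error} $h_c - h_\infty$, evaluated along the first trajectory, plus a \emph{Lipschitz term} $h_\infty(\eta^{y'}_c, y') - h_\infty(\eta^y_\infty, y)$.

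For the Lipschitz term, taking expectations in \eqref{eq: H L inf} under $d_\fW$ shows that $h_\infty$ is Lipschitz with constant $L = \E[L(w)]$. Hence this term is at most $L\bigl(u(s) + \norm{y'(s)-y}\bigr) \leq L u(s) + L\rho$.

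For the rescaling error, I define $\epsilon(c)$ as a uniform bound on $\norm{h_c(x'',y'') - h_\infty(x'',y'')}$ over the region the trajectory can visit. By Assumption~\ref{assumption: H c H infty}, $h_c - h_\infty = \kappa_H(c)\,\E[b_H(\cdot,\cdot,w)]$, and $\E[b_H]$ is $L_b$-Lipschitz, so it grows at most linearly. On any compact set $\mathcal{K}$ this gives $\epsilon(c) \leq |\kappa_H(c)| \bigl(\norm{\E[b_H(0,0,w)]} + L_b \sup_{\mathcal{K}} \norm{(x'',y'')}\bigr) \to 0$. This is consistent with the uniform convergence on compacts in Assumption~\ref{assumption: lim h,g uniformly convergent}.

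The main obstacle is making $\epsilon(c)$ legitimately uniform, since the statement says ``any initial $x$'' while the convergence $h_c \to h_\infty$ is only uniform on compacts. I would handle this with an a priori bound on $\eta^{y'}_c$. Each $h_c$ is $L$-Lipschitz, and $\norm{h_c(0,0)} = \norm{h(0,0)}/c \leq \norm{h(0,0)}$ for $c\ge 1$. So $\norm{\eta^{y'}_c(t,x)} \leq \norm{x} + \int_0^t \bigl(\norm{h(0,0)} + L\norm{\eta^{y'}_c(s,x)} + L(\norm{y}+\rho)\bigr) ds$, and Gronwall bounds the trajectory on $[0,T]$ by $\bigl(\norm{x} + T(\norm{h(0,0)} + L\norm{y} + L\rho)\bigr)e^{LT}$. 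Thus, for $x$ ranging over a fixed bounded set (as in all later applications, where $\norm{\tilde z_n(0)}\le 1$), the trajectory stays in a compact $\mathcal{K}$ that does not depend on $c$. The quantity $\epsilon(c)$ is then defined on $\mathcal{K}$, and I will state that dependence explicitly.

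Combining the two bounds gives, for $t\in[0,T]$,
\begin{align}
u(t) \leq (L\rho + \epsilon(c))\,T + L\int_0^t u(s)\,ds .
\end{align}
Theorem~\ref{theorem: Gronwall} then yields $u(t) \leq (L\rho+\epsilon(c))\,T e^{Lt} \leq (L\rho+\epsilon(c))\,Te^{LT}$, which is the claim.
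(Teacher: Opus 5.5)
Your proof is correct and follows the paper's skeleton: integral forms, a triangle-inequality split into a Lipschitz term and a rescaling error, then Gronwall with $C=(L\rho+\epsilon(c))T$ and $K=L$. The difference is the intermediate term you add and subtract.

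The paper inserts $h_c(\eta^{y}_{\infty}(s,x), y'(s))$. Its Lipschitz step therefore uses $h_c$, and the rescaling error $h_c-h_\infty$ is evaluated along the \emph{limit} trajectory $\eta^{y}_{\infty}(\cdot,x)$ paired with $y'(s)\in B(y,\rho)$. That set of points does not depend on $c$, so the paper needs no a priori bound on $\eta^{y'}_c$.

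You insert $h_\infty(\eta^{y'}_c(s,x), y'(s))$ instead. This places the rescaling error on the $c$-dependent trajectory, which is why you need the extra Gronwall bound based on $\norm{h_c(0,0)}\le\norm{h(0,0)}$. That bound is correct, but the paper's choice of intermediate term makes it unnecessary.

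In return, you make explicit something the paper leaves unstated: the paper never says over which set $\epsilon(c)$ is a supremum. Under Assumption~\ref{assumption: H c H infty}, $\norm{h_c-h_\infty}=\abs{\kappa_H(c)}\,\norm{\E[b_H]}$ can grow linearly in $\norm{(x,y)}$. So a single $\epsilon(c)\to 0$ valid for \emph{every} initial $x\in\R^{d_1}$, as the statement literally claims, is not available from the assumptions.

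Your restriction to initial conditions in a bounded set is the right qualification. It covers every later use, where $\norm{\tilde z_n(0)}\le 1$. The same qualification applies to the paper's decomposition, where it would additionally require the limit trajectories started from that set to stay bounded on $[0,T]$.
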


Its proof is in \ref{appendix: proof ext inp 2}. Armed with these results, we are ready for the next section.

\subsection{Completing the Proof}\label{sec: completing proof}

We now proceed to complete the proof of Lemma \ref{thm: x stability}. The most involved result in this section is Lemma \ref{lemma: endThm1a}, which tells us that if the $x$ component is too much larger than the $y$ component, the norm of the iterate at $T_{n+1}$ cannot be larger than that of the iterate at $T_n$. We use the results about ODEs with external inputs to ensure that the $x$ component of the trajectory gets pulled to the equilibrium, which is close to $0$.

\begin{lemma}[\textbf{Lemma \ref{lemma: endThm1a main text} Restated}]\label{lemma: endThm1a} 
There exists a constant $C_1 > 1$ such that if $\norm{\bar{x}(T_n)} > C_1(1+\norm{\bar{y}(T_n)})$, then $r_{n+1}=r_n$.
\end{lemma}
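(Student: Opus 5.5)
We argue by contradiction under the standing assumption $\sup_n r_n = \infty$; if instead $\sup_n r_n<\infty$ nothing needs proving for the subsequent use. The key point is that when $\norm{\bar x(T_n)}$ dominates $1+\norm{\bar y(T_n)}$, the rescaled initial condition $\tilde z_n(0)$ has $\tilde y_n(0)$ small relative to $\tilde x_n(0)$. Over the window $[0,T]$ the limiting fast ODE then drives $\tilde x$ toward $\lambda_\infty(\tilde y_n(0))$, which is small by Lipschitzness and $\lambda_\infty(0)=0$. Hence $\norm{\tilde z_n(T_{n+1}-T_n)}$ falls below $\norm{\tilde z_n(0)}$, so $\norm{\bar z(T_{n+1})} \le \norm{\bar z(T_n)} \le r_n$ and therefore $r_{n+1}=r_n$.

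\textbf{Step 1 (choice of $T$ and a limiting decrease).} Take $K=\{x:\norm{x}\le 1\}$ and apply Lemma~\ref{lemma: ext inp 1}, uniformly over $y$ in the compact ball $\norm{y}\le 1$; uniformity follows from continuous dependence on $y$ and compactness, or from a Lyapunov function for the homogeneous limit system. This gives, for $\epsilon=1/8$, a $T_\epsilon$ with $\norm{\eta^y_\infty(t,x)-\lambda_\infty(y)}<1/8$ for all $t\ge T_\epsilon$. We fix $T\ge T_\epsilon$, as the construction of $\{T_n\}$ permits. Let $L_\lambda$ be the Lipschitz constant of $\lambda_\infty$, and choose $C_1$ large so that $\norm{\tilde y}\le 1/C_1$ implies $\norm{\lambda_\infty(\tilde y)}+\norm{\tilde y}\le 1/8$. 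Now suppose $\norm{\bar x(T_n)}>C_1(1+\norm{\bar y(T_n)})$. Then $\norm{\bar z(T_n)}$ is comparable to $\norm{\bar x(T_n)}$, and in the rescaled variables we get $\norm{\tilde y_n(0)} < \norm{\tilde x_n(0)}/C_1\le 1/C_1$. Two cases arise. If $r_n=\norm{\bar z(T_n)}$, then $\norm{\tilde z_n(0)}=1$. If $r_n>\norm{\bar z(T_n)}$, then $r_n$ is inherited from an earlier maximum, and we only need $\norm{\bar z(T_{n+1})}\le r_n$.

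\textbf{Step 2 (transfer to iterates).} By \eqref{eq: whole sequence diminishing disc error}, $\sup_{t\in[0,T+1)}\norm{f_n(t)}\to0$ along the full sequence; this is exactly where the monotonicity of $r_n$ is used. We write $\tilde x_n(t)=x_n(t)+f_n(t)$ with $x_n(t)=\eta^{\tilde y_n(0)}_{r_n}(t,\tilde x_n(0))$. Lemma~\ref{lemma: ext inp 2} with $\rho=0$ bounds $\norm{x_n(t)-\eta^{\tilde y_n(0)}_\infty(t,\tilde x_n(0))}$ by $\epsilon(r_n)Te^{LT}$, which tends to $0$ as $r_n\to\infty$. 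Since $T_{n+1}-T_n\in[T,T+1)$ for large $n$, Step 1 gives, for $n$ large,
\begin{align}
\norm{\tilde z_n(T_{n+1}-T_n)} \le \norm{\lambda_\infty(\tilde y_n(0))}+\norm{\tilde y_n(0)} + \tfrac18 + o(1) \le \tfrac12 .
\end{align}
Here $\tilde y_n$ stays within $o(1)$ of $\tilde y_n(0)$ because $f_n\to0$ and $dy_n/dt=0$. Multiplying by $r_n$ yields $\norm{\bar z(T_{n+1})}\le r_n/2<r_n$, hence $r_{n+1}=\max\{r_n,\norm{\bar z(T_{n+1})}\}=r_n$. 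This covers all $n\ge N$. The finitely many $n<N$ are absorbed by enlarging $C_1$ beyond $\max_{n<N}\norm{\bar x(T_n)}$, which is sample-path dependent but finite; with that choice the hypothesis is vacuous for $n<N$.

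\textbf{Main obstacle.} The hardest step is making the $o(1)$ terms uniform. The contraction time $T_\epsilon$ must not depend on $n$, which requires Lemma~\ref{lemma: ext inp 1} to hold uniformly over the compact set of external inputs $\norm{y}\le1$. I would prove this using continuity of $y\mapsto\eta^y_\infty(t,x)$, via Gronwall, together with a finite cover of the ball. I would also check that $\epsilon(r_n)$, the uniform-on-compacts gap between $h_{r_n}$ and $h_\infty$, is controlled on a bounded region. That region is available because $\tilde z_n$ is bounded on $[0,T+1)$ by the equicontinuity lemmas.
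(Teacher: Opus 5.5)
Your proof is correct, but it is organized around a different comparison than the paper's.

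\textbf{The paper's route.} The paper anchors at the single external input $y=0$. It fixes $T=T_{1/4}$ from Lemma~\ref{lemma: ext inp 1} applied to $\eta^0_\infty$, whose equilibrium is $\lambda_\infty(0)=0$. It then compares $\eta^{\tilde y_n(0)}_{r_n}$ to $\eta^0_\infty$ through the $\rho$-term of Lemma~\ref{lemma: ext inp 2}. Only after that does it take $C_1$ large enough that $1/C_1<\rho_{1/4}/2$. This gives $\tfrac14+\tfrac14+\tfrac14$ for the $x$-part and $\tfrac14$ for the $y$-part.

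\textbf{Your route.} You anchor instead at the moving input $\tilde y_n(0)$ and apply Lemma~\ref{lemma: ext inp 2} with $\rho=0$. All of the $y$-dependence is pushed into a uniform-in-$y$ version of Lemma~\ref{lemma: ext inp 1}. The ordering of constants is the same as the paper's: $T$, then $C_1$, then $N$, then enlarging $C_1$ past $\max_{n<N}\norm{\bar x(T_n)}$. So is the use of full-sequence convergence of $f_n$.

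\textbf{What the uniform lemma costs.} The price is exactly the step you flag. Finite-horizon continuity in $y$ plus a finite cover does not prove it by itself. Those tools give a uniform \emph{entry} time into the ball, but not that trajectories \emph{remain} near $\lambda_\infty(y)$ at a common later time $T$. The common $T$ is a maximum over the cover, while the cover's radii depend on a Gronwall horizon that must reach $T$, so the argument is circular. You need an infinite-horizon robustness argument instead. One option is Lyapunov stability with periodic restarts, as in the proof of Lemma~\ref{lemma: ext inp 3}. Another is the converse Lyapunov function you mention.

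\textbf{Why the detour can be avoided.} The paper's route shows the uniform lemma is unnecessary here. You only ever evaluate at $\norm{\tilde y_n(0)}<1/C_1$, and $C_1$ is chosen after $T$. So the smallness of $\tilde y_n(0)$ can be absorbed into the term $L\rho\,Te^{LT}$ of Lemma~\ref{lemma: ext inp 2}, which removes your ``main obstacle.''

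\textbf{Minor point.} Your overview asserts $\norm{\bar z(T_{n+1})}\le\norm{\bar z(T_n)}$, which Step 2 does not establish when $r_n>\norm{\bar z(T_n)}$. Step 2 only gives $\norm{\bar z(T_{n+1})}\le r_n/2$, which is all the statement $r_{n+1}=r_n$ requires.
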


\begin{proof}\label{appendix: endThm1a}
    Suppose that for some $n$, $\norm{\bar{x}(T_n)} > C_1 (1+\norm{\bar{y}(T_n)})$, which implies that $\norm{\tilde{x}_n(0)} > C_1 (\frac{1}{r_n}+\norm{\tilde{y}_n(0)})$. Since $1 \geq \norm{\tilde{x}_n(0)}$ and $\frac{\norm{\tilde{x}_n(0)}}{C_1} > \norm{\tilde{y}_n(0)}$, we have
    \begin{align}
        \norm{\tilde{y}_n(0)} < \frac{1}{C_1}. \label{eq: y less 1 over C}
    \end{align}

    We know from Lemma~\ref{lemma: ext inp 1} that there exists some $T_{\frac{1}{4}}$ such that for all $t \geq T_{\frac{1}{4}}$, 
    \begin{align}
        \norm{\eta^0_{\infty}(t,\tilde{x}_n(0))} \leq \frac{1}{4}. \label{C.8 in ball}
    \end{align}
    So if we set $T \doteq T_{\frac{1}{4}}$, then \eqref{C.8 in ball} holds for all $t \in [T, T+1)$.

   From Lemma~\ref{lemma: ext inp 2} we know that there exist $\rho_{\frac{1}{4}}$ and $c_{\frac{1}{4}}$ such that if $y(t) \in B(0, \rho_{\frac{1}{4}})$ for all $t \in [0, T+1)$ and $r_n > c_\frac{1}{4}$ then
    \begin{align}
        \norm{\eta^{y(t)}_{r_n}(t,\tilde{x}_n(0)) - \eta^{0}_{\infty}(t,\tilde{x}_n(0))} \leq \frac{1}{4} \label{close external input}
    \end{align}
    for all $t \in [0, T+1)$. Since $\lim_{n\rightarrow \infty} r_n = \infty$, there is some finite $n_1$ such that if $n > n_1$, we can be sure that $r_n > c_\frac{1}{4}$.  By \eqref{eq: whole sequence diminishing disc error}, we know that there exists $n_2$ such that for $n > n_2$, the discretization error will diminish enough so that 
    \begin{align}
    \tilde{y}_n(t) \in B \left(\tilde{y}_n(0),\min \left(\frac{\rho_{\frac{1}{4}}}{2}, \frac{1}{8} \right) \right) \label{y tilde in small ball}
    \end{align}
    for all $t \in [0, T+1)$.
    So, if we choose $C_1$ large enough so that $C_1 > 16$ and $\frac{1}{C_1} < \frac{\rho_{\frac{1}{4}}}{2}$, then we have $\tilde{y}_n(t) \in B(0, \rho_{\frac{1}{4}})$ (by \eqref{eq: y less 1 over C} and \eqref{y tilde in small ball}) for all $t \in [0, T+1)$. So, we know that for all $n > \max(n_1, n_2)$, \eqref{close external input} holds.

    From \eqref{eq: whole sequence diminishing disc error}, we know that there is some finite $n_3$ such that for $n > n_3$, 
    \begin{align}
        \norm{\tilde{x}_n(t) - \eta^{\tilde{y}_n(0)}_{r_n}(t,\tilde{x}_n(0))} \leq \frac{1}{4} \label{C.8 x discret error}
    \end{align}
    for all $t \in [0, T+1)$.

    By \eqref{C.8 in ball}, \eqref{close external input}, and \eqref{C.8 x discret error}, since $\tilde{x}_n(T_{n+1}-T_n) = \frac{\bar{x}(T_{n+1})}{r_n}$, we have $\norm{\frac{\bar{x}(T_{n+1})}{r_n}} \leq \frac{3}{4}$. 
    
    For $n > n_1$, since $\tilde{y}_n(T_{n+1}-T_n) = \frac{\bar{y}(T_{n+1})}{r_n}$ we also have $\norm{\frac{\bar{y}(T_{n+1})}{r_n}} \leq \frac{1}{4}$ (by \eqref{eq: y less 1 over C}, $\frac{1}{C_1} < \frac{1}{16}$, and \eqref{y tilde in small ball}).
    
    So we conclude $\norm{\frac{\bar{z}(T_{n+1})}{r_n}} \leq 1$, telling us that $r_{n+1} = r_n$ as desired. If we let $n_0 = \max(n_1, n_2, n_3)$ and ensure that $C_1 > \max_{i \leq n_0} \norm{\bar{x}(T_i)}$, then the result holds for all $n$.
\end{proof}

Lemma \ref{lemma: endThm1b} tells us that the iterate at each $T_n$ is bounded by the largest $y$ component seen so far in the sequence of times $T_n$. To do this, we use an inductive argument--intuitively, we combine the previous lemma, which prevents $z$ from growing if the $x$ component is too much larger than the $y$ component, with Lemma \ref{lemma: bar z bounded}, which prevents $z$ from growing too much within one period $T$, ensuring that $x$ can never get too much larger than $y$.

\begin{lemma}[\textbf{Lemma \ref{lemma: endThm1b main text} Restated}]\label{lemma: endThm1b}
    There exists a constant $C_2$ such that for all $n$,
    \begin{align*}
        \norm{\bar{z}(T_n)} \leq C_1C_2(\max_{m\leq n} \norm{\bar{y}(T_m)}+1).
    \end{align*}
\end{lemma}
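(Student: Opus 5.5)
We argue by induction on $n$, combining Lemma~\ref{lemma: endThm1a} with a one-period growth bound. The growth bound is Lemma~\ref{lemma: bar z bounded}, which the main text says prevents $z$ from growing too much within one period $T$. We take it in the form: there is a constant $C_2' \ge 1$ with $\norm{\bar z(T_{n+1})} \le C_2' \max\{1,\norm{\bar z(T_n)}\}$ for all $n$. It follows from Lipschitzness of $H,G$ via a discrete Gronwall argument (Theorem~\ref{theorem: discrete Gronwall}) on $[T_n,T_{n+1}]$, using $T_{n+1}-T_n \le T+1$ and the averaging in Assumption~\ref{assumption: lln} to control $\sum \alpha(i)L(W_{i+1})$ uniformly. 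We then set $C_2 \doteq 2C_2'$ (enlarged as needed below) and write $M_n \doteq \max_{m\le n}\norm{\bar y(T_m)}$, which is nondecreasing in $n$.

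The key step is to prove the stronger invariant $r_n \le C_1C_2(M_n+1)$. This suffices because $\norm{\bar z(T_n)} \le r_n$ by \eqref{eq: z m T n leq r n}. Tracking $r_n$ rather than $\norm{\bar z(T_n)}$ is what makes the induction close, since $r_n$ is monotone and Lemma~\ref{lemma: endThm1a} is phrased in terms of $r_{n+1}=r_n$.

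For the base case, $r_0 = \max\{1,\norm{\bar z(0)}\}$, which is at most $C_1C_2(M_0+1)$ after enlarging $C_2$ by a constant depending only on $x_0$; this is allowed because $C_2$ may be sample-path dependent. For the inductive step, we split into two cases at time $T_n$.

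\emph{Case (a):} $\norm{\bar x(T_n)} > C_1(1+\norm{\bar y(T_n)})$. Lemma~\ref{lemma: endThm1a} gives $r_{n+1}=r_n \le C_1C_2(M_n+1) \le C_1C_2(M_{n+1}+1)$.

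\emph{Case (b):} $\norm{\bar x(T_n)} \le C_1(1+\norm{\bar y(T_n)})$. Then
\begin{align}
\norm{\bar z(T_n)} \le \norm{\bar x(T_n)}+\norm{\bar y(T_n)} \le 2C_1(1+M_n).
\end{align}
The growth bound then gives $\norm{\bar z(T_{n+1})} \le 2C_2'C_1(1+M_n) = C_1C_2(1+M_n)$. Since $r_{n+1} = \max\{r_n, \norm{\bar z(T_{n+1})}\}$, both terms are at most $C_1C_2(M_{n+1}+1)$, which closes the induction.

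The main obstacle is the growth bound itself. We need $C_2'$ to be uniform in $n$, whereas the Gronwall factor involves $\exp(\sum_{i=m(T_n)}^{m(T_{n+1})-1}\alpha(i)L(W_{i+1}))$. This sum must be bounded by roughly $(T+1)L$ plus an asymptotically vanishing error using Assumption~\ref{assumption: lln} applied to $L(w)$, and bounded crudely by a finite constant for the finitely many early $n$. The $\beta(i)G$ component is handled the same way because $\beta(i)\le C\alpha(i)$ eventually. If Lemma~\ref{lemma: bar z bounded} is instead stated for the rescaled iterates, as $\sup_{t}\norm{\tilde z_n(t)} \le C$, we apply it at $t = T_{n+1}-T_n$ to get $\norm{\bar z(T_{n+1})} \le C r_n$. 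This is weaker, because $r_n$ may be large due to old history, so in case (b) we would need the bound relative to $\norm{\bar z(T_n)}$. We would therefore rederive that version directly from the Gronwall argument without rescaling.
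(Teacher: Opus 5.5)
Your proposal is correct and follows the paper's proof. It uses the same induction, splitting on whether $\norm{\bar x(T_n)} > C_1(1+\norm{\bar y(T_n)})$: Lemma~\ref{lemma: endThm1a} handles the first case, and the one-period bound of Lemma~\ref{lemma: bar z bounded} handles the second. The paper states that bound exactly in your preferred un-rescaled form, $\norm{\bar z(T_{n+1})} \le A\norm{\bar z(T_n)} + B$, obtained via discrete Gronwall and \eqref{eq: alpha L bound C_H}. Your explicit invariant $r_n \le C_1C_2(M_n+1)$ is a cleaner way of writing the paper's strong induction over all $i \le n$, since $r_n = \max\{1, \max_{i\le n}\norm{\bar z(T_i)}\}$.
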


\begin{proof}\label{appendix: endThm1b}
    From Lemma~\ref{lemma: bar z bounded}, we know that there are some constants $A, B$ such that for all $n$, 
    \begin{align*}
        \norm{\bar{z}(T_{n+1})} \leq A \norm{\bar{z}(T_n)} + B.
    \end{align*}
    
    Our argument is inductive in nature. Let $C_1, C_2$ and ensure that $C_1 > A + B$ and $C_2 > 2A + B + 2$.  To make sure the base case $(n=0)$, holds, we also ensure that $C_1C_2 > \norm{z_0}$. From Lemma~\ref{lemma: endThm1a} we know that if  $\norm{\bar{x}(T_n)} \geq C_1 (1 + \norm{\bar{y}(T_n)})$, then $r_{n+1} = r_n$.  Thus, if the result holds for all $i$ less than some $n$, i.e., we have $\norm{\bar{z}(T_i)} \leq C_1 C_2 (\max_{m \leq i} \norm{\bar{y}(T_m)} + 1)$ for all $i \leq n$, then we can conclude that $\norm{\bar{z}(T_{n+1})} \leq C_1 C_2 (\max_{m \leq n} \norm{\bar{y}(T_m)} + 1)$ also.

    To address the other case, assume for some $n$ that $\norm{\bar{x}(T_{n})} \leq C_1(\norm{\bar{y}(T_n)}+1)$. Then we have
    \begin{align*}
        \norm{\bar{z}(T_{n+1})} &\leq  A \norm{\bar{z}(T_n)} + B \\
        &\leq A \norm{\bar{x}(T_n)} + A \norm{\bar{y}(T_n)} + B \\
        &\leq AC_1\norm{\bar{y}(T_n)} + AC_1 + A \norm{\bar{y}(T_n)} + B \\
        &\leq C_1C_2(\norm{\bar{y}(T_n)}+1) \\
        &\leq C_1C_2(\max_{m\leq n+1} \norm{\bar{y}(T_m)}+1).
    \end{align*}
\end{proof}

Finally, while we've ensured that for all the $T_n$ that $\bar{z}(T_n)$ is not to much larger than $\bar{y}(T_n)$, we need the result to also hold for all the iterates that lie in between $T_n$ and $T_{n+1}$. Lemma \ref{lemma: endThm1c} takes care of this.

\begin{lemma}[\textbf{Lemma \ref{lemma: endThm1c main text} Restated}]\label{lemma: endThm1c}
There exists a constant $C_3$ such that for all $n$,
\begin{align*}
    \norm{x_n} \leq C_1C_2C_3(\norm{y^{\text{max}}_n} + 1),
\end{align*}
\end{lemma}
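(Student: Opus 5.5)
To prove Lemma~\ref{lemma: endThm1c} we interpolate between the grid times $T_n$, using Lemma~\ref{lemma: endThm1b} together with the within-period growth control of Lemma~\ref{lemma: bar z bounded}. Fix an arbitrary index $n$ and let $j$ be the unique integer with $T_j \le t(n) < T_{j+1}$, so that $m(T_j) \le n < m(T_{j+1})$.

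\textbf{Step 1: bound $x_n$ by the grid value at $T_j$.} Within one period of length at most $T+1$ (in the fast timescale) the iterates cannot grow by more than an affine factor. The same discrete-Gronwall argument that underlies Lemma~\ref{lemma: bar z bounded} should give constants $A', B'$, independent of $j$ and $n$, such that
\begin{align}
\norm{\bar z(t)} \le A'\norm{\bar z(T_j)} + B' \qquad \forall t\in[T_j,T_{j+1}).
\end{align}
Concretely, we would use the Lipschitz bounds $\norm{H(z,w)}\le \norm{H(0,w)} + L(w)\norm{z}$ and the analogous bound for $G$. By Assumption~\ref{assumption: lln}, the averaged quantities $\sum \alpha(i) L(W_{i+1})$ and $\sum \alpha(i)\norm{H(0,W_{i+1})}$ over a window of $\alpha$-length at most $T+1$ are eventually bounded by about $(T+1)(L+\text{const})$. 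Since $\beta(i)\le\alpha(i)$ eventually, the $G$-part is controlled the same way, and the finitely many early indices are absorbed into the constants. This is the step that needs the most care, because $H(0,w)$ and $L(w)$ are unbounded and Markovian. If Lemma~\ref{lemma: bar z bounded} is already stated for all $t$ in the period rather than only at $T_{n+1}$, we simply cite it.

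\textbf{Step 2: combine with Lemma~\ref{lemma: endThm1b}.} Applying Lemma~\ref{lemma: endThm1b} at index $j$ gives
\begin{align}
\norm{x_n}\le\norm{\bar z(t(n))} \le A' C_1C_2\bigl(\max_{m\le j}\norm{\bar y(T_m)}+1\bigr)+B'.
\end{align}
Each $\bar y(T_m)$ with $m\le j$ equals $y_{m(T_m)}$, and $m(T_m)\le m(T_j)\le n$. Therefore $\max_{m\le j}\norm{\bar y(T_m)} \le \max_{i\le n}\norm{y_i} = \norm{\ymax_n}$.

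\textbf{Step 3: absorb constants.} Since $C_1C_2>1$, we have $A' C_1C_2(\norm{\ymax_n}+1)+B' \le C_1C_2(A'+B')(\norm{\ymax_n}+1)$. Setting $C_3 \doteq A'+B'$ therefore yields $\norm{x_n}\le C_1C_2C_3(\norm{\ymax_n}+1)$ for all $n$, which is the claim of Lemma~\ref{lemma: endThm1c}. The main obstacle is the uniform-in-$j$ intra-period bound of Step~1; everything else is bookkeeping of indices.
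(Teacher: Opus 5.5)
Your proof is correct and is essentially the paper's argument: you control the iterate by the grid value at the start of its period via Lemma~\ref{lemma: bar z bounded}, apply Lemma~\ref{lemma: endThm1b}, and use $\max_{m\le j}\norm{\bar y(T_m)}\le\norm{\ymax_n}$; your explicit period index $j$ is actually cleaner than the paper's proof, which reuses $n$ for both the period and the iterate. Lemma~\ref{lemma: bar z bounded} is indeed stated for all $t\in[0,T_{n+1}-T_n]$, so Step~1 is just a citation; if you did it by hand, note that Assumption~\ref{assumption: lln} controls the windowed quantity $\norm{\sum_i\alpha(i)H(0,W_{i+1})}$ rather than $\sum_i\alpha(i)\norm{H(0,W_{i+1})}$, so the Gronwall split has to keep $H(0,W_{i+1})$ inside the norm of the sum, as the paper does.
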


\begin{proof}\label{appendix: endThm1c}
    From Lemma~\ref{lemma: bar z bounded}, we know that for all $m$ such that $m(T_n) \leq m \leq m(T_{n+1})$, there exist constants $A, B$ such that
    \begin{align*}
        \norm{z_m} \leq A \norm{\bar{z}(T_n)} + B. 
    \end{align*}
    Let $C_3 > A + B$. This means that 
    \begin{align*}
        \norm{x_m} \leq \norm{z_m} &\leq A C_1C_2(\max_{l\leq n} \norm{\bar{y}(T_l)}+1) + B \\
        &\leq C_1C_2C_3 (\max_{l \leq n}\norm{\bar{y}(T_l)} + 1) \\
        & \leq C_1C_2C_3(\norm{\ymax_n} + 1).
    \end{align*}\end{proof}

By setting $K$ equal to $C_1C_2C_3$, this concludes the proof of Lemma \ref{thm: x stability}.

\subsection{Connecting the Timescales}\label{sec: bridging}

Now that we have proved the first major result, Lemma \ref{thm: x stability}, we now have some final tasks in the fast timescale to show that the fast iterates will track the slow iterates in some way. The result at the end of this section will be used in showing the diminishing discretization error in the slow timescale analysis.

In the following result, we show that there is a time period within which we can be such that the ODE trajectory of the fast variable will fall into and stay within a ball around $\lambda_{\infty}(y)$.

 \begin{lemma}\label{lemma: ext inp 3}
Let $K_y \subset \R^{d_2}$ where $K_y$ is compact. Given any $\epsilon > 0$, there exists a $T_{\epsilon} > 0$ such that for all constant external inputs $y \in K_y$, we have $\eta^y_{\infty}(t,x) \in B(\lambda_{\infty}(y), \epsilon)$ for any initial condition $x \in \lambda_{\infty}(K_y)$ and for all $t \geq T_{\epsilon}$.
\end{lemma}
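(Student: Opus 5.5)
The plan is to upgrade the pointwise-in-$y$ statement of Lemma~\ref{lemma: ext inp 1} to one that is uniform over $y\in K_y$. I will combine three ingredients: a perturbation estimate in the external input, a windowing argument that handles the infinite time horizon, and a finite subcover of $K_y$. Without loss of generality take $\epsilon<1$. Since $\lambda_\infty$ is Lipschitz (say with constant $L_\lambda$), $\lambda_\infty(K_y)$ is compact. Let $K'$ be the closed $1$-neighbourhood of $\lambda_\infty(K_y)$; it is compact and contains every admissible initial condition $x\in\lambda_\infty(K_y)$.

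\emph{Step 1 (fixed centre).} Fix $y_0\in K_y$. Apply Lemma~\ref{lemma: ext inp 1} with compact set $K'$, external input $y_0$ and tolerance $\epsilon/4$. This gives $T_0=T_0(y_0)\ge 1$ with $\eta^{y_0}_\infty(t,x)\in B(\lambda_\infty(y_0),\epsilon/4)$ for all $x\in K'$ and all $t\ge T_0$.

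\emph{Step 2 (perturbation on a window).} For a constant input $y$ with $\norm{y-y_0}<\rho$, I compare the two trajectories using the Lipschitz bound on $h_\infty$ and Gronwall (Theorem~\ref{theorem: Gronwall}). This is the $c=\infty$ case of Lemma~\ref{lemma: ext inp 2}, so $\epsilon(\infty)=0$. It gives
\begin{align}
\norm{\eta^{y}_\infty(t,x)-\eta^{y_0}_\infty(t,x)}\le L\rho\,2T_0e^{2LT_0}
\end{align}
for all $t\in[0,2T_0]$ and all $x$. Choose $\rho=\rho(y_0)$ so small that this bound is at most $\epsilon/4$ and also $L_\lambda\rho\le\epsilon/4$.

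\emph{Step 3 (windowing to get all $t\ge T_0$).} This is the step I expect to be the main obstacle. Gronwall only controls finite horizons, while the claim is for all $t\ge T_\epsilon$. I use the semigroup property of the autonomous ODE with constant input $y$, and induct on $k\ge1$ with the hypothesis $\eta^y_\infty(kT_0,x)\in K'$.
\begin{itemize}
\item Base case: the point $x$ itself, at time $0$, lies in $K'$.
\item Inductive step: restart the flow from $x'=\eta^y_\infty(kT_0,x)\in K'$. For $s\in[T_0,2T_0]$, Steps 1 and 2 give $\norm{\eta^y_\infty(s,x')-\lambda_\infty(y_0)}<\epsilon/4+\epsilon/4=\epsilon/2$.
\item Since $\epsilon/2<1$ and $\lambda_\infty(y_0)\in\lambda_\infty(K_y)$, the state at time $(k+1)T_0$ lies in $K'$, which closes the induction.
\end{itemize}
The intervals $[kT_0+T_0,\,kT_0+2T_0]$ cover $[T_0,\infty)$. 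Hence $\norm{\eta^y_\infty(t,x)-\lambda_\infty(y_0)}<\epsilon/2$ for all $t\ge T_0$. Combined with $\norm{\lambda_\infty(y)-\lambda_\infty(y_0)}\le\epsilon/4$, this gives $\eta^y_\infty(t,x)\in B(\lambda_\infty(y),\epsilon)$ for all $t\ge T_0$.

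\emph{Step 4 (compactness).} The open balls $B(y_0,\rho(y_0))$, $y_0\in K_y$, cover $K_y$. Extract a finite subcover centred at $y_0^{(1)},\dots,y_0^{(J)}$ and set $T_\epsilon=\max_j T_0(y_0^{(j)})$. For any $y\in K_y$, pick $j$ with $y\in B(y_0^{(j)},\rho(y_0^{(j)}))$. Step 3 then applies for all $t\ge T_0(y_0^{(j)})$, and in particular for all $t\ge T_\epsilon$, for every $x\in\lambda_\infty(K_y)$. This proves the lemma.
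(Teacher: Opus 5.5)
Your proof is correct and follows essentially the paper's route: fix a centre $y_0$ and apply Lemma~\ref{lemma: ext inp 1} there, control nearby constant inputs with the $c=\infty$ case of Lemma~\ref{lemma: ext inp 2}, restart the flow window by window, use the Lipschitz property of $\lambda_\infty$, and finish with a finite subcover of $K_y$. The one difference is in how restart points are handled: the paper keeps them in $B(\lambda_\infty(y),\delta)$ via Lyapunov stability but invokes Lemma~\ref{lemma: ext inp 1} only for initial conditions in $\lambda_\infty(K_y)$, whereas your enlarged compact set $K'$ (with perturbation over $[0,2T_0]$) keeps them in a set where that lemma has actually been applied, which closes a point the paper glosses over.
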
 

\begin{proof}
    By Lyapunov stability, we know that there is some $\delta$ with $\frac{\epsilon}{2} > \delta > 0$ such that if $\eta_{\infty}^y(t, x) \in B(\lambda_{\infty}(y), \delta)$, then for all $t' > t$, $\eta_{\infty}^y(t', x) \in B(\lambda_{\infty}(y), \frac{\epsilon}{2})$. 
   By Lemma \ref{lemma: ext inp 1} we know that there exists some time $T_{\delta}$ such that for all $x \in \lambda_{\infty}(K_y)$ (image of a compact set under a continuous map is compact) and $t \geq T_{\delta}$, we have 
    \begin{align}
       \eta^y_{\infty}(t,x) \in B(\lambda_{\infty}(y), \frac{\delta}{2}). \label{eq: all y ext inp 1} 
    \end{align}
    By Lemma \ref{lemma: ext inp 2}, we can select $\rho_y$ small enough such that for all $y_1 \in B(y, \rho_y)$,
    \begin{align}
        \norm{\eta_{\infty}^y(t,x) - \eta_{\infty}^{y_1}(t, x)} \leq \frac{\delta}{2} \label{eq: all y ext inp 2}
    \end{align}
    for all $t \in [0, T_{\delta}]$, $x \in \lambda_{\infty}(K_y)$. 

    We can split up the timeline into chunks of size $T_{\delta}$. The insight we will rely on is that if $\eta_{\infty}^y(T_{\delta}, x) = x_1$, then $\eta_{\infty}^y(T_{\delta}+t, x) = \eta_{\infty}^y(t, x_1)$ for all $t \geq 0$. The way our logic proceeds is as follows: By \eqref{eq: all y ext inp 1} and \eqref{eq: all y ext inp 2} we have that 
    \begin{align}
        \eta_{\infty}^{y_1}(T_{\delta}, x) \in B(\lambda_{\infty}(y), \delta)
    \end{align}

    Let $x_1 \doteq \eta_{\infty}^{y_1}(T_{\delta}, x)$. For all $t \in [0, T_{\delta}]$, we know by Lyapunov stability that 
    \begin{align}
        \eta_{\infty}^{y}(t, x_1) \in B(\lambda_{\infty}(y), \frac{\epsilon}{2}),
    \end{align}
    so by \eqref{eq: all y ext inp 2} we know (since we can reuse the $\rho_y$ selected earlier) that
    \begin{align}
        \eta_{\infty}^{y_1}(t, x_1) \in B(\lambda_{\infty}(y), \frac{3\epsilon}{4}).
    \end{align}
    for all $t \in [0, T_{\delta}]$, implying that
    \begin{align}
         \eta_{\infty}^{y_1}(t, x) \in B(\lambda_{\infty}(y), \frac{3\epsilon}{4}) \label{eq: all y ext inp 3/4}
    \end{align}
    for all $t \in [T_{\delta}, 2T_{\delta}]$.

    By \eqref{eq: all y ext inp 1} and \eqref{eq: all y ext inp 2} we know that
    \begin{align}
        \eta_{\infty}^{y_1}(T_{\delta}, x_1) \in B(\lambda_{\infty}(y), \delta).
    \end{align}
    We can then define $x_2 \doteq \eta_{\infty}^{y_1}(T_{\delta}, x_1)$ and repeat the above arguments to see that \eqref{eq: all y ext inp 3/4} holds for all $t \in [2T_{\delta}, 3T_{\delta}]$. We can continue repeating this argument for all $x_n$ to see that \eqref{eq: all y ext inp 3/4} holds for all $t \geq T_{\delta}$.
    
     Let $L_{\lambda}$ be the Lipschitz constant for $\lambda_{\infty}$. Then for all $y_1 \in B(y, \frac{\epsilon}{4 L_{\lambda}})$ we have
     \begin{align}
         \norm{\lambda_{\infty}(y) - \lambda_{\infty}(y_1)} \leq \frac{\epsilon}{4}. \label{eq: all y ext inp 3}
     \end{align}
     To summarize, by \eqref{eq: all y ext inp 3/4} and \eqref{eq: all y ext inp 3}, we know that if $y_1 \in B(y, \min(\rho_y, \frac{\epsilon}{4L_{\lambda}}))$ then 
     \begin{align}
         \eta^{y_1}_{\infty}(t,x) \in B(\lambda_{\infty}(y_1), \epsilon). \label{eq: all y ext inp last} 
     \end{align}
     Each $y$ gives us such a ball $B(y, \min(\rho_y, \frac{\epsilon}{4L_{\lambda}}))$ along with a $T_{\delta}$ and these balls cover $K$. By compactness, we can obtain a finite subcover and a finite number of times $T_1, \ldots, T_n$. Taking $T_{\epsilon}$ to be the maximum of these times completes the proof.
\end{proof}

The next lemma shows that if the fast variable starts close enough to the slow variable, it should stay close to it forever.

\begin{lemma}\label{lemma: ext inp 4}
    Let $K_y \subset \R^{d_2}$ where $K_y$ is compact. Given any $\epsilon > 0$, there exists a $\delta$ such that for all constant external inputs $y \in K_y$, if the initial condition $x \in B(\lambda_{\infty}(y), \delta)$, then $\eta_{\infty}^y(t,x) \in B(\lambda_{\infty}(y),\epsilon)$ for all $t \geq 0$.  
\end{lemma}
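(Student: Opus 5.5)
The plan is to obtain \emph{uniform} Lyapunov stability of the equilibria $\lambda_\infty(y)$ over $y\in K_y$. The tools are pointwise Lyapunov stability for each fixed $y$, continuity of trajectories in the external input (Lemma~\ref{lemma: ext inp 2} with $c=\infty$, so $\epsilon(c)=0$), Lipschitz continuity of $\lambda_\infty$, and compactness of $K_y$. The structure mirrors the proof of Lemma~\ref{lemma: ext inp 3}.

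First fix $y\in K_y$. Since $\lambda_\infty(y)$ is globally asymptotically stable for $\dot x=h_\infty(x,y)$, it is Lyapunov stable. So there is $\delta_y\in(0,\epsilon/4)$ such that $x\in B(\lambda_\infty(y),2\delta_y)$ implies $\eta^y_\infty(t,x)\in B(\lambda_\infty(y),\epsilon/4)$ for all $t\ge0$. Next apply Lemma~\ref{lemma: ext inp 3} with the compact set $\bar B(\lambda_\infty(y),2\delta_y)$, or more simply Lemma~\ref{lemma: ext inp 1} with $K=\bar B(\lambda_\infty(y),2\delta_y)$ and the fixed $y$. This gives a time $T_y$ after which every trajectory starting in that closed ball lies in $B(\lambda_\infty(y),\delta_y/2)$.

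The key step is to transfer this to nearby inputs $y_1$. By Lemma~\ref{lemma: ext inp 2}, on $[0,T_y]$ we have $\|\eta^{y_1}_\infty(t,x)-\eta^y_\infty(t,x)\|\le L\rho T_ye^{LT_y}$ whenever $\|y_1-y\|<\rho$. Choose $\rho_y$ so that this bound is below $\delta_y/2$, and also so that $L_\lambda\rho_y<\delta_y/2$, where $L_\lambda$ is the Lipschitz constant of $\lambda_\infty$.

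Now take $y_1\in B(y,\rho_y)$ and $x\in B(\lambda_\infty(y_1),\delta_y/2)$. Then $x\in B(\lambda_\infty(y),\delta_y)$. On $[0,T_y]$ the $y_1$-trajectory stays within $\epsilon/4+\delta_y/2$ of $\lambda_\infty(y)$, and at time $T_y$ it lies in $B(\lambda_\infty(y),\delta_y)\subset B(\lambda_\infty(y),2\delta_y)$. Restarting the argument from that point, using the semigroup property exactly as in the proof of Lemma~\ref{lemma: ext inp 3}, gives inductively that the trajectory stays in $B(\lambda_\infty(y),\epsilon/4+\delta_y/2)$ for all $t\ge0$. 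Hence it stays in $B(\lambda_\infty(y_1),\epsilon/4+\delta_y)\subset B(\lambda_\infty(y_1),\epsilon)$.

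Finally, the balls $B(y,\rho_y)$ cover $K_y$. Extract a finite subcover indexed by $y^{(1)},\dots,y^{(N)}$ and set $\delta=\min_j\delta_{y^{(j)}}/2>0$. Any $y_1\in K_y$ lies in some $B(y^{(j)},\rho_{y^{(j)}})$, and the previous paragraph applies.

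The main obstacle is the middle step: controlling the perturbed trajectory for \emph{all} $t\ge0$ rather than only on $[0,T_y]$. Lemma~\ref{lemma: ext inp 2} gives only finite-horizon closeness, and its error grows like $e^{LT}$. The periodic restart at multiples of $T_y$ handles this. The restart is valid because the contraction into $B(\lambda_\infty(y),\delta_y/2)$ by time $T_y$ holds uniformly over starting points in $\bar B(\lambda_\infty(y),2\delta_y)$, while the perturbation error on each block is at most $\delta_y/2$. Each block therefore begins again in $B(\lambda_\infty(y),\delta_y)$.
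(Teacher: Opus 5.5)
Your proof is correct, but it is organized differently from the paper's. The paper splits time into a tail and a head. For $t\ge T_\epsilon$ it cites Lemma~\ref{lemma: ext inp 3}. For $t\in[0,T_\epsilon]$ it combines Lyapunov stability at a fixed $y$, the finite-horizon bound of Lemma~\ref{lemma: ext inp 2}, and Lipschitz continuity of $\lambda_\infty$, then takes a finite subcover of $K_y$. You do not split time at all. Instead you run the block-restart induction directly on $\bar B(\lambda_\infty(y),2\delta_y)$; this is the mechanism used inside the proof of Lemma~\ref{lemma: ext inp 3}. A single induction over blocks of length $T_y$ then covers every $t\ge 0$.

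The paper's route is shorter, because it reuses the uniform attraction result and only needs a new finite-horizon estimate. Your route is self-contained and matches the hypotheses more exactly. Lemma~\ref{lemma: ext inp 3} is stated only for initial conditions in $\lambda_\infty(K_y)$, whereas the present lemma needs initial conditions in a $\delta$-ball around $\lambda_\infty(y_1)$. So the paper's tail step tacitly requires rerunning Lemma~\ref{lemma: ext inp 3} on a compact neighborhood of $\lambda_\infty(K_y)$. Its head step also needs a triangle-inequality shrink from $B(\lambda_\infty(y_1),\delta)$ into $B(\lambda_\infty(y),\delta_y)$, which it leaves implicit. You handle both points explicitly: you apply Lemma~\ref{lemma: ext inp 1} to $\bar B(\lambda_\infty(y),2\delta_y)$ and use $\|x-\lambda_\infty(y)\|<\delta_y/2+L_\lambda\rho_y<\delta_y$.

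One small correction. Your parenthetical alternative, applying Lemma~\ref{lemma: ext inp 3} to the compact set $\bar B(\lambda_\infty(y),2\delta_y)$, does not fit that lemma, which restricts initial conditions to $\lambda_\infty(K_y)$. Lemma~\ref{lemma: ext inp 1}, your other option, is the correct citation.
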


\begin{proof}
    Fix $y$ and $\epsilon$. From Lemma \ref{lemma: ext inp 3} we know that the result holds for $t \geq T_{\epsilon}$, so we just need to show that it holds for $0 \leq t < T_{\epsilon}$. By Lyapunov stability, there exists $\delta_y$ such that if $x \in B(\lambda_{\infty}(y), \delta_y)$, then for all $t \geq 0$, $\eta^y_{\infty}(t, x) \in B(\lambda_{\infty}(y), \frac{\epsilon}{3})$.

    By Lemma \ref{lemma: ext inp 2}, there exists $\rho$ small enough that for all $y' \in B(y, \rho)$, 
    \begin{align}
        \norm{\eta^y_{\infty}(t, x)-\eta^{y_1}_{\infty}(t, x)} < \frac{\epsilon}{3}
    \end{align}
    for all $x \in B(\lambda_{\infty}(y), \delta_y)$, $t \in [0, T_{\delta}]$. Finally, since $\lambda_{\infty}$ is Lipschitz, if $y' \in B(y, \frac{\epsilon}{3L_{\lambda}})$, 
    \begin{align}
        \norm{\lambda_{\infty}(y)-\lambda_{\infty}(y_1)} < \frac{\epsilon}{3}.
    \end{align}
    Combining these facts, we know that if $y_1 \in B(y, \min(\rho, \frac{\epsilon}{3L_{\lambda}}))$, then $\eta_{\infty}^y(t,x) \in B(\lambda_{\infty}(y),\epsilon)$ for all $t \geq 0$. Each $y$ comes with such a neighborhood and a distance $\delta$. By compactness, we can extract a finite subcover and a finite number of distances $\delta_1, \ldots, \delta_n$, and taking $\delta$ to be the smallest one gives us our result.
\end{proof}

Finally, we repeatedly apply the previous two lemmas every period to show that the fast iterates track the slow iterates.

\begin{lemma}[\textbf{Lemma \ref{lemma: close lambda main text} Restated}]\label{lemma: close lambda}
For any $\epsilon > 0$, there is some $T_{\epsilon}$ and some $N_{\epsilon}$ such that for all $n > N_{\epsilon}$, $\norm{\tilde{x}_n(t) - \lambda_{\infty}(\tilde{y}_n(t))} \leq \epsilon$ for all $t \in [0, T_{\epsilon}]$.
\end{lemma}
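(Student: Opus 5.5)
Within each window $[0,T+1)$ the rescaled fast iterate $\tilde{x}_n(t)$ follows the limiting ODE with (almost) frozen input $\tilde{y}_n(0)$. Lemma~\ref{lemma: ext inp 1} and Lemma~\ref{lemma: ext inp 3} pull that ODE near $\lambda_{\infty}$, and Lemma~\ref{lemma: ext inp 4} keeps it there. The plan is to combine these with the diminishing discretization error \eqref{eq: whole sequence diminishing disc error}, and then chain consecutive windows through the monotone scaling $r_{n+1}\ge r_n$, so that closeness to $\lambda_\infty$ propagates from window $n$ to window $n+1$. I read the statement as asking, for every large $n$, for closeness on a fixed-length horizon. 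I will show it on $[0,T+1)$ (taking $T_\epsilon\le T$), and in fact for all $t$ in each window once $n$ is large.

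\emph{Step 1 (compactness).} Since $\norm{\tilde{z}_n(0)}\le 1$, the discrete Gronwall bound behind Lemma~\ref{lemma: bar z bounded} gives a uniform compact set $K$ with $\tilde{z}_n(t)\in K$ for $t\in[0,T+1)$. Set $K_y$ to be its $y$-projection, enlarged if needed. Fix $\epsilon$. Apply Lemma~\ref{lemma: ext inp 4} on $K_y$ with target $\epsilon/4$ to get a radius $\delta$. Apply Lemma~\ref{lemma: ext inp 1} (uniformly over $K_y$, via the covering argument of Lemma~\ref{lemma: ext inp 3} but with arbitrary initial conditions in the compact $K_x$) to get a time $T'$ after which $\eta^{y}_\infty(t,x)\in B(\lambda_\infty(y),\delta/2)$.

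\emph{Step 2 (one window).} For $n$ large, three errors are made small:
\begin{itemize}
\item $\sup_t\norm{f_n(t)}$ is small, by \eqref{eq: whole sequence diminishing disc error};
\item $\norm{\tilde{y}_n(t)-\tilde{y}_n(0)}$ is small, by Lemma~\ref{lemma: third term disc error}, which makes the $\beta$-sum vanish;
\item the gap between $h_{r_n}$ and $h_\infty$, i.e.\ the $\epsilon(c)$ term of Lemma~\ref{lemma: ext inp 2}, is small, since $r_n\to\infty$ under the standing hypothesis $\sup r_n=\infty$.
\end{itemize}
Then Lemma~\ref{lemma: ext inp 2} together with the Lipschitz property of $\lambda_\infty$ gives $\norm{\tilde{x}_n(t)-\eta^{\tilde{y}_n(0)}_\infty(t,\tilde{x}_n(0))}\le\epsilon/4$ and $\norm{\lambda_\infty(\tilde{y}_n(t))-\lambda_\infty(\tilde{y}_n(0))}\le\epsilon/4$. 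Choosing $T\ge T'$ (the window length is ours to fix, as in Lemma~\ref{lemma: endThm1a}) yields closeness within $\epsilon$ on $[T',T+1)$, and within $\delta/2+o(1)$ at the window's end.

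\emph{Step 3 (chaining windows).} The start of window $n+1$ is $\bar z(T_{n+1})/r_{n+1}=(r_n/r_{n+1})\,\tilde z_n(T_{n+1}-T_n)$. Because $\lambda_\infty$ is homogeneous, rescaling by $r_n/r_{n+1}\le1$ preserves the inequality $\norm{x-\lambda_\infty(y)}<\delta$. Hence, for $n\ge N$, every window starts within $\delta$ of the equilibrium manifold. Lemma~\ref{lemma: ext inp 4}, combined with the error bounds of Step~2, then keeps $\tilde x_n(t)$ within $\epsilon$ of $\lambda_\infty(\tilde y_n(t))$ for all $t\in[0,T+1)$. This gives the claim for every $n>N_\epsilon$ with $T_\epsilon=T$.

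The main obstacle is Step~3, specifically the uniformity of $\delta$ and $T'$ over $y\in K_y$, and the fact that the discretization error must be made smaller than $\delta/2$ uniformly in $n$ rather than merely along a subsequence. This is exactly where the monotone $r_n$ and the sub-subsequence upgrade to the full sequence are needed. I would first try to make the compactness argument of Lemmas~\ref{lemma: ext inp 3}--\ref{lemma: ext inp 4} explicit with the initial conditions ranging over the whole compact $K_x$. If the case $r_{n+1}>r_n$ caused trouble, I would fall back on Lemma~\ref{lemma: endThm1a}, which shows $r$ only grows when $x$ is comparable to $y$.
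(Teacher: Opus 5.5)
Your proposal is correct and follows essentially the same route as the paper's proof. Both arguments get one-window attraction from Lemmas~\ref{lemma: ext inp 1}/\ref{lemma: ext inp 3} and \ref{lemma: ext inp 2} together with the full-sequence discretization error \eqref{eq: whole sequence diminishing disc error}, carry $\delta$-closeness to the next window start using homogeneity of $\lambda_\infty$ and $r_n/r_{n+1}\le 1$, and finish with Lemma~\ref{lemma: ext inp 4}; the paper's $T_\epsilon = T_\delta$, which implicitly sets the window length to $T_\delta$, plays the role of your $T\ge T'$, $T_\epsilon=T$. Your requirement of a uniform attraction time for arbitrary initial conditions in a compact $K_x$ is in fact what the argument needs, since the paper invokes Lemma~\ref{lemma: ext inp 3} only for $x\in\lambda_\infty(K_y)$ even though $\tilde x_n(0)$ need not lie there.
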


\begin{proof}

By \ref{lemma: ext inp 4}, we know that for all $y \in [-2, 2]^{d_2}$ (chosen  since $\norm{\tilde{y}_n(0)} \leq 1$) there is some $\delta$ with $\frac{\epsilon}{2} > \delta > 0$ such that if $\eta_{\infty}^y(t, x) \in B(\lambda_{\infty}(y), \delta)$, then for all $t' > t$, $\eta_{\infty}^y(t', x) \in B(\lambda_{\infty}(y), \frac{\epsilon}{2})$. By Lemma \ref{lemma: ext inp 3} we know that there exists some time $T_{\delta}$ such that for all $x \in \lambda_{\infty}(K_y)$ (image of a compact set under a continuous map is compact) and $t \geq T_{\delta}$, we have, for all $y \in [-2, 2]^{d_2}$, 
    \begin{align}
       \eta^y_{\infty}(t,x) \in B(\lambda_{\infty}(y), \frac{\delta}{4}). \label{eq: close lambda asymp stab} 
    \end{align}

By Lemma \ref{lemma: ext inp 2}, there exists $n_1$ such that for all $n > n_1$,
\begin{align}
    \norm{\eta^y_{r_n}(t, x) - \eta^y_{\infty}(t,x)} < \frac{\delta}{4} \label{eq: close lambda e(c)}
\end{align}
for all $t \in [0, T_{\delta} + 1]$.

By \eqref{eq: whole sequence diminishing disc error}, there exists $n_2$ such that for all $n > n_2$, 
\begin{align}
    \norm{\tilde{x}_n(t) - \eta^{\tilde{y}_n(0)}_{r_n}(t,\tilde{x}_n(0))} < \frac{\delta}{4} \label{eq: close lambda x discret}
\end{align}
and
\begin{align}
   \norm{\lambda_{\infty}(\tilde{y}_n(t)) - \lambda_{\infty}(\tilde{y}_n(0))} \leq L_{\lambda} \norm{\tilde{y}_n(t) - \tilde{y}_n(0)} < \frac{\delta}{4} \label{eq: close lambda y discret}
\end{align}
for all $t \in [0, T_{\delta} + 1]$. Combining \eqref{eq: close lambda asymp stab}, \eqref{eq: close lambda e(c)}, \eqref{eq: close lambda x discret}, \eqref{eq: close lambda y discret}, and the fact that for $n$ large enough, $T_{n+1}-T_n < T + 1$, we have 
\begin{align}
    \norm{\tilde{x}_n(T_{n+1}-T_n) - \lambda_{\infty}(\tilde{y}_n(T_{n+1} - T_n))} < \delta
\end{align}
for all $n > \max(n_1, n_2)$. This means that, by homogeneity of $\lambda_{\infty}$,
\begin{align}
    \norm{\tilde{x}_{n+1}(0)-\lambda_{\infty}(\tilde{y}_{n+1}(0))} = \frac{r_n}{r_{n+1}} \norm{\tilde{x}_n(T_{n+1} - T_n) - \lambda_{\infty}(\tilde{y}_n(T_{n+1} - T_n))} < \delta
\end{align}
when $n > \max(n_1, n_2)$, giving us 
\begin{align}
    \norm{\tilde{x}_n(0)-\lambda_{\infty}(\tilde{y}_n(0))} < \delta
\end{align}
for all $n > N_{\epsilon} = \max(n_1, n_2) + 1$. So,
\begin{align}
    \norm{\eta^{\tilde{y}_n(0)}_{r_n}(t, \tilde{x}_n(0))-\lambda_{\infty}(\tilde{y}_n(0))} < \frac{\epsilon}{2}
\end{align}
for all $t \geq 0$. Combining this with \eqref{eq: close lambda x discret} and \eqref{eq: close lambda y discret} gives us
\begin{align}
    \norm{\tilde{x}_n(t) - \lambda_{\infty}(\tilde{y}_n(t))} \leq \epsilon
\end{align} 
for all $t \in [0, T_{\delta}]$, as desired. Taking $T_{\epsilon} \doteq T_{\delta}$ completes the proof.
    
\end{proof}

\newpage
\section{Proof of Theorem \ref{thm: z stability}} \label{appendix: proof z stab}

In this section we will prove Theorem \ref{thm: z stability}.

\subsection{Slow Timescale Setup}\label{sec slow timescale setup}

Here we are working in the slow timescale. We reuse the notation from Section \ref{sec fast ts setup} but redefine some things in terms of the slow timescale. We split the positive real axis $[0, \infty)$ into chunks of length $\qty{\beta(i)}_{i=0,1,\dots}$. 
We then collect these segments together into larger
intervals $\qty{[T_n, T_{n+1})}_{n=0,1,\dots}$, where the sequence $\qty{T_n}$ has the property that as $n$ grows large, we have $T_{n+1} - T_n \approx T$.
We define 
\begin{align}
    t(0) \doteq& 0, \\
    t(n) \doteq& \sum_{i=0}^{n-1} \beta(i) \qq{$n=1,2,\dots$}.
\end{align}
For all $T > 0$, define 
\begin{align}
m(T) = \max\qty{{i | T \geq t(i)}} \label{def: slow m}
\end{align} to be the maximal $i$ where $t(i)$ is no greater than $T$.

Define 
\begin{align}
&T_0 = 0, \\   
&T_{n+1} = t(m(T_n+T) + 1) \label{def: slow T}.
\end{align}

For any given function $f$ with domain $\fW$,
its asymptotic rate of change is defined as
\begin{align}
    \limsup_n \sup_{-\tau \leq t_1 \leq t_2 \leq \tau} \norm{\sum_{i = m(t(n) + t_1)}^{m(t(n) + t_2) - 1} \beta(i) [f(W_{i+1}) - \E_{w\sim d_\fW}\qty[f(w)]] }.
\end{align}

\subsection{Defining the Slow Iterates}\label{sec slow iterates}

Here, we take $\bar{z}(t)$ to be the piecewise constant interpolation of $z_n$ at points $\qty{t(n)}_{n=0,1,\dots}$, i.e.,

\begin{align}
\bar z(t) \doteq (\bar{x}(t),\bar y(t)) \doteq  
\begin{cases}
(x_0, y_0) & t \in [0,t(1)) \\
(x_1, y_1) & t \in [t(1),t(2)) \\
(x_2, y_2) & t \in [t(2),t(3)) \\
\vdots &
\end{cases}
\end{align}

Now we describe our rescaling of $\bar z(t)$.

\begin{definition} \label{definition: slow z tilde n}
$\forall n \in \N, t \in [0,T+1) $, define 
\begin{align}\label{def: tilde z}
\tilde{z}_n(t)  \doteq (\tilde{x}_n(t), \tilde{y}_n(t)) &\doteq \frac{\bar{z}(T_n + t)}{r_n}.
\end{align}

We will define $r_n$ differently, such that it will be at least as large as the largest iterate seen so far:
\begin{align}
r_n \doteq \max\qty{1, \norm{z^{max}_{m(T_n)}}} \label{def: slow r n}.
\end{align}  
where $z^{max}_n \doteq z_m$ such that $m = \arg\max_{i \leq n} \norm{z_i}$ (with ties broken arbitrarily).
\end{definition}

We can regard $\tilde{y}_n(t)$ as the Euler's discretization of $y_n(t)$ defined below.

\begin{definition} \label{definition: slow y n}
    $\forall n \in \N, t\in [0,T+1)$, define $y_n(t)$ as the solution to the ODE
    \begin{align}
    \frac{d y_n(t)}{d t} = g_{r_n}(\lambda_{\infty}(y_n(t)), y_n(t)) \label{eq: slow y n}
    \end{align}
    with initial condition 
    \begin{align}
    y_n(0) = \tilde{y}_n(0). \label{eq: y n init} 
    \end{align}
\end{definition}
    
    We can also write 
    $y_n(t)$ as 
    \begin{align}
    y_n(t)   &=  \tilde{y}_n(0) + \int_0^t g_{r_n}(\lambda_{\infty}(y_n(t)), y_n(t))ds. \label{def: y n integral}
    \end{align}
    The discretization error is defined as 
    \begin{align}
    f_n(t) \doteq \tilde{y}_n(t) - y_n(t) \label{def: slow f}
    \end{align}
    and we want to show that $f_n(t)$ diminishes to 0 uniformly in $t$ as $n\to\infty$.
    To accomplish this, we need to show that the following three sequences of functions are equicontinuous in the extended sense:
    \begin{align}
        \label{eq slow three seq}
        \qty{t \mapsto \tilde y_n(t)}_{n=0}^\infty, \qty{y_n(t)}_{n=0}^\infty, \qty{f_n(t)}_{n=0}^\infty.
    \end{align}
    We deferred the relevant definitions to Appendix \ref{appendix: equicontinuity} as they are quite similar to the analogous sections in \citet{liu2025ode}, but we will prove it for one of the sequences of functions.

    \begin{lemma}\label{lemma: hat y equicontinuous}
    $\qty{\tilde{y}_n(t)}_{n=0}^\infty$ is equicontinuous in the extended sense on $[0, T+1)$.    
\end{lemma}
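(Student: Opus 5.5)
To prove the lemma I would use the definition of equicontinuity in the extended sense (Appendix \ref{appendix: equicontinuity}, following \citet{liu2025ode}). Two things must be checked. First, $\sup_n \norm{\tilde y_n(0)} < \infty$. Second, for every $\epsilon>0$ there is a $\delta>0$ such that
\begin{align}
\limsup_n \sup_{0\le t_1\le t_2<T+1,\, t_2-t_1\le\delta}\norm{\tilde y_n(t_2)-\tilde y_n(t_1)}\le\epsilon .
\end{align}
The first point is immediate: $r_n \ge \norm{z_{m(T_n)}}$ by \eqref{def: slow r n}, so $\norm{\tilde y_n(0)}\le 1$.

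For the increments, I would write
\begin{align}
\tilde y_n(t_2)-\tilde y_n(t_1)=\sum_{i=m(T_n+t_1)}^{m(T_n+t_2)-1}\beta(i)\,G_{r_n}(\tilde z_n(t(i)-T_n),W_{i+1}).
\end{align}
Lipschitz continuity (Assumption \ref{assumption: H Lipschitz}) gives the bound
\begin{align}
\norm{G_{r_n}(\tilde z,w)}\le \norm{G_{r_n}(0,w)}+L(w)\norm{\tilde z}\le \norm{G(0,w)}+L(w)\norm{\tilde z},
\end{align}
where I used $r_n\ge1$. So the key need is a uniform bound on $\norm{\tilde z_n(s)}$ for $s\in[0,T+1)$, covering both components.

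This uniform bound is the main obstacle. Here Lemma \ref{thm: x stability} does the work. For any index $j$ with $m(T_n)\le j\le m(T_n+T+1)$, it gives $\norm{x_j}\le K(1+\norm{\ymax_j})$. So it suffices to control $\ymax_j/r_n$, and I would do this with a discrete Gronwall argument on the $y$-recursion alone. Set $M_j\doteq\max_{m(T_n)\le l\le j}\norm{y_l}$. Then
\begin{align}
\norm{y_{j+1}}\le \norm{y_{m(T_n)}}+\sum_{l=m(T_n)}^{j}\beta(l)\bigl(\norm{G(0,W_{l+1})}+L(W_{l+1})(1+K)(1+\norm{\ymax_l})\bigr).
\end{align}
Here $\norm{\ymax_l}\le \max(r_n,M_l)$, since every index before $m(T_n)$ is bounded by $r_n$. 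Dividing by $r_n$ and applying Theorem \ref{theorem: discrete Gronwall} to $\max(1,M_j/r_n)$ yields
\begin{align}
\sup_{s\in[0,T+1)}\norm{\tilde z_n(s)}\le C\exp\Bigl((1+K)\sum_{l=m(T_n)}^{m(T_n+T+1)}\beta(l)\bigl(L(W_{l+1})+\norm{G(0,W_{l+1})}\bigr)\Bigr).
\end{align}
Assumption \ref{assumption: lln}, applied with the slow stepsize $\beta$ to $L(w)$ and $\norm{G(0,w)}$, shows that this windowed weighted sum is asymptotically close to $(T+1)(L+\E\norm{G(0,w)})$, hence bounded for large $n$. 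The relevant quantity is the asymptotic rate of change being zero, as in \citet{liu2025ode}. Finitely many early $n$ are handled trivially, so there is a constant $C'$ with $\sup_n\sup_s\norm{\tilde z_n(s)}\le C'$.

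Finally,
\begin{align}
\norm{\tilde y_n(t_2)-\tilde y_n(t_1)}\le \sum_{i=m(T_n+t_1)}^{m(T_n+t_2)-1}\beta(i)\bigl(\norm{G(0,W_{i+1})}+C'L(W_{i+1})\bigr).
\end{align}
By the same averaging property, the $\limsup$ of this sum is at most $(t_2-t_1)\bigl(\E\norm{G(0,w)}+C'L\bigr)$, up to an error that vanishes uniformly over $0\le t_1\le t_2<T+1$. Choosing $\delta=\epsilon/(\E\norm{G(0,w)}+C'L+1)$ completes the proof. One point needs care: the averaging property must hold uniformly over subintervals of $[0,T+1)$ for a fixed sample path. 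I would obtain this exactly as in the deferred asymptotic rate-of-change lemmas (Appendix \ref{appendix: H h 0}), replacing $\alpha$ by $\beta$.
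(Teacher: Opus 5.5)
Your route is the paper's. You get a uniform bound on $\norm{\tilde z_n(s)}$ from Lemma~\ref{thm: x stability} plus a discrete Gronwall argument; the paper simply cites Lemma~\ref{lemma: slow gronwall} for this. You then split $G_{r_n}(\tilde z,w)$ into a Lipschitz part and a $G_{r_n}(0,w)$ part and use small-window estimates for the $\beta$-weighted sums. Your Gronwall step is more careful than the paper's. You normalize by $r_n$, you use $\norm{\ymax_l}\le\max(r_n,M_l)$ to handle indices before $m(T_n)$, and you cover all of $[0,T+1)$, whereas the paper's lemma only reaches indices up to $m(T_{n+1})$.

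One step, however, is not supported by the hypotheses. You move the norm inside the sums, bounding $\norm{G_{r_n}(0,w)}\le\norm{G(0,w)}$. You then invoke Assumption~\ref{assumption: lln} for $w\mapsto\norm{G(0,w)}$, both in the Gronwall exponent and in the final increment estimate. Assumption~\ref{assumption: lln} is imposed only on $w\mapsto G(x,y,w)$, $L(w)$ and $L_b(w)$. An averaging property for the vector-valued $G(0,\cdot)$ does not imply one for its norm.

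The fix is to keep the norm outside that piece:
\[
\norm{\sum_i\beta(i)G_{r_n}(\tilde z_n(t(i)-T_n),W_{i+1})}\le\sum_i\beta(i)L(W_{i+1})\norm{\tilde z_n(t(i)-T_n)}+\frac{1}{r_n}\norm{\sum_i\beta(i)G(0,W_{i+1})}.
\]
Bound the last term by the zero asymptotic rate of change of $G(0,\cdot)$ plus $\norm{g(0)}\sum_i\beta(i)$. This is how the paper's proof handles it, via \eqref{eq: h property close t 0} with $G,\beta$ in place of $H,\alpha$. In your Gronwall step the same term then becomes an additive constant of the type \eqref{eq: G c c_G}, not part of the exponent. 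Your choice of $\delta$ then uses $\norm{g(0)}$ instead of $\E\norm{G(0,w)}$.

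The paper's Lemma~\ref{lemma: slow gronwall} makes the same slip: it passes from $\sum_i\beta(i)\norm{G(x_i,y_i,W_{i+1})}$ to a bound on $\norm{\sum_i\beta(i)G(0,W_{i+1})}$. So your normalized Gronwall, repaired this way, is a cleaner version of the bound the paper relies on.
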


Its proof is in Appendix \ref{proof: hat y equicontinuous}. Equicontinuity for the other two sequences of functions follows more similarly to the fast timescale arguments.

\subsection{A Convergent Subsequence}
\label{sec: slow convergent sub seq}

The ultimate goal is to show that 
\begin{align}
\sup_n \norm{z_n} < \infty.
\end{align}
Once again, observe the inequality
\begin{align}
    \forall n, \quad \norm{z_{m(T_n)}} = \norm{\bar{z}(T_n)}  \leq  r_n. \label{eq: slow z m T n leq r n}
\end{align}
Therefore, if we had 
\begin{align}\label{eq: sup r n bounded}
    \sup_n r_n < \infty,  
\end{align}
then the result would come easily. So, we assume for contradiction from now on that $\sup_n r_n = \infty$. 

According to the Arzelà-Ascoli theorem in the extended sense (Theorem~\ref{appendix: AA theorem}), a sequence of
equicontinuous functions always has a subsequence of functions that uniformly converge to a continuous limit.
In the following, we use this to identify a particular subsequence of interest.

\begin{lemma}\label{lemma: slow three convergence}
    Suppose $\sup_n r_n = \infty$.
    Take an arbitrary subsequence $\qty{n_{k,0}}_{k=0}^\infty \subseteq \qty{0, 1, 2, \dots}$. Then there is a subsequence $\qty{n_k}_{k=0}^\infty \subseteq \qty{n_{k,0}}_{k=0}^\infty$ such that there exist some continuous functions $f^{\lim}(t)$ and $\tilde{y}^{\lim}(t)$ such that $\forall t \in [0, T+1)$,
\begin{align}
\lim_{k \to \infty} f_{n_k}(t) =& f^{\lim}(t)   \label{eq: slow f x r lim} , \\ 
\lim_{k \to \infty} \tilde{y}_{n_k}(t) =& \tilde{y}^{\lim}(t),  \label{eq: hat y r lim}
\end{align}
where both convergences are uniform in $t$ on $[0, T+1)$.
Furthermore,
let $y^{\lim}(t)$ denote the unique solution to the ODE
\begin{align}
    \frac{d y^{\lim}(t)}{d t} &= g_{\infty}(\lambda_{\infty}(y^{\lim}(t)), y^{\lim}(t)) \label{def: slow ylim} \\
\end{align}
with the initial condition
\begin{align}
y^{\lim}(0) = \tilde{y}_n^{\lim}(0),
\end{align}
in other words,
\begin{align}\label{eq: y lim t}
y^{\lim}(t) &= \tilde{y}^{\lim}(0) + \int_0^t g_{\infty}(\lambda_{\infty}(y^{\lim}(s)), y^{\lim}(s))  ds.
\end{align}
Then $\forall t \in [0, T+1)$, we have
    \begin{align}
\lim_{k \to \infty} y_{n_k}(t)   =   y^{\lim}(t),
    \end{align}
where the convergence is uniform in $t$ on $[0, T+1)$.
\end{lemma}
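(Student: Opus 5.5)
We prove Lemma~\ref{lemma: slow three convergence} in the same way as its fast-timescale analogue, Lemma~\ref{lemma: three convergence}, using Arzel\`a--Ascoli twice and then a Gronwall argument. First, by Lemma~\ref{lemma: hat y equicontinuous} and the corresponding equicontinuity in the extended sense of $\qty{f_n}$ (Appendix~\ref{appendix: equicontinuity}), Theorem~\ref{appendix: AA theorem} applied to $\qty{\tilde y_{n_{k,0}}}$ gives a subsequence along which $\tilde y$ converges uniformly on $[0,T+1)$ to a continuous $\tilde y^{\lim}$. Applying the theorem again to $\qty{f_n}$ along that subsequence gives a further subsequence $\qty{n_k}$ along which $f_{n_k}\to f^{\lim}$ uniformly. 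Uniform convergence survives passing to subsequences, so both limits hold along $\qty{n_k}$.

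For the ODE part, define $u_k(t)\doteq\norm{y_{n_k}(t)-y^{\lim}(t)}$. Subtracting the integral forms \eqref{def: y n integral} and \eqref{eq: y lim t} and adding and subtracting $g_\infty(\lambda_\infty(y_{n_k}(s)),y_{n_k}(s))$ gives
\begin{align}
u_k(t)\le \norm{\tilde y_{n_k}(0)-\tilde y^{\lim}(0)} + \int_0^{T+1}\norm{(g_{r_{n_k}}-g_\infty)(\lambda_\infty(y_{n_k}(s)),y_{n_k}(s))}ds + L(1+L_\lambda)\int_0^t u_k(s)ds.
\end{align}
The last term uses the fact that $g_\infty$ is $L$-Lipschitz (taking expectations in \eqref{eq: G L inf}) and $\lambda_\infty$ is $L_\lambda$-Lipschitz.

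The first term vanishes because $\tilde y_{n_k}\to\tilde y^{\lim}$ uniformly. For the middle term, we need a compact set containing all $(\lambda_\infty(y_{n_k}(s)),y_{n_k}(s))$.
\begin{itemize}
\item $\norm{y_{n_k}(0)}=\norm{\tilde y_{n_k}(0)}\le 1$, since $r_n\ge\norm{\bar z(T_n)}$.
\item $g_c$ has linear growth uniformly in $c\ge1$: $\norm{g_c(x,y)}\le\norm{g(0,0)}+L\norm{(x,y)}$, using \eqref{eq: G L} and $c\ge 1$.
\item Together with $\norm{\lambda_\infty(y)}\le L_\lambda\norm{y}$, Gronwall (Theorem~\ref{theorem: Gronwall}) then bounds $\sup_{k,\,t\le T+1}\norm{y_{n_k}(t)}$ by a constant, so the arguments stay in a fixed compact set.
\end{itemize}
On that set $g_{r_{n_k}}\to g_\infty$ uniformly by Assumption~\ref{assumption: lim h,g uniformly convergent}, using $r_{n_k}\to\infty$. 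That holds because $r_n$ is nondecreasing (it is a running max) and $\sup_n r_n=\infty$. Gronwall then gives $\sup_{t}u_k(t)\le \epsilon_k e^{L(1+L_\lambda)(T+1)}\to0$, which is the claimed uniform convergence $y_{n_k}\to y^{\lim}$.

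The main obstacle is the uniform-in-$k$ boundedness of the ODE solutions $y_{n_k}$. Unlike the fast-timescale ODE, the drift here is evaluated at $\lambda_\infty(y)$, so we need linear growth of $y\mapsto g_c(\lambda_\infty(y),y)$ with constants independent of $c$. The linear-growth bound above supplies this. By contrast, equicontinuity of $\tilde y_n$, which relies on Lemma~\ref{thm: x stability}, is already provided by Lemma~\ref{lemma: hat y equicontinuous}.
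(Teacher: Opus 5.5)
Your proof is correct. The Arzel\`a--Ascoli step matches the paper's; the paper extracts $f$ first and then $\tilde y$, which makes no difference. The Gronwall step, however, uses a different add-and-subtract.

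The paper's Lemma~\ref{lemma: y n limit} inserts $g_{r_{n_k}}(\lambda_\infty(y^{\lim}(s)),y^{\lim}(s))$. It bounds the resulting difference term using the Lipschitz bound of $g_c$, which is uniform in $c$ (Lemma~\ref{lemma: h Lipschitz}). It then needs $g_{r_{n_k}}\to g_\infty$ only along the single trajectory $y^{\lim}$. That trajectory is bounded by an immediate Gronwall argument from $\norm{\tilde y^{\lim}(0)}\le 1$ (Lemmas~\ref{lemma: y lim bound} and~\ref{lemma: h c y lim uniform}).

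You instead insert $g_\infty(\lambda_\infty(y_{n_k}(s)),y_{n_k}(s))$ and use Lipschitzness of $g_\infty$. This puts the $c$-dependent error on every approximating trajectory $y_{n_k}$, so you need the uniform-in-$k$ bound on $\sup_t\norm{y_{n_k}(t)}$. That is why you flag this bound as the main obstacle; in the paper's decomposition it never arises.

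Your extra bound is not wasted. A uniform bound on $\{y_n\}$ is what the omitted equicontinuity of $\{y_n\}$, and hence of $\{f_n\}$, rests on, and both arguments invoke that equicontinuity. So your version makes explicit an ingredient both proofs need anyway. Your derivation of it is sound: $\norm{g_c(0,0)}=\norm{g(0,0)}/c$, together with Lipschitz constant $L$ uniform in $c$ and $\lambda_\infty(0)=0$, gives linear growth, and Gronwall finishes.
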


Its proof is in Appendix \ref{appendix: slow convergent sub}. We use the subsequence $\qty{n_k}$ intensively in the remaining proofs.

\subsection{Diminishing Discretization Error}
\label{sec: slow dim disc}

Recall that $f_n(t)$ denotes the discretization error between $\tilde {y}_n(t)$ and $y_n(t)$. In this section, we will show that $\lim_{n \to \infty} \norm{f_n(t)} = 0$ for all $t \in [0, T+1)$. We start by first proving that the discretization error diminishes along the sequence $\qty{n_k}$, i.e., that
\begin{align}
 \lim_{k \rightarrow \infty} \norm{f_{n_k}(t)} = \norm{f^{\lim}(t)} = 0.
\end{align} 
This means $\tilde{y}_{n_k}(t)$ is close to $y_{n_k}(t)$ as $k \rightarrow \infty$. 
For any $t \in [0, T+1)$,
we have
\begin{align}
&\lim_{k \rightarrow \infty} \norm{ f_{n_k}(t)} \\
=&  \lim_{k \rightarrow \infty} \Bigg\lVert \sum_{i = m(T_{n_k})}^{m(T_{n_k} +t) -1 }  \beta(i)G_{r_{n_k}} (\tilde{z}_{n_k}(t(i)-T_{n_k}),W_{i+1}) - \int_0^t g_{r_{n_k}}(\lambda_{\infty}(y_{n_k}(s)), y_{n_k}(s))ds \Bigg\rVert \explain{by \eqref{def: y n integral}}\\
\leq&\lim_{k \to \infty} \norm{ \sum_{i = m(T_{n_k})}^{m(T_{n_k} +t) -1 } \beta(i)[G_{r_{n_k}}(\tilde{z}_{n_k}(t(i)-T_{n_k}),W_{i+1}) - G_{r_{n_k}}(\lambda_{\infty}(\tilde{y}_{n_k}(t(i)-T_{n_k})),\tilde{y}_{n_k}(t(i)-T_{n_k}),W_{i+1})]} \label{eq: slow disc error first term} \\ &+ \norm{\sum_{i = m(T_{n_k})}^{m(T_{n_k} +t) -1 } \beta(i) G_{r_{n_k}}(\lambda_{\infty}(\tilde{y}_{n_k}(t(i)-T_{n_k})),\tilde{y}_{n_k}(t(i)-T_{n_k}),W_{i+1})-\int_0^t g_{r_{n_k}}(\lambda_{\infty}(y_{n_k}(s)),y_{n_k}(s))ds }. \label{eq: slow disc error terms} \\
\end{align}

\begin{lemma}[\textbf{Lemma \ref{lemma: slow, x close lambda y main text} Restated}]\label{lemma: slow, x close lambda y}
    For all $\epsilon > 0$, there exists some $N$ such that for all $n > N$, 
    \begin{align}
        \norm{\tilde{x}_n(t) - \lambda_{\infty}(\tilde{y}_n(t))} < \epsilon
    \end{align}
    for all $t \in [0, T].$
\end{lemma}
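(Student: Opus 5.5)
The plan is to transfer Lemma~\ref{lemma: close lambda} from the fast timescale to the slow timescale. The transfer uses two facts: the slow-timescale scaling $r_n$ in \eqref{def: slow r n} dominates the fast-timescale scaling at comparable times, and $\lambda_{\infty}$ is homogeneous. Write $r^f_j$ and $\tilde{x}^f_j,\tilde{y}^f_j$ for the fast-timescale quantities of Definition~\ref{definition: z tilde n}, and $T^f_j$ for the fast block endpoints. Fix $\epsilon>0$. First I would show that the fast quantities are uniformly close to the equilibrium manifold at \emph{every} iterate index, not only on a short window. Concretely, I aim for an $N'$ such that for every index $i \geq m(T^f_{N'})$, with $j$ the fast block containing $i$,
\begin{align}
    \norm{x_i - \lambda_{\infty}(y_i)} \leq \epsilon\, r^f_j .
\end{align}
This follows from Lemma~\ref{lemma: close lambda} with $T$ chosen as $T_{\epsilon}$ (the fast blocks have length $\approx T$, and the lemma's proof already propagates closeness from block to block through homogeneity of $\lambda_\infty$), after multiplying by $r^f_j$ and using $\lambda_\infty(y/c)=\lambda_\infty(y)/c$.

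Second, I would compare scales. By construction $r^f_j = \max\{1,\norm{\bar z(T^f_0)},\dots,\norm{\bar z(T^f_j)}\}$, which is at most $\max\{1, \max_{l\le i}\norm{z_l}\} = \max\{1,\norm{z^{max}_i}\}$. For $i$ in the slow window $[m(T_n), m(T_n+T+1))$, this running max is at most the running max at $m(T_n+T+1)$. To compare it with $r_n = \max\{1,\norm{z^{max}_{m(T_n)}}\}$, I would use the growth bound over one slow period. The analogue of Lemma~\ref{lemma: bar z bounded}, together with Lemma~\ref{thm: x stability} to control $x$ by $\ymax$ during equicontinuity, gives $\max_{l \le m(T_n+T+1)}\norm{z_l} \le A r_n + B$ with $A,B$ constants. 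Hence $r^f_j \le (A+B) r_n$ for all such $i$, since $r_n\ge1$.

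Third, dividing by $r_n$ and using homogeneity again gives $\norm{\tilde x_n(t) - \lambda_\infty(\tilde y_n(t))} \le \epsilon (A+B)$ for $t\in[0,T]$ and all $n$ with $m(T_n)\ge m(T^f_{N'})$. Since slow indices $m(T_n)\to\infty$, such an $N$ exists. Replacing $\epsilon$ by $\epsilon/(A+B)$ gives the claim.

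The main obstacle is the first step: upgrading Lemma~\ref{lemma: close lambda}, which is stated on $[0,T_\epsilon]$ for each fast block, to a uniform statement valid for all times after some block. That lemma is proved under the standing assumption that the fast $r^f_j\to\infty$. If instead $\sup_j r^f_j<\infty$, the iterates are bounded outright. Then $\sup_n r_n<\infty$, contradicting the standing assumption of this section, so that case is vacuous. Within the unbounded case, I would re-run the inductive argument of Lemma~\ref{lemma: close lambda}. It shows $\norm{\tilde x^f_{j}(0)-\lambda_\infty(\tilde y^f_{j}(0))}<\delta$ for all large $j$ and that the trajectory stays in the $\epsilon$-tube across the whole fast block $[0,T^f_{j+1}-T^f_j]$. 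Because consecutive blocks tile the time axis, this gives the pointwise-in-index bound. The one delicate point is that the ratio $r^f_j/r^f_{j+1}\le1$ is used when passing between blocks, and this is exactly where monotonicity of $r^f$ is needed.
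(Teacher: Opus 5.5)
Your proposal is correct and follows essentially the same route as the paper. You apply Lemma~\ref{lemma: close lambda} with the fast period set to $T_\epsilon$, bound the fast scale $r^f_j$ (the paper's $r^{\alpha}_l$) by a constant multiple of the slow scale $r_n$ via the growth bound of Lemma~\ref{lemma: slow gronwall} (discrete Gronwall plus Lemma~\ref{thm: x stability}), and transfer the estimate using homogeneity of $\lambda_{\infty}$. Your treatment of the bounded-fast-scale case and of the part of each fast block beyond $T_\epsilon$ adds details the paper leaves implicit.
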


\begin{proof}
    For clarity, since we will work with both fast and slow timescale objects, we will use $\alpha$ and $\beta$ in superscripts to indicate the fast and slow timescale variants respectively. We also define some notation to help in converting between timescales:
    \begin{align}
        \tau^{\alpha}(t) \doteq t^{\alpha}(m^{\beta}(t)) \\
        \tau^{\beta}(t) \doteq t^{\beta}(m^{\alpha}(t))
    \end{align}
    By the difference in how the rescaling is defined between the timescales, we know that $\norm{\tilde{z}^{\beta}_n(0)} \leq \norm {\tilde{z}^{\alpha}_{m}(\tau^{\alpha}(T_n^{\beta})-T_m^{\alpha})}$, where $m$ is the largest $v$ such that $T_v^{\alpha} \leq \tau^{\alpha}(T_n^{\beta})$. Essentially, whenever the slow timescale is rescaled, the iterate has a smaller norm than the fast timescale's iterate at that index. Now we need to show that for the rest of that period in the slow timescale (i.e. when $T^{\beta}_n \leq t \leq T^{\beta}_{n+1}$) the slow timescale iterates, despite not being rescaled during this period, never get too much larger than the fast timescale iterates which may be rescaled many times.

    Thus, given $t \in [0, T^{\beta}]$, we are interested in the ratio $\frac{r^{\alpha}_l}{r^{\beta}_n}$, where $l$ is the largest $v$ where $T_v^{\alpha} \leq \tau^{\alpha}(T_n^{\beta}+t)$. By Lemma \ref{lemma: slow gronwall} we know that
    \begin{align}
        r^{\alpha}_l \leq \norm{z^{max}_{m^{\beta}(T^{\beta}_n+t)}} \leq C \norm{z^{max}_{m^{\beta}(T_n^{\beta})}} + D = C \cdot r^{\beta}_n + D 
    \end{align}
    implying that
    \begin{align}
        \frac{r^{\alpha}_l}{r^{\beta}_n} \leq C + \frac{D}{r^{\beta}_n}.
    \end{align}
    There exists $n_1$ such that for all $n> n_1$, $\frac{D}{r^{\beta}_n} \leq C$, so for all $n > n_1$,
    \begin{align}
        \frac{r^{\alpha}_l}{r^{\beta}_n} \leq 2C.
    \end{align}
    By Lemma \ref{lemma: close lambda}, there exist $n_2$ and $T_{\epsilon}$ such that for all $n> n_{\epsilon}$, we have 
    \begin{align}
        \norm{\tilde{x}^{\alpha}_n(t) - \lambda_{\infty}(\tilde{y}^{\alpha}_n(t))} \leq \frac{\epsilon}{2C}
    \end{align}
    for all $t \in [0, T_{\epsilon}]$. Now, set $N = \max \qty{n_1, n_2}$, where $n_2$ is the smallest $v$ such that $T_v^{\beta} \geq \tau^{\beta}(T_{n_{\epsilon}}^{\alpha})$ and $T^{\alpha}$ is set to $T_{\epsilon}$. We then have, for all $t \in [0, T^{\beta}]$, by homogeneity of $\lambda_{\infty}$,
    \begin{align}
        \norm{\tilde{x}^{\beta}_n(t) - \lambda_{\infty}(\tilde{y}^{\beta}_n(t))} \leq 2C \norm{\tilde{x}^{\alpha}_l(\tau^{\alpha}(T_n^{\beta}+t)) - \lambda_{\infty}(\tilde{y}^{\alpha}_l(\tau^{\alpha}(T_n^{\beta}+t)))} \leq \epsilon
    \end{align}
    for all $n > N$. Since we set $T^{\alpha}$ to $T_{\epsilon}$ without modifying $T^{\beta}$, the result holds for all $\epsilon$, regardless of what value we set $T^{\beta}$ to at the start.
\end{proof}

\begin{lemma}\label{lemma: slow dim disc lambda}
    The first error term \eqref{eq: slow disc error first term} converges to 0.
\end{lemma}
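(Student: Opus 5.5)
We bound the summand pointwise via the Lipschitz property of $G$ in its first argument and then average the resulting Lipschitz factors $L(W_{i+1})$ using Assumption~\ref{assumption: lln}. Since $G_c$ inherits the Lipschitz modulus $L(w)$ of $G$ (Assumption~\ref{assumption: H Lipschitz}), for each $i$ in the range $m(T_{n_k}) \le i \le m(T_{n_k}+t)-1$ we have
\begin{align}
&\norm{G_{r_{n_k}}(\tilde{z}_{n_k}(t(i)-T_{n_k}),W_{i+1}) - G_{r_{n_k}}(\lambda_{\infty}(\tilde{y}_{n_k}(t(i)-T_{n_k})),\tilde{y}_{n_k}(t(i)-T_{n_k}),W_{i+1})} \\
&\quad\leq L(W_{i+1}) \norm{\tilde{x}_{n_k}(t(i)-T_{n_k}) - \lambda_{\infty}(\tilde{y}_{n_k}(t(i)-T_{n_k}))}.
\end{align}
The $y$-components of the two arguments coincide, so only the $x$-discrepancy appears. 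The first error term \eqref{eq: slow disc error first term} is therefore bounded by
\[
\sup_{s\in[0,t]} \norm{\tilde{x}_{n_k}(s) - \lambda_{\infty}(\tilde{y}_{n_k}(s))} \cdot \sum_{i = m(T_{n_k})}^{m(T_{n_k} +t) -1} \beta(i) L(W_{i+1}).
\]

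The second factor is bounded uniformly in $k$. Write it as
\[
\sum_{i = m(T_{n_k})}^{m(T_{n_k} +t) -1} \beta(i) L + \sum_{i = m(T_{n_k})}^{m(T_{n_k} +t) -1} \beta(i)\qty(L(W_{i+1}) - L).
\]
The first sum is at most $L(t+1) \le L(T+2)$ for large $k$. The second sum vanishes as $k\to\infty$, since $w\mapsto L(w)$ is one of the functions covered by Assumption~\ref{assumption: lln}. Converting \eqref{eq stronger lln} into the statement that the asymptotic rate of change of $L(W_i)$ is zero on the slow timescale is exactly the summation-by-parts argument (using \eqref{eq 1/n beta}) deferred to the appendix and borrowed from \citet{liu2025ode}. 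Hence there is a sample-path constant $M$ with $\sum \beta(i)L(W_{i+1}) \le M$ for all large $k$.

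For the first factor, I apply Lemma~\ref{lemma: slow, x close lambda y} with $\epsilon/M$. This handles $t\in[0,T]$ directly. The leftover window $(T, T+1)$ is the expected sticking point, since that lemma is stated only on $[0,T]$. I would handle it by noting that the lemma's proof is insensitive to the length of the slow window: it only uses the ratio bound $r^\alpha_l/r^\beta_n \le 2C$ from Lemma~\ref{lemma: slow gronwall} together with Lemma~\ref{lemma: close lambda}, and both work on $[0,T+1)$. Thus the same argument, run with $T+1$ in place of $T$, gives the bound on all of $[0,T+1)$. Alternatively, one can only ever use $t \le T_{n_k+1}-T_{n_k} \to T$ when passing to the next period.

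Putting the two factors together, the term is at most $\epsilon$ for all large $k$. Since $\epsilon$ was arbitrary, the limit is $0$ uniformly in $t$.
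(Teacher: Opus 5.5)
Your proof is correct and is essentially the paper's argument: a Lipschitz bound that isolates $\norm{\tilde{x}_{n_k}-\lambda_{\infty}(\tilde{y}_{n_k})}$, Lemma~\ref{lemma: slow, x close lambda y} to make that factor small, and the uniform bound on $\sum\beta(i)L(W_{i+1})$ (the $\beta$-version of \eqref{eq: L property bounded}). Your extra treatment of the window $(T,T+1)$, which the paper passes over silently, is also sound. You can either rerun Lemma~\ref{lemma: slow gronwall} across two consecutive periods (larger constants, same conclusion), or observe that every grid point $t(i)$ with $i<m(T_{n_k+1})$ satisfies $t(i)-T_{n_k}\le T$, which is all the later argument actually uses.
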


\begin{proof}
\begin{align}
    \lim_{k \to \infty} &\norm{ \sum_{i = m(T_{n_k})}^{m(T_{n_k} +t) -1 } \beta(i)[G_{r_{n_k}}(\tilde{z}_{n_k}(t(i)-T_{n_k}),W_{i+1}) - G_{r_{n_k}}(\lambda_{\infty}(\tilde{y}_{n_k}(t(i)-T_{n_k})),\tilde{y}_{n_k}(t(i)-T_{n_k}),W_{i+1})]} \\
    \leq& \lim_{k \to \infty}  \sum_{i = m(T_{n_k})}^{m(T_{n_k} +t) -1 } \beta(i) L(W_i) \norm{\tilde{x}_{n_k}(t(i)-T_{n_k})-\lambda_{\infty}(\tilde{y}_{n_k}(t(i)-T_{n_k}))} \\
    \leq&  \lim_{k \to \infty} \epsilon(k) \sum_{i = m(T_{n_k})}^{m(T_{n_k} +t) -1 } \beta(i) L(W_i) \explain{Lemma \ref{lemma: slow, x close lambda y}} \\
    \leq& C \lim_{k \to \infty} \epsilon(k) \explain{\ref{eq: L property bounded}} \\
    =& 0.
\end{align}
\end{proof}

The rest of the proof, including the verification that the other term diminishes is the same as \citet{liu2025ode} and the fast timescale's analogous term.

Now, since we had chosen an arbitrary subsequence $\qty{f_{n_{k,0}}}$ and it has a subsequence $\qty{f_{n_k}}$ that converges to $0$, by Lemma \ref{lemma: subsubsequence convergence} we know that $\qty{f_n}$ also converges to $0$. Thus, the discretization error diminishes along the entire sequence. That is,
\begin{align}
    \lim_{k \to \infty} \norm{f_n(t)} = 0 \label{eq: slow diminishing disc error}
\end{align}
for all $t \in [0, T+1)$.

\subsection{Completing the Proof}\label{sec: finishing the thing}

Now we finish the proof by first establishing two results about ODEs and then finding the contradiction. 

\begin{definition}\label{def: slow eta stuff}
    We use the notation $\chi_{c}(t,y)$ to denote the solution to the ODE
\begin{align*}
    \dv{y(t)}{t} = g_c(\lambda_{\infty}(y(t)), y(t))
\end{align*}

with the initial condition $y$.
\end{definition}
 Note that under this notation, $y_n(t)$ $\eqref{definition: slow y n}$ can be written $\chi_{r_n}(t, \tilde{y}_n(0))$.

 The next result shows us that there is a period within which the trajectory of the slow variable will fall to 0.

\begin{lemma}\label{lemma: slow ode falling}
    Let $K \subset \R^{d_2}$ be compact. Given any $\epsilon > 0$, there exists a $T_{\epsilon}$ such that for all initial conditions $y \in K$, $\chi_{\infty}(t, y) \in B(0, \epsilon)$ for all $t > T_{\epsilon}$.   
\end{lemma}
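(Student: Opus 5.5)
This is the slow-timescale analogue of Lemma~\ref{lemma: ext inp 1}, and I would prove it the same way: uniform attraction of a globally asymptotically stable equilibrium over compact sets of initial conditions. By Assumption~\ref{assumption: lim h,g uniformly convergent}, the ODE $\dot y = g_\infty(\lambda_\infty(y), y)$ has $0$ as its unique globally asymptotically stable equilibrium. Its vector field $y \mapsto g_\infty(\lambda_\infty(y),y)$ is globally Lipschitz, since $g_\infty$ is Lipschitz with constant $L$ (take expectations in \eqref{eq: G L inf}) and $\lambda_\infty$ is Lipschitz. Hence the flow $\chi_\infty(t,y)$ exists, is unique, and is continuous in $y$ uniformly for $t$ in compact intervals. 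This continuity is Gronwall (Theorem~\ref{theorem: Gronwall}):
\begin{align}
\norm{\chi_\infty(t,y)-\chi_\infty(t,y')} \le e^{L'T}\norm{y-y'}, \qquad t \in [0,T],
\end{align}
where $L'$ is the Lipschitz constant of the composed field.

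\textbf{Step 1 (Lyapunov stability).} Fix $\epsilon>0$. By stability of $0$, choose $\delta \in (0,\epsilon)$ such that $\norm{y_0}<\delta$ implies $\chi_\infty(t,y_0)\in B(0,\epsilon)$ for all $t\ge 0$. By the flow property, once a trajectory enters $B(0,\delta)$ at some time $s$, it remains in $B(0,\epsilon)$ for all $t \ge s$.

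\textbf{Step 2 (pointwise entry times).} For each $y\in K$, global attractivity gives a time $t_y$ with $\norm{\chi_\infty(t_y,y)}<\delta/2$. By the Gronwall bound above, there is a radius $\rho_y>0$ such that $\norm{\chi_\infty(t_y,y')}<\delta$ for every $y'\in B(y,\rho_y)$. Combining with Step 1, every $y' \in B(y,\rho_y)$ satisfies $\chi_\infty(t,y')\in B(0,\epsilon)$ for all $t\ge t_y$.

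\textbf{Step 3 (compactness).} The balls $\{B(y,\rho_y)\}_{y\in K}$ cover $K$, so extract a finite subcover centered at $y_1,\dots,y_N$ and set $T_\epsilon=\max_j t_{y_j}$. Any $y\in K$ lies in some $B(y_j,\rho_{y_j})$, so $\chi_\infty(t,y)\in B(0,\epsilon)$ for all $t\ge t_{y_j}$, and in particular for all $t > T_\epsilon$.

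The only delicate point is Step 2. It relies on continuous dependence on initial conditions over a finite horizon, which in turn requires the Lipschitz property of the composed field; this holds here because $\lambda_\infty$ is Lipschitz. Everything else is the standard compactness argument already used for Lemma~\ref{lemma: ext inp 1}.
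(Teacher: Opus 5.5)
Your proposal is correct and follows the paper's proof essentially step for step. Both use Lyapunov stability to pick $\delta$, a pointwise entry time into $B(0,\delta/2)$ for each $y \in K$, a Gronwall estimate with the composed Lipschitz constant $L(L_{\lambda}+1)$ to push that entry into $B(0,\delta)$ for a whole neighborhood of $y$, and a finite subcover of $K$ with $T_\epsilon$ the largest of the finitely many entry times.
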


\begin{proof}
     By Lyapunov stability, there is a $\delta > 0$ such that any trajectory beginning within $B(0, \delta)$ stays within $\frac{\epsilon}{2}$ of the equilibrium $0$. 

    For an initial condition $y$, let $T_y$ be a time at which the trajectory is within $\frac{\delta}{2}$ of the equilibrium. Let $y_1$ be some other initial condition. By definition, Lipschitzness, and the Gronwall inequality, we have
    \begin{align*}
        \norm{\chi_{\infty}(t, y) - \chi_{\infty}(t, y_1) } &\leq \norm{y-y_1} + L(L_{\lambda}+1) \int_0^t \norm{\chi_{\infty}(s, y) - \chi_{\infty}(s, y_1)}ds \\
        &\leq \norm{y-y_1}e^{L (L_{\lambda}+1) T_y}
    \end{align*}
    for all $t \leq T_y$.

    So there is a neighborhood $V_y$ such that for all $y_1 \in V_y$, $\chi_{\infty}(T_y, y_1)$ is within $\delta$ of the equilibrium, which by Lyapunov stability implies that it will always be within $\epsilon$ of the equilibrium after $T_y$. 

    By compactness, we can cover the set $K$ by a finite number of such intervals and obtain a finite number of times $T_{y_1}, \ldots, T_{y_n}$ and then take the maximum time to be the value of $T_{\epsilon}$.
\end{proof}

The next lemma shows that as $c$ approaches infinity, the two trajectories remain close within a certain time interval.

\begin{lemma}\label{lemma: slow ode conv}
    Let $[0, T]$ be a given time interval. Then,
    \begin{align*}
        \norm{\chi_{c}(t,y) - \chi_{\infty}(t,y)} \leq& \epsilon(c) \cdot Te^{L(L_{\lambda}+1)T}.
    \end{align*}
    for any initial $y \in \R^{d_2}$ and for all $t \in [0, T]$.
\end{lemma}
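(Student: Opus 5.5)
We compare the two integral equations satisfied by $\chi_c(t,y)$ and $\chi_\infty(t,y)$ and close the estimate with the Gronwall inequality (Theorem~\ref{theorem: Gronwall}). Here $\epsilon(c)$ denotes, as in Lemma~\ref{lemma: ext inp 2}, a bound on $\norm{g_c - g_\infty}$ along the relevant trajectories, with $\epsilon(c)\to 0$ as $c \to \infty$. Fix $y$ and write $u(t) \doteq \norm{\chi_c(t,y) - \chi_\infty(t,y)}$. This $u$ is continuous and nonnegative with $u(0)=0$. From Definition~\ref{def: slow eta stuff},
\begin{align}
\chi_c(t,y) - \chi_\infty(t,y) = \int_0^t \Big[ g_c(\lambda_\infty(\chi_c(s,y)),\chi_c(s,y)) - g_\infty(\lambda_\infty(\chi_\infty(s,y)),\chi_\infty(s,y)) \Big] ds .
\end{align}
I would add and subtract $g_\infty(\lambda_\infty(\chi_c(s,y)),\chi_c(s,y))$ inside the integral. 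This splits the integrand into a \emph{scaling error} $g_c - g_\infty$, evaluated at the same point, and a \emph{trajectory error} $g_\infty(\cdot) - g_\infty(\cdot)$, evaluated at two different points.

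For the trajectory error, I take expectations of \eqref{eq: G L inf} under $d_\fW$, which gives that $g_\infty$ is $L$-Lipschitz jointly in $(x,y)$. Combined with $\lambda_\infty$ being $L_\lambda$-Lipschitz (Assumption~\ref{assumption: lim h,g uniformly convergent}), this yields
\begin{align}
\norm{g_\infty(\lambda_\infty(a),a) - g_\infty(\lambda_\infty(b),b)} \leq L(L_\lambda + 1)\norm{a-b}.
\end{align}
This is the same constant that appears in the proof of Lemma~\ref{lemma: slow ode falling}.

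For the scaling error, I would use Assumption~\ref{assumption: H c H infty}: $g_c - g_\infty = \kappa_G(c)\,\E_{w \sim d_\fW}[b_G(\cdot,\cdot,w)]$. Since $b_G$ is Lipschitz with constant $L_b(w)$ and $L_b = \E_{w \sim d_\fW}[L_b(w)]$ is finite, this is bounded by $|\kappa_G(c)|\,(\norm{\E[b_G(0,0,w)]} + L_b\norm{(x,y)})$. Alternatively, I could invoke the uniform convergence of $g_c \to g_\infty$ on compacts. Either way, I define $\epsilon(c)$ as the supremum of this error over the relevant region, so that $\epsilon(c) \to 0$ as $c \to \infty$.

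Combining the two bounds gives, for $t \in [0,T]$,
\begin{align}
u(t) \leq \epsilon(c)\, T + L(L_\lambda+1)\int_0^t u(s)\,ds .
\end{align}
Gronwall with $C = \epsilon(c)T$ and $K = L(L_\lambda+1)$ then gives $u(t) \leq \epsilon(c)\,T e^{L(L_\lambda+1)t} \leq \epsilon(c)\,T e^{L(L_\lambda+1)T}$, which is the claim.

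The main obstacle is making $\epsilon(c)$ legitimately uniform. The scaling error grows linearly in $\norm{\chi_c(s,y)}$, so I must first bound the trajectory on $[0,T]$. I would do this with a preliminary Gronwall step: $\norm{\chi_c(t,y)} \leq (\norm{y} + T\norm{g_c(0,0)})e^{L(L_\lambda+1)T}$. Here $\norm{g_c(0,0)} = \norm{g(0,0)}/c \leq \norm{g(0,0)}$ is uniform in $c \geq 1$, using $\lambda_\infty(0)=0$. This confines $\chi_c$ to a compact set depending only on $\norm{y}$ and $T$, on which $\epsilon(c) \to 0$. The statement's phrase ``for any initial $y$'' should therefore be read as holding for $y$ ranging over a bounded set, with $\epsilon(c)$ depending on that bound. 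This is exactly how the lemma is used later, since $\norm{\tilde y_n(0)} \leq 1$.
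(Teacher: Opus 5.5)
Your proof is correct and follows the paper's argument: compare the two integral equations, split the integrand into a Lipschitz term with constant $L(L_\lambda+1)$ and a scaling term bounded by $\epsilon(c)$, then apply Gronwall. The one difference is where the scaling error is evaluated. The paper adds and subtracts $g_c(\lambda_\infty(\chi_\infty(s,y)),\chi_\infty(s,y))$, so its scaling error lives on the $c$-independent trajectory $\chi_\infty$ and needs no a priori bound on $\chi_c$. Your choice does need that extra Gronwall bound, which you supply correctly, and your remark that $\epsilon(c)$ must depend on a bound for $\norm{y}$ makes precise what the paper leaves implicit.
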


\begin{proof}
    We have
    \begin{align*}
    \chi_{c}(t,y) &= y + \int_0^t g_c(\lambda_{\infty}(\chi_c(s, y)), \chi_c(s, y)) ds, \\
    \chi_{\infty}(t,y) &= y + \int_0^t g_{\infty}(\lambda_{\infty}(\chi_{\infty}(s, y)), \chi_{\infty}(s, y)) ds.
    \end{align*}
    Let us define the error term:
    \begin{align}
        e(t) = \norm{\chi_{c}(t,y) - \chi_{\infty}(t,y)}.
    \end{align}
    
    We can bound $e(t)$ by two terms:
    \begin{align*} 
        e(t) \leq& \int_0^t \norm{g_c(\lambda_{\infty}(\chi_c(s, y)), \chi_c(s, y)) - g_c(\lambda_{\infty}(\chi_{\infty}(s, y)), \chi_{\infty}(s, y))} ds \\
        &+ \int_0^t \norm{g_c(\lambda_{\infty}(\chi_{\infty}(s, y)), \chi_{\infty}(s, y)) - g_{\infty}(\lambda_{\infty}(\chi_{\infty}(s, y)), \chi_{\infty}(s, y))} ds 
    \end{align*}
To bound the first term, we use Lipschitzness:
\begin{align}
   &\int_0^t \norm{g_c(\lambda_{\infty}(\chi_c(s, y)), \chi_c(s, y)) - g_c(\lambda_{\infty}(\chi_{\infty}(s, y)), \chi_{\infty}(s, y))} ds \\ 
   &\leq L(L_{\lambda}+1) \int_0^t \norm{\chi_{c}(t,y) - \chi_{\infty}(t,y)} ds \\
    &= L \int_0^t e(s) ds.
\end{align}

To bound the second term, we use Lipschitzness and Assumption \ref{assumption: H c H infty}:

\begin{align}
    \int_0^t &\norm{g_c(\lambda_{\infty}(\chi_{\infty}(s, y)), \chi_{\infty}(s, y)) - g_{\infty}(\lambda_{\infty}(\chi_{\infty}(s, y)), \chi_{\infty}(s, y))} ds \\ 
    \leq& \int_0^t \epsilon(c) ds\\
    \leq& T\epsilon(c).
    \end{align}

To conclude, we will use the Gronwall Inequality (Appendix \ref{theorem: Gronwall}):
\begin{align}
    e(t) \leq& T \epsilon(c) + L(L_{\lambda}+1) \int_0^t e(s)ds \\
    \leq& \epsilon(c) \cdot Te^{L(L_{\lambda}+1)T}. 
\end{align}
\end{proof}

Since in this section, $r_n$ is defined as the largest iterate norm seen so far, if it is bounded, then the iterate norms are bounded, verifying Theorem \ref{thm: z stability}.

\begin{lemma}
    The sequence $r_n$ is bounded, creating a contradiction.
\end{lemma}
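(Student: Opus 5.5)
We argue by contradiction, continuing under the standing hypothesis $\sup_n r_n = \infty$ of Section~\ref{sec: slow convergent sub seq}. Since $r_n$ is defined through the running maximum $z^{max}$, it is nondecreasing, so $r_n \to \infty$. The plan mirrors the single-timescale ODE$@\infty$ argument of \citet{borkar2000ode,liu2025ode}: show that over each window $[T_n, T_{n+1})$ the rescaled slow iterate contracts toward $0$, and the fast iterate stays slaved to $\lambda_\infty$ of it. The running maximum therefore cannot keep increasing, which forces $r_n$ to be eventually constant.

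\textbf{Step 1 (choice of $T$).} Apply Lemma~\ref{lemma: slow ode falling} with $K = \bar B(0,1)\subset\R^{d_2}$ and a small $\epsilon$, say $\epsilon = \frac{1}{8(1+L_\lambda)}$. This gives $T_\epsilon$, and we fix the window length $T \doteq T_\epsilon$. Then $\norm{\chi_\infty(t,y)} < \epsilon$ for all $t \in [T, T+1)$ and all $\norm{y}\le 1$. Since $\norm{\tilde y_n(0)} \le \norm{\bar z(T_n)}/r_n \le 1$, this applies to $y = \tilde y_n(0)$.

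\textbf{Step 2 (transfer to iterates).} By Lemma~\ref{lemma: slow ode conv} and $r_n\to\infty$, $\epsilon(r_n)\to 0$, so $\norm{y_n(t) - \chi_\infty(t,\tilde y_n(0))}\le \epsilon$ for large $n$. By the whole-sequence diminishing discretization error~\eqref{eq: slow diminishing disc error}, $\norm{\tilde y_n(t) - y_n(t)} \le \epsilon$ for large $n$. Hence $\norm{\tilde y_n(t)} \le 3\epsilon$ on $[T, T+1)$, and in particular at $t = T_{n+1}-T_n$.

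For the fast component, Lemma~\ref{lemma: slow, x close lambda y} (taking its window to cover $[0,T+1)$) gives $\norm{\tilde x_n(t)} \le L_\lambda \norm{\tilde y_n(t)} + \epsilon$. It follows that $\norm{\tilde z_n(t)} \le 3\epsilon(1+L_\lambda)+\epsilon < 1/2$ for $t \in [T, T+1)$.

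\textbf{Step 3 (the contradiction).} This is the main obstacle: since $r_n$ is a running maximum over all indices, we must also rule out a new maximum being created in the middle of a window $[T_n, T_{n+1})$, not just at its endpoint. I would handle it as follows.
\begin{itemize}
\item On $[0,T+1)$, $\tilde y_n$ is uniformly bounded by equicontinuity (Lemma~\ref{lemma: hat y equicontinuous}) together with the convergence $y_n \to$ a bounded ODE trajectory.
\item $\tilde x_n$ is bounded via Lemma~\ref{thm: x stability} / Lemma~\ref{lemma: slow, x close lambda y}.
\item More simply, Lemma~\ref{lemma: slow gronwall} gives $\norm{z^{max}_{m(T_{n}+t)}} \le C r_n + D$. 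This only bounds growth by a constant factor, not by $1$.
\end{itemize}
So instead I would argue on a coarser scale. Let $n$ be large, and look at the window after a jump: if $r_{n+1} > r_n$, then the new maximum occurred inside $[T_n, T_{n+1})$. Because that window ends with $\norm{\bar z(T_{n+1})} \le r_n/2$, the following windows start well inside the ball of radius $r_{n+1}$ with the same scale. Iterating Step 2 from a start of norm at most $1/2$ (rescaled), the trajectory returns below $1/2$ by the end of each window. Its excursions within a window are controlled by the ODE trajectories $\chi_\infty(t,y)$, $\norm{y}\le 1/2$, which by Lyapunov stability of $0$ (shrinking $\epsilon$ in Step 1 if needed) stay in $B(0,1)$, plus vanishing errors. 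Hence no further index exceeds $r_{n+1}$, so $r$ is eventually constant.

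This contradicts $\sup_n r_n=\infty$. Thus $\sup_n r_n<\infty$, and by~\eqref{eq: slow z m T n leq r n} and the definition of $r_n$ as the running maximum, $\sup_n\norm{z_n}<\infty$, proving Theorem~\ref{thm: z stability}.
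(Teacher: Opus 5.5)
Your architecture matches the paper's proof. You get an end-of-window contraction from Lemma~\ref{lemma: slow ode falling}, Lemma~\ref{lemma: slow ode conv}, the whole-sequence discretization error \eqref{eq: slow diminishing disc error} and Lemma~\ref{lemma: slow, x close lambda y}. You then control the entire next window by Lyapunov stability of $0$, started from the previous window's endpoint, so that no index exceeds the current scale and $r_n$ is eventually constant.

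The gap is in the within-window step, which you rightly call the crux. Lyapunov stability only says that for a target radius $\epsilon'$ there is some $\delta'$, possibly far smaller than $\epsilon'$, such that trajectories started in $B(0,\delta')$ stay in $B(0,\epsilon')$. It does not say that trajectories of $\dot y = g_\infty(\lambda_\infty(y),y)$ started with $\norm{y}\le \frac12$ stay in $B(0,1)$. This fails, e.g., for the globally asymptotically stable field $\dot y = \begin{pmatrix}-1 & 10\\ 0 & -1\end{pmatrix} y$: the trajectory from $(0,\frac12)$ is $e^{-t}(5t,\frac12)$, whose norm is about $1.85$ at $t=1$. The target $B(0,1)$ is also the wrong one. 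Since $\norm{\tilde z_n(t)} \le (1+L_\lambda)\norm{\tilde y_n(t)} + o(1)$, you need the $y$-trajectory inside a ball of radius strictly below $\frac{1}{1+L_\lambda}$, with room for the vanishing errors, before you can conclude $\norm{\tilde z_n(t)}\le 1$.

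The repair is the paper's ordering of constants.
\begin{itemize}
\item First fix $\delta$ by Lyapunov stability so that trajectories starting in $B(0,\delta)$ remain in $B\big(0,\frac{1}{4(1+L_\lambda)}\big)$ for all time.
\item Only then choose the window length $T$, i.e.\ your Step 1 with $\epsilon<\delta/3$. Your Step 2 then gives $\norm{\tilde y_n(T_{n+1}-T_n)}\le 3\epsilon<\delta$ for all large $n$.
\item Because $r_{n+1}\ge r_n$, we have $\tilde y_{n+1}(0)=\frac{r_n}{r_{n+1}}\tilde y_n(T_{n+1}-T_n)\in B(0,\delta)$. Hence $\chi_\infty(t,\tilde y_{n+1}(0))$ stays in $B\big(0,\frac{1}{4(1+L_\lambda)}\big)$ for all $t$.
\item Combining this with Lemma~\ref{lemma: slow ode conv}, the discretization error and Lemma~\ref{lemma: slow, x close lambda y} gives $\norm{\tilde z_{n+1}(t)}<\frac14+o(1)<\frac12$ on all of $[0,T+1)$. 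So every index in that window has norm below $r_{n+1}$, and $r_{n+2}=r_{n+1}$ for every large $n$.
\end{itemize}
Your remark about shrinking $\epsilon$ points in this direction. As written, however, what you feed into Lyapunov stability is your $\frac12$ bound on $\tilde z$, not the $3\epsilon$ bound on $\tilde y$ placed below the Lyapunov radius. Once this is fixed, the detour through \emph{the window after a jump} is unnecessary, since every late window starts inside $B(0,\delta)$.
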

\begin{proof}
     By Lyapunov stability there is some $\delta$ with $\frac{1}{4(L_{\lambda}+1)} > \delta > 0$ such that if $\chi_{\infty}(t, y) \in B(0, \delta)$ at some time $t$, then for all times $t' > t$, we have
     \begin{align}
         \chi_{\infty}(t', y) \in B \left(0, \frac{1}{4(L_{\lambda}+1)} \right). \label{eq: slow chi lyapunov}
     \end{align}
     By Lemma \ref{lemma: slow ode falling} we know that there is a $T_{\delta}$ such that when we set $T=T_{\delta}$, for all $t > T$,
     \begin{align}
         \chi_{\infty}(t, y) \in B(0, \frac{\delta}{3}) \label{eq: slow chi fall}
     \end{align}
     for all $y \in [-2, 2]^{d_2}$.
     
    By Lemma \ref{lemma: slow ode conv} we know that there is an $n_1$ such that for all $n > n_1$,
     \begin{align}
         \norm{\chi_{r_n}(t,y) - \chi_{\infty}(t,y)} < \frac{\delta}{3} \label{eq: slow chi close}
     \end{align}
     for all $y \in [-2, 2]^{d_2}$ and $t \in [0, T+1]$.

     By \eqref{eq: slow diminishing disc error}, we know that there is an $n_2$ such that for all $n > n_2$, we have
     \begin{align}
         \norm{\tilde{z}_n(t)-(\lambda_{\infty}(\chi_{r_n}(t, \tilde{y}_n(0))), \chi_{r_n}(t, \tilde{y}_n(0)))} < \frac{\delta}{3} \label{eq: finishing slow disc}
     \end{align}
     for all $t \in [0, T+1]$. 

     From \eqref{eq: slow chi fall}, \eqref{eq: slow chi close}, and \eqref{eq: finishing slow disc},  and the fact that when $n$ is large enough, $T_{n+1}-T_n < T + 1$, we have
     \begin{align}
         \tilde{y}_n(0) \in B(0, \delta)
     \end{align}
     for all $n \geq \max\qty{n_1, n_2}+1$. Thus, from \eqref{eq: slow chi lyapunov}, we have
     \begin{align}
         \chi_{\infty}(t, \tilde{y}_n(0)) \in B \left(0, \frac{1}{4(L_{\lambda}+1)} \right)
     \end{align}
     for all $t \in [0, T + 1]$, which implies that
     \begin{align}
         (\lambda_{\infty}(\chi_{\infty}(t, \tilde{y}_n(0))), \chi_{\infty}(t, \tilde{y}_n(0))) \in B \left(0, \frac{1}{4}\right).
     \end{align}
     Combining this with \eqref{eq: finishing slow disc} gives us
     \begin{align}
         \tilde{z}_n(t) \in B(0, \frac{1}{2}),
     \end{align}
     for all $n \geq \max\qty{n_1,n_2}+1$ and $t \in [0,T+1]$, which, by definition, prevents the sequence $r_n$ from increasing, contradicting the assumption that $r_n$ increases to infinity.    
\end{proof}

\newpage
\section{Convergence} \label{appendix: convergence proof full}

\subsection{Fast Timescale Convergence}\label{sec fast convergence}

In this section, we return to the fast timescale definitions.

\begin{corollary}
\label{cor: convergence fast}
Let Assumptions \ref{assumption: stationary distribution} - \ref{assumption: lln} hold. Then the iterates $\qty{z_n}$ generated by~\eqref{eq: x n updates} converge almost surely to a (sample path dependent) bounded invariant set
\footnote{
        A set $X$ is an invariant set of the ODE system~\eqref{cor eq original ode system} if and only if for every $z \in Z$, 
        there exists a solution $z(t)$ to the ODE~\eqref{cor eq original ode system} such that $z(0) = z$ and $z(t) \in Z$ for all $t \in (-\infty, \infty)$. 
        If the ODE~\eqref{cor eq original ode system} is globally asymptotically stable,
        the only bounded invariant set is $\qty{(\lambda(y), y): y \in \R^{d_2}}$.}
        of the ODE system
        \footnote{By $\qty{z_n}$ converges to a set $Z$, we mean $\lim_{n \to \infty} \inf_{z\in Z} \norm{z_n - z} = 0$.
        }
    \begin{align}
        \label{cor eq original ode system}
    \dv{x(t)}{t} &= h(x(t), y(t)) \\
    \dv{y(t)}{t} &= 0.
    \end{align}
\end{corollary}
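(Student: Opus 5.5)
We prove Corollary~\ref{cor: convergence fast} with the standard ODE method on the fast timescale $t(n)=\sum_{i<n}\alpha(i)$. No rescaling is needed: Theorem~\ref{thm: z stability} gives, on the fixed sample path, a finite $R$ with $\sup_n\norm{z_n}\le R$.

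\textbf{Step 1: setup.} We work with the shifted interpolations $\bar z^n(t)\doteq\bar z(t(n)+t)$ for $t\in(-\infty,\infty)$, using the conventions \eqref{eq: alpha 0} and \eqref{eq: m 0} for negative times. We define $z^n(t)$ as the solution of \eqref{cor eq original ode system} with $z^n(0)=\bar z^n(0)$. As in Appendix~\ref{appendix: equicontinuity}, we show that $\{\bar z^n\}$, $\{z^n\}$ and the errors $f^n=\bar z^n-z^n$ are equicontinuous in the extended sense on every $[-\tau,\tau]$. Boundedness makes this simpler than before, because $\norm{H(z,w)}\le\norm{H(0,w)}+L(w)R$, and the increments $\sum\alpha(i)L(W_{i+1})$ over windows of length $\tau$ are bounded by Assumption~\ref{assumption: lln}. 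Arzel\`a--Ascoli (Theorem~\ref{appendix: AA theorem}, together with the reverse-time version via Gronwall in reverse time, Theorem~\ref{theorem: Gronwall reverse}) then yields, for any subsequence, a further subsequence along which $\bar z^{n_k}\to\bar z^{\lim}$ uniformly on compacts.

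\textbf{Step 2: the discretization error vanishes.} We split $f^{n_k}(t)$ as in \eqref{eq: lim f k}.
\begin{itemize}
\item The slow component contributes $\sum\beta(i)G(\cdot)$. This vanishes exactly as in Lemma~\ref{lemma: third term disc error}, since $\beta(i)/\alpha(i)\to0$ and $\sum\alpha(i)(\norm{G(0,W_{i+1})}+L(W_{i+1})R)$ is bounded over the window.
\item The fast component is $\sum\alpha(i)H(\bar z^{n_k}(t(i)-t(n_k)),W_{i+1})-\int_0^t h(\bar z^{\lim}(s))ds$. We handle it with the averaging argument of Appendix~\ref{appendix: H h 0}: freeze $\bar z^{\lim}$ on a fine partition, use the Lipschitz bound with $L(w)$, and apply the asymptotic rate of change zero from Assumption~\ref{assumption: lln} to $w\mapsto H(z,w)$ at the finitely many partition points.
\item The ODE term is handled by Lipschitz continuity of $h$ and Gronwall (Theorem~\ref{theorem: Gronwall}).
\end{itemize}
Together these give $f^{n_k}\to0$ uniformly on compacts. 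Hence every limit $\bar z^{\lim}$ is a bounded, entire solution of \eqref{cor eq original ode system} defined on $(-\infty,\infty)$.

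\textbf{Step 3: convergence to an invariant set.} Let $Z$ be the set of limit points of $\{z_n\}$; it is nonempty and compact by boundedness. For $z\in Z$, choose $z_{n_k}\to z$ and apply Step 2 along a further subsequence. The limit trajectory passes through $z$ at $t=0$, and each $\bar z^{\lim}(t)$ is itself a limit point of the iterates, since it equals $\lim_k z_{m(t(n_k)+t)}$. So the trajectory lies entirely in $Z$, which shows $Z$ is invariant. If $\inf_{z\in Z}\norm{z_n-z}\not\to0$, some subsequence stays $\epsilon$ away from $Z$. Bolzano--Weierstrass gives it a further convergent subsequence, whose limit lies in $Z$, a contradiction. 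With $y$ constant along solutions and $\lambda(y)$ globally asymptotically stable (Assumption~\ref{assumption: lim h,g uniformly convergent}), any bounded entire solution must satisfy $x\equiv\lambda(y)$. Hence $Z\subseteq\{(\lambda(y),y)\}$.

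\textbf{Main obstacle.} The delicate step is Step 2, specifically the averaging term with noncompact Markovian noise. The noise enters only through integrability, so the bound rests on the rate-of-change bounds for $L(w)$, $H(z,w)$ and $L_b(w)$, and these need to be uniform over the two-sided window $[-\tau,\tau]$. This step should mirror \citet{liu2025ode} closely.
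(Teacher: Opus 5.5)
Your proposal is correct and follows essentially the same route as the paper: stability from Theorem~\ref{thm: z stability}, shifted interpolations compared with ODE solutions on $[-\tau,\tau]$, and extended-sense equicontinuity with Arzel\`a--Ascoli. It uses the same three-term split of the discretization error, with the $\beta$-term killed as in Lemma~\ref{lemma: third term disc error} and the averaging term as in Lemma~\ref{lemma: double limit 2}, followed by invariance of the limit set and Bolzano--Weierstrass. One small repair is needed: Assumption~\ref{assumption: lln} does not cover $w\mapsto\norm{H(0,w)}$ or $w\mapsto\norm{G(0,w)}$, so in Steps 1 and 2 keep the norm outside those sums, e.g.\ $\norm{\sum\beta(i)G(z_i,W_{i+1})}\le R\sum\beta(i)L(W_{i+1})+\norm{\sum\beta(i)G(0,W_{i+1})}$, and control the last sum via the zero asymptotic rate of change of $G(0,\cdot)-g(0)$ (Lemma~\ref{lemma: H h 0}, with $\beta$-weights) plus $\sum\beta(i)\to0$ over the window, and similarly for $H(0,\cdot)$.
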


\begin{proof}
To prove convergence results on $t \in(-\infty, \infty)$ in Corollary \ref{cor: convergence fast}, 
we fix an arbitrary sample path $\qty{z_0,\qty{W_i}_{i=1}^\infty}$. The stability results from Theorem~\ref{thm: z stability} hold. To prove  properties on $t \in(-\infty, \infty)$, we first fix an arbitrary $\tau > 0$ and show properties on $\forall t \in [-\tau, \tau]$.

\begin{definition}

$\forall n \in \N $, define $\bar{z}_n(t)$ as the solution to the ODE~\eqref{cor eq original ode system} in $(-\infty, \infty)$ with an initial condition 
\begin{align}
\bar{z}_n(0) = \bar{z}(t(n)). \label{cor-def: z n init}
\end{align}
\end{definition}
$\bar{z}_n(t)$ can also be written as 
\begin{align}
\bar{z}_n(t)   &=  \bar{z}(t(n)) + \int_0^t (h(\bar{z}_n(s)),0)ds, \quad \forall t \in (-\infty, \infty). \label{cor-def: z n integral}
\end{align}

We need to show that the error of Euler's discretization diminishes asymptotically. With \eqref{eq: m 0} and \eqref{def: bar z}, $\forall \tau > 0$, $\forall t \in [-\tau, \tau]$,
\begin{align}\label{eq: bar z extend}
&\bar{z}(t(n) + t) = z_{m(t(n) + t)} \\
&= 
\begin{cases}
\bar{z}(t(n)) + \sum_{i = n}^{m(t(n)+t) - 1} (\alpha(i) H(\bar{z}(t(i)), W_{i+1}),\beta(i)G(\bar{z}(t(i)),W_{i+1})) & \text{if } t \geq 0 \\
\bar{z}(t(n)) - \sum_{i = m(t(n)+t)}^{n - 1} (\alpha(i) H(\bar{z}(t(i)), W_{i+1}),\beta(i)G(\bar{z}(t(i)),W_{i+1})) & \text{if } t < 0. 
\end{cases}
\end{align}
Notably, the property \eqref{eq: m 0} that $\forall t<0, m(t) = 0$ in \eqref{eq: bar z extend} ensures $\bar{z}(t(n) + t)$ is well-defined when $t(n) + t < 0$.
Precisely speaking, $\forall \tau > 0$, $\forall t \in [-\tau, \tau]$,
the discretization error is defined as 
\begin{align}
\bar{f}_n(t) \doteq \bar{z}(t(n) + t) - \bar{z}_n(t). \label{cor-def: f}
\end{align}
and we need $\bar{f}_n(t)$ to diminish to 0 as $n\to\infty$.
To this end, we study the following three sequences of functions
\begin{align}
    \label{cor-eq three seq}
    \qty{ \bar z(t(n) + t)}_{n=0}^\infty, \qty{\bar{z}_n(t)}_{n=0}^\infty, \qty{\bar{f}_n(t)}_{n=0}^\infty.
\end{align}
Equicontinuity in the extended sense on the domain $(-\infty, \infty)$ is defined as following (Section 4.2.1 in \citet{kushner2003stochastic}).
\begin{definition}

\label{cor-def: equicontinuity broad}
A sequence of functions $\qty{{\gamma}_n: (-\infty, \infty) \to \R^K }$ is equicontinuous in the extended sense on $(-\infty, \infty)$ if $\sup_n \norm{{\gamma}_n(0)}  < \infty$  and $\forall \tau > 0$, $\forall \epsilon > 0$,  $\exists \delta > 0$ such that
\begin{align}\label{eq: equicontinuity inf inf epsilon}
\limsup_n \sup_{0\leq \abs{t_1-t_2} \leq \delta, \abs{t_1} \leq \tau,\abs{t_2} \leq \tau  } \norm{{\gamma}_n(t_1) - {\gamma}_n(t_2)} \leq \epsilon.    
\end{align}
\end{definition}
We  show  $\qty{\bar{z}(t(n)+t)}$, $\qty{\bar{z}_n(t)}$ and $\qty{\bar{f}_n(t)}$  are all equicontinuous in the extended sense.

\begin{lemma}\label{cor-lemma: three functions equicontinuous}
The three sequences of functions $\qty{\bar{z}(t(n)+t)}_{n = 0}^{\infty}$,
$\qty{\bar{z}_n(t)}_{n = 0}^{\infty}$, and 
$\qty{\bar{f}_n(t)}_{n = 0}^{\infty}$ are all equicontinuous in the extended sense on $t \in (-\infty, \infty)$.
\end{lemma}

To prove those lemmas, we need the Gronwall inequality in the reverse time in Appendix \ref{theorem: Gronwall reverse}.
Compared to lemmas in the main text which have the domain $t \in [0,T+1)$, lemmas in this section have similar proofs because we first fix an arbitrary $\tau$ and prove properties on the domain $t \in [-\tau, \tau]$. We omit proofs for Lemma \ref{cor-lemma: three functions equicontinuous} because they are extremely similar to the proofs of equicontinuity in the fast timescale. Similar to Lemma~\ref{lemma: three convergence}, we now construct a particular subsequence of interest.
\begin{lemma}\label{cor-lemma: three convergence}
There exists a subsequence $\qty{n_k}_{k=0}^\infty \subseteq \qty{0, 1, 2, \dots}$ and some continuous functions $\bar{f}^{\lim}(t)$ and $\bar{z}^{\lim}(t)$ such that  $\forall \tau$, $\forall t \in [-\tau, \tau]$,
\begin{align}
\lim_{k \to \infty} \bar{f}_{n_k}(t) =& \bar{f}^{\lim}(t)   \label{cor-eq: f z r lim} , \\ 
\lim_{k \to \infty} \bar{z}(t(n_k)+t) =& \bar{z}^{\lim}(t),  \label{cor-eq: bar z z r lim}
\end{align}
where both convergences are uniform in $t$ on $[-\tau, \tau]$.
Furthermore,
let $z^{\lim}(t)$ denote the unique solution to the ODE~\eqref{cor eq original ode system} with the initial condition
\begin{align}
z^{\lim}(0) = \bar{z}^{\lim}(0),
\end{align}
in other words,
\begin{align}\label{cor-eq: z lim t}
z^{\lim}(t) &= \bar{z}^{\lim}(0) + \int_0^t (h(\bar{z}^{\lim}(s)), 0)ds.
\end{align}
Then $\forall \tau$, $\forall t \in [-\tau, \tau]$, we have
\begin{align}
\lim_{k \to \infty} \bar{z}_{n_k}(t)   =  z^{\lim}(t), \label{cor-eq: z lim converge}
\end{align}
where the convergence is uniform in $t$ on $[-\tau, \tau]$.
\end{lemma}
Its proof is very similar to the proof of Lemma \ref{lemma: three convergence} and is omitted.
We use the subsequence $\qty{n_k}$ intensively in the remaining proofs. Recall that $\bar{f}_n(t)$ denotes the discretization error between $\bar z(t(n) + t)$ and $\bar{z}_n(t)$.
We now proceed to prove that this discretization error diminishes along $\qty{n_k}$.
In particular, we aim to prove that $\forall \tau$, $\forall t \in [-\tau, \tau]$, 

\begin{align}
 \lim_{k \rightarrow \infty} \norm{\bar{f}_{n_k}(t)} = \norm{\bar{f}^{\lim}(t)} = 0.
\end{align} 
This means $\bar{z}(t(n_k) + t)$ is close to $\bar{z}_{n_k}(t)$ as $k \rightarrow \infty$. 
For $t \in (0,\tau]$, the proof for this part is the same as the proof we have done in Section \ref{sec: sequence property}.
Thus, we only discuss the proof for $t \in [-\tau, 0]$. $\forall \tau$, $\forall t \in [-\tau, 0]$, 
\begin{align}
&\lim_{k \rightarrow \infty} \norm{ \bar{f}_{n_k}(t)} \\
=& \lim_{k \rightarrow \infty} \norm{ \bar{z}(t(n_k)) -\sum_{i = m(t(n_k)+t)}^{n_k - 1} (\alpha(i)H(\bar{z}(t(i)),W_{i+1}), \beta(i)G(\bar{z}(t(i)),W_{i+1})) - \bar{z}_{n_k}(t)} \explain{by \eqref{eq: bar z extend} and \eqref{cor-def: f}}\\
=&  \lim_{k \rightarrow \infty} \norm{  -\sum_{i = m(t(n_k)+t)}^{n_k - 1} (\alpha(i)H(\bar{z}(t(i)),W_{i+1}), \beta(i)G(\bar{z}(t(i)),W_{i+1})) - \int_0^t (h(\bar{z}_{n_k}(s)),0)ds} \explain{by \eqref{cor-def: z n integral}}\\
\leq&\lim_{k \to \infty} \norm{ -\sum_{i = m(t(n_k)+t)}^{n_k - 1} \alpha(i)H(\bar{z}(t(i)),W_{i+1}) - \int_0^t h(\bar{z}^{\lim}(s))ds } \\
&+ \lim_{k \to \infty} \norm{ -\sum_{i = m(t(n_k)+t)}^{n_k - 1} \beta(i)G(\bar{z}(t(i)),W_{i+1})  } \\
&+ \lim_{k \to \infty} \norm{\int_0^t h(\bar{z}^{\lim}(s))ds  - \int_0^t h(\bar{z}_{n_k}(s))ds } . \label{cor-eq: lim f k} \\
\end{align}

The second term in the RHS of \eqref{cor-eq: lim f k} is 0.

\begin{lemma}
    $\forall \tau$, $\forall t \in [-\tau, 0]$,
    \begin{align}
    \lim_{k \to \infty} \norm{ -\sum_{i = m(t(n_k)+t)}^{n_k - 1} \beta(i)G(\bar{z}(t(i)),W_{i+1})  } =0.
    \end{align}
\end{lemma}

The proof is very similar to the proof of Lemma \ref{lemma: third term disc error} so is omitted.

The first term in the RHS of~\eqref{cor-eq: lim f k} is also 0.
\begin{lemma}
\label{cor-lemma: double limit 2}
$\forall \tau$, $\forall t \in [-\tau, 0]$,
\begin{align}
\lim_{k \to \infty} \norm{ -\sum_{i = m(t(n_k)+t)}^{n_k - 1} \alpha(i)H(\bar{z}(t(i)),W_{i+1}) - \int_0^t h(\bar{z}^{\lim}(s))ds }    = 0.
\end{align}   
\end{lemma}
Its proof is very similar to the proof of Lemma \ref{lemma: double limit 2} and is omitted.
This convergence is also simpler than \eqref{eq: single limit} because here we have only a single $(H, h)$.
But in \eqref{eq: single limit}, 
we have a sequence $\qty{(H_{n_k}, h_{n_k})}$,
for which we have to split it to a double limit \eqref{eq: double limit} and then invoke the Moore-Osgood theorem to reduce it to the single $(H, h)$ case.

Lemma~\ref{cor-lemma: double limit 2} confirms that the first term in the RHS of~\eqref{cor-eq: lim f k} is 0.
Moreover, it also
enables us to rewrite $\bar{z}^{\lim}(t)$ from a summation form to an integral form. $\forall \tau$, $\forall t \in [-\tau, 0]$
\begin{align}
&\bar{z}^{\lim}(t) \\
=& \lim_{k \to \infty} \bar{z}(t(n_k)) -\sum_{i = m(t(n_k)+t)}^{n_k - 1} (\alpha(i)H(\bar{z}(t(i)),W_{i+1}), \beta(i)G(\bar{z}(t(i)),W_{i+1})) \\
=& \explaind{\lim_{k \to \infty} \bar{z}(t(n_k)) + \int_0^t (h (\bar{z}^{\lim}(s)),0)ds.}{by Lemma \ref{cor-lemma: double limit 2}} \label{cor-eq: z lim definition 2} 
\end{align} 
Thus, we can show the following diminishing discretization error.

\begin{lemma}\label{cor-lemma: f k 0 fast}
$\forall \tau$, $\forall t\in [-\tau, \tau],$
\begin{align}
\lim_{k \to \infty}  \norm{\bar{f}_{n_k}(t)} = 0.    
\end{align}
Moreover, the convergence is uniform in $t$ on $[-\tau,\tau]$.
\end{lemma}
Its proof is very similar to the proof of Lemma \ref{lemma: f k 0} and is omitted. 
This immediately implies that for any $t \in (-\infty, \infty)$
\begin{align}
    \label{eq corollary ode convergence}
 \lim_{k \to \infty} \bar{z}(t(n_k) + t) = z^{\lim}(t).   
\end{align}
Theorem~\ref{thm: z stability} then yields that
\begin{align}
    \sup_{t \in (-\infty, \infty)} \norm{ z^{\lim}(t)} < \infty.
\end{align}

Let $Z$ be the set of limit points of $\qty{z_n}$. By Theorem \ref{thm: z stability}, $\sup_n \norm{z_n} < \infty$,
so $Z$ is bounded and nonempty.
We now prove $Z$ is an invariant set of the ODE~\eqref{cor eq original ode system}.
For any $z \in Z$, 
there exists a subsequence $\qty{z_{n_k}}$ such that
\begin{align}
    \lim_{k\to\infty} z_{n_k} = z. 
\end{align}
Since $\qty{\bar z(t(n_k) + t)}$ is equicontinuous in the extended sense, 
following the way we arrive at~\eqref{eq corollary ode convergence}, 
we can construct a subsequence $\qty{n'_k} \subseteq \qty{n_k}$ such that
\begin{align}\label{eq: z n k' limit}
    \lim_{k\to\infty} \bar z(t(n_k') + t)  = z_*^{\lim}(t),
\end{align}
where $z_*^{\lim}(t)$ is a solution to the ODE  \eqref{cor eq original ode system} and $z_*^{\lim}(0) = z$.
The remaining is to show that $z_*^{\lim}(t)$ lies entirely in $Z$.
For any $t \in (-\infty, \infty)$,
by the piecewise constant nature of $\bar z$ in \eqref{eq: bar z extend},
the above limit \eqref{eq: z n k' limit}
implies that there exists a subsequence of $\qty{z_n}$ that converges to $z_*^{\lim}(t)$,
indicating $z_*^{\lim}(t) \in Z$ by the definition of the limit set.
We now have proved $\forall z \in Z$,
there exists a solution $z_*^{\lim}(t)$ to the ODE~\eqref{cor eq original ode system} such that $z_*^{\lim}(0) = z$ and
$\forall t \in (-\infty, \infty), z_*^{\lim}(t) \in Z$.
This means $Z$ is an invariant set, by definition.
In particular, $Z$ is a bounded invariant set.

We now prove that $\qty{z_n}$ converges to $Z$.
Let $\qty{z_{n_k}}$ be any convergent subsequence of $\qty{z_n}$ with its limit denoted by $z$.
We must have $z \in Z$ by the definition of the limit set.
So we have proved that all convergent subsequences of $\qty{z_n}$ converge to a point in the bounded invariant set $Z$.
If $\qty{z_n}$ does not converge to $Z$,
there must exist a subsequence $\qty{z_{n_k'}}$ such that $\qty{z_{n_k'}}$ is always away from $Z$ by some small $\epsilon_0 > 0$, i.e., $\forall k$,
\begin{align}
\inf_{z\in Z} \norm{z_{n_k'} - z} \geq \epsilon_0. \label{eq: z n k epsilon 0}
\end{align}
But $\qty{z_{n_k'}}$ is bounded, so by the Bolzano-Weierstrass Theorem, it must have a convergent subsequence,
which, by the definition of the limit set,
converges to some point in $Z$.
This contradicts \eqref{eq: z n k epsilon 0}.
So we must have $\qty{z_n}$ converge to $Z$,
which is a bounded invariant set of the ODE~\eqref{cor eq original ode system}.
This completes the proof.
\end{proof}

Since the invariant set of (\ref{cor eq original ode system}) is $\qty{(\lambda(y), y): y \in \R^{d_2}}$, we have
\begin{align}
    \lim_{n \rightarrow \infty} \norm{x_n - \lambda(y_n)} = 0. \label{eq: real fast convergence}
\end{align}
We will use this fact to now establish convergence to the unique equilibrium.

\subsection{Slow Timescale Convergence}\label{sec slow convergence}

Returning to slow timescale definitions here.

\begin{corollary}
\label{cor: convergence slow}
Let Assumptions \ref{assumption: stationary distribution} - \ref{assumption: lln} hold. Then the iterates $\qty{y_n}$ generated by~\eqref{eq: x n updates} converge almost surely to a (sample path dependent) bounded invariant set of the ODE
    \begin{align}
        \label{cor eq g ode}
    \dv{y(t)}{t} &= g(\lambda(y(t)), y(t))
    \end{align}
\end{corollary}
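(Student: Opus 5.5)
The argument mirrors the proof of Corollary~\ref{cor: convergence fast}, now in the slow timescale with $t(n)=\sum_{i<n}\beta(i)$. I work on a fixed sample path on which Theorem~\ref{thm: z stability} and \eqref{eq: real fast convergence} hold, so $\sup_n\norm{z_n}<\infty$ and $\norm{x_n-\lambda(y_n)}\to 0$.

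\textbf{Setup.} Let $\bar y(t)$ be the piecewise constant interpolation of $\qty{y_n}$ on the $\beta$-grid. Let $\bar y_n(t)$ solve \eqref{cor eq g ode} on $(-\infty,\infty)$ with $\bar y_n(0)=\bar y(t(n))$, and set $\bar f_n(t)\doteq\bar y(t(n)+t)-\bar y_n(t)$.

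\textbf{Step 1: equicontinuity and a limit trajectory.} By boundedness of the iterates, Lipschitzness of $G$ and $\lambda$, and the averaging property of $L(W_i)$ from Assumption~\ref{assumption: lln}, the three sequences $\qty{\bar y(t(n)+t)}$, $\qty{\bar y_n(t)}$, $\qty{\bar f_n(t)}$ are equicontinuous in the extended sense on $(-\infty,\infty)$. The proof is as in Lemma~\ref{cor-lemma: three functions equicontinuous}, using reverse-time Gronwall for $t<0$. Arzelà--Ascoli then yields a subsequence $\qty{n_k}$ with uniform limits on every $[-\tau,\tau]$, exactly as in Lemma~\ref{cor-lemma: three convergence}.

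\textbf{Step 2: the discretization error vanishes along $\qty{n_k}$.} Split the increment $\sum\beta(i)G(z_i,W_{i+1})$ as
\[
\sum\beta(i)\big[G(x_i,y_i,W_{i+1})-G(\lambda(y_i),y_i,W_{i+1})\big]+\sum\beta(i)G(\lambda(y_i),y_i,W_{i+1}).
\]
\begin{itemize}
\item[(a)] The first sum is bounded by $\sup_{i\ge m(t(n_k)-\tau)}\norm{x_i-\lambda(y_i)}\cdot\sum\beta(i)L(W_{i+1})$. The supremum factor tends to $0$ by \eqref{eq: real fast convergence}. The second factor stays bounded over a window of length $2\tau$ by Assumption~\ref{assumption: lln}, which gives \eqref{eq: L property bounded}. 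This term plays the role of Lemma~\ref{lemma: slow dim disc lambda}.
\item[(b)] The second sum has the composite map $(y,w)\mapsto G(\lambda(y),y,w)$. This map is Lipschitz in $y$ with modulus $L(w)(L_\lambda+1)$, and its mean is $g(\lambda(y),y)$. The Kushner--Yin averaging argument (the analogue of Lemma~\ref{cor-lemma: double limit 2}, with a single function rather than a $c$-indexed family, so no Moore--Osgood step) then shows it converges to $\int_0^t g(\lambda(\bar y^{\lim}(s)),\bar y^{\lim}(s))\,ds$.
\end{itemize}

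\textbf{Main obstacle.} The averaging in (b) is stated pointwise for $w\mapsto G(x,y,w)$ with $(x,y)$ fixed. Here the arguments move with $i$, and the first argument is $\lambda(y)$. I would handle this as in the existing proofs: freeze $\bar y$ on a fine partition of $[-\tau,\tau]$ using the uniform limit $\bar y^{\lim}$, apply the LLN on each piece at finitely many frozen points, and control the freezing error by Lipschitzness together with the bounded averaged $L(W_i)$. Combining (a) and (b) with Gronwall gives $\bar f_{n_k}\to 0$ uniformly on $[-\tau,\tau]$. Hence $\bar y^{\lim}$ is a bounded solution of \eqref{cor eq g ode} on all of $\mathbb{R}$.

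\textbf{Step 3: invariance and convergence.} Repeat the limit-set argument verbatim. Let $Y$ be the set of limit points of $\qty{y_n}$; it is nonempty and bounded by Theorem~\ref{thm: z stability}. For each $y\in Y$, Step 2 applied to the subsequence converging to $y$ gives a full-line solution through $y$ that stays in $Y$, so $Y$ is invariant. Bolzano--Weierstrass then excludes any subsequence staying $\epsilon_0$ away from $Y$, so $y_n\to Y$ almost surely.
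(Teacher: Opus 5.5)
Your proposal is correct and follows the paper's proof essentially step for step. It uses the same split of the slow increment: the term $\sum\beta(i)\bigl[G(x_i,y_i,W_{i+1})-G(\lambda(y_i),y_i,W_{i+1})\bigr]$ is killed by \eqref{eq: real fast convergence} times the bounded averaged $\sum\beta(i)L(W_{i+1})$ (as in Lemma~\ref{cor-lemma: slow dim disc lambda}), and the remaining averaging term for the single map $(y,w)\mapsto G(\lambda(y),y,w)$ is followed by the same limit-set invariance and Bolzano--Weierstrass argument; your freezing-on-a-partition sketch for term (b) simply spells out the step the paper omits by pointing back to its earlier slow-timescale discretization argument.
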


\begin{proof}
To prove convergence results on $t \in(-\infty, \infty)$ in Corollary \ref{cor: convergence slow}, 
we fix an arbitrary sample path $\qty{z_0,\qty{W_i}_{i=1}^\infty}$. The stability results from Theorem~\ref{thm: z stability} hold. To prove  properties on $t \in(-\infty, \infty)$, we first fix an arbitrary $\tau > 0$ and show properties on $\forall t \in [-\tau, \tau]$.

\begin{definition}
$\forall n \in \N $, define $\bar{y}_n(t)$ as the solution to the ODE~\eqref{cor eq g ode} in $(-\infty, \infty)$ with an initial condition 
\begin{align}
\bar{y}_n(0) = \bar{y}(t(n)). \label{cor-def: slow y n init}
\end{align}
\end{definition}
$\bar{y}_n(t)$ can also be written as 
\begin{align}
\bar{y}_n(t)   &=  \bar{y}(t(n)) + \int_0^t g(\lambda(\bar{y}_n(s)),\bar{y}_n(s))ds, \quad \forall t \in (-\infty, \infty). \label{cor-def: slow y n integral}
\end{align}

We need to show that the error of Euler's discretization diminishes asymptotically. With \eqref{eq: m 0} and \eqref{def: bar z}, $\forall \tau > 0$, $\forall t \in [-\tau, \tau]$,
\begin{align}\label{eq: slow bar z extend}
&\bar{z}(t(n) + t) = z_{m(t(n) + t)} \\
&= 
\begin{cases}
\bar{z}(t(n)) + \sum_{i = n}^{m(t(n)+t) - 1} (\alpha(i) H(\bar{z}(t(i)), W_{i+1}),\beta(i)G(\bar{z}(t(i)),W_{i+1})) & \text{if } t \geq 0 \\
\bar{z}(t(n)) - \sum_{i = m(t(n)+t)}^{n - 1} (\alpha(i) H(\bar{z}(t(i)), W_{i+1}),\beta(i)G(\bar{z}(t(i)),W_{i+1})) & \text{if } t < 0. 
\end{cases}
\end{align}
Notably, the property \eqref{eq: m 0} that $\forall t<0, m(t) = 0$ in \eqref{eq: slow bar z extend} ensures $\bar{z}(t(n) + t)$ is well-defined when $t(n) + t < 0$.
Precisely speaking, $\forall \tau > 0$, $\forall t \in [-\tau, \tau]$,
the discretization error is defined as 
\begin{align}
\bar{f}_n(t) \doteq \bar{y}(t(n) + t) - \bar{y}_n(t). \label{cor-def: slow f}
\end{align}
and we need $\bar{f}_n(t)$ to diminish to 0 as $n\to\infty$.
To this end, we study the following three sequences of functions
\begin{align}
    \label{cor-eq slow three seq}
    \qty{ \bar y(t(n) + t)}_{n=0}^\infty, \qty{\bar{y}_n(t)}_{n=0}^\infty, \qty{\bar{f}_n(t)}_{n=0}^\infty.
\end{align}

We  show  $\qty{\bar{y}(t(n)+t)}$, $\qty{\bar{y}_n(t)}$ and $\qty{\bar{f}_n(t)}$  are all equicontinuous in the extended sense  on the domain $(-\infty, \infty)$.

\begin{lemma}\label{cor-lemma: slow three functions equicontinuous}
The three sequences of functions $\qty{\bar{y}(t(n)+t)}_{n = 0}^{\infty}$,
$\qty{\bar{y}_n(t)}_{n = 0}^{\infty}$, and 
$\qty{\bar{f}_n(t)}_{n = 0}^{\infty}$ are all equicontinuous in the extended sense on $t \in (-\infty, \infty)$.
\end{lemma}

To prove those lemmas, we need the Gronwall inequality in the reverse time in Appendix \ref{theorem: Gronwall reverse}.
Compared to lemmas in the main text which have the domain $t \in [0,T+1)$, lemmas in this section have similar proofs because we first fix an arbitrary $\tau$ and prove properties on the domain $t \in [-\tau, \tau]$. We omit proofs for Lemma \ref{cor-lemma: slow three functions equicontinuous} because they are extremely similar to the proofs of equicontinuity in the slow timescale. Similar to Lemma~\ref{lemma: slow three convergence}, we now construct a particular subsequence of interest.
\begin{lemma}\label{cor-lemma: slow three convergence}
There exists a subsequence $\qty{n_k}_{k=0}^\infty \subseteq \qty{0, 1, 2, \dots}$ and some continuous functions $\bar{f}^{\lim}(t)$ and $\bar{y}^{\lim}(t)$ such that  $\forall \tau$, $\forall t \in [-\tau, \tau]$,
\begin{align}
\lim_{k \to \infty} \bar{f}_{n_k}(t) =& \bar{f}^{\lim}(t)   \label{cor-eq: f slow y r lim} , \\ 
\lim_{k \to \infty} \bar{y}(t(n_k)+t) =& \bar{y}^{\lim}(t),  \label{cor-eq: bar slow y y r lim}
\end{align}
where both convergences are uniform in $t$ on $[-\tau, \tau]$.
Furthermore,
let $y^{\lim}(t)$ denote the unique solution to the ODE~\eqref{cor eq g ode} with the initial condition
\begin{align}
y^{\lim}(0) = \bar{y}^{\lim}(0),
\end{align}
in other words,
\begin{align}\label{cor-eq: slow y lim t}
y^{\lim}(t) &= \bar{y}^{\lim}(0) + \int_0^t g(\lambda(\bar{y}^{\lim}), \bar{y}^{\lim}(s)) ds.
\end{align}
Then $\forall \tau$, $\forall t \in [-\tau, \tau]$, we have
\begin{align}
\lim_{k \to \infty} \bar{y}_{n_k}(t)   =  y^{\lim}(t), \label{cor-eq: slow y lim converge}
\end{align}
where the convergence is uniform in $t$ on $[-\tau, \tau]$.
\end{lemma}
Its proof is very similar to the proof of Lemma \ref{lemma: slow three convergence} and is omitted.
We use the subsequence $\qty{n_k}$ intensively in the remaining proofs. Recall that $\bar{f}_n(t)$ denotes the discretization error between $\bar y(t(n) + t)$ and $\bar{y}_n(t)$.
We now proceed to prove that this discretization error diminishes along $\qty{n_k}$.
In particular, we aim to prove that $\forall \tau$, $\forall t \in [-\tau, \tau]$, 
\begin{align}
 \lim_{k \rightarrow \infty} \norm{\bar{f}_{n_k}(t)} = \norm{\bar{f}^{\lim}(t)} = 0.
\end{align} 
This means $\bar{y}(t(n_k) + t)$ is close to $\bar{y}_{n_k}(t)$ as $k \rightarrow \infty$. 
For $t \in (0,\tau]$, the proof for this part is the same as the proof we have done in Section \ref{sec: slow dim disc}.
Thus, we only discuss the proof for $t \in [-\tau, 0]$. $\forall \tau$, $\forall t \in [-\tau, 0]$, 
\begin{align}
&\lim_{k \rightarrow \infty} \norm{ \bar{f}_{n_k}(t)} \\
=& \lim_{k \rightarrow \infty} \norm{ \bar{y}(t(n_k)) -\sum_{i = m(t(n_k)+t)}^{n_k - 1} \beta(i)G(\bar{z}(t(i)),W_{i+1}) - \bar{y}_{n_k}(t)} \explain{by \eqref{eq: slow bar z extend} and \eqref{cor-def: slow f}}\\
=&  \lim_{k \rightarrow \infty} \norm{  -\sum_{i = m(t(n_k)+t)}^{n_k - 1} \beta(i)G(\bar{z}(t(i)),W_{i+1}) - \int_0^t g(\lambda(\bar{y}_{n_k}),\bar{y}_{n_k})ds} \explain{by \eqref{cor-def: slow y n integral}}\\
\leq&\lim_{k \to \infty} \norm{ -\sum_{i = m(t(n_k)+t)}^{n_k - 1} \beta(i)G(\bar{z}(t(i)),W_{i+1}) + \sum_{i = m(t(n_k)+t)}^{n_k - 1} \beta(i)G(\lambda(\bar{y}(t(i))), \bar{y}(t(i)),W_{i+1}) } \\
&+ \lim_{k \to \infty} \norm{- \sum_{i = m(t(n_k)+t)}^{n_k - 1} \beta(i)G(\lambda(\bar{y}(t(i))), \bar{y}(t(i)),W_{i+1}) - \int_0^t g(\lambda(\bar{y}_{n_k}),\bar{y}_{n_k})ds } . \label{cor-eq: slow lim f k} \\
\end{align}

The first term in the RHS of~\eqref{cor-eq: slow lim f k} is 0.
\begin{lemma}
\label{cor-lemma: slow dim disc lambda}
$\forall \tau$, $\forall t \in [-\tau, 0]$,
\begin{align}
\lim_{k \to \infty} \norm{ -\sum_{i = m(t(n_k)+t)}^{n_k - 1} \beta(i)G(\bar{z}(t(i)),W_{i+1}) + \sum_{i = m(t(n_k)+t)}^{n_k - 1} \beta(i)G(\lambda(\bar{y}(t(i))), \bar{y}(t(i)),W_{i+1}) } =0.    
\end{align}   
\end{lemma}
Its proof is very similar to the proof of Lemma \ref{lemma: slow dim disc lambda} (except that we use \eqref{eq: real fast convergence} instead of Lemma \ref{lemma: slow, x close lambda y}) and is omitted.
This convergence is also simpler than \eqref{eq: single limit} because here we have only a single $(G, g)$.

\begin{lemma}\label{cor-lemma: f k 0 slow}
$\forall \tau$, $\forall t\in [-\tau, \tau],$
\begin{align}
\lim_{k \to \infty}  \norm{\bar{f}_{n_k}(t)} = 0.    
\end{align}
Moreover, the convergence is uniform in $t$ on $[-\tau,\tau]$.
\end{lemma}
Its proof is very similar to the proof of the diminishing discretization error in the slow timescale. 
This immediately implies that for any $t \in (-\infty, \infty)$
\begin{align}
    \label{eq corollary slow ode convergence}
 \lim_{k \to \infty} \bar{z}(t(n_k) + t) = (\lambda(y^{\lim}(t)), y^{\lim}(t)).   
\end{align}
Theorem~\ref{thm: z stability} then yields that
\begin{align}
    \sup_{t \in (-\infty, \infty)} \norm{ (\lambda(y^{\lim}(t)), y^{\lim}(t))} < \infty.
\end{align}

Let $Y$ be the set of limit points of $\qty{y_n}$. By Theorem \ref{thm: z stability}, $\sup_n \norm{y_n} < \infty$,
so $Y$ is bounded and nonempty.
We now prove $Y$ is an invariant set of the ODE~\eqref{cor eq g ode}.
For any $y \in Y$, 
there exists a subsequence $\qty{y_{n_k}}$ such that
\begin{align}
    \lim_{k\to\infty} y_{n_k} = y. 
\end{align}
Since $\qty{\bar y(t(n_k) + t)}$ is equicontinuous in the extended sense, 
following the way we arrive at~\eqref{eq corollary slow ode convergence}, 
we can construct a subsequence $\qty{n'_k} \subseteq \qty{n_k}$ such that
\begin{align}\label{eq: slow y n k' limit}
    \lim_{k\to\infty} \bar y(t(n_k') + t)  = y_*^{\lim}(t),
\end{align}
where $y_*^{\lim}(t)$ is a solution to the ODE  \eqref{cor eq g ode} and $y_*^{\lim}(0) = y$.
The remaining is to show that $y_*^{\lim}(t)$ lies entirely in $Y$.
For any $t \in (-\infty, \infty)$,
by the piecewise constant nature of $\bar y$ in \eqref{eq: slow bar z extend},
the above limit \eqref{eq: slow y n k' limit}
implies that there exists a subsequence of $\qty{y_n}$ that converges to $y_*^{\lim}(t)$,
indicating $y_*^{\lim}(t) \in Y$ by the definition of the limit set.
We now have proved $\forall y \in Y$,
there exists a solution $y_*^{\lim}(t)$ to the ODE~\eqref{cor eq g ode} such that $y_*^{\lim}(0) = y$ and
$\forall t \in (-\infty, \infty), y_*^{\lim}(t) \in Y$.
This means $Y$ is an invariant set, by definition.
Additionally, $Y$ is a bounded invariant set.

We now prove that $\qty{y_n}$ converges to $Y$.
Let $\qty{y_{n_k}}$ be any convergent subsequence of $\qty{y_n}$ with its limit denoted by $y$.
We must have $y \in Y$ by the definition of the limit set.
So we have proved that all convergent subsequences of $\qty{y_n}$ converge to a point in the bounded invariant set $Y$.
If $\qty{y_n}$ does not converge to $Y$,
there must exist a subsequence $\qty{y_{n_k'}}$ such that $\qty{y_{n_k'}}$ is always away from $Y$ by some small $\epsilon_0 > 0$, i.e., $\forall k$,
\begin{align}
\inf_{y\in Y} \norm{y_{n_k'} - y} \geq \epsilon_0. \label{eq: slow y n k epsilon 0}
\end{align}
But $\qty{y_{n_k'}}$ is bounded, so by the Bolzano-Weierstrass Theorem, it must have a convergent subsequence,
which, by the definition of the limit set,
converges to some point in $Y$.
This contradicts \eqref{eq: slow y n k epsilon 0}.
So we must have $\qty{y_n}$ converge to $Y$,
which is a bounded invariant set of the ODE~\eqref{cor eq g ode}.
This completes the proof.
\end{proof}

Since the invariant set of (\ref{cor eq g ode}) is the singleton containing $y^*$, the equilibrium of the ODE, we have
\begin{align}
    \lim_{n \rightarrow \infty} \norm{y_n - y^*} = 0. \label{eq: real slow convergence}
\end{align}
Combined with the fast convergence, we have
\begin{align}
    \lim_{n \rightarrow \infty} \norm{z_n - (\lambda(y^*), y^*)} = 0. \label{eq: final convergence}
\end{align}

\newpage
\section{Convergence of TDC with Eligibility Traces}\label{appendix: tdc}

TDC was first proposed in \citet{sutton2008convergent} as a modification of gradient temporal difference learning (GTD) \citep{sutton2008convergent}. GTD was developed to break the deadly triad, divergence that can arise when combining off-policy learning, function approximation, and bootstrapping, each of which are critical components in successful RL algorithms. While GTD mitigates the deadly triad, it is slow. TDC, on the other hand, is nearly as fast as regular TD learning and converges. It is also a two-timescale algorithm, as the gradient correction runs on a faster timescale. Although vanilla TDC is known to converge, the best prior work was only able to establish the convergence of projected variants of TDC with eligibility traces \citep{yu2017convergence}. 

We must also explain the important of eligibility traces: they are a powerful tool for credit assignment, a critical challenge in RL, and have been a fundamental part of RL since the inception of the field \citep{barto1981landmark}. Although eligibility traces are useful, they introduce difficulties in analysis. Even if the state space of the Markov chain $\qty{(S_t, A_t)}$ is finite, with eligibility traces, we have to instead consider the chain $\qty{(S_t, A_t, e_t)}$, which now evolves in an uncountable space. And, in the case of off-policy learning, the importance sampling ratio can cause the state space to be unbounded as well. Our results, therefore, are the first to be able to handle the important case of off-policy RL algorithms with eligibility traces. We demonstrate this with TDC.

\begin{xassumption}\label{assumption rl chain}
    Both the state space $\mathcal{S}$ and the action space $\mathcal{A}$ are finite. The Markov chain $\qty{S_t}$ induced by the behavior policy $\mu$ is irreducible, and $\mu(a|s) > 0$ for all $s,a$. 
\end{xassumption}

\begin{xassumption}\label{assumption full rank features}
    The feature matrix $\Phi$ is of full rank. 
\end{xassumption}
Assumption \ref{assumption full rank features} is standard in RL with linear function approximations to guarantee existence and uniqueness of the solution. Additionally, any feature matrix that is not full rank can be converted into one that is by finding a linearly independent basis for the space spanned by the column vectors, so in practice, this is a very weak assumption \citep{tsitsiklis1997analysis}.

TDC with eligibility traces is defined as follows:
\begin{align}
    \label{eq gtd}
    e_t =& \lambda \gamma \rho_{t-1} e_{t-1} + \phi_t, \\
    \delta_t =& R_{t+1} + \gamma \phi_{t+1}^\top \theta_t - \phi_t^\top \theta_t, \\
    \nu_{t+1} =& \nu_t + \alpha_t \left(\rho_t \delta_t e_t - \phi_t \phi_t^\top \nu_t\right), \\
    \theta_{t+1} =& \theta_t + \beta_t(\rho_t \delta_t e_t - \rho_t(1-\lambda)\gamma \phi_{t+1} e_t^\top \nu_t).
\end{align}

We can more compactly express the updates with the following equations:

\begin{align}
    \nu_{t+1}  &= \nu_t  + \alpha_t \left(\mqty[-\phi_t\phi_t^\top & \rho_t e_t(\gamma \phi_{t+1} - \phi_t)^\top] \mqty[\nu_{t} \\ \theta_{t}] + \mqty[\rho_t R_{t+1} e_t]\right) \\
        \theta_{t+1} &= \theta_t + \beta_t \left(\mqty[ -\rho_t(1-\lambda)\gamma \phi_{t+1} e_t^{\top} & \rho_t e_t(\gamma \phi_{t+1} - \phi_t)^\top]\mqty[\nu_{t} \\ \theta_{t}] + \mqty[\rho_t R_{t+1} e_t]\right ).
\end{align}

Now we define the augmented Markov chain $\qty{W_t}$ as
\begin{align}
    W_{t+1} \doteq (S_t, A_t, S_{t+1}, e_t), \quad t=0,1,\dots.
\end{align}
We also define shorthands
\begin{align}
x \doteq& \nu, x_t \doteq \nu_t, y \doteq \theta, y_t \doteq \theta_t \\
w \doteq& (s, a, s', e), \\
A(w) \doteq& \rho(s, a) e (\gamma \phi(s') - \phi(s))^\top,  \label{def: A y}\\
b(w) \doteq& \rho(s, a) r(s, a) e, \label{def: b y}\\
C(w) \doteq& \phi(s) \phi(s)^\top, \label{def: C w}\\
D(w) \doteq& e\rho(s,a)(1-\lambda)\gamma \phi(s')^{\top}, \label{def: D w} \\
H(x, y, w) \doteq& \mqty[-C(w) & A(w) ] \mqty[x \\ y] + b(w), \\
G(x, y, w) \doteq& \mqty[ -D^\top(w) & A(w)] \mqty[x \\ y] + b(w). \\
\end{align}

Then TDC can be expressed as
\begin{align}\label{eq: TDC update}
x_{t+1} &= x_t + \alpha_t H(x_t, y_t, W_{t+1}), \\
y_{t+1} &= y_t + \beta_t G(x_t, y_t, W_{t+1}),
\end{align}
which reduces to the form of~\eqref{eq: x n updates} and \eqref{eq: y n updates}.

\begin{lemma}\label{lemma: yu invariance} (Theorem 2.1 from \citet{yu2017convergence}) If Assumption \ref{assumption rl chain} holds, then 
\begin{enumerate}[(i)]
    \item $\qty{W_t}$ has a unique invariant probability measure, $d_{\mathcal{W}}$.
    \item The expectation with respect to the stationary distribution, $\E_{w \sim d_{\mathcal{W}}}[\gamma(w)] < \infty$ when $\gamma(w)$ is a function which is Lipschitz in the trace variable $e$. Additionally, \eqref{eq lln new} holds for $\gamma$.
\end{enumerate}
\end{lemma}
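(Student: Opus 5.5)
The plan is to reconstruct the argument of \citet{yu2017convergence} using the specific structure of the trace recursion $e_t = \lambda\gamma\rho_{t-1}e_{t-1} + \phi_t$. The finite component $(S_t,A_t,S_{t+1})$ is, by Assumption~\ref{assumption rl chain}, an irreducible finite Markov chain with a unique stationary distribution. So everything hinges on the continuous component $e_t$.

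\textbf{Step 1 (moment bound).} Unrolling the trace gives
\[
e_t=\sum_{k}(\lambda\gamma)^{t-k}\Big(\prod_{j=k}^{t-1}\rho_j\Big)\phi_k .
\]
Since $\mu(a|s)>0$ and the spaces are finite, $\rho$ is bounded. Moreover $\E_\mu[\rho_j\mid S_j]=1$, so products of ratios have conditional expectation $1$. Hence $\sup_t\E\norm{e_t}\le \norm{e_0}+\max_s\norm{\phi(s)}/(1-\lambda\gamma)<\infty$ for any initial condition.

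\textbf{Step 2 (existence and uniqueness of $d_{\mathcal W}$).} The map $e\mapsto\lambda\gamma\rho e+\phi$ is continuous, so $\{W_t\}$ is weak Feller. The $L^1$ bound from Step~1 gives tightness of the occupation measures. Krylov--Bogoliubov then yields an invariant probability measure, and Fatou's lemma shows it has a finite first moment in $e$.

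For uniqueness, couple two copies driven by the same state-action sequence but with different initial traces $e_0,e_0'$. Their difference is $(\lambda\gamma)^t\prod_{j<t}\rho_j\,(e_0-e_0')$. Its expected norm is at most $(\lambda\gamma)^t\norm{e_0-e_0'}$, which is summable. Therefore $\sum_t\norm{e_t-e_t'}<\infty$ a.s., and in particular the difference tends to $0$ a.s. Combined with uniqueness of the stationary law of the finite component, any two invariant measures must coincide.

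\textbf{Step 3 (finite expectations).} If $\gamma$ is Lipschitz in $e$, then $\abs{\gamma(w)}\le c_1+c_2\norm{e}$, because the finite component takes finitely many values. Finiteness of $\E_{d_{\mathcal W}}[\gamma]$ then follows from the first-moment bound in Step~2.

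\textbf{Step 4 (LLN from every initial condition), the main obstacle.} This is the delicate step, because $\gamma$ is unbounded and the trace space is noncompact. I would proceed in three moves.
\begin{enumerate}[(i)]
    \item Establish \eqref{eq lln new} for the stationary start. Uniqueness of $d_{\mathcal W}$ makes the stationary chain ergodic, and Birkhoff's theorem applies since $\gamma\in L^1(d_{\mathcal W})$.
    \item Transfer to an arbitrary initial $W_1$ for almost every stationary starting point. Run the chain from the given start and, on the same state-action path, a copy whose initial trace is drawn from the conditional stationary law. By Step~2 and the Lipschitz property, $\frac1n\sum_t\abs{\gamma(W_t)-\gamma(W_t')}\le \frac{c}{n}\sum_t\norm{e_t-e_t'}\to0$ a.s., since the series of differences converges a.s.
    \item Handle the initial finite-state mismatch. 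The finite chain couples with the stationary one in a.s.\ finite time, and the traces then merge as in (ii).
\end{enumerate}
Two points need care here, and I expect them to be the hardest. First, the product martingale $\prod\rho_j$ is unbounded, so the summability must be argued in expectation and not pathwise. Second, one must pass from "almost every stationary start" to "every start". I would handle this by using the fact that the coupling bound in (ii) holds for every pair $e_0,e_0'$, and then applying Fubini.
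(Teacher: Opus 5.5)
The paper gives no argument for this lemma. It imports the statement directly from Theorem~2.1 of \citet{yu2017convergence}, which rests on Yu's analysis of the state--trace process as a weak Feller chain. Your proposal is therefore a different, self-contained route, and it is correct in substance.

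What it buys is transparency about which features of the setting matter. First, $\lambda\gamma<1$ together with $\E_\mu[\rho_j\mid S_j]=1$ yields the uniform first-moment bound. Second, the same two facts give the mean contraction $\E\norm{e_t-e_t'}\le(\lambda\gamma)^t\norm{e_0-e_0'}$. Because this contraction is summable, it is exactly what upgrades Birkhoff's theorem for the stationary start to an LLN from every initial condition, even though $\gamma$ is unbounded and the trace space is noncompact. The citation buys brevity and Yu's more complete ergodic theory of the trace process. The price is that it leaves implicit whether Yu's hypotheses and chain conventions match Assumption~\ref{assumption rl chain} and the definition of $W_t$ used here.

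One step needs correcting. In Step~4(iii) you claim the finite chain couples with a stationary copy in a.s.\ finite time. That requires aperiodicity, which Assumption~\ref{assumption rl chain} does not provide: copies of a periodic chain started in different cyclic classes never meet.

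You do not need that step.
\begin{itemize}
\item Birkhoff gives the LLN from $d_{\mathcal W}$-a.e.\ starting point.
\item The discrete component $(s,a,s')$ takes finitely many values, and each value with $p(s'\mid s,a)>0$ has positive stationary mass. So for each such value there is at least one trace $e'$ from which the LLN holds a.s.
\item Your coupling in (ii) then transfers the LLN to every $e$ with that discrete component.
\item A start with $p(s'\mid s,a)=0$ enters the support after one step. The Markov property, plus the fact that one finite term does not change a Ces\`aro limit, finishes the argument.
\end{itemize}

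For uniqueness in Step~2, make the last sentence explicit. Couple two stationary copies on a common discrete path. They have the stated marginals at every time, and their traces merge a.s. Testing against bounded Lipschitz functions then shows the two invariant laws coincide.
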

\citet{yu2017convergence} also shows that
\begin{align}
    A \doteq& \E_{w\sim{d_\fW}}\left[ A(w)\right] = \Phi^\top D_\mu ( P_{\pi, \lambda} - I) \Phi, \\
    b \doteq& \E_{w\sim{d_\fW}}\left[ b(w)\right] = \Phi^\top D_\mu r_{\pi, \lambda}, \\
    C \doteq& \E_{w\sim{d_\fW}}\left[ C(w)\right] = \Phi^\top D_\mu \Phi, \\
    D \doteq& \E_{w\sim{d_\fW}}\left[ D(w)\right] = \Phi^\top D_\mu P_{\pi,\lambda} \Phi,
\end{align}
where we use $D_\mu$ to denote the diagonal matrix whose diagonal entry is $d_\mu$, the invariant distribution of the underlying Markov chain $\qty{S_t}$.

\begin{xtheorem}[\textbf{Theorem \ref{thm: tdc converge} Restated}]
    Assume $A$ is invertible (without this assumption, there is no solution so the algorithm itself would be ill-posed as the solution point involves $A^{-1}$). Then, TDC with eligibility traces converges.
\end{xtheorem}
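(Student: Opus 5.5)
The plan is to verify Assumptions \ref{assumption: stationary distribution}--\ref{assumption: lln} for the TDC($\lambda$) recursion \eqref{eq: TDC update} and then invoke Theorem \ref{cor: convergence full}. Because $H$ and $G$ are affine in $(x,y)$ with $w$-dependent coefficients, most of the verification is mechanical.

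Assumption \ref{assumption: stationary distribution} is part (i) of Lemma \ref{lemma: yu invariance}. Assumption \ref{assumption: learning ratios} concerns the step sizes only; we impose it, e.g.\ with the polynomial schedules in the remark following it.

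For Assumption \ref{assumption: H c H infty}, the rescaled maps are
\begin{align}
H_c(x,y,w) = -C(w)x + A(w)y + b(w)/c, \qquad G_c(x,y,w) = -D(w)x + A(w)y + b(w)/c .
\end{align}
We therefore take $H_\infty, G_\infty$ to be the purely linear parts, $\kappa_H(c)=\kappa_G(c)=1/c$, and $b_H=b_G=b(w)$. The latter is constant in $(x,y)$, so its Lipschitz constant is $L_b(w)=0$.

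For Assumption \ref{assumption: H Lipschitz}, we take $L(w) = \norm{C(w)}+\norm{A(w)}+\norm{D(w)}$. Since $\mathcal S,\mathcal A$ are finite, $\rho$ and $\phi$ are bounded, so $L(w)\le c_0(1+\norm{e})$ and $\norm{b(w)} \le c_0\norm{e}$. All of these are Lipschitz in the trace $e$. Part (ii) of Lemma \ref{lemma: yu invariance} then gives finiteness of the expectations $h,g,h_\infty,g_\infty,L$, with
\begin{align}
h(x,y)=-Cx+Ay+b, \qquad g(x,y)=-D^\top x + Ay + b .
\end{align}
The same part (ii) gives \eqref{eq lln new} for each of the functions listed in Assumption \ref{assumption: lln}, since each is Lipschitz in $e$ for fixed $(x,y)$. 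We then pass from \eqref{eq lln new} to the $\alpha$- and $\beta$-weighted statements. For $1/n$-type step sizes this is immediate. Otherwise we need LIL-type control as in the remark after Assumption \ref{assumption: lln}. I would cite \citet{yu2017convergence,liu2025ode} for that, or restrict to step sizes whose exponents make \eqref{eq lln new} sufficient; this is a point where I am least sure the paper's stated hypotheses suffice as written.

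The substantive step is Assumption \ref{assumption: lim h,g uniformly convergent}. The fast ODE $\dot x = -Cx + Ay (+b)$ has $C=\Phi^\top D_\mu\Phi$ positive definite, because $\Phi$ has full rank and $d_\mu>0$ (irreducibility). Hence it is globally exponentially stable with equilibria
\begin{align}
\lambda(y)=C^{-1}(Ay+b), \qquad \lambda_\infty(y)=C^{-1}Ay .
\end{align}
Both are Lipschitz, and $\lambda_\infty$ is linear, so homogeneous with $\lambda_\infty(0)=0$. The convergence $h_c\to h_\infty$ is uniform on compacts since the difference is $b/c$.

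The slow ODE is $\dot y = (A - D^\top C^{-1}A)y + (\text{const})$, i.e.\ $\dot y = M y + m$ with $M = A - D^\top C^{-1}A$ in the $\infty$ case. Using $A = \Phi^\top D_\mu(P_{\pi,\lambda}-I)\Phi$ and $D=\Phi^\top D_\mu P_{\pi,\lambda}\Phi$, I expect $A - D^\top C^{-1} A = -A^\top C^{-1}A$, as in the standard TDC analysis \citep{sutton2009fast,yu2017convergence}. This requires checking the transpose convention in $G$, namely that $A^\top = D^\top - C$ up to the $(1-\lambda)\gamma$ factor. I expect this algebraic identity to be the main obstacle, because the exact form of $D(w)$ with the $(1-\lambda)\gamma$ factor must match $A^\top + C$. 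I would verify it first via the identity $P_{\pi,\lambda} = (1-\lambda)\gamma P_\pi (I-\lambda\gamma P_\pi)^{-1}$-type expansions from \citet{yu2017convergence}.

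Granting the identity, $A$ invertible and $C\succ 0$ give that $-A^\top C^{-1}A$ is negative definite (symmetric). So both $\dot y = g_\infty(\lambda_\infty(y),y)$ and $\dot y = g(\lambda(y),y)$ have unique globally asymptotically stable equilibria, namely $0$ and $y^*=-A^{-1}b$ respectively. Theorem \ref{cor: convergence full} then yields $\theta_t\to -A^{-1}b$ and $\nu_t \to \lambda(y^*) = 0$ almost surely.
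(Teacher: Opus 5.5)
Your proposal follows the paper's proof essentially step for step. You verify Assumptions~\ref{assumption: stationary distribution}--\ref{assumption: lln} for the affine $H,G$, take $\lambda(y)=C^{-1}(Ay+b)$ and $\lambda_\infty(y)=C^{-1}Ay$, reduce the slow ODEs to $\dot y=-A^\top C^{-1}(Ay+b)$ and $\dot y=-A^\top C^{-1}Ay$, and read off $\theta_t\to -A^{-1}b$, $\nu_t\to 0$. Your choice $L_b(w)=0$ is the right reading of Assumption~\ref{assumption: H c H infty}, which asks for Lipschitzness in $(x,y)$. It is cleaner than the paper's remark about Lipschitzness in $e$.

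You single out the identity $D^\top=A^\top+C$ as the main obstacle and then grant it, but it closes in one line, and that one line is all the paper does. With $C=\Phi^\top D_\mu\Phi$ and the closed forms you quote,
\[
D-A=\Phi^\top D_\mu P_{\pi,\lambda}\Phi-\Phi^\top D_\mu(P_{\pi,\lambda}-I)\Phi=C .
\]
So $D=A+C$, and since $C$ is symmetric, $D^\top=A^\top+C$. Hence $A-D^\top C^{-1}A=-A^\top C^{-1}A$, and likewise $(I-D^\top C^{-1})(Ay+b)=-A^\top C^{-1}(Ay+b)$.

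There is no leftover $(1-\lambda)\gamma$ factor. It is already inside $P_{\pi,\lambda}=(I-\lambda\gamma P_\pi)^{-1}(1-\lambda)\gamma P_\pi$, because $(I-\lambda\gamma P_\pi)^{-1}(\gamma P_\pi-I)+I=(I-\lambda\gamma P_\pi)^{-1}(1-\lambda)\gamma P_\pi$.

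Do fix the transpose in your $G_c$. The correction term $-\rho_t(1-\lambda)\gamma\phi_{t+1}e_t^\top\nu_t$ equals $-D(w)^\top x$, not $-D(w)x$; your $g$ has it right. With $-D$ you would instead need $D=A^\top+C$, which fails whenever $A$ is not symmetric.

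Your doubt about Assumption~\ref{assumption: lln} is well founded, and the paper does not resolve it either. It asserts the assumption ``follows immediately'' from Lemma~\ref{lemma: yu invariance} and Assumption~\ref{assumption: learning ratios}, but that lemma only gives \eqref{eq lln new}.

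Your fallback of choosing exponents so that \eqref{eq lln new} suffices cannot work for the fast step size. For large $i$ we have $\beta(i+1)\ge(1-K\beta(i))\beta(i)\ge\beta(i)/2$, so \eqref{eq 1/n beta} gives
\[
\frac{1}{\beta(i+1)}-\frac{1}{\beta(i)}=\frac{\beta(i)-\beta(i+1)}{\beta(i)\beta(i+1)}\le\frac{K\beta(i)}{\beta(i+1)}\le 2K .
\]
Hence $\beta(n)\ge c/n$ eventually, and then \eqref{eq lr ratio} forces $n\alpha(n)\to\infty$. Writing
\[
\alpha(n)\sum_{i\le n}\bigl(\gamma(W_i)-\E[\gamma]\bigr)=n\alpha(n)\cdot\frac{1}{n}\sum_{i\le n}\bigl(\gamma(W_i)-\E[\gamma]\bigr),
\]
the left side is therefore not controlled by the LLN under any admissible schedule. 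The $\beta$-weighted half is fine when $\beta(n)\asymp 1/n$.

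Completing this step requires an almost-sure rate for the unbounded trace chain, such as the LIL in the remark after Assumption~\ref{assumption: lln}. Neither your proposal nor the paper establishes such a rate.
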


\begin{proof}
We show that all the assumptions are satisfied such that Theorem~\ref{cor: convergence full} applies. 

Assumption~\ref{assumption: stationary distribution} follows immediately from Lemma~\ref{lemma: yu invariance}.

The assumption~\ref{assumption: learning ratios} follows immediately from the choice of appropriate learning rates.

For Assumption~\ref{assumption: H c H infty}, define
\begin{align}
H_\infty(x, y, w) \doteq& \mqty[-C(w) & A(w)] \mqty[x \\ y], \\
G_\infty(x, y, w) \doteq& \mqty[-D^\top(w) & A(w)] \mqty[x \\ y].
\end{align}
Then we have
\begin{align}
H_c(x, y, w) - H_\infty(x, y, w) =\frac{b(w)}{c} , \\
G_c(x, y, w) - G_{\infty}(x, y, w) = \frac{b(w)}{c}
\end{align}
After noticing
\begin{align}
\norm{b((s, a, s', e)) - b((s, a, s', e'))} = \rho(s, a) \abs{r(s, a)} \norm{e - e'}, \quad \forall s, a, s', e, e',
\end{align}
Assumption~\ref{assumption: H c H infty} follows immediately from Lemma~\ref{lemma: yu invariance}.

For Assumption~\ref{assumption: H Lipschitz},
it can be easily verified that $H(x, y, w), G(x, y, w), H_\infty(x, y, w),$ and $G_\infty(x, y, w)$ are Lipschitz continuous in $(x,y)$ for each $w$, with the Lipschitz constant being
\begin{align}
    L(w) =& \max \qty{\norm{\mqty[-C(w) & A(w) ]},\norm{\mqty[-D^\top(w) & A(w) ]}}
\end{align}

Since $A(w), b(w), C(w), D(w)$ are Lipschitz continuous in $e$ for each $(s, a, s')$,
Lemma~\ref{lemma: yu invariance} implies that
\begin{align}
h(x, y) =& \mqty[-C & A ] \mqty[x \\ y] + b, \\
h_\infty(x, y) =& \mqty[-C & A ] \mqty[x \\ y], \\
g(x, y) =& \mqty[-D^\top & A ] \mqty[x \\ y] + b \\
g_\infty(x, y) =& \mqty[-D^\top & A ] \mqty[x \\ y], \\
L =& \max \qty{\norm{\mqty[-C & A ]},\norm{\mqty[-D^\top & A ]}} .
\end{align}
Assumption~\ref{assumption: H Lipschitz} then follows.

For Assumption~\ref{assumption: lim h,g uniformly convergent},
we have
\begin{align}
    \norm{h_c(x, y) - h_\infty(x, y)} =& \frac{1}{c}\norm{b}, \\
    \norm{g_c(x, y) - g_\infty(x, y)} =& \frac{1}{c}\norm{b},
\end{align}
so the uniform convergence of $h_c$ to $h_\infty$ and $g_c$ to $g_\infty$ follow immediately.

Since $\Phi$ has full column rank and $D_{\mu}$ is positive definite (due to irreducibility of $\qty{S_t}$), $C=\Phi^\top D_{\mu} \Phi$ is positive definite, so $-C$ is negative definite. This implies the global asymptotic stability of these two ODEs:
\begin{align}
    \dv{x(t)}{t} = h(x(t),y) = -Cx(t) +Ay+ b, \quad \dv{x(t)}{t} = h_{\infty}(x(t),y)= -C x(t) + Ay.
\end{align}
The unique globally asymptotically stable equilibrium of the first is $C^{-1}(Ay+b)$, and of the second is $C^{-1}Ay$. This means that we can define $\lambda(y) = C^{-1}(Ay+b)$ and $\lambda_{\infty}(y) = C^{-1}Ay$. Note that $\lambda_{\infty}(cy) = c\lambda_{\infty}(y)$ as required.

Now we will analyze two ODEs:
\begin{align*}
    \dv{y(t)}{t} =& g(\lambda(y(t)), y(t)) \\
    =&  -D^\top \lambda(y(t))+Ay(t)+b\\
    =& -D^\top (C^{-1}(Ay(t)+b))+Ay(t)+b \\
    =& (I-D^\top C^{-1})(Ay(t)+b) \\
    =& (I-(A^\top + C)C^{-1})(Ay(t)+b) \explain{by $A+C=D$, $C=C^\top$} \\
    =& -A^\top C^{-1} (Ay(t)+b),
\end{align*}

\begin{align*}
    \dv{y(t)}{t} =& g_{\infty}(\lambda_{\infty}(y(t)), y(t)) \\
    =&  -D^\top \lambda_{\infty}(y(t))+Ay(t)\\
    =& -D^\top C^{-1} Ay(t)+Ay(t) \\
    =& (I-D^\top C^{-1})Ay(t) \\
    =& (I-(A^\top + C)C^{-1})Ay(t) \explain{by $A+C=D$, $C=C^\top$} \\
    =& -A^\top C^{-1} Ay(t),
\end{align*}

Since $A$ is nonsingular and $C$ is positive definite, $-A^\top C^{-1} A$ is negative definite. This implies the global asymptotic stability of the previous two ODEs. The unique globally asymptotically stable equilibrium of the first is $-A^{-1}b$, and of the second is $0$. Assumption~\ref{assumption: lim h,g uniformly convergent} then follows.

Assumption~\ref{assumption: lln} follows immediately from Lemma~\ref{lemma: yu invariance} and Assumption~\ref{assumption: learning ratios}.

Theorem~\ref{cor: convergence full} then implies that
\begin{align}
\lim_{t\to\infty} x_t &= 0 \qq{a.s.} \\
\lim_{t\to\infty} y_t &= -A^{-1}b \qq{a.s.}
\end{align}
which completes the proof.
\end{proof}

\newpage
\section{Technical Lemmas}

\subsection{Asymptotic Rate of Change of Functions in Assumption \ref{assumption: lln}} \label{appendix: H h 0}
\begin{lemma}\label{lemma: H h 0}
Let Assumptions~\ref{assumption: stationary distribution},~\ref{assumption: learning ratios}, \ref{assumption: H Lipschitz}, and \ref{assumption: lln} hold.
Then the asymptotic rate of change of the functions that could be represented by $\gamma$ in Assumption \ref{assumption: lln} is 0, i.e., for any fixed $\tau > 0$ and $x$, it holds that
\begin{align}
    \limsup_n \sup_{-\tau \leq t_1 \leq t_2 \leq \tau} \norm{ \sum_{i = m(t(n) + t_1)}^{m(t(n) + t_2) - 1} \alpha(i) \left[H(x, y, W_{i+1}) - h(x, y) \right] }    & = 0 \quad a.s., \label{eq: H h minus 0}\\
    \limsup_n \sup_{-\tau \leq t_1 \leq t_2 \leq \tau} \norm{ \sum_{i = m(t(n) + t_1)}^{m(t(n) + t_2) - 1} \alpha(i) \left[G(x, y, W_{i+1}) - g(x, y) \right] }    & = 0 \quad a.s., \label{eq: G g minus 0}\\
\limsup_n \sup_{-\tau \leq t_1 \leq t_2 \leq \tau} \norm{\sum_{i = m(t(n) + t_1)}^{m(t(n) + t_2) - 1} \alpha(i) [L_b(W_{i+1}) - L_b] }    &= 0 \quad a.s., \label{eq: L b property minus 0}\\
\limsup_n \sup_{-\tau \leq t_1 \leq t_2 \leq \tau} \norm{\sum_{i = m(t(n) + t_1)}^{m(t(n) + t_2) - 1} \alpha(i) [L(W_{i+1}) - L] }    &= 0 \quad a.s. \label{eq: L property minus 0}
    \end{align}
and we can replace $\alpha(i)$ with $\beta(i)$ in any of the above.
\end{lemma}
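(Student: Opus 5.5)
The plan is to reduce the statement to Assumption~\ref{assumption: lln} by writing the windowed sums as differences of partial sums. I will do this for a generic $\gamma$ from Assumption~\ref{assumption: lln}, writing $\bar\gamma \doteq \E_{w\sim d_\fW}[\gamma(w)]$. Set $U_j \doteq \sum_{i=1}^{j}(\gamma(W_i)-\bar\gamma)$. With $\alpha(-1)\doteq 0$, summation by parts gives
\begin{align}
\sum_{i=a}^{b-1}\alpha(i)[\gamma(W_{i+1})-\bar\gamma] = \alpha(b-1)U_b - \alpha(a-1)U_a - \sum_{i=a}^{b-1}(\alpha(i)-\alpha(i-1))U_i ,
\end{align}
where $a = m(t(n)+t_1)$ and $b = m(t(n)+t_2)$. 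By Assumption~\ref{assumption: lln}, $\alpha(j)U_{j+1}\to 0$ a.s.; since $\alpha(j)/\alpha(j+1)\to 1$ by \eqref{eq 1/n alpha}, also $\alpha(j)U_j\to 0$. As $n\to\infty$ with $\tau$ fixed, both $a$ and $b$ tend to infinity, except when $t(n)+t_1<0$, where $m(\cdot)=0$ and the convention \eqref{eq: alpha 0} kills the corresponding terms. Hence the two boundary terms vanish uniformly in $t_1,t_2\in[-\tau,\tau]$.

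The remaining sum is the main point. Decreasing step sizes and \eqref{eq 1/n alpha} give $|\alpha(i)-\alpha(i-1)| \le C\alpha(i-1)^2$. Therefore
\begin{align}
\Big\|\sum_{i=a}^{b-1}(\alpha(i)-\alpha(i-1))U_i\Big\| \le C\sum_{i=a}^{b-1}\alpha(i-1)\,\big(\alpha(i-1)\|U_i\|\big) \le C\Big(\sup_{i\ge a}\alpha(i-1)\|U_i\|\Big)\sum_{i=a}^{b-1}\alpha(i-1).
\end{align}
The factor $\sup_{i\ge a}\alpha(i-1)\|U_i\|$ tends to $0$. The factor $\sum_{i=a}^{b-1}\alpha(i-1)$ is at most $t(b)-t(a)+\alpha(0)\le 2\tau+2\alpha(0)$, using \eqref{eq: t-m-inequality-1}. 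So this term also vanishes uniformly over $-\tau\le t_1\le t_2\le\tau$, and the $\limsup$ is $0$.

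Applying this with $\gamma = H(x,y,\cdot)$, $G(x,y,\cdot)$, $L_b$ and $L$ gives \eqref{eq: H h minus 0}--\eqref{eq: L property minus 0}. Here I use $h(x,y)=\E[H(x,y,w)]$, $g(x,y)=\E[G(x,y,w)]$, and $L_b$, $L$ as defined in Assumptions~\ref{assumption: H c H infty} and \ref{assumption: H Lipschitz}. For vector-valued $\gamma$ the argument runs coordinatewise, or directly with norms.

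All of these hold on the common probability-one set in Assumption~\ref{assumption: lln}. Note that $(x,y)$ is fixed, so no uniformity in $(x,y)$ is required.

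The $\beta$ versions follow by the identical argument, using \eqref{eq 1/n beta} and the $\beta$ part of Assumption~\ref{assumption: lln}. In the mixed case, where the time grid is built from $\alpha$ but the weights are $\beta(i)$, I additionally bound $\sum\beta(i-1)\le\sum\alpha(i-1)$ for large $i$, using $\beta/\alpha\to 0$.

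The main obstacle I expect is bookkeeping at the index boundaries: the shift between $U_i$ and $U_{i+1}$, and the window length bound near $t(n)+t_1\le 0$. These are handled by the conventions \eqref{eq: alpha 0} and \eqref{eq: m 0}, together with $\alpha(i)/\alpha(i+1)\to 1$.
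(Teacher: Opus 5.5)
Your summation-by-parts argument is correct and is essentially the proof the paper defers to (Lemma 9 of \citet{liu2025ode}, following Kushner--Yin): the step-size condition \eqref{eq 1/n alpha} controls $\alpha(i-1)-\alpha(i)$, and the window length $\sum\alpha(i-1)\le 2\tau+2\alpha(0)$ keeps the remainder uniformly small. One cosmetic slip is that you have the two limits the wrong way round: Assumption~\ref{assumption: lln} gives $\alpha(j)U_j\to 0$ directly, and the ratio $\alpha(j)/\alpha(j+1)\to 1$ then yields the shifted form $\alpha(j)U_{j+1}\to 0$, which is the form your boundary and remainder terms actually use; this is harmless.
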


The proofs of these results are very similar to the proof of Lemma 9 in \citet{liu2025ode} and so are omitted due to their length.

\subsection{A Uniform Convergence of $H_c, G_c$ 
 to $H_{\infty}, G_{\infty}$}\label{appendix: lim H uniformly convergent}

\begin{lemma}\label{lemma: lim H uniformly convergent}
Let Assumptions~\ref{assumption: stationary distribution},~\ref{assumption: learning ratios}, \ref{assumption: H Lipschitz}, and \ref{assumption: lln} hold.
It then holds that
\begin{align}
&\lim_{c \to \infty} \sup_{z \in \mathcal{B}} \sup_n \sup_{t \in [0,T]} \norm{ \sum_{i=m(T_n)}^{m(T_n+t) - 1} \alpha(i) \left[H_c(z, W_{i+1}) - H_\infty(z, W_{i+1}) \right] }  = 0  \qq{a.s.,} \\
&\lim_{c \to \infty} \sup_{z \in \mathcal{B}} \sup_n \sup_{t \in [0,T]} \norm{ \sum_{i=m(T_n)}^{m(T_n+t) - 1} \alpha(i) \left[G_c(z, W_{i+1}) - G_\infty(z, W_{i+1}) \right] }  = 0  \qq{a.s.,} 
\end{align}
where $\mathcal{B}$ denotes an arbitrary compact set of $\R^{d_1+d_2}$. Again, $\alpha(i)$ can be replaced with $\beta(i)$.
\end{lemma}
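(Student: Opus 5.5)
The plan is to reduce the statement to the structure in Assumption~\ref{assumption: H c H infty} and then apply the averaging estimates in Lemma~\ref{lemma: H h 0}. I treat the $H$ case; the $G$ case and the $\beta$ version are identical.

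By \eqref{eq: H c H inf kappa}, for every $z$ and $w$ we have $H_c(z,w)-H_\infty(z,w)=\kappa_H(c)\,b_H(z,w)$. The quantity inside the limit therefore equals
\begin{align}
\abs{\kappa_H(c)}\,\sup_{z\in\mathcal{B}}\sup_n\sup_{t\in[0,T]}\norm{\sum_{i=m(T_n)}^{m(T_n+t)-1}\alpha(i)\, b_H(z,W_{i+1})}.
\end{align}
Since $\kappa_H(c)\to 0$ by \eqref{eq: kappa to 0}, it suffices to show that the remaining triple supremum is a finite, sample-path-dependent constant that does not depend on $c$. That uniform-in-$n$ bound is the whole content of the proof.

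To get the bound, fix a reference point $z_0\in\mathcal{B}$ and let $D=\sup_{z\in\mathcal{B}}\norm{z-z_0}<\infty$. The Lipschitz property \eqref{b_H lipschitz} gives
\begin{align}
\norm{b_H(z,w)}\le \norm{b_H(z_0,w)}+D\,L_b(w).
\end{align}
It then suffices to bound $\sup_n\sup_t\sum_{i}\alpha(i)L_b(W_{i+1})$ and $\sup_n\sup_t\norm{\sum_i \alpha(i)b_H(z_0,W_{i+1})}$ over windows $[m(T_n),m(T_n+t))$.

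\emph{First sum.} Write $L_b(W_{i+1})=L_b+(L_b(W_{i+1})-L_b)$. The deterministic part is at most $L_b(T+1)$, since each window has $\alpha$-length at most $T+\alpha(0)$. The fluctuation part tends to $0$ uniformly over windows by \eqref{eq: L b property minus 0}, taking $\tau=T+1$ and $t(n)=T_n$. So its limsup over $n$ is $0$, and the finitely many remaining $n$ each contribute a finite amount. Hence the supremum over all $n$ is finite.

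\emph{Second sum.} Here I need a law of large numbers for $w\mapsto b_H(z_0,w)$, which is not directly among the functions listed in Assumption~\ref{assumption: lln}. This is the main obstacle. I would resolve it by writing
\begin{align}
b_H(z_0,w)=\frac{H_c(z_0,w)-H_\infty(z_0,w)}{\kappa_H(c_0)}
\end{align}
for one fixed $c_0$ with $\kappa_H(c_0)\neq 0$. (If no such $c_0$ exists, then $H_c\equiv H_\infty$ and the claim is trivial.)

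Now $H_{c_0}(z_0,w)=H(c_0z_0,w)/c_0$ is covered by Assumption~\ref{assumption: lln}, so \eqref{eq: H h minus 0} controls its fluctuations. For $H_\infty(z_0,w)$ I would use the Lipschitz bound $\norm{H_\infty(z_0,w)}\le \norm{H_\infty(0,w)}+L(w)\norm{z_0}$. Then $H_\infty(0,w)=H_{c}(0,w)-\kappa_H(c)b_H(0,w)$ gives the same type of decomposition, and the $L(w)$ term is handled by \eqref{eq: L property minus 0}.

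If this circular reduction becomes awkward, I would fall back on the approach of Lemma~10 of \citet{liu2025ode}. That approach bounds the sums in norm using only $L(w)$, $L_b(w)$ and the values $H(0,w)$, all of which are averaged by Assumption~\ref{assumption: lln}, so every piece reduces to a deterministic expectation times $T+1$ plus a fluctuation whose limsup is $0$.

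Combining the two sums gives a finite constant $M$, independent of $c$, with the triple supremum at most $M$. The expression in the lemma is then bounded by $\abs{\kappa_H(c)}\,M\to 0$ as $c\to\infty$, almost surely on the probability-one set where Assumption~\ref{assumption: lln} holds.
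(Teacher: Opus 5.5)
Your skeleton matches the paper's. You factor $\kappa_H(c)$ out via \eqref{eq: H c H inf kappa} and show that $F(z)\doteq\sup_n\sup_{t\in[0,T]}\norm{\sum_i\alpha(i)b_H(z,W_{i+1})}$ is bounded on $\mathcal{B}$ by a constant independent of $c$.

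The paper argues that $F$ is Lipschitz with constant $\sup_n\sup_t\sum_i\alpha(i)L_b(W_{i+1})$ and then invokes the extreme value theorem. Your reference-point bound $F(z)\le F(z_0)+D\sup_n\sup_t\sum_i\alpha(i)L_b(W_{i+1})$ is the same estimate. You are right that it only helps once $F(z_0)<\infty$ at one point, and the paper's proof never checks this: Lipschitz continuity tells you nothing about a function that might be identically $+\infty$.

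One small correction in that step: apply \eqref{b_H lipschitz} to the difference $b_H(z,W_{i+1})-b_H(z_0,W_{i+1})$ inside the sum, not pointwise to $\norm{b_H(z,w)}$. Otherwise you end up needing $\sum_i\alpha(i)\norm{b_H(z_0,W_{i+1})}$, a sum of norms that Assumption~\ref{assumption: lln} does not control.

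The genuine gap is in how you establish $F(z_0)<\infty$. Your primary route is circular: expanding $H_\infty(0,w)=H_c(0,w)-\kappa_H(c)b_H(0,w)$ brings back $b_H(0,\cdot)$, which is exactly the kind of quantity you are trying to bound. Your fallback is a citation rather than an argument.

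The missing observation is that $H_\infty(0,w)=0$ for every $w$. Since $b_H$ is finite-valued and $\kappa_H(c)\to0$, \eqref{eq: H c H inf kappa} gives $H_c\to H_\infty$ pointwise, while $H_c(0,w)=H(0,w)/c\to0$. Consequently $\norm{H_\infty(z_0,w)}\le L(w)\norm{z_0}$ by \eqref{eq: H L inf}. Writing $S_{n,t}(f)\doteq\sum_{i=m(T_n)}^{m(T_n+t)-1}\alpha(i)f(W_{i+1})$, you get
\begin{align}
\norm{S_{n,t}(b_H(z_0,\cdot))}\le\frac{1}{\abs{\kappa_H(c_0)}}\left(\frac{1}{c_0}\norm{S_{n,t}(H(c_0z_0,\cdot))}+\norm{z_0}\,S_{n,t}(L(\cdot))\right).
\end{align}
Both functions on the right are listed in Assumption~\ref{assumption: lln}. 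Lemma~\ref{lemma: H h 0} therefore gives a bound uniform in $n$ and $t$: the fluctuations have vanishing limsup, the mean part is at most the mean times a window length $\le T$, and the finitely many small $n$ are trivially finite.

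Two simpler variants also work:
\begin{itemize}
\item Take $z_0=0$; nothing requires $z_0\in\mathcal{B}$. Then $b_H(0,w)=H(0,w)/(c_0\kappa_H(c_0))$, which is probably what your fallback was reaching for.
\item Subtract \eqref{eq: H c H inf kappa} at two values $c_0,c_1$ with $\kappa_H(c_0)\neq\kappa_H(c_1)$. This expresses $b_H(z_0,\cdot)$ as a combination of $H(c_0z_0,\cdot)$ and $H(c_1z_0,\cdot)$ and avoids $H_\infty$ entirely.
\end{itemize}
With this step filled in, your argument is complete and more careful than the paper's.
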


\begin{proof}
Fix an arbitrary sample path $\qty{x_0,y_0, \qty{W_i}_{i=1}^\infty}$.
Use $\mathcal{B}$ to denote an arbitrary compact subset of $\R^{d_1+d_2}$.
\begin{align}
&\lim_{c \to \infty} \sup_{z \in \mathcal{B}} \sup_n \sup_{t \in [0,T]} \norm{ \sum_{i=m(T_n)}^{m(T_n+t) - 1} \alpha(i) \left[H_c(z, W_{i+1}) - H_\infty(z, W_{i+1}) \right] }  \\     
=&\lim_{c \to \infty} \sup_{z \in \mathcal{B}} \sup_n \sup_{t \in [0,T]} \norm{\sum_{i=m(T_n)}^{m(T_n+t) - 1} \alpha(i) \kappa(c) b(z, W_{i+1})} \explain{by \eqref{eq: H c H inf kappa}} \\
=&\lim_{c \to \infty}  \kappa(c)   \sup_{z \in \mathcal{B}} \sup_n\sup_{t \in [0,T]} \norm{\sum_{i=m(T_n)}^{m(T_n+t) - 1} \alpha(i) b(z,W_{i+1})}  \\
=& 0   \sup_{z \in \mathcal{B}} \sup_n \sup_{t \in [0,T]} \norm{\sum_{i=m(T_n)}^{m(T_n+t) - 1} \alpha(i) b(z, W_{i+1})} \label{eq: H c H inf 0 finite} 
\end{align}
We now show that the function
\begin{align}
\label{eq a function of x}
z \mapsto \sup_n \sup_{t \in [0,T]} \norm{\sum_{i=m(T_n)}^{m(T_n+t) - 1} \alpha(i) b(z, W_{i+1})}
\end{align}
is Lipschitz continuous.
$\forall z, z'$,

\begin{align}
&\abs{\sup_n \sup_{t \in [0,T]} \norm{\sum_{i=m(T_n)}^{m(T_n+t) - 1} \alpha(i) b(z, W_{i+1})} - \sup_n \sup_{t \in [0,T]} \norm{\sum_{i=m(T_n)}^{m(T_n+t) - 1} \alpha(i) b(z', W_{i+1})}} \\
\leq&\sup_n \sup_{t \in [0,T]} \abs{\norm{\sum_{i=m(T_n)}^{m(T_n+t) - 1} \alpha(i) b(z, W_{i+1})} - \norm{\sum_{i=m(T_n)}^{m(T_n+t) - 1} \alpha(i) b(z', W_{i+1})}} \explain{by $\abs{\sup_z f(z) - \sup_z g(z)} \leq \sup_z \abs{f(z) - g(z)} $} \\
\leq&\sup_n \sup_{t \in [0,T]} \norm{\sum_{i=m(T_n)}^{m(T_n+t) - 1} \alpha(i) b(z, W_{i+1}) - \sum_{i=m(T_n)}^{m(T_n+t) - 1} \alpha(i) b(z', W_{i+1})} \\
\leq&\sup_n \sup_{t \in [0,T]} \sum_{i=m(T_n)}^{m(T_n+t) - 1} \alpha(i) \norm{ b(z, W_{i+1}) - b(z', W_{i+1})} \\
\leq& \sup_n \sup_{t \in [0,T]} \left(\sum_{i=m(T_n)}^{m(T_n+t) - 1} \alpha(i) L_b(W_{i+1})\right) \norm{z - z'} 
\end{align}
By Lemma~\ref{lemma: H h 0} and \eqref{eq: L b property bounded},
\begin{align}
\sup_n \sup_{t \in [0,T]} \left(\sum_{i=m(T_n)}^{m(T_n+t) - 1} \alpha(i) L_b(W_{i+1})\right) < \infty
\end{align}
can be viewed as the Lipschitz constant.
Thus, \eqref{eq a function of x} is a continuous function. 
Since $\mathcal{B}$ is compact,
the extreme value theorems asserts that the supremum of~\eqref{eq a function of x} in $\mathcal{B}$ is attainable at some $z_{\mathcal{B}}$ and is finite.
This means
the RHS of~\eqref{eq: H c H inf 0 finite} is 0, so

\begin{align}
&\lim_{c \to \infty} \sup_{z \in \mathcal{B}} \sup_n \sup_{t \in [0,T]} \norm{ \sum_{i=m(T_n)}^{m(T_n+t) - 1} \alpha(i) \left[H_c(z, W_{i+1}) - H_\infty(z, W_{i+1}) \right] }  = 0. \label{eq: b e b 0}
\end{align}
The proofs for the other statements follow similar arguments and so are omitted.
\end{proof}

\subsection{Definitions and Proofs Relating to Equicontinuity}\label{appendix: equicontinuity}

\begin{definition}
    A sequence of functions $\qty{\gamma_n: [0, T) \to \R^K }$ is equicontinuous on $ [0, T)$ if \\
    $\sup_n \norm{\gamma_n(0)}  < \infty$ and $\forall \epsilon > 0$, $\exists \delta > 0$ such that 
    \begin{align}
        \sup_n \sup_{0\leq \abs{t_1-t_2} \leq \delta, \, 0 \leq t_1 \leq t_2 < T} \norm{\gamma_n(t_1) - \gamma_n(t_2)} \leq \epsilon.    
    \end{align}
\end{definition}

A standard example of a family of equicontinuous functions is a sequence of bounded Lipschitz continuous functions with a common Lipschitz constant.
Clearly,
if $\qty{\gamma_n}$ is equicontinuous,
each $\gamma_n$ must be continuous.
However,
the functions of interest in this work, 
i.e., $\tilde z_n(t), f_n(t)$,
are not continuous, so equicontinuity cannot apply.
We, therefore,
define the following equicontinuity in the extended sense.

\begin{definition}\label{def: equicontinuous in the extend sense 0 T}
    A sequence of functions $\qty{\gamma_n: [0, T) \to \R^K }$ is equicontinuous in the extended sense on $[0, T)$ if $\sup_n \norm{\gamma_n(0)}  < \infty$  and $\forall \epsilon > 0$,  $\exists \delta > 0$ such that
    \begin{align}
        \limsup_n \sup_{0\leq \abs{t_1-t_2} \leq \delta, \, 0 \leq t_1 \leq t_2 < T  } \norm{\gamma_n(t_1) - \gamma_n(t_2)} \leq \epsilon. 
    \end{align}
\end{definition}

The following lemmas establish the desired equicontinuity,
where Lemma~\ref{lemma: H h 0} plays a key role. 

\begin{lemma}\label{lemma: hat z equicontinuous}
    $\qty{\tilde{z}_n(t)}_{n=0}^\infty$ is equicontinuous in the extended sense on $[0, T+1)$.    
\end{lemma}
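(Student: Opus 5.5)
We need $\sup_n \norm{\tilde z_n(0)}<\infty$, which is immediate from \eqref{eq: hat-z-norm-1} since $\norm{\tilde z_n(0)}\le 1$. The substance is the modulus-of-continuity bound. The plan is to bound the increments $\norm{\tilde z_n(t_2)-\tilde z_n(t_1)}$ uniformly on $[0,T+1)$ by a quantity of order $\delta$ plus a term vanishing as $n\to\infty$.

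\textbf{Step 1: uniform bound on the rescaled iterates.} For $0\le t_1\le t_2<T+1$, the recursion following Definition~\ref{definition: z tilde n} gives
\begin{align}
\tilde z_n(t_2)-\tilde z_n(t_1)=\sum_{i=m(T_n+t_1)}^{m(T_n+t_2)-1}\big(\alpha(i)H_{r_n}(\tilde z_n(t(i)-T_n),W_{i+1}),\ \beta(i)G_{r_n}(\tilde z_n(t(i)-T_n),W_{i+1})\big).
\end{align}
By Assumption~\ref{assumption: H Lipschitz}, $H_{r_n}$ and $G_{r_n}$ have Lipschitz constant $L(w)$ in $z$. Hence $\norm{H_{r_n}(z,w)}\le L(w)\norm{z}+\norm{H(0,w)}$, using $\norm{H(0,w)}/r_n\le\norm{H(0,w)}$ since $r_n\ge1$, and similarly for $G$. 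Since $\beta(i)\le\alpha(i)$ eventually (by \eqref{eq lr ratio}; finitely many $n$ do not affect a $\limsup$), taking $t_1=0$ yields
\begin{align}
\norm{\tilde z_n(t)}\le 1+\sum_{i=m(T_n)}^{m(T_n+t)-1}\alpha(i)\big(2L(W_{i+1})\norm{\tilde z_n(t(i)-T_n)}+\norm{H(0,W_{i+1})}+\norm{G(0,W_{i+1})}\big).
\end{align}
Lemma~\ref{lemma: H h 0}, applied to $L(w)$, $H(0,0,w)$ and $G(0,0,w)$, shows that $\sum_{i=m(T_n)}^{m(T_n+T+1)-1}\alpha(i)L(W_{i+1})$ converges to $(T+1)L$ asymptotically, and likewise for the other two sums. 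So for large $n$ these windowed sums are bounded by a sample-path constant. The discrete Gronwall inequality (Theorem~\ref{theorem: discrete Gronwall}) then gives $\sup_{t\in[0,T+1)}\norm{\tilde z_n(t)}\le C_0$ for all large $n$. The finitely many small $n$ only contribute finite values and are irrelevant for the $\limsup$.

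\textbf{Step 2: increment bound.} With $C_0$ in hand,
\begin{align}
\norm{\tilde z_n(t_2)-\tilde z_n(t_1)}\le\sum_{i=m(T_n+t_1)}^{m(T_n+t_2)-1}\alpha(i)\big(2C_0L(W_{i+1})+\norm{H(0,W_{i+1})}+\norm{G(0,W_{i+1})}\big).
\end{align}
Each function $\gamma\in\{L,\norm{H(0,\cdot)},\norm{G(0,\cdot)}\}$ is written as $\E[\gamma]$ plus a fluctuation. The mean part contributes at most $\E[\gamma]\,(t(m(T_n+t_2))-t(m(T_n+t_1)))\le\E[\gamma](\delta+\alpha(m(T_n+t_1)))$, using \eqref{eq: t-m-inequality-1} and \eqref{eq: t-m-inequality-2}. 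The fluctuation part has asymptotic rate of change zero by Lemma~\ref{lemma: H h 0}, uniformly over $t_1\le t_2$ in the window, after re-centering at $t(m(T_n))$ with $\tau=T+1$. Letting $n\to\infty$ and using $\alpha(i)\to0$ gives $\limsup_n\sup\norm{\tilde z_n(t_2)-\tilde z_n(t_1)}\le(2C_0L+\E\norm{H(0,\cdot)}+\E\norm{G(0,\cdot)})\delta$. Choosing $\delta$ proportional to $\epsilon$ completes the argument.

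\textbf{Main obstacle.} The delicate point is justifying Lemma~\ref{lemma: H h 0} for $\norm{H(0,0,w)}$. Assumption~\ref{assumption: lln} is stated for $w\mapsto H(x,y,w)$ rather than its norm. I would handle this componentwise: bound $\norm{\cdot}$ by the sum of absolute values of coordinates. If needed, I would instead avoid norms by applying Lemma~\ref{lemma: H h 0} directly to the vector sums $\sum\alpha(i)H(0,W_{i+1})$, which are controlled by the rate-of-change property plus $\norm{h(0)}\delta$. This second route is cleaner, since the Lipschitz part uses only $L(w)\ge0$, for which Assumption~\ref{assumption: lln} applies directly.
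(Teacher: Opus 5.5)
This is essentially the paper's proof. The paper also takes a uniform bound on $\tilde z_n(t)$ from a discrete Gronwall argument (Lemma~\ref{lemma: bound z hat}) and splits each increment at $H_{r_n}(0,\cdot)$. It bounds the Lipschitz part by a constant times $\sum\alpha(i)L(W_{i+1})$ via \eqref{eq: L property close t 0}, and controls the vector sum $\norm{\sum\alpha(i)H_{r_n}(0,W_{i+1})}$, with the norm outside, via \eqref{eq: h property close t 0}; that is exactly your second route.

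Commit to that route in both of your steps. Your componentwise alternative does not help, because $\abs{H_j(0,0,\cdot)}$ is no more covered by Assumption~\ref{assumption: lln} than $\norm{H(0,0,\cdot)}$ is. For the $G$ block, use the $\beta$-weighted version of Lemma~\ref{lemma: H h 0} on the vector sum. The inequality $\beta(i)\le\alpha(i)$ only helps for the nonnegative $L(W_{i+1})$ terms.
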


\begin{proof}\label{appendix: hat z equicontinuous}
By \eqref{eq: hat-z-norm-1},
\begin{align}
\sup_n \norm{\tilde{z}_n(0)} \leq 1   . 
\end{align}

Without loss of generality, let $t_1 \leq t_2$.
\begin{align}
&\limsup_n \sup_{ 0\leq t_2-t_1 \leq \delta } \norm{ \tilde{z}_n(t_1) - \tilde{z}_n(t_2)}  \\
=& \limsup_n \sup_{0\leq  t_2-t_1 \leq \delta } 
 \norm{\sum_{i = m(T_n+t_1)}^{m(T_n+t_2) - 1} (\alpha(i) H_{r_n}(\tilde{z}_n(t(i)-T_n), W_{i+1}), \beta(i) G_{r_n}(\tilde{z}_n(t(i)-T_n), W_{i+1})) }\\
\leq& \limsup_n \sup_{0\leq  t_2-t_1 \leq \delta } 
\norm{\sum_{i = m(T_n+t_1)}^{m(T_n+t_2) - 1} \alpha(i) H_{r_n}(\tilde{z}_n(t(i)-T_n), W_{i+1})} \\
&+ \limsup_n \sup_{0\leq  t_2-t_1 \leq \delta } 
\norm{\sum_{i = m(T_n+t_1)}^{m(T_n+t_2) - 1} \beta(i) G_{r_n}(\tilde{z}_n(t(i)-T_n), W_{i+1})}.
\end{align}

We bound each term individually. We start with the first term.

$\forall \xi > 0$, by \eqref{eq: h property close t 0},  $\exists \delta_0,$ such that $\forall 0< \delta \leq \delta_0$,
\begin{align}
\sup_{c\geq 1} \limsup_n \sup_{0\leq t_2-t_1 \leq \delta } \norm{ \sum_{i = m(T_n+t_1)}^{m(T_n+t_2) - 1} \alpha(i) H_c(0, W_{i+1}) } \leq \xi.  \label{eq: lemma hat x equicontinuous xi 1}   
\end{align}
By \eqref{eq: L property close t 0},  $\exists \delta_1,$ such that $\forall 0< \delta \leq \delta_1$,
\begin{align}
 \limsup_n \sup_{0\leq t_2-t_1 \leq \delta }  \sum_{i = m(T_n+t_1)}^{m(T_n+t_2) - 1} \alpha(i) L(W_{i+1}) \leq \xi.   \label{eq: lemma hat x equicontinuous xi 2} 
\end{align}
Without loss of generality, 
let $t_1 \leq t_2$.
Then $\forall \delta \leq \min \qty{\delta_0,\delta_1} $,  we have
\begin{align}
&\limsup_n \sup_{0\leq  t_2-t_1 \leq \delta } 
 \norm{\sum_{i = m(T_n+t_1)}^{m(T_n+t_2) - 1} \alpha(i) H_{r_n}(\tilde{z}_n(t(i)-T_n), W_{i+1}) }\\
\leq&  \limsup_n \sup_{0\leq  t_2-t_1 \leq \delta } \norm{\sum_{i = m(T_n+t_1)}^{m(T_n+t_2) - 1} \alpha(i) H_{r_n}(\tilde{z}_n(t(i)-T_n), W_{i+1}) } -  \norm{\sum_{i = m(T_n+t_1)}^{m(T_n+t_2) - 1} \alpha(i) H_{r_n}(0, W_{i+1}) } \\
&+ \limsup_n \sup_{0\leq  t_2-t_1 \leq \delta } \norm{\sum_{i = m(T_n+t_1)}^{m(T_n+t_2) - 1} \alpha(i) H_{r_n}(0, W_{i+1}) }  \\
\leq&  \limsup_n \sup_{0\leq  t_2-t_1 \leq \delta } \norm{\sum_{i = m(T_n+t_1)}^{m(T_n+t_2) - 1} \alpha(i) H_{r_n}(\tilde{z}_n(t(i)-T_n), W_{i+1}) } -  \norm{\sum_{i = m(T_n+t_1)}^{m(T_n+t_2) - 1} \alpha(i) H_{r_n}(0, W_{i+1}) } \\
&+ \sup_{c\geq 1} \limsup_n \sup_{0\leq  t_2-t_1 \leq \delta } \norm{\sum_{i = m(T_n+t_1)}^{m(T_n+t_2) - 1} \alpha(i) H_c(0, W_{i+1}) }  \\
\leq&  \limsup_n \sup_{0\leq  t_2-t_1 \leq \delta } \norm{\sum_{i = m(T_n+t_1)}^{m(T_n+t_2) - 1} \alpha(i) H_{r_n}(\tilde{z}_n(t(i)-T_n), W_{i+1}) } -  \norm{\sum_{i = m(T_n+t_1)}^{m(T_n+t_2) - 1} \alpha(i) H_{r_n}(0, W_{i+1}) } \\
&+ \xi  \explain{by \eqref{eq: lemma hat x equicontinuous xi 1}}\\
\leq&  \limsup_n \sup_{0\leq  t_2-t_1 \leq \delta }  \norm{\sum_{i = m(T_n+t_1)}^{m(T_n+t_2) - 1} \alpha(i) H_{r_n}(\tilde{z}_n(t(i)-T_n), W_{i+1})  -  \sum_{i = m(T_n+t_1)}^{m(T_n+t_2) - 1} \alpha(i) H_{r_n}(0, W_{i+1}) } \\
&+   \xi \\
\leq& \limsup_n \sup_{0\leq  t_2-t_1 \leq \delta } \sum_{i = m(T_n+t_1)}^{m(T_n+t_2) - 1} \alpha(i)  \norm{H_{r_n}(\tilde{z}_n(t(i)-T_n), W_{i+1}) -  H_{r_n}(0, W_{i+1})  } +   \xi\\
\leq& \limsup_n  \sup_{0\leq  t_2-t_1 \leq \delta }  \sum_{i = m(T_n+t_1)}^{m(T_n+t_2) - 1} \alpha(i)  L(W_{i+1})\norm{\tilde{z}_n(t(i)-T_n)}  +    \xi \\
\leq&  (C_{\hat{x}}+C_{\hat{y}}) \limsup_n \sup_{0\leq  t_2-t_1 \leq \delta }  \sum_{i = m(T_n+t_1)}^{m(T_n+t_2) - 1} \alpha(i)  L(W_{i+1})  +     \xi \explain{by Lemma~\ref{lemma: bound z hat}}  \\
\leq&  (C_{\hat{x}}+C_{\hat{y}})  \xi  +     \xi. \explain{by \eqref{eq: lemma hat x equicontinuous xi 2}}
\end{align}
A similar argument bounds the other term, implying that $\qty{\tilde{z}_n(t)}$  is equicontinuous in the extended sense.    
\end{proof}

\begin{lemma}\label{lemma: z n equicontinuous}
    $\qty{z_n(t)}$ is equicontinuous on $[0, T+1)$.
\end{lemma}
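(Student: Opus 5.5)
The plan is to check the two ingredients of equicontinuity in the definition directly. Each $z_n(t)$ solves the ODE in Definition~\ref{definition: z n}, so $y_n(t)\equiv \tilde y_n(0)$ and $x_n$ solves $\dot x_n = h_{r_n}(x_n,\tilde y_n(0))$. The initial-condition requirement is immediate: $\sup_n\norm{z_n(0)} = \sup_n \norm{\tilde z_n(0)} \le 1$ by \eqref{eq: hat-z-norm-1}.

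The first real step is a uniform bound on the trajectories. From Assumption~\ref{assumption: H Lipschitz}, taking expectations of the pathwise Lipschitz bound gives that $h$ is $L$-Lipschitz. Since $h_c(z)=h(cz)/c$, every $h_c$ with $c\ge1$ is also $L$-Lipschitz, and
\begin{align}
\norm{h_c(z)} \le \norm{h(0)}/c + L\norm{z} \le \norm{h(0)} + L\norm{z}.
\end{align}
Plugging this into the integral form \eqref{def: z n integral} gives
\begin{align}
\norm{z_n(t)} \le 1 + (T+1)\norm{h(0)} + L\int_0^t \norm{z_n(s)}ds .
\end{align}
The Gronwall inequality (Theorem~\ref{theorem: Gronwall}) then yields
\begin{align}
\sup_n \sup_{t\in[0,T+1)} \norm{z_n(t)} \le C_z \doteq \qty(1+(T+1)\norm{h(0)})e^{L(T+1)},
\end{align}
and $C_z$ does not depend on $n$.

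With this bound, for $0\le t_1\le t_2<T+1$ we get
\begin{align}
\norm{z_n(t_2)-z_n(t_1)} \le \int_{t_1}^{t_2}\norm{h_{r_n}(z_n(s))}ds \le (\norm{h(0)}+LC_z)(t_2-t_1).
\end{align}
Thus the family is uniformly Lipschitz in $t$ with a constant independent of $n$. Given $\epsilon$, choose $\delta=\epsilon/(\norm{h(0)}+LC_z)$. This gives equicontinuity in the ordinary sense, with $\sup_n$ rather than only $\limsup_n$, which is stronger than equicontinuity in the extended sense.

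The main point of care is that the Lipschitz and growth bounds on $h_{r_n}$ must be uniform in $r_n\ge1$; otherwise the Gronwall constant could depend on $n$. The scaling identity above handles this. I would first confirm that $h$ is Lipschitz with constant $L=\E[L(w)]$ via Jensen's inequality, which follows from Assumption~\ref{assumption: H Lipschitz}. No noise or averaging arguments are needed, because $z_n$ is deterministic given its initial condition.
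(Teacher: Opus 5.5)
Your proof is correct and follows essentially the same route as the paper. The paper splits $h_{r_n}(z_n(s))$ into $h_{r_n}(z_n(s))-h_{r_n}(0)$ plus $h_{r_n}(0)$, then invokes Lemma~\ref{lemma: h Lipschitz}, the uniform trajectory bound of Lemma~\ref{lemma: bound z} (itself a Gronwall argument like yours), and \eqref{eq: h c c_H} to obtain a time-Lipschitz constant that does not depend on $n$. The only difference is that you inline the Gronwall bound and use $\norm{h_c(0)}=\norm{h(0)}/c\le\norm{h(0)}$ directly, which gives slightly cleaner constants than the paper's $C_{\hat x}$ and $C_H$.
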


    \begin{proof}\label{appendix: z n equicontinuous}
By \eqref{eq: hat-z-norm-1} and \eqref{def: z n init},
\begin{align}
\sup_n \norm{z_n(0)} \leq 1. 
\end{align}
Without loss of generality,
let $t_1 \leq t_2$.
Then $\forall \delta > 0$, we have
\begin{align}
& \sup_n \sup_{0\leq \abs{t_1-t_2} \leq \delta, \, 0 \leq t_1 \leq t_2 < T}  \norm{z_n(t_1) - z_n(t_2)}  \\
=&\sup_n \sup_{0\leq \abs{t_1-t_2} \leq \delta, \, 0 \leq t_1 \leq t_2 < T} \norm{ \int_{t_1}^{t_2} h_{r_n}(z_n(s))ds }\\
=&  \sup_n \sup_{0\leq \abs{t_1-t_2} \leq \delta, \, 0 \leq t_1 \leq t_2 < T}  \norm{\int_{t_1}^{t_2} \left[ h_{r_n}(z_n(s))  -  h_{r_n}(0) \right]ds + \int_{t_1}^{t_2} h_{r_n}(0) ds }& \\
\leq &\sup_n \sup_{0\leq \abs{t_1-t_2} \leq \delta, \, 0 \leq t_1 \leq t_2 < T}   \int_{t_1}^{t_2}\norm{h_{r_n}(z_n(s))  -  h_{r_n}(0) }ds + \sup_n \sup_{0\leq \abs{t_1-t_2} \leq \delta, \, 0 \leq t_1 \leq t_2 < T}  \int_{t_1}^{t_2} \norm{ h_{r_n}(0) }ds & \\
\leq &   \sup_n \sup_{0\leq \abs{t_1-t_2} \leq \delta, \, 0 \leq t_1 \leq t_2 < T} \int_{t_1}^{t_2}L\norm{z_n(s)} ds + \sup_n \sup_{0\leq \abs{t_1-t_2} \leq \delta, \, 0 \leq t_1 \leq t_2 < T} \int_{t_1}^{t_2} \norm{ h_{r_n}(0) }ds&  \explain{by Lemma \ref{lemma: h Lipschitz}} \\
\leq & \delta LC_{\hat{x}}  + \sup_n \sup_{0\leq \abs{t_1-t_2} \leq \delta, \, 0 \leq t_1 \leq t_2 < T} \int_{t_1}^{t_2} \norm{ h_{r_n}(0) }ds& 
\explain{by Lemma~\ref{lemma: bound z}} \\
\leq & \delta (L C_{\hat{x}}  + C_H),& 
\explain{by \eqref{eq: h c c_H}}
\end{align}
which implies that $\qty{z_n}$ is equicontinuous.    
\end{proof}

\begin{lemma}\label{lemma: f n equicontinuous}
    $\qty{f_n(t)}$ is equicontinuous in the extended sense on $[0, T+1)$.  
\end{lemma}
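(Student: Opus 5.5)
Since $f_n(t) = \tilde z_n(t) - z_n(t)$, I will derive the statement directly from Lemma~\ref{lemma: hat z equicontinuous} and Lemma~\ref{lemma: z n equicontinuous} by the triangle inequality, without redoing any stochastic estimates. The first requirement of Definition~\ref{def: equicontinuous in the extend sense 0 T} is immediate: by \eqref{def: z n init}, $f_n(0) = \tilde z_n(0) - z_n(0) = 0$ for every $n$, so $\sup_n \norm{f_n(0)} = 0 < \infty$.

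For the modulus condition, fix $\epsilon > 0$. Lemma~\ref{lemma: hat z equicontinuous} gives a $\delta_1 > 0$ with
\begin{align}
\limsup_n \sup_{0 \leq t_2 - t_1 \leq \delta_1,\, 0\leq t_1\leq t_2 < T+1} \norm{\tilde z_n(t_1) - \tilde z_n(t_2)} \leq \frac{\epsilon}{2}.
\end{align}
Lemma~\ref{lemma: z n equicontinuous} gives a $\delta_2 > 0$ with the analogous bound with $\sup_n$ in place of $\limsup_n$. That lemma's proof in fact yields the explicit bound $\delta(LC_{\hat x} + C_H)$, so $\delta_2 = \epsilon / (2(LC_{\hat x}+C_H))$ works. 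Set $\delta = \min\{\delta_1,\delta_2\}$. For $0 \le t_2 - t_1 \le \delta$ we have
\begin{align}
\norm{f_n(t_1) - f_n(t_2)} \leq \norm{\tilde z_n(t_1) - \tilde z_n(t_2)} + \norm{z_n(t_1) - z_n(t_2)}.
\end{align}
Taking $\sup$ over admissible $(t_1,t_2)$ and then $\limsup_n$, and using that the $\limsup$ of a sum is at most the sum of the $\limsup$'s (with $\limsup_n \le \sup_n$ for the second term), gives a bound of $\epsilon/2 + \epsilon/2 = \epsilon$.

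The only point needing care is that Lemma~\ref{lemma: z n equicontinuous} is stated on $[0,T+1)$ while its proof is written with $t_2 < T$. I would check that this proof, which relies on Lemma~\ref{lemma: bound z} and \eqref{eq: h c c_H}, gives constants valid uniformly on the larger window $[0,T+1)$; this should be a matter of replacing $T$ by $T+1$ in the Gronwall bounds. Beyond that there is no real obstacle, since the statement is a direct corollary of the two preceding lemmas.
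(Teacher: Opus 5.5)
Your proof is correct and is essentially the paper's own argument. You use $f_n(0)=\tilde z_n(0)-z_n(0)=0$ from the initial condition, then the triangle inequality with $\epsilon/2$ bounds from Lemma~\ref{lemma: hat z equicontinuous} and Lemma~\ref{lemma: z n equicontinuous}, bounding the ODE term via $\limsup_n \le \sup_n$. Your remark about the $[0,T)$ versus $[0,T+1)$ window concerns the proof of Lemma~\ref{lemma: z n equicontinuous} itself, not this step.
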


\begin{proof}\label{appendix: f n equicontinuous}
\begin{align}
\sup_n f_n(0) = \sup_n  \tilde{z}_n(0) - z_n(0) =   \sup_n  \tilde{z}_n(0) - \tilde{z}_n(0) = 0 < \infty.
\end{align}
By Lemma \ref{lemma: hat z equicontinuous} and Lemma \ref{lemma: z n equicontinuous}, 
$\forall \epsilon > 0$, $\exists \delta$ such that 
\begin{align}
\limsup_n \sup_{0\leq  t_2-t_1 \leq \delta} \norm{\tilde{z}_n(t_1)  - \tilde{z}_n(t_2)} & \leq 
\frac{\epsilon}{2},\\
\sup_n \sup_{0\leq  t_2-t_1 \leq \delta} \norm{ z_n(t_1) - z_n(t_2)} &\leq 
\frac{\epsilon}{2}. 
\end{align}
Without loss of generality let $t_1 \leq t_2$.
Then $\forall \epsilon$, $\exists \delta$ such that
\begin{align}
&\limsup_n \sup_{0\leq  t_2-t_1 \leq \delta }  \norm{f_n(t_1) - f_n(t_2)}  \\
=&\limsup_n  \sup_{0\leq  t_2-t_1 \leq \delta } \norm{\tilde{z}_n(t_1)  - \tilde{z}_n(t_2) - (z_n(t_1) - z_n(t_2))} \\ 
\leq& \limsup_n \sup_{0\leq  t_2-t_1 \leq \delta } \norm{\tilde{z}_n(t_1)  - \tilde{z}_n(t_2)} + \limsup_n \sup_{0\leq  t_2-t_1 \leq \delta } \norm{ z_n(t_1) - z_n(t_2)} \\ 
\leq& \limsup_n \sup_{0\leq  t_2-t_1 \leq \delta } \norm{\tilde{z}_n(t_1)  - \tilde{z}_n(t_2)} + \sup_n \sup_{0\leq  t_2-t_1 \leq \delta } \norm{ z_n(t_1) - z_n(t_2)} \\ 
\leq& \epsilon,
\end{align}
which implies that
$\qty{f_n}$ is equicontinuous in the extended sense.     
\end{proof}

\subsection{Proof of Lemma~\ref{lemma: three convergence}}\label{appendix: three convergence}
\begin{proof}
    Since $\sup_n r_n = \infty$ and $r_n$ is monotonic, $\lim_{n \rightarrow \infty} r_n = \infty$, and every subsequence also converges to infinity.

    Since $\{f_{n_{k,0}}\}$ is equicontinuous, by the Arzelà-Ascoli theorem (see Appendix \ref{appendix: AA theorem}), there exists a subsequence $n_{k,1} \subseteq n_{k,0} $ such that $\{f_{n_{k,1}}\}$ converges uniformly to a continuous limit $f^{\text{lim}}$. Similarly, since $\qty{\tilde{z}_{n_{k,1}}(t)}$ is equicontinuous, there is a subsequence $n_k \subseteq n_{k,1}$ such that $\qty{\tilde{z}_{n_k}(t)}$ converges uniformly in $t$ to a continuous limit $\tilde{z}^{\text{lim}}(t)$.

    The proof that $\lim_{n \rightarrow \infty} z_{n_k}(t) = z^{\text{lim}}(t)$ uniformly is by lemma \ref{lemma: z n limit}.    
\end{proof}

\subsection{Proof of Lemma~\ref{lemma: third term disc error}}\label{appendix: third term disc error}

\begin{proof}
    \begin{align}
        &\lim_{k \to \infty} \norm{ \sum_{i = m(T_{n_k})}^{m(T_{n_k} +t) -1 } \beta(i)G_{r_{n_k}}(\tilde{z}_{n_k}(t(i)-T_{n_k}),W_{i+1})} \\
        \leq& \lim_{k \to \infty} \sum_{i = m(T_{n_k})}^{m(T_{n_k} +t) -1 } \beta(i)\norm{G_{r_{n_k}}(\tilde{z}_{n_k}(t(i)-T_{n_k}),W_{i+1})} \\
        \leq& \lim_{k \to \infty} \sum_{i = m(T_{n_k})}^{m(T_{n_k} +t) -1 } \beta(i)L(W_{i+1})\norm{\tilde{z}_{n_k}(t(i)-T_{n_k})} \\
        \leq& C_{\hat{z}} \lim_{k \to \infty} \sum_{i = m(T_{n_k})}^{m(T_{n_k} +t) -1 } \beta(i)L(W_{i+1}) \explain{by Lemma \ref{lemma: bound z hat}} \\
        \leq& 0. \explain{\ref{eq: beta L property bounded}}
    \end{align}
\end{proof}

\subsection{Bounding the Discretization Error}\label{appendix: bound discret}

We will now prove that the first term in the RHS of~\eqref{eq: lim f k} is 0. We need to show that
$\forall t \in [0,T+1)$, 
\begin{align}\label{eq: single limit}
\lim_{k \to \infty} \norm{\sum_{i = m(T_{n_k})}^{m(T_{n_k} +t) -1 } \alpha(i)H_{r_{n_k}}(\tilde{z}_{n_k}(t(i)-T_{n_k}),W_{i+1}) - \int_0^t h_{r_{n_k}}(\tilde{z}^{\lim}(s))ds} = 0.
\end{align}

To evaluate the above, we first fix any $t \in [0, T+1)$ and then compute the following stronger double limit,
which implies that the above limit holds.
\begin{align}\label{eq: double limit}
\lim \limits_{\begin{subarray}{l}j \to \infty\\k \to \infty \end{subarray}}
\norm{\sum_{i = m(T_{n_k})}^{m(T_{n_k} +t) -1 } \alpha(i)H_{r_{n_j}}(\tilde{z}_{n_k}(t(i)-T_{n_k}),W_{i+1}) - \int_0^t h_{r_{n_j}} (\tilde{z}^{\lim}(s))ds}.   
\end{align}
The Moore-Osgood theorem (Appendix \ref{appendix: Moore-Osgood Theorem}) will help us compute this double limit by turning it into iterated limits.
To invoke the Moore-Osgood theorem,
we first prove the uniform convergence in $k$ when $j \to \infty$.
\begin{lemma}\label{lemma: double limit 1}
$\forall t \in [0,T+1)$,
\begin{align}
&\lim_{j \to \infty}\norm{\sum_{i = m(T_{n_k})}^{m(T_{n_k} +t) -1 } \alpha(i)H_{r_{n_j}}(\tilde{z}_{n_k}(t(i)-T_{n_k}),W_{i+1}) - \int_0^t h_{r_{n_k}} (\tilde{z}^{\lim}(s))ds}  \\
=&\norm{\sum_{i = m(T_{n_k})}^{m(T_{n_k} +t) -1 } \alpha(i)H_{\infty}(\tilde{z}_{n_k}(t(i) - T_{n_k}),W_{i+1}) - \int_0^t h_{\infty} (\tilde{z}^{\lim}(s))ds}	
\end{align}
uniformly in $k$. 
\end{lemma}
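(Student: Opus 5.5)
The plan is to reduce the claimed limit to the uniform convergence of $H_c$ to $H_\infty$ and $h_c$ to $h_\infty$ along the rescaling factors. The key fact is $r_{n_j}\to\infty$, since $r_n$ is monotone and $\sup_n r_n=\infty$ (Lemma \ref{lemma: three convergence}). Fix $t\in[0,T+1)$. By the reverse triangle inequality, the difference between the $j$-dependent norm and the claimed limit is bounded by
\begin{align}
\norm{\sum_{i = m(T_{n_k})}^{m(T_{n_k} +t) -1 } \alpha(i)\qty[H_{r_{n_j}}(\tilde{z}_{n_k}(t(i)-T_{n_k}),W_{i+1})-H_{\infty}(\tilde{z}_{n_k}(t(i)-T_{n_k}),W_{i+1})]} + \int_0^t \norm{h_{r_{n_j}}(\tilde{z}^{\lim}(s)) - h_\infty(\tilde{z}^{\lim}(s))}ds .
\end{align}
I read the integral inside the $j$-limit as carrying $h_{r_{n_j}}$, as in \eqref{eq: double limit}. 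It then suffices to show that both terms tend to $0$ as $j\to\infty$, with a bound that does not depend on $k$.

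The integral term does not involve $k$ at all. The function $\tilde z^{\lim}$ is continuous on $[0,T+1)$ and is bounded, because the $\tilde z_n$ are uniformly bounded by Lemma \ref{lemma: bound z hat}. Its range therefore lies in a compact ball $\mathcal B$. By Assumption \ref{assumption: lim h,g uniformly convergent}, $h_c\to h_\infty$ uniformly on $\mathcal B$, so the integral is at most $(T+1)\sup_{z\in\mathcal B}\norm{h_{r_{n_j}}(z)-h_\infty(z)}\to 0$.

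For the sum term I will use Assumption \ref{assumption: H c H infty}, which gives $H_c-H_\infty=\kappa_H(c)b_H$. The sum term is then bounded by
\begin{align}
\abs{\kappa_H(r_{n_j})}\sum_{i = m(T_{n_k})}^{m(T_{n_k} +t) -1 } \alpha(i)\norm{b_H(\tilde{z}_{n_k}(t(i)-T_{n_k}),W_{i+1})}.
\end{align}
To control $b_H$, I will write $\norm{b_H(z,w)}\le \norm{b_H(0,w)}+L_b(w)\norm{z}$ and use $\norm{\tilde z_{n_k}}\le C_{\hat z}$ from Lemma \ref{lemma: bound z hat}. This leaves two sums to bound uniformly in $k$: $\sum\alpha(i)L_b(W_{i+1})$ and $\sum\alpha(i)\norm{b_H(0,W_{i+1})}$, each taken over windows of length at most $T+1$.

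The first sum is handled by Lemma \ref{lemma: H h 0} \eqref{eq: L b property minus 0}. It shows the sum is eventually within any margin of $L_b\cdot(T+1)$, and the finitely many earlier windows give finite sums, so $\sup_k$ is finite.

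The main obstacle is the second sum, with $b_H(0,\cdot)$. That function is not directly among the $\gamma$'s in Assumption \ref{assumption: lln}. I would try first to express it as $b_H(0,w)=(H_c(0,w)-H_\infty(0,w))/\kappa_H(c)$ for one fixed $c$ with $\kappa_H(c)\neq0$. Then $H_c(0,w)=H(0,w)/c$ is covered by Assumption \ref{assumption: lln}. For $H_\infty(0,w)$, I would bound $\norm{H_\infty(0,w)}\le \norm{H(0,w)}+$ Lipschitz terms, or simply use the bounds already used in Lemma \ref{lemma: lim H uniformly convergent}, which asserts exactly $\sup_n\sup_t\norm{\sum\alpha(i)b(z,W_{i+1})}<\infty$ on compact sets.

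In fact the cleanest route is to invoke Lemma \ref{lemma: lim H uniformly convergent} directly. For each $k$, the sum term is at most
\begin{align}
\sup_{z\in\mathcal B}\sup_n\sup_t\norm{\sum\alpha(i)[H_{r_{n_j}}-H_\infty](z,W_{i+1})}
\end{align}
plus a Lipschitz remainder $\abs{\kappa_H(r_{n_j})}\sum\alpha(i)L_b(W_{i+1})\cdot 2C_{\hat z}$. This remainder accounts for replacing the time-varying argument $\tilde z_{n_k}(\cdot)$ by a fixed $z$. Both pieces vanish as $j\to\infty$ independently of $k$, which gives the stated convergence uniformly in $k$.
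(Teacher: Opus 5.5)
Your proposal is correct and follows the paper's structure. The reverse triangle inequality splits the gap into a sum term and an integral term. The integral term vanishes by uniform convergence of $h_c$ to $h_\infty$ on a ball containing the range of $\tilde z^{\lim}$, and the sum term goes through Lemma~\ref{lemma: lim H uniformly convergent}. Your reading of the integral as carrying $h_{r_{n_j}}$ is also what the paper's proof uses.

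The one real difference is in the sum term, and it favours you. Lemma~\ref{lemma: lim H uniformly convergent} bounds $\norm{\sum_i \alpha(i)[H_c - H_\infty](z, W_{i+1})}$ for a \emph{frozen} $z$, uniformly over $z$ in a compact set. The paper nevertheless applies it directly to a sum whose argument $\tilde z_{n_k}(t(i)-T_{n_k})$ changes with $i$. That step does not follow, because the averaging cancellation available for fixed $z$ need not survive when $z$ moves. Your Lipschitz remainder $2C_{\hat z}\abs{\kappa_H(r_{n_j})}\sum_i \alpha(i) L_b(W_{i+1})$ is exactly the missing step: it is bounded uniformly in $k$ by \eqref{eq: L b property bounded} and vanishes because $\kappa_H(r_{n_j}) \to 0$.

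Two simplifications are available. First, you can drop the detour through $\norm{b_H(0,\cdot)}$; it would need averaging of $w \mapsto \norm{H(0,w)}$, which Assumption~\ref{assumption: lln} does not provide. Second, you can bypass Lemma~\ref{lemma: lim H uniformly convergent} entirely. Its proof tacitly needs the supremum of the $b$-sums to be finite at some point. Letting $c \to \infty$ in $H(0,w)/c - H_\infty(0,w) = \kappa_H(c) b_H(0,w)$ gives $H_\infty(0,w) = 0$. Freezing at $z = 0$, with sums over $m(T_{n_k}) \le i \le m(T_{n_k}+t)-1$, then gives
\begin{align*}
\norm{\sum_i \alpha(i)[H_{r_{n_j}} - H_\infty](\tilde z_{n_k}(t(i)-T_{n_k}), W_{i+1})}
\le \frac{1}{r_{n_j}}\norm{\sum_i \alpha(i) H(0, W_{i+1})} + C_{\hat z}\abs{\kappa_H(r_{n_j})}\sum_i \alpha(i) L_b(W_{i+1}).
\end{align*}
Both terms are bounded independently of $k$ and $t$, by \eqref{eq: H c c_H} with $c=1$ and by \eqref{eq: L b property bounded}, and both vanish as $j \to \infty$.
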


\begin{proof}\label{appendix: double limit 1}
$\forall j$, $\forall k$, $\forall t \in [0,T+1)$,
\begin{align}
&\aftergroup \dsf \left \vert  \norm{\sum_{i = m(T_{n_k})}^{m(T_{n_k} +t) -1 } \alpha(i)H_{r_{n_j}}(\tilde{z}_{n_k}(t(i)-T_{n_k}),W_{i+1}) - \int_0^t h_{r_{n_j}} (\tilde{z}^{\lim}(s))ds}  \right. \\
&\aftergroup \dsf \left. -\norm{\sum_{i = m(T_{n_k})}^{m(T_{n_k} +t) -1 } \alpha(i)H_{\infty}(\tilde{z}_{n_k}(t(i)-T_{n_k}),W_{i+1}) - \int_0^t h_{\infty} (\tilde{z}^{\lim}(s))ds}  \right \vert   	
\\
\leq&	\Bigg\lVert\sum_{i = m(T_{n_k})}^{m(T_{n_k} +t) -1 } \alpha(i)H_{r_{n_j}}(\tilde{z}_{n_k}(t(i)-T_{n_k}),W_{i+1}) - \int_0^t h_{r_{n_j}} (\tilde{z}^{\lim}(s))ds  \\
&-\sum_{i = m(T_{n_k})}^{m(T_{n_k} +t) -1 } \alpha(i)H_{\infty}(\tilde{z}_{n_k}(t(i)),W_{i+1}) + \int_0^t h_{\infty} (\tilde{z}^{\lim}(s))ds \Bigg\rVert \explain{by $\abs{\norm{a} - \norm{b}} \leq \norm{a-b}$} \\
\leq&	 \norm{\sum_{i = m(T_{n_k})}^{m(T_{n_k} +t) -1 } \alpha(i)(H_{r_{n_j}}(\tilde{z}_{n_k}(t(i)-T_{n_k}),W_{i+1}) - H_{\infty}(\tilde{z}_{n_k}(t(i)-T_{n_k}),W_{i+1}))}  \\
&+ \norm{\int_0^t h_{r_{n_j}} (\tilde{z}^{\lim}(s)) -  h_{\infty} (\tilde{z}^{\lim}(s))ds }\\
\leq&	\norm{ \sum_{i = m(T_{n_k})}^{m(T_{n_k} +t) -1 } \alpha(i)(H_{r_{n_j}}(\tilde{z}_{n_k}(t(i)-T_{n_k}),W_{i+1}) - H_{\infty}(\tilde{z}_{n_k}(t(i)-T_{n_k}),W_{i+1}))} \\
&+ \int_0^t \norm{ h_{r_{n_j}} (\tilde{z}^{\lim}(s)) -  h_{\infty} (\tilde{z}^{\lim}(s))} ds \label{eq: H h j H h inf 1}
\end{align}
By Lemma \ref{lemma: bound z hat},
$\tilde{z}_{n_k}(t(i)-T_{n_k})$ is in a compact set $\mathcal{B}_{\hat{z}}$.
By Lemma \ref{lemma: lim H uniformly convergent}, for the compact set  $\mathcal{B}_{\hat{z}}$, $\forall \epsilon>0$, $\exists j_1$ such that $\forall j \geq j_1$, $\forall k$, $\forall z \in \mathcal{B}$, $\forall t \in [0,T+1)$,
\begin{align}
\norm{\sum_{i=m(T_{n_k})}^{m(T_{n_k}+t) - 1} \alpha(i) \left[H_{r_{n_j}}(z, W_{i+1}) - H_{\infty}(z, W_{i+1})\right]} \leq \epsilon \label{eq: H y_{k_0} delta T}.
\end{align} 

Similar to the proof of Lemma \ref{lemma: h c z lim uniform}, 
we have
\begin{align}\label{eq: h hat z uniform}
\lim_{j \to \infty}h_{r_{n_j}}(\tilde{z}_k(t)) = h_{\infty}(\tilde{z}_k(t))
\end{align}
uniformly in $k$ and $t \in [0,T+1)$.
By \eqref{eq: h hat z uniform}, $\forall \epsilon>0$, $\exists j_2$ such that $\forall j > j_2$, $\forall k$, 
$\forall t \in [0,T+1)$,
\begin{align}
\norm{h_{r_{n_j}}(\tilde{z}_k(t)) - h_{\infty}(\tilde{z}_k(t))} \leq \epsilon. \label{eq: uniform k h epsilon}
\end{align}
Define $j_0 \doteq  \max \qty{j_1, j_2}$.
 $\forall j \geq j_0$,  $\forall k$,  $\forall t \in [0,T+1)$,
\begin{align}
&\aftergroup \dsf \left \vert  \norm{\sum_{i = m(T_{n_k})}^{m(T_{n_k} +t) -1 } \alpha(i)H_{r_{n_j}}(\tilde{z}_{n_k}(t(i)-T_{n_k}),W_{i+1}) - \int_0^t h_{r_{n_j}} (\tilde{z}^{\lim}(s))ds}  \right. \\
&\aftergroup \dsf \left. -\norm{\sum_{i = m(T_{n_k})}^{m(T_{n_k} +t) -1 } \alpha(i)H_{\infty}(\tilde{z}_{n_k}(t(i)-T_{n_k}),W_{i+1}) - \int_0^t h_{\infty} (\tilde{z}^{\lim}(s))ds}  \right \vert   \\
\leq&	\norm{\sum_{i = m(T_{n_k})}^{m(T_{n_k} +t) -1 } \alpha(i)(H_{r_{n_j}}(\tilde{z}_{n_k}(t(i)-T_{n_k}),W_{i+1}) - H_{\infty}(\tilde{z}_{n_k}(t(i)-T_{n_k}),W_{i+1}))} + (T+1) \epsilon \explain{by \eqref{eq: H h j H h inf 1}, \eqref{eq: uniform k h epsilon}}  \\
\leq&	\epsilon + (T+1) \epsilon \explain{by \eqref{eq: H h j H h inf 1}, \eqref{eq: H y_{k_0} delta T}} \\
\leq& (T+2) \epsilon.    
\end{align}
This completes the proof of uniform convergence.
\end{proof}

Now, we prove, for each $j$, the convergence with $k\to\infty$.

\begin{lemma}
\label{lemma: double limit 2}
$\forall t \in [0,T+1)$, $\forall j$,
\begin{align}
\lim_{k \to \infty}  \norm{\sum_{i = m(T_{n_k})}^{m(T_{n_k} +t) -1 } \alpha(i)H_{r_{n_j}}(\tilde{z}_{n_j}(t(i)-T_{n_j}),W_{i+1}) - \int_0^t h_{r_{n_j}} (\tilde{z}^{\lim}(s))ds} = 0.     
\end{align}   
\end{lemma}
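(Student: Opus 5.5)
The plan is the standard three-way split used for the single-timescale case in \citet{liu2025ode}. Here $j$ is fixed, so $c \doteq r_{n_j}$ is a fixed constant and only the noise averaging matters. I read the argument of $H_{r_{n_j}}$ in the statement as $\tilde{z}_{n_k}(t(i)-T_{n_k})$, which is the form it takes in the double limit \eqref{eq: double limit}.

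\textbf{Step 1: replace the iterate by the limit path.} Write $s_i \doteq t(i)-T_{n_k}$. I would bound the quantity by
\begin{align}
&\norm{\sum_{i = m(T_{n_k})}^{m(T_{n_k} +t) -1 } \alpha(i)\qty[H_{c}(\tilde{z}_{n_k}(s_i),W_{i+1}) - H_c(\tilde{z}^{\lim}(s_i),W_{i+1})]} \\
&+ \norm{\sum_{i = m(T_{n_k})}^{m(T_{n_k} +t) -1 } \alpha(i)H_{c}(\tilde{z}^{\lim}(s_i),W_{i+1}) - \int_0^t h_{c} (\tilde{z}^{\lim}(s))ds}.
\end{align}
The first term is at most
\begin{align}
\sup_{s\in[0,T+1)}\norm{\tilde z_{n_k}(s)-\tilde z^{\lim}(s)} \sum_{i} \alpha(i) L(W_{i+1}).
\end{align}
This uses that $H_c$ inherits the Lipschitz modulus $L(w)$ from $H$. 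By Lemma~\ref{lemma: H h 0}, $\sum_i \alpha(i) L(W_{i+1})$ is eventually at most $(T+2)L$ over windows of length at most $T+1$. By Lemma~\ref{lemma: three convergence}, $\tilde z_{n_k}\to\tilde z^{\lim}$ uniformly, so this term vanishes as $k\to\infty$.

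\textbf{Step 2: discretize the limit path.} The function $\tilde z^{\lim}$ is continuous on $[0,t]$, hence uniformly continuous, and its values lie in a compact set $\mathcal{B}$. Given $\epsilon$, I pick a partition $0=s_0<s_1<\dots<s_M=t$ with mesh $\delta$ such that $\norm{\tilde z^{\lim}(s)-\tilde z^{\lim}(s_l)}\le\epsilon$ on each $[s_l,s_{l+1})$. I then replace $\tilde z^{\lim}(s_i)$ by the frozen value $z_l\doteq\tilde z^{\lim}(s_l)$ in the sum, and $\tilde z^{\lim}(s)$ by $z_l$ in the integral. The sum error is at most $\epsilon\sum_i\alpha(i)L(W_{i+1})\le \epsilon (T+2)L$ eventually. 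The integral error is at most $L\epsilon t$, since $h_c$ is $L$-Lipschitz.

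\textbf{Step 3: average the noise on each frozen piece.} For each $l$, I need
\begin{align}
\sum_{i=m(T_{n_k}+s_l)}^{m(T_{n_k}+s_{l+1})-1}\alpha(i)H_c(z_l,W_{i+1}) - (s_{l+1}-s_l)h_c(z_l) \to 0.
\end{align}
Since $H_c(z_l,w)=H(cz_l,w)/c$ with $cz_l$ a fixed point, Lemma~\ref{lemma: H h 0} gives zero asymptotic rate of change for $w\mapsto H(cz_l,w)$. This applies with $n$ replaced by $m(T_{n_k})$ and $t_1=s_l$, $t_2=s_{l+1}$. The remaining mismatch between $\sum_i\alpha(i)$ over the block and $s_{l+1}-s_l$ is at most $2\alpha(m(T_{n_k}))\to0$, by \eqref{eq: t-m-inequality-1}, and this multiplies the finite $\norm{h_c(z_l)}$. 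Summing the $M$ blocks, each vanishing, shows the limsup of the total is $O(\epsilon)$; since $\epsilon$ is arbitrary, the limit is $0$.

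\textbf{Expected obstacle.} The main difficulty is bookkeeping rather than anything conceptual. The partition must be chosen before letting $k\to\infty$, so that $M$ is finite and fixed. The index endpoints $m(T_{n_k}+s_l)$ must line up with the times $s_i$ used in Step 1. Using the compactness of $\mathcal{B}$ and the fixed $c$ keeps all constants uniform in $k$.
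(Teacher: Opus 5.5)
Your proposal is correct and follows essentially the argument the paper defers to (Lemma 18 of \citet{liu2025ode}): swap $\tilde z_{n_k}$ for $\tilde z^{\lim}$ using the Lipschitz modulus $L(w)$ and uniform convergence, then freeze $\tilde z^{\lim}$ on a fixed finite partition, then make each block vanish using the zero asymptotic rate of change of $w\mapsto H(c z_l, w)$ from Lemma~\ref{lemma: H h 0} at the fixed scale $c=r_{n_j}$. Your reading of the argument of $H_{r_{n_j}}$ as $\tilde z_{n_k}(t(i)-T_{n_k})$ is the intended one, consistent with the double limit \eqref{eq: double limit}.
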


The proof of Lemma~\ref{lemma: double limit 2} is very similar to the proof of Lemma 18 in \citet{liu2025ode} and is omitted here due to its length.

We are now ready to compute the limit in~\eqref{eq: single limit}.
\begin{lemma} \label{lemma: single limit}
$\forall t\in [0,T+1)$,
\begin{align}
\lim_{k \to \infty} \norm{\sum_{i = m(T_{n_k})}^{m(T_{n_k} +t) -1 } \alpha(i)H_{r_{n_k}}(\tilde{z}_{n_k}(t(i)-T_{n_k}),W_{i+1}) - \int_0^t h_{r_{n_k}}(\tilde{z}^{\lim}(s))ds} = 0.
\end{align}
\end{lemma}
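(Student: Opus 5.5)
The plan is to get the single limit as the diagonal of the double limit \eqref{eq: double limit} and to evaluate that double limit with the Moore-Osgood theorem (Theorem~\ref{appendix: Moore-Osgood Theorem}). Fix $t \in [0,T+1)$ and set
\begin{align}
a_{j,k} \doteq \norm{\sum_{i = m(T_{n_k})}^{m(T_{n_k} +t) -1 } \alpha(i)H_{r_{n_j}}(\tilde{z}_{n_k}(t(i)-T_{n_k}),W_{i+1}) - \int_0^t h_{r_{n_j}} (\tilde{z}^{\lim}(s))ds}.
\end{align}
The quantity in the statement is $a_{k,k}$. Once the joint limit $\lim_{j,k\to\infty} a_{j,k}$ is shown to exist and equal $0$, the diagonal also tends to $0$, because for any $\epsilon$ there is an $N$ with $a_{j,k}<\epsilon$ whenever $j,k\ge N$.

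\textbf{Step 1: the two Moore-Osgood hypotheses.} Lemma~\ref{lemma: double limit 1} gives that, as $j\to\infty$, $a_{j,k}$ converges uniformly in $k$ to
\begin{align}
b_k \doteq \norm{\sum_{i = m(T_{n_k})}^{m(T_{n_k} +t) -1 } \alpha(i)H_{\infty}(\tilde{z}_{n_k}(t(i)-T_{n_k}),W_{i+1}) - \int_0^t h_{\infty} (\tilde{z}^{\lim}(s))ds}.
\end{align}
Lemma~\ref{lemma: double limit 2} gives, for each fixed $j$, that $\lim_{k\to\infty} a_{j,k} = c_j = 0$. I read that lemma's summand as $H_{r_{n_j}}(\tilde{z}_{n_k}(\cdot),\cdot)$; the index $n_j$ inside $\tilde z$ there appears to be a typo.

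\textbf{Step 2: conclude.} Both hypotheses of Theorem~\ref{appendix: Moore-Osgood Theorem} now hold. It follows that $\lim_k b_k$ and $\lim_j c_j$ exist and both equal the joint limit, so $\lim_{j,k} a_{j,k} = \lim_j c_j = 0$. By the diagonal remark above, $a_{k,k}\to 0$, which is the claim.

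\textbf{Main obstacle.} The real work sits inside Lemma~\ref{lemma: double limit 2}. For fixed $c=r_{n_j}$ one needs the averaging argument of \citet{liu2025ode}, which proceeds in four moves. First, partition $[0,t]$ into small pieces of width $\Delta$. Second, on each piece, replace $\tilde z_{n_k}$ by $\tilde z^{\lim}$ evaluated at the left endpoint; this uses the uniform convergence from Lemma~\ref{lemma: three convergence}, the Lipschitz bound $L(W_{i+1})$, and the bounded-sum property \eqref{eq: L property minus 0}. Third, replace $H_c(z,W_{i+1})$ by $h_c(z)$ using the vanishing asymptotic rate of change (Lemma~\ref{lemma: H h 0}, applied to $H_c(z,\cdot)=H(cz,c\cdot)/c$). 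Fourth, let $\Delta\to0$ using the continuity of $\tilde z^{\lim}$.

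A second, smaller point needs checking: since $r_{n_j}\to\infty$, the uniformity in $k$ required by Lemma~\ref{lemma: double limit 1} must come from compactness of the range of $\tilde z_{n_k}$ (Lemma~\ref{lemma: bound z hat}). That compactness is what lets Lemma~\ref{lemma: lim H uniformly convergent} apply.
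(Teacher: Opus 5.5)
Your proposal is correct and follows the paper's argument. Lemma~\ref{lemma: double limit 1} gives the uniform-in-$k$ limit as $j\to\infty$, and Lemma~\ref{lemma: double limit 2} gives $\lim_{k} a_{j,k}=0$ for each fixed $j$; the Moore--Osgood theorem (Theorem~\ref{appendix: Moore-Osgood Theorem}) then yields a joint limit of $0$, and your diagonal remark is exactly Lemma~\ref{lemma: split limit to double limit}. Your reading of $\tilde{z}_{n_j}(t(i)-T_{n_j})$ in Lemma~\ref{lemma: double limit 2} as a typo for $\tilde{z}_{n_k}(t(i)-T_{n_k})$ is also the one the argument requires.
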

\begin{proof}
    It follows immediately from Lemmas~\ref{lemma: double limit 1} \&~\ref{lemma: double limit 2},
the Moore-Osgood theorem, and Lemma~\ref{lemma: split limit to double limit}.
\end{proof}
Lemma~\ref{lemma: single limit} confirms that the first term in the RHS of~\eqref{eq: lim f k} is 0.
Moreover, it also
enables us to rewrite $\tilde{z}^{\lim}(t)$ from a summation form to an integral form.
\begin{align}
&\tilde{z}^{\lim}(t) \\
=& \lim_{k \to \infty} \tilde{z}_{n_k}(0) + \sum_{i = m(T_{n_k})}^{m(T_{n_k} +t) -1 } \alpha(i)H_{r_{n_k}}(\tilde{z}_{n_k}(t(i)-T_{n_k}),W_{i+1}) \\
=& \explaind{\lim_{k \to \infty} \tilde{z}_{n_k}(0) + \int_0^t h_{r_{n_k}} (\tilde{z}^{\lim}(s))ds.}{by Lemma \ref{lemma: single limit}} \label{eq: z lim definition 2} 
\end{align} 
This, 
together with a few Gronwall's inequality arguments,
confirms that the discretization error indeed diminishes along $\qty{n_k}$. 

\begin{lemma}\label{lemma: f k 0}
$\forall t\in [0,T+1),$
\begin{align}
\lim_{k \to \infty}  \norm{f_{n_k}(t)} = 0.    
\end{align}
\end{lemma}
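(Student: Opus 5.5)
We want $\lim_k \norm{f_{n_k}(t)} = 0$ for every $t\in[0,T+1)$. We compare two integral representations of the limits. By Lemma~\ref{lemma: three convergence} and \eqref{eq: f x r lim}, $f_{n_k}(t)\to f^{\lim}(t)=\tilde z^{\lim}(t)-z^{\lim}(t)$. So it suffices to show that $\tilde z^{\lim}$ and $z^{\lim}$ solve the same ODE from the same initial point, and then invoke uniqueness via Gronwall (Theorem~\ref{theorem: Gronwall}).

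First, the slow component. Lemma~\ref{lemma: third term disc error} says the $\beta$-weighted sum vanishes, so $\tilde y^{\lim}(t)=\tilde y^{\lim}(0)$, and $y^{\lim}$ is also constant with the same initial value. Hence the $y$-part of $f^{\lim}$ is $0$.

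Second, the $x$-component. By \eqref{eq: z lim definition 2},
\[
\tilde x^{\lim}(t)=\tilde x^{\lim}(0)+\lim_k\int_0^t h_{r_{n_k}}(\tilde z^{\lim}(s))\,ds .
\]
Since $\tilde z^{\lim}$ is continuous on $[0,T+1)$, it stays in a compact set (Lemma~\ref{lemma: bound z hat} provides the bound). Because $r_{n_k}\to\infty$, Assumption~\ref{assumption: lim h,g uniformly convergent} gives $h_{r_{n_k}}\to h_\infty$ uniformly on that compact set, so
\[
\tilde x^{\lim}(t)=\tilde x^{\lim}(0)+\int_0^t h_\infty(\tilde z^{\lim}(s))\,ds .
\]
By \eqref{eq: z lim t}, $x^{\lim}$ satisfies the same equation with $y^{\lim}\equiv\tilde y^{\lim}$.

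Now set $u(t)=\norm{\tilde z^{\lim}(t)-z^{\lim}(t)}$. The $h_\infty$ inherited from $H_\infty$ is $L$-Lipschitz (Assumption~\ref{assumption: H Lipschitz}), so
\[
u(t)\le L\int_0^t u(s)\,ds, \qquad u(0)=0 .
\]
Gronwall then gives $u\equiv 0$ on $[0,T+1)$. Therefore $f^{\lim}\equiv 0$, and the convergence is uniform by Lemma~\ref{lemma: three convergence}.

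An equivalent route works directly with \eqref{eq: lim f k}. The first and third terms vanish by Lemma~\ref{lemma: single limit} and Lemma~\ref{lemma: third term disc error}. The middle term is at most $L\int_0^t\norm{\tilde z^{\lim}(s)-z_{n_k}(s)}\,ds$, which tends to $L\int_0^t\norm{f^{\lim}(s)}\,ds$ because both $z_{n_k}$ and $\tilde z_{n_k}$ converge uniformly. This yields $\norm{f^{\lim}(t)}\le L\int_0^t \norm{f^{\lim}(s)}\,ds$, and Gronwall gives the same conclusion.

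The main obstacle is justifying the passage $\int_0^t h_{r_{n_k}}(\tilde z^{\lim})\to\int_0^t h_\infty(\tilde z^{\lim})$. This requires a uniform bound on $\tilde z^{\lim}$ over $[0,T+1)$, which we take from Lemma~\ref{lemma: bound z hat}, together with the compact-uniform convergence in Assumption~\ref{assumption: lim h,g uniformly convergent}.
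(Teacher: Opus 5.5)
Your proposal is correct and follows essentially the paper's argument: you use the integral representation \eqref{eq: z lim definition 2} and pass $h_{r_{n_k}}\to h_\infty$ inside the integral to show that $\tilde z^{\lim}$ and $z^{\lim}$ satisfy the same integral equation, then conclude $\tilde z^{\lim}=z^{\lim}$ by Gronwall. The differences are cosmetic: you read off $f^{\lim}=\tilde z^{\lim}-z^{\lim}$ directly from Lemma~\ref{lemma: three convergence} and treat the $y$-component explicitly, whereas the paper re-bounds $\lim_k\norm{f_{n_k}(t)}$ through \eqref{eq: lim f k} before invoking Gronwall.
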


\begin{proof}\label{appendix: f k 0}

$\forall t\in [0,T+1)$,
\begin{align}
&\lim_{k \to \infty}  \norm{f_{n_k}(t)} \\ 
\leq&\lim_{k \to \infty} \norm{ \sum_{i = m(T_{n_k})}^{m(T_{n_k} +t) -1 } \alpha(i)H_{r_{n_k}}(\tilde{z}_{n_k}(t(i)-T_{n_k}),W_{i+1}) - \int_0^t h_{r_{n_k}}(\tilde{z}^{\lim}(s))ds } \\
&+\lim_{k \to \infty} \norm{\int_0^t h_{r_{n_k}}(\tilde{z}^{\lim}(s))ds  - \int_0^t h_{r_{n_k}}(z_{n_k}(s))ds } + 0 \explain{by \eqref{eq: lim f k} and Lemma {\ref{lemma: third term disc error}}} \\
=&  \lim_{k \to \infty} \norm{  \int_0^t h_{r_{n_k}} (\tilde{z}^{\lim}(s))ds  - \int_0^t h_{r_{n_k}}(z_{n_k}(s))ds } \\
=& \explaind{\norm{\int_0^t h_{\infty} (\tilde{z}^{\lim}(s))ds  - \int_0^t h_{\infty}(z^{\lim}(s))ds}.}{by Lemma \ref{lemma: h k x lim h inf x lim uniform} and Lemma \ref{lemma: h k z k h inf z lim uniform}} \label{eq: reduce f limit 1} 
\end{align}
We now show the relationship between $\tilde{z}^{\lim}(t)$ and $z^{\lim}(t)$.
\begin{align}
&\norm{\tilde{z}^{\lim}(t) - z^{\lim}(t)} \label{eq: norm z lim z lim} \\
=& \norm{\lim_{k \to \infty} \left[\tilde{z}_{n_k}(0) + \int_0^t h_{r_{n_k}} (\tilde{z}^{\lim}(s))ds \right] - \left[\tilde{z}^{\lim}(0) + \int_0^t h_{\infty}(z^{\lim}(s))ds\right] } \explain{by \eqref{eq: z lim t} and \eqref{eq: z lim definition 2}} \\
=& \norm{\tilde{z}^{\lim}(0) + \int_0^t h_{\infty} (\tilde{z}^{\lim}(s))ds - \left[\tilde{z}^{\lim}(0) + \int_0^t h_{\infty}(z^{\lim}(s))ds\right] } \explain{by 
Lemma \ref{lemma: h k x lim h inf x lim uniform}}\\
=& \norm{ \int_0^t h_{\infty} (\tilde{z}^{\lim}(s))ds -  \int_0^t h_{\infty}(z^{\lim}(s))ds } \label{eq: f limit construct Gronwall 1}\\
\leq & \int_0^t L \norm{ \tilde{z}^{\lim}(s) - z^{\lim}(s)}ds \explain{by Lemma \ref{lemma: h Lipschitz}} \\
\leq& 0. \explain{by Gronwall inequality in Theorem \ref{theorem: Gronwall}}
\end{align}
Thus,
\begin{align}
&\norm{\lim_{k \to \infty}  f_{n_k}(t) }  \\
\leq& \norm{\int_0^t h_{\infty} (\tilde{z}^{\lim}(s))ds  - \int_0^t h_{\infty}(z^{\lim}(s))ds} \explain{by \eqref{eq: reduce f limit 1}} \\
=& \norm{\tilde{z}^{\lim}(t) - z^{\lim}(t)} \explain{by \eqref{eq: f limit construct Gronwall 1}} \\
\leq& 0. \explain{by \eqref{eq: norm z lim z lim}}
\end{align}

\end{proof}

\subsection{Proof of Lemma \ref{lemma: ext inp 1}}\label{appendix: proof ext inp 1}

\textit{The proof is similar to Lemma 1 from Chapter 3.2 of \citet{borkar2009stochastic}.}

\begin{proof}
    By Lyapunov stability, there is a $\delta > 0$ such that any trajectory beginning within $B(\lambda_{\infty}(y), \delta)$ stays within $\frac{\epsilon}{2}$ of the equilibrium $\lambda_{\infty}(y)$. 

    For an initial condition $x$, let $T_x$ be a time at which the trajectory is within $\frac{\delta}{2}$ of the equilibrium. Let $x_1$ be some other initial condition. By definition, Lipschitzness, and the Gronwall inequality, we have
    \begin{align*}
        \norm{\eta^y_{\infty}(t, x) - \eta^y_{\infty}(t, x_1) } &\leq \norm{x-x_1} + L \int_0^t \norm{\eta^y_{\infty}(s, x) - \eta^y_{\infty}(s, x_1)}ds \\
        &\leq \norm{x-x_1}e^{L T_x}
    \end{align*}
    for all $t \leq T_x$.

    So there is a neighborhood $V_x$ such that for all $x_1 \in V_x$, $\eta^y_{\infty}(T_x, x_1)$ is within $\delta$ of the equilibrium, which by Lyapunov stability implies that it will always be within $\epsilon$ of the equilibrium after $T_x$. 

    By compactness, we can cover the set $K$ by a finite number of such intervals and obtain a finite number of times $T_{x_1}, \ldots, T_{x_n}$ and then take the maximum time to be the value of $T_{\epsilon}$. 
\end{proof}

\subsection{Proof of Lemma \ref{lemma: ext inp 2}}\label{appendix: proof ext inp 2}

\textit{The proof is similar to Lemma 2 from chapter 3.2 of \citet{borkar2009stochastic}.}

\begin{proof}
    We have
    \begin{align*}
    \eta^{y'(t)}_{c}(t,x) &= x + \int_0^t h_c(\eta^{y'(s)}_c(s, x), y'(s)) ds, \\
    \eta^{y}_{\infty}(t,x) &= x + \int_0^t h_{\infty}(\eta^{y}_{\infty}(s, x), y) ds.
    \end{align*}
    Let us define the error term:
    \begin{align}
        e(t) = \norm{\eta^{y'(t)}_{c}(t,x) - \eta^{y}_{\infty}(t,x)}.
    \end{align}
    
    We can bound $e(t)$ by two terms:
    \begin{align*} 
        e(t) \leq& \int_0^t \norm{h_c(\eta_c^{y'(s)}(s, x), y'(s)) - h_c(\eta_{\infty}^{y}(s, x), y'(s))} ds \\
        &+ \int_0^t \norm{h_c(\eta_{\infty}^{y}(s, x), y'(s)) - h_{\infty}(\eta^{y}_{\infty}(s, x), y)} ds 
    \end{align*}
To bound the first term, we use Lipschitzness:
\begin{align}
    \int_0^t \norm{h_c(\eta_c^{y'(s)}(s, x), y'(s)) - h_c(\eta_{\infty}^{y}(s, x), y'(s))} ds &\leq L \int_0^t \norm{\eta^{y'(t)}_{c}(t,x) - \eta^{y}_{\infty}(t,x)} ds \\
    &= L \int_0^t e(s) ds.
\end{align}

To bound the second term, we use Lipschitzness and Assumption \ref{assumption: H c H infty}:

\begin{align}
    \int_0^t \norm{h_c(\eta_{\infty}^{y}(s, x), y'(s)) - h_{\infty}(\eta^{y}_{\infty}(s, x), y)} ds \leq& \int_0^t \norm{h_c(\eta_{\infty}^{y}(s, x), y'(s)) - h_{\infty}(\eta^{y}_{\infty}(s, x), y'(s))} ds \\
    &+ \int_0^t \norm{h_{\infty}(\eta_{\infty}^{y}(s, x), y'(s)) - h_{\infty}(\eta^{y}_{\infty}(s, x), y)} ds \\
    \leq& \int_0^t \epsilon(c) ds + L\int_0^t \norm{y'(s)-y}ds \\
    \leq& T\epsilon(c) + TL\rho.
\end{align}

To conclude, we will use the Gronwall Inequality (Appendix \ref{theorem: Gronwall}):
\begin{align}
    e(t) \leq& T \epsilon(c) + TL \rho + L \int_0^t e(s)ds \\
    \leq& (L \rho + \epsilon(c))T e^{LT}. 
\end{align}
\end{proof}

\subsection{Proof of Lemma \ref{lemma: hat y equicontinuous}}

\begin{proof}\label{proof: hat y equicontinuous}
We know that
\begin{align}
\sup_n \norm{\tilde{y}_n(0)} \leq 1   . 
\end{align}

Without loss of generality, let $t_1 \leq t_2$.
\begin{align}
&\limsup_n \sup_{ 0\leq t_2-t_1 \leq \delta } \norm{ \tilde{y}_n(t_1) - \tilde{y}_n(t_2)}  = \limsup_n \sup_{0\leq  t_2-t_1 \leq \delta } 
 \norm{\sum_{i = m(T_n+t_1)}^{m(T_n+t_2) - 1}  \beta(i) G_{r_n}(\tilde{z}_n(t(i)-T_n), W_{i+1}) }\\
\end{align}
$\forall \xi > 0$, by \eqref{eq: h property close t 0},  $\exists \delta_0,$ such that $\forall 0< \delta \leq \delta_0$,
\begin{align}
\sup_{c\geq 1} \limsup_n \sup_{0\leq t_2-t_1 \leq \delta } \norm{ \sum_{i = m(T_n+t_1)}^{m(T_n+t_2) - 1} \beta(i) G_c(0, W_{i+1}) } \leq \xi.  \label{eq: lemma hat y equicontinuous xi 1}   
\end{align}
By \eqref{eq: L property close t 0},  $\exists \delta_1,$ such that $\forall 0< \delta \leq \delta_1$,
\begin{align}
 \limsup_n \sup_{0\leq t_2-t_1 \leq \delta }  \sum_{i = m(T_n+t_1)}^{m(T_n+t_2) - 1} \beta(i) L(W_{i+1}) \leq \xi.   \label{eq: lemma hat y equicontinuous xi 2} 
\end{align}
Without loss of generality, 
let $t_1 \leq t_2$.
Then $\forall \delta \leq \min \qty{\delta_0,\delta_1} $,  we have
\begin{align}
&\limsup_n \sup_{0\leq  t_2-t_1 \leq \delta } 
 \norm{\sum_{i = m(T_n+t_1)}^{m(T_n+t_2) - 1} \beta(i) G_{r_n}(\tilde{z}_n(t(i)-T_n), W_{i+1}) }\\
\leq&  \limsup_n \sup_{0\leq  t_2-t_1 \leq \delta } \norm{\sum_{i = m(T_n+t_1)}^{m(T_n+t_2) - 1} \beta(i) G_{r_n}(\tilde{z}_n(t(i)-T_n), W_{i+1}) } -  \norm{\sum_{i = m(T_n+t_1)}^{m(T_n+t_2) - 1} \beta(i) G_{r_n}(0, W_{i+1}) } \\
&+ \limsup_n \sup_{0\leq  t_2-t_1 \leq \delta } \norm{\sum_{i = m(T_n+t_1)}^{m(T_n+t_2) - 1} \beta(i) G_{r_n}(0, W_{i+1}) }  \\
\leq&  \limsup_n \sup_{0\leq  t_2-t_1 \leq \delta } \norm{\sum_{i = m(T_n+t_1)}^{m(T_n+t_2) - 1} \beta(i) G_{r_n}(\tilde{z}_n(t(i)-T_n), W_{i+1}) } -  \norm{\sum_{i = m(T_n+t_1)}^{m(T_n+t_2) - 1} \beta(i) G_{r_n}(0, W_{i+1}) } \\
&+ \sup_{c\geq 1} \limsup_n \sup_{0\leq  t_2-t_1 \leq \delta } \norm{\sum_{i = m(T_n+t_1)}^{m(T_n+t_2) - 1} \beta(i) G_c(0, W_{i+1}) }  \\
\leq&  \limsup_n \sup_{0\leq  t_2-t_1 \leq \delta } \norm{\sum_{i = m(T_n+t_1)}^{m(T_n+t_2) - 1} \beta(i) G_{r_n}(\tilde{z}_n(t(i)-T_n), W_{i+1}) } -  \norm{\sum_{i = m(T_n+t_1)}^{m(T_n+t_2) - 1} \beta(i) G_{r_n}(0, W_{i+1}) } \\
&+ \xi  \explain{by \eqref{eq: lemma hat y equicontinuous xi 1}}\\
\leq&  \limsup_n \sup_{0\leq  t_2-t_1 \leq \delta }  \norm{\sum_{i = m(T_n+t_1)}^{m(T_n+t_2) - 1} \beta(i) G_{r_n}(\tilde{z}_n(t(i)-T_n), W_{i+1})  -  \sum_{i = m(T_n+t_1)}^{m(T_n+t_2) - 1} \beta(i) G_{r_n}(0, W_{i+1}) } \\
&+   \xi \\
\leq& \limsup_n \sup_{0\leq  t_2-t_1 \leq \delta } \sum_{i = m(T_n+t_1)}^{m(T_n+t_2) - 1} \beta(i)  \norm{G_{r_n}(\tilde{z}_n(t(i)-T_n), W_{i+1}) -  G_{r_n}(0, W_{i+1})  } +   \xi\\
\leq& \limsup_n  \sup_{0\leq  t_2-t_1 \leq \delta }  \sum_{i = m(T_n+t_1)}^{m(T_n+t_2) - 1} \beta(i)  L(W_{i+1})\norm{\tilde{z}_n(t(i)-T_n)}  +    \xi \\
\leq&  E \limsup_n \sup_{0\leq  t_2-t_1 \leq \delta }  \sum_{i = m(T_n+t_1)}^{m(T_n+t_2) - 1} \beta(i)  L(W_{i+1})  +     \xi \explain{by Lemma~\ref{lemma: slow gronwall}}  \\
\leq&  E \xi  +     \xi. \explain{by \eqref{eq: lemma hat y equicontinuous xi 2}}
\end{align}   
\end{proof}

\subsection{Proof of Lemma \ref{lemma: slow three convergence}}\label{appendix: slow convergent sub}

\begin{proof}
    Since $\sup_n r_n = \infty$ and $r_n$ is monotonic, $\lim_{n \rightarrow \infty} r_n = \infty$, and every subsequence also converges to infinity.

    Since $\{f_{n_{k,0}}\}$ is equicontinuous, by the Arzelà-Ascoli theorem (see Appendix \ref{appendix: AA theorem}), there exists a subsequence $n_{k,1} \subseteq n_{k,0} $ such that $\{f_{n_{k,1}}\}$ converges uniformly to a continuous limit $f^{\text{lim}}$. Similarly, since $\qty{\tilde{y}_{n_{k,1}}(t)}$ is equicontinuous, there is a subsequence $n_k \subseteq n_{k,1}$ such that $\qty{\tilde{z}_{n_k}(t)}$ converges uniformly in $t$ to a continuous limit $\tilde{y}^{\text{lim}}(t)$.

    The proof that $\lim_{n \rightarrow \infty} y_{n_k}(t) = y^{\text{lim}}(t)$ uniformly is by lemma \ref{lemma: y n limit}.    
\end{proof}

\newpage
\section{Auxiliary Lemmas}\label{sec: auxiliary lemma}

The next lemma establishes some results about the relationship between the step sizes and $T$. It applies to the fast timescale, but a similar result to \eqref{eq: lim lr t_1 t_2} holds for $\beta(i)$ in the slow timescale.

\begin{lemma}\label{lemma: T-minus}
    \begin{align}
    \forall n, \,
    T_{n+1} - T_n \geq& T, \\
    \lim_{n\to\infty} T_{n+1} - T_n =& T.
    \end{align}

    Moreover, $\forall \tau > 0, t_1, t_2$ such that $ -\tau \leq t_1 \leq t_2 \leq \tau$, we have
    \begin{align}
    \lim_{n \to \infty}   \sum_{i = m(t(n)+t_1)}^{m(t(n)+t_2) - 1} \alpha(i) &= t_2 - t_1 \label{eq: lim lr t_1 t_2}, \\
    \lim_{n \to \infty}   \sum_{i = m(t(n)+t_1)}^{m(t(n)+t_2) - 1} \beta(i) &= 0. \label{eq: beta t_1 t_2}
    \end{align}
\end{lemma}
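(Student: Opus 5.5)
The statement to prove is Lemma~\ref{lemma: T-minus}: two facts about the partition $\qty{T_n}$ and two limits of step-size sums. All of them follow from the definitions of $t(\cdot)$, $m(\cdot)$, and $T_n$, together with Assumption~\ref{assumption: learning ratios}. For the first pair, I would use $T_{n+1} = t(m(T_n+T)+1)$ and \eqref{eq: t-m-inequality-1}. These give
\[
T_n + T < t(m(T_n+T)+1) = T_{n+1} \le T_n + T + \alpha(m(T_n+T)).
\]
The left inequality yields $T_{n+1}-T_n \ge T$ for every $n$. For the right inequality, note that $T_n \ge nT \to \infty$, so $m(T_n+T)\to\infty$, because $\sum_i \alpha(i)=\infty$ makes $m$ finite and nondecreasing with $m(s)\to\infty$ as $s\to\infty$. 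Since $\alpha(i)\to 0$, the excess over $T$ vanishes, and hence $T_{n+1}-T_n \to T$.

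For \eqref{eq: lim lr t_1 t_2}, I would write the sum as a telescoping difference of the $t(\cdot)$ values:
\[
\sum_{i=m(t(n)+t_1)}^{m(t(n)+t_2)-1}\alpha(i) = t\bigl(m(t(n)+t_2)\bigr) - t\bigl(m(t(n)+t_1)\bigr).
\]
The definition of $m$ gives $s - \alpha(m(s)) < t(m(s)) \le s$ for each $s \ge 0$. Applying this at $s=t(n)+t_2$ and $s=t(n)+t_1$, the difference lies within $\alpha(m(t(n)+t_1)) + \alpha(m(t(n)+t_2))$ of $t_2-t_1$. For large $n$, $t(n)-\tau > 0$, so the convention $m(s)=0$ for $s\le 0$ in \eqref{eq: m 0} is not triggered. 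Both indices $m(t(n)+t_j)$ tend to infinity, since $t(n)\to\infty$. By \eqref{eq lim alpha beta 0} the error tends to $0$, and the convergence is even uniform in $t_1,t_2\in[-\tau,\tau]$, because the error is controlled by $\alpha(m(t(n)-\tau))$ and $\alpha$ is decreasing.

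For \eqref{eq: beta t_1 t_2}, I would bound each $\beta(i)$ by $\alpha(i)$ times the ratio:
\[
\sum_{i=m(t(n)+t_1)}^{m(t(n)+t_2)-1}\beta(i) \le \Bigl(\sup_{i\ge m(t(n)-\tau)}\frac{\beta(i)}{\alpha(i)}\Bigr) \sum_{i=m(t(n)+t_1)}^{m(t(n)+t_2)-1}\alpha(i).
\]
The second factor converges to $t_2-t_1 \le 2\tau$ by \eqref{eq: lim lr t_1 t_2}, and is bounded uniformly by $2\tau + 2\alpha(0)$. The first factor tends to $0$ by \eqref{eq lr ratio}, since $m(t(n)-\tau)\to\infty$. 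The sum is nonnegative, so it tends to $0$. The slow-timescale analogue of \eqref{eq: lim lr t_1 t_2} for $\beta$ follows identically with $t(\cdot)$ built from $\beta$.

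There is no deep obstacle here; the only care needed is the index bookkeeping. One point is confirming that $m(s)\to\infty$ as $s\to\infty$, which needs $\sum\alpha(i)=\infty$ together with $\alpha(i)>0$. The other is the boundary effect when $t(n)+t_1<0$ for small $n$, which is irrelevant for the limits.
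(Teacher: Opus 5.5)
Your proposal is correct and follows essentially the same route as the paper: the same sandwich $T_n + T < T_{n+1} \le T_n + T + \alpha(m(T_n+T))$, the same telescoping bounds $t_2 - t_1 - \alpha(m(t(n)+t_2)) \le \sum_i \alpha(i) \le t_2 - t_1 + \alpha(m(t(n)+t_1))$, and the same ratio-times-$\alpha$-sum bound for the $\beta$ sum, which you write with a supremum of $\beta(i)/\alpha(i)$ where the paper uses an explicit $\epsilon$. Two points of yours are slightly tidier than the paper: you justify the vanishing error by $\alpha(i)\to 0$ from \eqref{eq lim alpha beta 0} (the paper cites \eqref{eq 1/n alpha}), and you note uniformity over $t_1,t_2\in[-\tau,\tau]$, which the later lemmas use implicitly.
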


\begin{proof}
$\forall n$,
\begin{align}
    &T_{n+1} - T_n  \\
    =&t(m(T_n+T) + 1) -  T_n   \\
    \geq&T_n + T - T_n  \\
    \geq&T.    
\end{align}
Thus,
\begin{align}
&\lim_{n\to\infty} T_{n+1} - T_n 
\geq T.    
\end{align}
With
\begin{align}
&\lim_{n\to\infty} T_{n+1} - T_n \\
=&\lim_{n\to\infty} t(m(T_n+T) + 1) -  T_n \\   
=&\lim_{n\to\infty} t(m(T_n+T)) + \alpha(m(T_n+T)) -  T_n \\
\leq&\lim_{n\to\infty} T_n + T + \alpha(m(T_n+T)) - T_n \\
=& T, 
\end{align}
by the squeeze theorem, 
we have
$\lim_{n\to\infty} T_{n+1} - T_n = T$.

To prove \eqref{eq: lim lr t_1 t_2},
$\forall \tau$, $\forall -\tau \leq t_1 \leq t_2 \leq \tau$, it suffices to only consider large $n$ such that $t(n) - \tau \geq 0$. We have
\begin{align}
&\lim_{n \to \infty}  \sum_{i = m(t(n)+t_1)}^{m(t(n) + t_2)  - 1 } \alpha(i) \\
=& \lim_{n \to \infty} 
 t(m(t(n)+t_2))   - t(m(t(n)+t_1)) \\
\leq& \lim_{n \to \infty} 
 t(n)+t_2 - t(m(t(n)+t_1)) \\
\leq&  \lim_{n \to \infty} 
 t(n)+t_2 - ( t(n)+t_1 - \alpha(m(t(n)+t_1))  ) \\
=&  t_2 - t_1 + \lim_{n \to \infty} \alpha(m(t(n)+t_1))  \\
=& t_2 - t_1  \explain{by \eqref{eq 1/n alpha}} 
\end{align}
and
\begin{align}
&\lim_{n \to \infty}  \sum_{i = m(t(n)+t_1)}^{m(t(n) + t_2)  - 1 } \alpha(i) \\
=& \lim_{n \to \infty} 
 t(m(t(n)+t_2))   - t(m(t(n)+t_1)) \\
\geq&   \lim_{n \to \infty}  t(n)+t_2 - \alpha(m(t(n)+t_2))   - t(m(t(n)+t_1)) \\
\geq&  \lim_{n \to \infty} 
 t(n)+t_2 - \alpha(m(t(n)+t_2))   - (t(n)+t_1)  \\
=& \lim_{n \to \infty}  t_2 - t_1 - \alpha(m(t(n)+t_2)) \\
=&t_2 - t_1 \explain{by \eqref{eq 1/n alpha}} . 
\end{align}
By the squeeze theorem, we have
\begin{align}
\lim_{n \rightarrow \infty} \sum_{i = m(t(n)+t_1)}^{m(t(n)+t_2) - 1} \alpha(i)   = t_2 - t_1. 
\end{align}

To prove \eqref{eq: beta t_1 t_2}, fix $\epsilon > 0$. By \eqref{eq lr ratio}, there exists $N_1 \in \N$ such that for all $n > m(t(N_1)+t_1)$,
\begin{align}
    \frac{\beta(n)}{\alpha(n)} < \epsilon. \label{ineq ratio < eps}
\end{align}
By \eqref{eq: lim lr t_1 t_2}, there exists $N_2 \in \N$ such that for all $n > N_2$,
\begin{align}
    \sum_{i = m(t(n)+t_1)}^{m(t(n)+t_2) - 1} \alpha(i) < t_2 - t_1 + \epsilon. \label{lim t_1 t_2 eps}
\end{align}
Let $N = \max \qty{N_1, N_2}$. Then for all $n > N$,
\begin{align}
    &\sum_{i = m(t(n)+t_1)}^{m(t(n)+t_2) - 1} \beta(i) \\
    <&  \epsilon \sum_{i = m(t(n)+t_1)}^{m(t(n)+t_2) - 1} \alpha(i) \explain{by \eqref{ineq ratio < eps}} \\
    <& \epsilon(t_2-t_1 + \epsilon) \explain{by \eqref{lim t_1 t_2 eps}}.
\end{align}
Since $t_2 - t_1$ is a constant and $\epsilon$ can be made arbitrarily small, we have
\begin{align}
    \lim_{n \to \infty}   \sum_{i = m(t(n)+t_1)}^{m(t(n)+t_2) - 1} \beta(i) &= 0.
\end{align}
\end{proof}

This lemma establishes Lipschitzness for a few classes of functions.

\begin{lemma}\label{lemma: h Lipschitz}
    For any $x, x', y, y', w, c \geq 1$, 
    including $c = \infty$,
    \begin{align}
        \norm{H_c(x,y,w) - H_c(x',y',w)} &\leq L(w) \norm{(x,y) - (x',y')}, \label{eq: H c L}\\
        \norm{h_c(x,y) - h_c(x',y')} &\leq L \norm{(x,y) - (x',y')}. \label{eq: h c L} \\
        \norm{G_c(x,y,w) - G_c(x',y',w)} &\leq L(w) \norm{(x,y) - (x',y')}, \label{eq: G c L}\\
        \norm{g_c(x,y) - g_c(x',y')} &\leq L \norm{(x,y) - (x',y')}. \label{eq: g c L}
    \end{align}
    \end{lemma}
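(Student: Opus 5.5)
The four claims of Lemma~\ref{lemma: h Lipschitz} all reduce to Assumption~\ref{assumption: H Lipschitz} by a rescaling argument, followed by taking expectations under $d_\fW$. I would handle the $H$ family first and then note that the $G$ family is identical with $G$, $G_\infty$, $g$, $g_\infty$ in place of $H$, $H_\infty$, $h$, $h_\infty$.

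\textbf{Step 1: pathwise bound for finite $c \geq 1$.} By the definition \eqref{def: H c},
\begin{align}
\norm{H_c(x,y,w) - H_c(x',y',w)} = \frac{1}{c}\norm{H(cx,cy,w) - H(cx',cy',w)}.
\end{align}
Applying \eqref{eq: H L} bounds this by
\begin{align}
\frac{1}{c} L(w)\norm{(cx,cy)-(cx',cy')} = L(w)\norm{(x,y)-(x',y')}.
\end{align}
For $c=\infty$, the bound is exactly \eqref{eq: H L inf}, so \eqref{eq: H c L} holds for every $c \in [1,\infty]$.

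\textbf{Step 2: expected bound.} Note that $h_c(x,y) = h(cx,cy)/c = \E_{w\sim d_\fW}[H_c(x,y,w)]$, by \eqref{def: h c} and linearity of expectation. Both expectations are finite by Assumption~\ref{assumption: H Lipschitz}. Then Jensen's inequality (the norm of an expectation is at most the expectation of the norm) together with Step 1 gives
\begin{align}
\norm{h_c(x,y) - h_c(x',y')} \leq \E_{w\sim d_\fW}\qty[L(w)]\norm{(x,y)-(x',y')} = L\norm{(x,y)-(x',y')}.
\end{align}
For $c=\infty$ the same computation applies, with $h_\infty = \E[H_\infty]$ by definition.

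\textbf{Potential obstacle.} The only delicate point is making sure the expectations and the interchange are legitimate. Integrability of $H_c(x,y,\cdot)$ follows because it equals $H(cx,cy,\cdot)/c$, which is integrable by assumption. The bound on the difference is integrable because $L$ is finite.
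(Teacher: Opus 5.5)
Your proposal is correct and follows the paper's proof essentially line for line. For finite $c$ you rescale and apply \eqref{eq: H L}, for $c=\infty$ you invoke \eqref{eq: H L inf}, and you pass to $h_c$ by bounding the norm of the expectation by the expectation of the norm; the $G$ family is then handled identically.
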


    \begin{proof}
    To prove \eqref{eq: H c L}, we first consider $1 \leq c < \infty$,
    \begin{align}
    &\norm{H_c(x,y,w) - H_c(x',y',w)} \\
    =& \norm{\frac{H(cx,cy,w)}{c} - \frac{H(cx',cy,w)}{c}} \explain{by \eqref{def: H c}}\\
    \leq& \frac{\norm{H(cx,cy, w)- H(cx',cy',w)}}{c}\\
    \leq& L(w) \frac{\norm{(cx,cy) - (cx',cy')}}{c}  \explain{by \eqref{eq: H L}}  \\
    =& L(w)\norm{(x,y) - (x',y')}. \label{eq: H c L x x'}
    \end{align}
    By \eqref{eq: H L inf}, 
    \begin{align}
    \norm{H_\infty(x,y,w) - H_\infty(x',y',w)} &\leq L(w) \norm{(x,y) - (x',y')}.
    \end{align}
    
    To prove \eqref{eq: h c L},
    $\forall x,x',y,y'$, $\forall c \geq 1$ including $c= \infty$,
    \begin{align}
    &\norm{h_c(x,y) - h_c(x',y')} \\
    =& \norm{ \E_{w\sim \omega}\left[H_c(x,y,w) - H_c(x',y',w)\right]} \\
    \leq&  \E_{w\sim \omega}\left[\norm{ H_c(x,y,w) - H_c(x',y',w) } \right] \\
    \leq& \E_{w\sim \omega}\left[ L(w) \norm{(x,y) - (x',y')}\right] \\
    \leq& L \norm{(x,y) - (x',y')}.
    \end{align}
    
    By similar arguments, \eqref{eq: G c L} and \eqref{eq: g c L} also follow.
\end{proof}

The following lemma gives some regularity results in the fast timescale.

\begin{lemma}\label{lemma: bounded-h0 inf y}
    $\forall x, y$,
   \begin{align}
   \sup_{c\geq 1} \norm{ h_c(0) }   <\infty, \sup_{c\geq 1} \norm{ g_c(0) }   &<\infty,  \label{eq: h,g property h}\\
   \sup_{c\geq 1} \limsup_n \sup_{0\leq t_1 \leq t_2 \leq T_{n+1} - T_n} \norm{ \sum_{i = m(T_n+t_1)}^{m(T_n+t_2) - 1} \alpha(i) \left[H_c(x,y, W_{i+1}) - h_c(x,y) \right] }    & = 0 \quad a.s., \label{eq: h property H h minus 0} \\ 
   \sup_{c\geq 1} \sup_n \sup_{0\leq t_1 \leq t_2 \leq T_{n+1} - T_n} \norm{ \sum_{i = m(T_n+t_1)}^{m(T_n+t_2) - 1} \alpha(i) H_c(0, W_{i+1}) }   &< \infty \quad a.s., \label{eq: h property H bounded}\\
   \lim_{\delta \to 0^+} \sup_{c\geq 1} \limsup_n \sup_{0\leq t_2-t_1 \leq \delta } \norm{ \sum_{i = m(T_n+t_1)}^{m(T_n+t_2) - 1} \alpha(i) H_c(0, W_{i+1}) }   & = 0 \quad a.s. \label{eq: h property close t 0}\\
   \end{align}
   By the same arguments, we can show the last three results with the substitutions of $G$ for $H$ and $\beta(i)$ for $\alpha(i)$.
\end{lemma}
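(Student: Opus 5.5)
The plan is to prove the four claims in order of difficulty. The three claims involving only the origin follow from the exact identity $H_c(0,0,w) = H(0,0,w)/c$. The genuinely new work is in \eqref{eq: h property H h minus 0}, where the supremum over $c$ sits outside the $\limsup_n$. Throughout I fix a sample path on which Assumption~\ref{assumption: lln} and hence Lemma~\ref{lemma: H h 0} hold. I use that $T_n$ is one of the grid points $t(m)$ and that $T_{n+1}-T_n \le T+1$ for all large $n$ (Lemma~\ref{lemma: T-minus}). So every window $[T_n+t_1, T_n+t_2]$ with $0\le t_1\le t_2\le T_{n+1}-T_n$ is covered by the asymptotic rate of change in Lemma~\ref{lemma: H h 0} with $\tau = T+1$.

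For \eqref{eq: h,g property h}, I take expectations of the identity to get $h_c(0) = h(0)/c$. Hence $\sup_{c\ge 1}\norm{h_c(0)} \le \norm{h(0)} < \infty$, and likewise for $g_c$. For \eqref{eq: h property H bounded} and \eqref{eq: h property close t 0}, I will write
\begin{align}
\sum_{i=m(T_n+t_1)}^{m(T_n+t_2)-1}\alpha(i)H_c(0,W_{i+1}) = \frac{1}{c}\sum_{i=m(T_n+t_1)}^{m(T_n+t_2)-1}\alpha(i)\qty[H(0,W_{i+1})-h(0)] + \frac{h(0)}{c}\sum_{i=m(T_n+t_1)}^{m(T_n+t_2)-1}\alpha(i).
\end{align}
Since $1/c\le 1$, both claims reduce to statements about $c=1$. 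The first term has vanishing $\limsup_n$ by \eqref{eq: H h minus 0}. The second is at most $\norm{h(0)}(t_2-t_1+\alpha(\cdot))$, which is $\le \norm{h(0)}(T+2)$ for large $n$ and $\le \norm{h(0)}(\delta + o(1))$ on windows of length $\delta$. Each finite initial segment of $n$ is a finite sum, which gives the $\sup_n$ bound. Letting $\delta\to 0^+$ gives \eqref{eq: h property close t 0}.

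For \eqref{eq: h property H h minus 0}, fix $(x,y)$ and split $c$ into a large range and a compact range. Choose $c_0$ with $\abs{\kappa_H(c)}\le 1$ for $c\ge c_0$. On this range, use Assumption~\ref{assumption: H c H infty} to write
\begin{align}
H_c(x,y,w)-h_c(x,y) = \qty[H_\infty(x,y,w)-h_\infty(x,y)] + \kappa_H(c)\qty[b_H(x,y,w)-\E b_H(x,y)].
\end{align}
I need $H_\infty(x,y,\cdot)$ and $b_H(x,y,\cdot)$ to have zero asymptotic rate of change. To get this, pick two values $c_1\neq c_2$ with $\kappa_H(c_1)\ne\kappa_H(c_2)$; if $\kappa_H$ is constant, it must be identically $0$ and the claim is immediate. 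Then solve the $2\times 2$ linear system to express $H_\infty$ and $b_H$ as fixed linear combinations of $H(c_1x,c_1y,\cdot)/c_1$ and $H(c_2x,c_2y,\cdot)/c_2$. Both of these have zero rate of change by \eqref{eq: H h minus 0} applied at the points $(c_jx,c_jy)$. The same combination also shows $b_H$ is integrable. The bound is then uniform in $c\ge c_0$ because $\abs{\kappa_H(c)}\le 1$.

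The compact range $c\in[1,c_0]$ is the step I expect to be the main obstacle. I would first use a finite net $c^{(1)},\dots,c^{(M)}$ of $[1,c_0]$, for which the claim holds pointwise by Lemma~\ref{lemma: H h 0}. The net is controlled by Lipschitz continuity in $c$: by \eqref{eq: H L},
\begin{align}
\norm{H_c(x,y,w)-H_{c'}(x,y,w)} \le L(w)\,\abs{c-c'}\,\norm{(x,y)}\,\qty(1+\tfrac{c_0}{c}) + \abs{\tfrac1c-\tfrac1{c'}}\,\norm{H(0,w)},
\end{align}
which uses the bound $\norm{H(c'x,c'y,w)}\le \norm{H(0,w)}+L(w)c'\norm{(x,y)}$. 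The weighted sums of $L(W_{i+1})$ and of $H(0,W_{i+1})$ over windows of length at most $T+1$ are bounded for large $n$: the first by \eqref{eq: L property minus 0}, the second via the decomposition already used for \eqref{eq: h property H bounded}. The corresponding difference of expectations is bounded by $L\abs{c-c'}\norm{(x,y)}(1+c_0/c)+\abs{\tfrac1c-\tfrac1{c'}}\norm{h(0)}$. Taking the mesh small then makes the supremum over $[1,c_0]$ arbitrarily small. One point needs care: I need the sum of $\alpha(i)\norm{H(0,W_{i+1})}$, not just its signed version. If this is not directly available, I would replace $H(0,w)$ by $H_\infty(0,w)+\kappa_H(1)b_H(0,w)$ and bound $\norm{b_H(0,w)}$ through the Lipschitz function $L_b(w)$, whose rate of change is controlled by \eqref{eq: L b property minus 0}. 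The $G$, $\beta(i)$ versions follow verbatim.
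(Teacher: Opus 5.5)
\textbf{Overall.} Your proposal is essentially correct, but it takes a much longer route than needed for \eqref{eq: h property H h minus 0}. Your arguments for the bound on $h_c(0)$, $g_c(0)$, and for \eqref{eq: h property H bounded} and \eqref{eq: h property close t 0}, are the paper's: the identity $H_c(0,w)=H(0,w)/c$, the split into $H(0,\cdot)-h(0)$ plus $h(0)\sum_i\alpha(i)$, and Lemma~\ref{lemma: H h 0}.

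\textbf{The averaging claim.} Here you locate the difficulty in the wrong place. You correctly note that $\sup_{c\geq 1}$ sits outside $\limsup_n$, but that is exactly what makes the claim immediate, since no uniformity in $c$ is required. For each fixed $c$ we have $H_c(x,y,w)-h_c(x,y)=\frac1c[H(cx,cy,w)-h(cx,cy)]$. So the inner $\limsup_n\sup_{t_1,t_2}$ equals $\frac1c$ times the asymptotic rate of change of $H(cx,cy,\cdot)$ over windows of length at most $T+\sup_j\alpha(j)$. That rate of change is $0$ by Lemma~\ref{lemma: H h 0} at the point $(cx,cy)$. The paper's proof is just $\sup_{c\geq1}\frac1c\cdot 0=0$. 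It relies on the convention, stated after Assumption~\ref{assumption: lln}, that the averaging condition holds for all $(x,y)$ on a single probability-one set.

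\textbf{Comparison with your two-range argument.} Your argument uses the $2\times 2$ system recovering $H_\infty$ and $b_H$ from $H_{c_1},H_{c_2}$ for $c\geq c_0$, plus a finite net on $[1,c_0]$. This proves the stronger statement with $\sup_c$ inside $\limsup_n$. It also invokes Lemma~\ref{lemma: H h 0} at only countably many points for each $(x,y)$, so it would survive if the null set were allowed to depend on the point. Neither feature is needed anywhere in the paper.

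\textbf{A flaw in the compact-range step.} As written, this step has a hole. No assumption controls the unsigned sum $\sum_i\alpha(i)\norm{H(0,W_{i+1})}$. Your fallback does not supply it either, because $L_b(w)$ bounds only increments $\norm{b_H(z,w)-b_H(z',w)}$, not the value $\norm{b_H(0,w)}$.

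The repair is to avoid the termwise triangle inequality on the part $(\frac1c-\frac1{c'})H(c'x,c'y,w)$. Its coefficient does not depend on $i$, so factor it out. Then use the signed bound $\norm{\sum_i\alpha(i)H(c'x,c'y,W_{i+1})}\leq o(1)+\norm{h(c'x,c'y)}(T+2)$, which Lemma~\ref{lemma: H h 0} provides at the fixed net point $c'$. Only the remaining part $\frac1c[H(cx,cy,w)-H(c'x,c'y,w)]$ needs the pointwise Lipschitz factor $L(w)$, whose weighted sums are controlled.
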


   \begin{proof}
   \textbf{Proof of \eqref{eq: h,g property h}:}
   \begin{align}
   \sup_{c\geq 1}  \norm{h_c(0)} = \sup_{c\geq 1} \norm{ \frac{h(0)}{c} } \leq \sup_{c\geq 1}  \norm{h(0)}  = \norm{h(0)} < \infty \label{eq: h c bound}, \\
   \sup_{c\geq 1}  \norm{g_c(0)} = \sup_{c\geq 1} \norm{ \frac{g(0)}{c} } \leq \sup_{c\geq 1}  \norm{g(0)}  = \norm{g(0)} < \infty \label{eq: g c bound}.
   \end{align}
   \textbf{Proof of \eqref{eq: h property H h minus 0}:}
   $\forall x$,
   \begin{align}
   &\sup_{c\geq 1} \limsup_n \sup_{0\leq t_1 \leq t_2 \leq T_{n+1} - T_n} \norm{ \sum_{i = m(T_n+t_1)}^{m(T_n+t_2) - 1} \alpha(i) \left[H_c(x,y,W_{i+1}) - h_c(x,y) \right] }   \\
   =&\sup_{c\geq 1} \limsup_n \sup_{0\leq t_1 \leq t_2 \leq T_{n+1} - T_n} \norm{ \sum_{i = m(T_n+t_1)}^{m(T_n+t_2) - 1} \alpha(i) \left[\frac{H(cx,cy,W_{i+1})}{c} - \frac{h(cx,cy)}{c} \right] }\\
   =& \sup_{c\geq 1} \frac{1}{c} \limsup_n \sup_{0\leq t_1 \leq t_2 \leq T_{n+1} - T_n} \norm{ \sum_{i = m(T_n+t_1)}^{m(T_n+t_2) - 1} \alpha(i) \left[H(cx, cy,W_{i+1}) - h(cx,cy) \right] }\\
   \leq& \sup_{c\geq 1} \frac{1}{c} \limsup_n \sup_{0\leq t_1 \leq t_2 \leq T + \sup_j \alpha(j)} \norm{ \sum_{i = m(T_n+t_1)}^{m(T_n+t_2) - 1} \alpha(i) \left[H(cx, cy,W_{i+1}) - h(cx,cy) \right] } \explain{$\forall n, T_{n+1} - T_n \leq T + \sup_j \alpha(j)$} \\
   =& \sup_{c\geq 1} \frac{1}{c} \cdot 0 \\
   =& 0 . \label{eq: lim H h t 0}\\ 
   \end{align}  
   \textbf{Proof of \eqref{eq: h property H bounded}:}
   \begin{align}
   &\limsup_n \sup_{0\leq t_1 \leq t_2 \leq T_{n+1} - T_n} \norm{ \sum_{i = m(T_n + t_1)}^{m(T_n+t_2) - 1} \alpha(i) H(0, W_{i+1}) } \\
   =& \limsup_n \sup_{0\leq t_1 \leq t_2 \leq T_{n+1} - T_n} \norm{ \sum_{i = m(T_n + t_1)}^{m(T_n+t_2) - 1} \alpha(i) [H(0, W_{i+1}) - h(0) + h(0)] } \\
   \leq& \limsup_n \sup_{0\leq t_1 \leq t_2 \leq T_{n+1} - T_n} \norm{ \sum_{i = m(T_n + t_1)}^{m(T_n+t_2) - 1} \alpha(i) [H(0, W_{i+1}) - h(0)] } \\
   &+   \limsup_n \sup_{0\leq t_1 \leq t_2 \leq T_{n+1} - T_n} \norm{ \sum_{i = m(T_n + t_1)}^{m(T_n+t_2) - 1} \alpha(i)  h(0) } \\
   \leq& \limsup_n \sup_{0\leq t_1 \leq t_2 \leq T + \sup_j \alpha(j)} \norm{ \sum_{i = m(T_n + t_1)}^{m(T_n+t_2) - 1} \alpha(i) [H(0, W_{i+1}) - h(0)] } \\
   &+   \limsup_n \sup_{0\leq t_1 \leq t_2 \leq T + \sup_j \alpha(j)} \norm{ \sum_{i = m(T_n + t_1)}^{m(T_n+t_2) - 1} \alpha(i)  h(0) } \explain{$\forall n, T_{n+1} - T_n \leq T + \sup_j \alpha(j)$} \\
   =& \limsup_n \sup_{0\leq t_1 \leq t_2 \leq T + \sup_j \alpha(j)} \norm{ \sum_{i = m(T_n + t_1)}^{m(T_n+t_2) - 1} \alpha(i)  h(0) } \\
   =&\norm{h(0) } \limsup_n   \sup_{0\leq t_1 \leq t_2 \leq T+\sup_j \alpha(j)} \sum_{i = m(T_n + t_1)}^{m(T_n+t_2) - 1} \alpha(i) \\
   =&\norm{h(0) } (T + \sup_j \alpha(j))   \explain{by Lemma~\ref{lemma: T-minus}}  \\
   <& \infty. \label{eq: bounded-h0 inf y sample path}
   \end{align}
   We now consider $c$ in the above bounds. We first get
   \begin{align}
   &\sup_{c\geq 1}\sup_n \sup_{0\leq t_1 \leq t_2 \leq T_{n+1} - T_n} \norm{ \sum_{i = m(T_n + t_1)}^{m(T_n+t_2) - 1} \alpha(i) H_c(0, W_{i+1}) }  \\
   =&\sup_{c\geq 1}\sup_n \sup_{0\leq t_1 \leq t_2 \leq T_{n+1} - T_n} \norm{ \sum_{i = m(T_n + t_1)}^{m(T_n+t_2) - 1} \alpha(i) \frac{H(0, W_{i+1}) }{c}} \explain{by \eqref{def: H c}} \\
   =&\sup_n \sup_{0\leq t_1 \leq t_2 \leq T_{n+1} - T_n} \norm{ \sum_{i = m(T_n + t_1)}^{m(T_n+t_2) - 1} \alpha(i) H(0, W_{i+1})} \explain{by $c\geq 1$} \\
   <& \infty. \explain{by \eqref{eq: bounded-h0 inf y sample path}} 
   \end{align}
   \textbf{Proof of \eqref{eq: h property close t 0}:}
   \begin{align}
   &\lim_{\delta \to 0^+}   \sup_{c\geq 1} \limsup_n \sup_{0\leq t_2-t_1 \leq \delta } \norm{ \sum_{i = m(T_n+t_1)}^{m(T_n+t_2) - 1} \alpha(i) H_c(0, W_{i+1}) }  \\
   \leq &\lim_{\delta \to 0^+}   \sup_{c\geq 1} \limsup_n \sup_{0\leq t_2-t_1 \leq \delta } \norm{ \sum_{i = m(T_n+t_1)}^{m(T_n+t_2) - 1} \alpha(i) \left[ H_c(0, W_{i+1}) -h_c(0)\right] } \\
   &+ \lim_{\delta \to 0^+}  \sup_{c\geq 1} \limsup_n \sup_{0\leq t_2-t_1 \leq \delta } \norm{ \sum_{i = m(T_n+t_1)}^{m(T_n+t_2) - 1} \alpha(i) h_c(0) }  \\
   \leq & 0  + \lim_{\delta \to 0^+}   \sup_{c\geq 1} \limsup_n \sup_{0\leq t_2-t_1 \leq \delta } \norm{ \sum_{i = m(T_n+t_1)}^{m(T_n+t_2) - 1} \alpha(i) h_c(0) } \explain{by \eqref{eq: lim H h t 0}}   \\
   \leq & 0  + \lim_{\delta \to 0^+}   \sup_{c\geq 1} \limsup_n \sup_{0\leq t_2-t_1 \leq \delta } \norm{ \sum_{i = m(T_n+t_1)}^{m(T_n+t_2) - 1} \alpha(i) \frac{h(0)}{c} }   \\
   \leq & 0  + \norm{ h(0)} \lim_{\delta \to 0^+}  \sup_{c\geq 1}   \frac{1}{c}   \limsup_n \sup_{0\leq t_2-t_1 \leq \delta }  \sum_{i = m(T_n+t_1)}^{m(T_n+t_2) - 1} \alpha(i)     \\
   \leq&   \norm{ h(0)} \lim_{\delta \to 0^+}   \sup_{c\geq 1} \frac{1}{c}  \delta  \explain{by \eqref{eq: lim lr t_1 t_2}} \\
   =&   \norm{ h(0)} \lim_{\delta \to 0^+}    \delta \\
   =& 0.
   \end{align}
\end{proof}

We now have some more regularity results in the fast timescale.
   
\begin{lemma}\label{lemma: alpha L(y) bound}
    \begin{align}
    \sup_n \sup_{0\leq t_1 \leq t_2 \leq T_{n+1} - T_n} \norm{ \sum_{i = m(T_n+t_1)}^{m(T_n+t_2) - 1} \alpha(i) L(W_{i+1}) }   &< \infty \quad a.s.,  \label{eq: L property bounded}\\
    \lim_{n \to \infty} \sup_{0\leq t_1 \leq t_2 \leq T_{n+1} - T_n} \norm{ \sum_{i = m(T_n+t_1)}^{m(T_n+t_2) - 1} \beta(i) L(W_{i+1}) }   &= 0 \quad a.s.,  \label{eq: beta L property bounded} \\
    \lim_{\delta \to 0^+}  \limsup_n \sup_{0\leq t_2-t_1 \leq \delta } \norm{ \sum_{i = m(T_n+t_1)}^{m(T_n+t_2) - 1} \alpha(i) L(W_{i+1}) }   & = 0 \quad a.s., \label{eq: L property close t 0}\\
    \sup_n \sup_{0\leq t_1 \leq t_2 \leq T_{n+1} - T_n} \norm{ \sum_{i = m(T_n+t_1)}^{m(T_n+t_2) - 1} \alpha(i) L_b(W_{i+1}) }   &< \infty \quad a.s.  \label{eq: L b property bounded}\\
    \end{align}
\end{lemma}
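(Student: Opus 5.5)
The plan is to reduce every claim to two earlier results: the vanishing asymptotic rate of change of $L(\cdot)$ and $L_b(\cdot)$ in Lemma~\ref{lemma: H h 0}, and the step-size bookkeeping of Lemma~\ref{lemma: T-minus}. First, we may take $L(w)\geq 0$ and $L_b(w)\geq 0$, since they are Lipschitz moduli. All summands are then nonnegative, so the norms are just the sums themselves. For fixed $n$, the supremum over $0\le t_1\le t_2\le T_{n+1}-T_n$ is therefore attained by the full window $t_1=0$, $t_2=T_{n+1}-T_n$. Second, $T_n$ is a grid point, so $T_n=t(m(T_n))$. Writing $n'=m(T_n)$, every window $[T_n+t_1,T_n+t_2]$ has the form $[t(n')+t_1,t(n')+t_2]$ used in Lemma~\ref{lemma: H h 0}. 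We take $\tau\doteq T+\sup_j\alpha(j)$, which dominates $T_{n+1}-T_n$ for all $n$ by the proof of Lemma~\ref{lemma: T-minus}.

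For \eqref{eq: L property bounded}, split $\alpha(i)L(W_{i+1})=\alpha(i)L+\alpha(i)[L(W_{i+1})-L]$. The first part sums to at most $L\,(T_{n+1}-T_n)\le L\tau$, because the $\alpha(i)$ over the window add up to exactly $T_{n+1}-T_n$. By \eqref{eq: L property minus 0}, the second part has $\limsup$ (over $n$, uniformly in $t_1,t_2\in[0,\tau]$) equal to $0$. Hence it is bounded by $1$ for all $n\ge N$. The finitely many $n<N$ each give a finite sum of finite terms. Taking the maximum gives a finite supremum on the fixed sample path. Claim \eqref{eq: L b property bounded} is identical, with $L_b$, $L_b(w)$ and \eqref{eq: L b property minus 0} in place of $L$, $L(w)$ and \eqref{eq: L property minus 0}.

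For \eqref{eq: L property close t 0}, use the same split. The deterministic part is at most $L\sum\alpha(i)$ over a window of length at most $\delta$. By \eqref{eq: lim lr t_1 t_2} (or directly $t(m(b))-t(m(a))\le b-a+\alpha(m(a))$), its $\limsup_n$ is at most $L\delta$. The fluctuation part has $\limsup_n$ equal to $0$, uniformly over $0\le t_1\le t_2\le\tau$, again by \eqref{eq: L property minus 0}. Hence the $\limsup_n$ is at most $L\delta$, which tends to $0$ as $\delta\to0^+$.

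For \eqref{eq: beta L property bounded}, I expect the main obstacle to be that Lemma~\ref{lemma: H h 0} with $\beta$ is naturally stated on the $\beta$-grid, not on the $\alpha$-windows used here. I would sidestep this using nonnegativity and \eqref{eq lr ratio}. Set $\varepsilon_n\doteq\sup_{i\ge m(T_n)}\beta(i)/\alpha(i)$; then $\varepsilon_n\to0$ because $m(T_n)\to\infty$. Since $\beta(i)L(W_{i+1})\le\varepsilon_n\,\alpha(i)L(W_{i+1})$ for every $i$ in the window, the $\beta$-sum is at most $\varepsilon_n$ times the $\alpha$-sum. By \eqref{eq: L property bounded} the $\alpha$-sum is bounded uniformly in $n$, so the product tends to $0$. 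This is the same mechanism as in the proof of \eqref{eq: beta t_1 t_2}. The only point that needs care is that $\varepsilon_n$ must be taken over indices starting at $m(T_n)$, so that it controls the whole window.
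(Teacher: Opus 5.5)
Your proposal is correct, and for \eqref{eq: L property bounded}, \eqref{eq: L property close t 0} and \eqref{eq: L b property bounded} it is the argument the paper intends when it omits these as similar to Lemma~\ref{lemma: bounded-h0 inf y}. That argument is the mean/fluctuation split, Lemma~\ref{lemma: H h 0} applied along $n'=m(T_n)$, and the step-size bookkeeping of Lemma~\ref{lemma: T-minus}; you handle the finitely many small $n$ more explicitly than the paper does.

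Two small remarks on these three claims. Your argument works for any fixed $\tau$, so take $\tau\ge T+1$, since these bounds are later used for windows in $[0,T+1)$. Also rely on your direct bound $t(m(b))-t(m(a))\le b-a+\alpha(m(a))$ rather than on \eqref{eq: lim lr t_1 t_2}, which is only pointwise in $(t_1,t_2)$.

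The genuine difference is in \eqref{eq: beta L property bounded}. The paper splits $\beta(i)L(W_{i+1})=\beta(i)[L(W_{i+1})-L]+\beta(i)L$. It handles the fluctuation by Lemma~\ref{lemma: H h 0} with $\beta(i)$ substituted for $\alpha(i)$ but still on the fast-timescale windows, and the mean part by \eqref{eq: beta t_1 t_2}. You instead use $L(w)\ge 0$, which is forced by the Lipschitz inequality itself. This lets you dominate termwise by $\varepsilon_n\alpha(i)L(W_{i+1})$ with $\varepsilon_n=\sup_{i\ge m(T_n)}\beta(i)/\alpha(i)\to 0$, reducing the claim to the already-proved \eqref{eq: L property bounded}.

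What each route buys:
\begin{itemize}
\item Your route avoids the mixed ``$\beta$-weights on $\alpha$-windows'' averaging statement, which the paper states without proof. For this claim it does not need the $\beta$-part of Assumption~\ref{assumption: lln}.
\item The paper's split uses no sign condition, so it carries over to signed integrands such as $G(0,\cdot)$. There, your domination trick would need an averaging assumption for $\abs{\gamma}$, which is not assumed.
\end{itemize}
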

    These proofs are similar to the proofs of Lemma \ref{lemma: bounded-h0 inf y} and are thus omitted, except for the proof of \eqref{eq: beta L property bounded} (since this is a two-timescale novelty and we need it for Lemma \ref{lemma: third term disc error}). For the other statements, we can also replace $\alpha(i)$ with $\beta(i)$.

    \begin{proof}
        Here we show \ref{eq: beta L property bounded}:
    \begin{align}
   &\lim_n \sup_{0\leq t_1 \leq t_2 \leq T_{n+1} - T_n} \norm{ \sum_{i = m(T_n + t_1)}^{m(T_n+t_2) - 1} \beta(i) L(W_{i+1}) } \\
   =& \lim_n \sup_{0\leq t_1 \leq t_2 \leq T_{n+1} - T_n} \norm{ \sum_{i = m(T_n + t_1)}^{m(T_n+t_2) - 1} \beta(i) [L(W_{i+1}) - L + L] } \\
   \leq& \lim_n \sup_{0\leq t_1 \leq t_2 \leq T_{n+1} - T_n} \norm{ \sum_{i = m(T_n + t_1)}^{m(T_n+t_2) - 1} \beta(i) [L(W_{i+1}) - L] } \\
   &+   \lim_n \sup_{0\leq t_1 \leq t_2 \leq T_{n+1} - T_n} \norm{ \sum_{i = m(T_n + t_1)}^{m(T_n+t_2) - 1} \beta(i)  L } \\
   \leq& \lim_n \sup_{0\leq t_1 \leq t_2 \leq T + \sup_j \alpha(j)} \norm{ \sum_{i = m(T_n + t_1)}^{m(T_n+t_2) - 1} \beta(i) [L(W_{i+1}) - L] } \\
   &+   \lim_n \sup_{0\leq t_1 \leq t_2 \leq T + \sup_j \alpha(j)} \norm{ \sum_{i = m(T_n + t_1)}^{m(T_n+t_2) - 1} \beta(i)  L } \explain{$\forall n, T_{n+1} - T_n \leq T + \sup_j \alpha(j)$} \\
   =& \lim_n \sup_{0\leq t_1 \leq t_2 \leq T + \sup_j \alpha(j)} \norm{ \sum_{i = m(T_n + t_1)}^{m(T_n+t_2) - 1} \beta(i) L } \\
   =&\norm{L } \lim_n   \sup_{0\leq t_1 \leq t_2 \leq T+\sup_j \alpha(j)} \sum_{i = m(T_n + t_1)}^{m(T_n+t_2) - 1} \beta(i) \\
   =&0.   \explain{by Lemma~\ref{lemma: T-minus}}  \\
   \end{align}
\end{proof}

We now show that several different quantities are bounded in the fast timescale.

\begin{lemma}\label{lemma: def C H}
    Fix a sample path $\qty{x_0,y_0, \qty{W_i}_{i=1}^\infty}$, 
    there exist constants $C_H, C_G$ such that
    \begin{align}
    &LT \leq C_H, LT \leq C_G \label{eq: LT C_H} \\
    &\sup_{c\geq 1} \norm{ h_c(0) }   \leq \frac{C_H}{T}, \sup_{c\geq 1} \norm{ g_c(0) }   \leq \frac{C_G}{T} \label{eq: h c c_H} \\
    &\sup_{c\geq 1} \sup_n \sup_{0\leq t_1 \leq t_2 \leq T_{n+1} - T_n} \norm{ \sum_{i = m(T_n+t_1)}^{m(T_n+t_2) - 1} \alpha(i) H_c(0, W_{i+1}) }   \leq C_H, \label{eq: H c c_H}\\
    &\sup_{c\geq 1} \sup_n \sup_{0\leq t_1 \leq t_2 \leq T_{n+1} - T_n} \norm{ \sum_{i = m(T_n+t_1)}^{m(T_n+t_2) - 1} \beta(i) G_c(0, W_{i+1}) }   \leq C_G, \label{eq: G c c_G}\\
    &\sup_n  \sup_{0\leq t_1 \leq t_2 \leq T_{n+1} - T_n} \sum_{i = m(T_n+t_1)}^{m(T_n+t_2) - 1} \alpha(i) L(W_{i+1})   \leq C_H. \label{eq: alpha L bound C_H}
    \end{align}
    We can replace $\alpha(i)$ with $\beta(i)$ in the last statement. Moreover, for convenience of presentation, we denote 
    \begin{align}
    C_{\hat{x}} \doteq \left[1  + C_H\right] e^{C_H}, C_{\hat{y}} \doteq \left[1  + C_G\right] e^{C_G}, C' \doteq C_G + C_H, C_{\hat{z}} \doteq [1+C']e^{C'} \label{def: C hat x}
    \end{align}
\end{lemma}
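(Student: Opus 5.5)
The plan is to take $C_H$ and $C_G$ to be maxima of finitely many quantities, each already shown to be finite on the fixed sample path, so no new estimate is needed. Fix the sample path $\qty{x_0,y_0,\qty{W_i}_{i=1}^\infty}$ on the probability-one set where Assumption~\ref{assumption: lln} holds, so that the almost-sure statements of Lemmas~\ref{lemma: bounded-h0 inf y} and~\ref{lemma: alpha L(y) bound} hold as deterministic facts. I will define
\begin{align}
C_H \doteq \max\Big\{ LT,\; T\sup_{c\ge 1}\norm{h_c(0)},\; M_H,\; M_L^{\alpha}\Big\},
\end{align}
where $M_H$ is the finite supremum in \eqref{eq: h property H bounded}, and $M_L^{\alpha}$ is the finite supremum in \eqref{eq: L property bounded}. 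The constant $C_G$ is defined analogously: $LT$, $T\sup_{c\ge1}\norm{g_c(0)}$, the $G$/$\beta$ version of \eqref{eq: h property H bounded}, and the $\beta$ version of \eqref{eq: L property bounded}.

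First, \eqref{eq: LT C_H} holds by construction, since $L<\infty$ by Assumption~\ref{assumption: H Lipschitz}. Second, \eqref{eq: h c c_H} follows from \eqref{eq: h,g property h}. That result gives $\sup_{c\ge1}\norm{h_c(0)} = \norm{h(0)}<\infty$, so $T\sup_c\norm{h_c(0)}\le C_H$, and the same for $g$. Third, \eqref{eq: H c c_H} is exactly \eqref{eq: h property H bounded} with the bound renamed. Likewise, \eqref{eq: G c c_G} is the version of \eqref{eq: h property H bounded} obtained by substituting $G$ for $H$ and $\beta(i)$ for $\alpha(i)$, which Lemma~\ref{lemma: bounded-h0 inf y} states holds by the same argument. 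Fourth, \eqref{eq: alpha L bound C_H} is \eqref{eq: L property bounded}. Since $L(w)\ge 0$, the norm of the sum equals the sum itself. The $\beta$-replacement comes from the remark after Lemma~\ref{lemma: alpha L(y) bound}, and putting the $\beta$-bound into $C_G$ (or into both constants) covers that case.

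The only point needing care is that the earlier lemmas' proofs pass through $\limsup_n$, whereas here I need $\sup_n$. I will handle this by noting that the $\limsup$ bound controls all $n\ge n_0$ for some $n_0$. For each of the finitely many $n<n_0$, the window $[T_n,T_{n+1}]$ contains finitely many indices $i$, because $T_{n+1}-T_n\le T+\sup_j\alpha(j)$. Each term $\alpha(i)H_c(0,W_{i+1})=\alpha(i)H(0,W_{i+1})/c$ is bounded uniformly in $c\ge1$ by $\alpha(i)\norm{H(0,W_{i+1})}$, so these early windows contribute a finite maximum, which I absorb into $M_H$ and $M_L^\alpha$. The remaining definitions in \eqref{def: C hat x} are just notation and require no proof.
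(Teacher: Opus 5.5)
Your proposal is correct and matches the paper's proof: both take $C_H$ and $C_G$ to dominate finitely many quantities already shown finite on the fixed sample path, namely $LT$, $T\sup_{c\ge1}\norm{h_c(0)}$, the bound \eqref{eq: h property H bounded}, and the bound \eqref{eq: L property bounded}, together with their $G$/$\beta$ analogues. Two of your steps are more careful than the paper's proof: you treat the finitely many early windows separately to pass from the $\limsup_n$ the earlier lemmas actually establish to the $\sup_n$ this lemma needs, and you construct $C_G$ explicitly, whereas the paper handles only $C_H$.
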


\begin{proof}
        Fix a sample path $\qty{x_0,y_0,\qty{W_i}_{i=1}^\infty}$, 
    \begin{align}
        &\explaind{LT < \infty,}{$L$ and $T$ are constants}   \\
        &\explaind{\sup_{c\geq 1} \norm{ h_c(0) }T < \infty,}{by \eqref{eq: h c bound}}  \\
        &\explaind{\sup_{c\geq 1} \sup_n \sup_{0\leq t_1 \leq t_2 \leq T_{n+1} - T_n} \norm{ \sum_{i = m(T_n+t_1)}^{m(T_n+t_2) - 1} \alpha(i) H_c(0, W_{i+1}) }   < \infty,}{by \eqref{eq: h property H bounded}} \\
        &\explaind{\sup_n  \sup_{0\leq t_1 \leq t_2 \leq T_{n+1} - T_n} \sum_{i = m(T_n+t_1)}^{m(T_n+t_2) - 1} \alpha(i) L(W_{i+1})   < \infty.}{by \eqref{eq: L property bounded}} 
    \end{align}
        Thus, there exists a constant $C_H$ such that
    \begin{align}
        &LT \leq C_H \\
        &\sup_{c\geq 1} \norm{ h_c(0) }   \leq \frac{C_H}{T}, \\
        &\sup_{c\geq 1} \sup_n \sup_{0\leq t_1 \leq t_2 \leq T_{n+1} - T_n} \norm{ \sum_{i = m(T_n+t_1)}^{m(T_n+t_2) - 1} \alpha(i) H_c(0, W_{i+1}) }   \leq C_H,\\
        &\sup_n  \sup_{0\leq t_1 \leq t_2 \leq T_{n+1} - T_n} \sum_{i = m(T_n+t_1)}^{m(T_n+t_2) - 1} \alpha(i) L(W_{i+1})   \leq C_H.
    \end{align}
        
\end{proof}

We show that the scaled iterates are bounded in the fast timescale.

\begin{lemma}\label{lemma: bound z hat}
    $\sup_{n, t \in [0, T+1)}   \norm{\tilde{z}_n(t)} \leq C_{\hat z}$. 
\end{lemma}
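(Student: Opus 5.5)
We prove $\sup_{n, t\in[0,T+1)}\norm{\tilde z_n(t)}\le C_{\hat z}$ by a discrete Gronwall argument on the rescaled recursion, in the spirit of the analogous bound in \citet{liu2025ode}. Fix $n$ and $t\in[0,T+1)$. From the summation form of $\tilde z_n(t)$ in Section~\ref{sec: construct sequence}, and since $\norm{(a,b)}\le\norm{a}+\norm{b}$, we have
\begin{align}
\norm{\tilde z_n(t)} \le& \norm{\tilde z_n(0)} + \norm{\sum_{i=m(T_n)}^{m(T_n+t)-1}\alpha(i)H_{r_n}(0,W_{i+1})} + \norm{\sum_{i=m(T_n)}^{m(T_n+t)-1}\beta(i)G_{r_n}(0,W_{i+1})} \\
&+ \sum_{i=m(T_n)}^{m(T_n+t)-1}\left(\alpha(i)+\beta(i)\right)L(W_{i+1})\norm{\tilde z_n(t(i)-T_n)}.
\end{align}
Here we subtracted $H_{r_n}(0,\cdot)$ and $G_{r_n}(0,\cdot)$ and applied the Lipschitz bounds \eqref{eq: H c L} and \eqref{eq: G c L} of Lemma~\ref{lemma: h Lipschitz}.

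The first term is at most $1$ by \eqref{eq: hat-z-norm-1}. The second and third terms are bounded by $C_H$ and $C_G$ via \eqref{eq: H c c_H} and \eqref{eq: G c c_G}, uniformly in $c=r_n\ge 1$ and in $n$.

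The main obstacle is a range mismatch. Lemma~\ref{lemma: def C H} controls sums over $0\le t_1\le t_2\le T_{n+1}-T_n$, whereas here $t$ ranges over $[0,T+1)$. Since $T_{n+1}-T_n\to T<T+1$ by Lemma~\ref{lemma: T-minus}, the window $[T_n,T_n+T+1)$ can exceed one period. I would handle this in one of two ways. The first option is to note that the proofs of \eqref{eq: h property H bounded} and \eqref{eq: L property bounded} only use that the window length is bounded (by $T+\sup_j\alpha(j)$), so they go through verbatim with $T+1+\sup_j\alpha(j)$. The constants $C_H,C_G$ are then enlarged accordingly, which is harmless since they were chosen only as finite sample-path constants. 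The second option is to split the window into at most two consecutive periods $[T_n,T_{n+1})$ and $[T_{n+1},T_{n+1}+\cdot)$, apply the bound on each, and absorb the factor into the constants.

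Applying the same enlargement to \eqref{eq: alpha L bound C_H} and its $\beta$ analogue gives the total weight bound $\sum(\alpha(i)+\beta(i))L(W_{i+1})\le C_H+C_G=C'$. We now apply the discrete Gronwall inequality (Theorem~\ref{theorem: discrete Gronwall}) to $x_k\doteq\norm{\tilde z_n(t(m(T_n)+k)-T_n)}$. This is legitimate because $\tilde z_n$ is piecewise constant on the grid, so $\norm{\tilde z_n(t)}=x_k$ for the corresponding $k$. We use the weights $a_i=(\alpha(i)+\beta(i))L(W_{i+1})$, the constant $1+C'$, and $L=1$. Gronwall yields
\begin{align}
\norm{\tilde z_n(t)}\le (1+C')e^{C'}=C_{\hat z}
\end{align}
uniformly in $n$ and $t$, matching \eqref{def: C hat x}.

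Dropping the $\beta$-terms gives the analogous bound $C_{\hat x}$ for the $x$-component alone, if that is needed elsewhere (e.g., in Lemma~\ref{lemma: z n equicontinuous}). Since all $\beta$-sums are dominated by $\alpha$-sums for large $i$ by \eqref{eq lr ratio}, no two-timescale subtlety arises here.
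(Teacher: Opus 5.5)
Your proof is correct and is essentially the paper's argument: subtract the zero-input terms, apply the Lipschitz bounds and the uniform constants $C_H, C_G$, and close with the discrete Gronwall inequality using $\sum(\alpha(i)+\beta(i))L(W_{i+1})\le C'$. You also point out that the constants of Lemma~\ref{lemma: def C H} are stated only for windows of length $T_{n+1}-T_n$, while here $t$ ranges over $[0,T+1)$; the paper passes over this silently, and enlarging the window to $T+1+\sup_j\alpha(j)$ fixes it. Your closing side remark is not right as stated: dropping the $\beta$-terms does not give $C_{\hat x}$ for $\tilde x_n$ alone, because the $H$-increments are Lipschitz in the full $\tilde z_n$ and the inequality does not close in $\tilde x_n$. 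That remark is not needed for this lemma.
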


\begin{proof} $\forall n \in \N, t \in [0, T+1)$,
    \begin{flalign}
    &\norm{\tilde{z}_n(t)}  & \\
    =&  \norm{ \tilde{z}_n(0) + \sum_{i = m(T_n)}^{m(T_n+t) - 1} (\alpha(i) H_{r_n}(\tilde{z}_n(t(i)-T_n), W_{i+1}), \beta(i) G_{r_n}(\tilde{z}_n(t(i)-T_n), W_{i+1}))}   & \\
    \leq& \norm{\tilde{z}_n(0)}  + \norm{\sum_{i = m(T_n)}^{m(T_n+t) - 1} \alpha(i) \left[ H_{r_n}(\tilde{z}_n(t(i)-T_n), W_{i+1})  -  H_{r_n}(0, W_{i+1}) \right] + \sum_{i = m(T_n)}^{m(T_n+t) - 1} \alpha(i) H_{r_n}(0, W_{i+1}) } & 
    \\
    &+ \norm{\sum_{i = m(T_n)}^{m(T_n+t) - 1} \beta(i) \left[ G_{r_n}(\tilde{z}_n(t(i)-T_n), W_{i+1})  -  G_{r_n}(0, W_{i+1}) \right] + \sum_{i = m(T_n)}^{m(T_n+t) - 1} \beta(i) G_{r_n}(0, W_{i+1}) } \\
    \leq & \norm{\tilde{z}_n(0)}  + \sum_{i = m(T_n)}^{m(T_n+t) - 1} \alpha(i)\norm{H_{r_n}(\tilde{z}_n(t(i)-T_n), W_{i+1})  -  H_{r_n}(0, W_{i+1}) } +  C_H  & \\
    &+ \sum_{i = m(T_n)}^{m(T_n+t) - 1} \beta(i)\norm{G_{r_n}(\tilde{z}_n(t(i)-T_n), W_{i+1})  -  G_{r_n}(0, W_{i+1}) } +  C_G \\
    \leq & \norm{\tilde{z}_n(0)}  + \sum_{i = m(T_n)}^{m(T_n+t) - 1} (\alpha(i)+\beta(i))L(W_{i+1})\norm{\tilde{z}_n(t(i)-T_n)} +  C'  &  \explain{by \eqref{eq: H c c_H}}\\
    \leq &1  + \sum_{i = m(T_n)}^{m(T_n+t) - 1} (\alpha(i)+\beta(i))L(W_{i+1})\norm{\tilde{z}_n(t(i)-T_n)} +  C' &  \explain{by \eqref{eq: hat-z-norm-1}}\\
    \leq &\left[1  +  C'  \right] e^{\sum_{i = m(T_n)}^{m(T_n+t) - 1} (\alpha(i)+\beta(i))L(W_{i+1})} \explain{by $\tilde{z}_n(t) = \tilde{z}_n(t(m(T_n + t)) - T_n)$ and  discrete Gronwall inequality in Theorem \ref{theorem: discrete Gronwall}} & \\
    \leq &\left[ 1  +  C'  \right] e^{C'} \explain{by \eqref{eq: alpha L bound C_H}} & \\
    = & C_{\hat{z}}.\explain{by \eqref{def: C hat x}} 
    \end{flalign}

\end{proof}

We show that the trajectories of these ODEs are bounded in the fast timescale.

\begin{lemma}\label{lemma: bound z}
    $\sup_{n, t \in [0, T+1)}   \norm{z_n(t)} \leq C_{\hat x}$.
\end{lemma}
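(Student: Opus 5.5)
The plan is to bound the ODE solution $z_n(t)=(x_n(t),y_n(t))$ directly from its integral form \eqref{def: z n integral} and close the estimate with the continuous Gronwall inequality (Theorem~\ref{theorem: Gronwall}). This mirrors the discrete argument in Lemma~\ref{lemma: bound z hat}, but no stochastic sums appear, so only deterministic quantities need to be controlled.

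\emph{First}, the $y$-component is handled trivially. Since $\dv{y_n(t)}{t}=0$, we have $y_n(t)=\tilde y_n(0)$, and by \eqref{eq: hat-z-norm-1} this gives $\norm{y_n(t)}\le 1$. \emph{Second}, for the $x$-component, fix $n$ and $t\in[0,T+1)$ and write
\begin{align}
x_n(t)=\tilde x_n(0)+\int_0^t \left[h_{r_n}(x_n(s),y_n(s))-h_{r_n}(0)\right]ds+\int_0^t h_{r_n}(0)\,ds .
\end{align}
By Lemma~\ref{lemma: h Lipschitz}, the first integrand is bounded by $L\norm{z_n(s)}\le L(\norm{x_n(s)}+1)$. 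By \eqref{eq: h c c_H}, the second integral is bounded by $(T+1)C_H/T$. This yields an inequality of the form
\begin{align}
\norm{x_n(t)}\le 1+\tfrac{T+1}{T}C_H+L(T+1)+L\int_0^t\norm{x_n(s)}ds ,
\end{align}
with a right-hand side uniform in $n$. \emph{Third}, applying Gronwall gives
\begin{align}
\norm{x_n(t)}\le\left[1+\tfrac{T+1}{T}C_H+L(T+1)\right]e^{L(T+1)} ,
\end{align}
and using $LT\le C_H$ from \eqref{eq: LT C_H} this is bounded by a constant of the same form as $C_{\hat x}$.

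The main obstacle is purely bookkeeping. The domain is $[0,T+1)$ rather than $[0,T]$, and the $y$-component contributes to $\norm{z_n}$, so the resulting constant need not be literally $\left[1+C_H\right]e^{C_H}$. I would handle this by enlarging $C_H$ in Lemma~\ref{lemma: def C H}, which is allowed since $C_H$ was only required to exceed finitely many finite quantities. Concretely, I would additionally require $L(T+1)+1\le C_H$ and $(T+1)\sup_{c\ge1}\norm{h_c(0)}\le C_H$, so that the bound becomes $[1+C_H]e^{C_H}=C_{\hat x}$, or at worst a fixed multiple of it. For the full vector $z_n$ this gives $\norm{z_n(t)}\le C_{\hat x}+1$, which I would absorb in the same way by defining $C_{\hat x}$ with one extra unit. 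All subsequent uses, such as Lemma~\ref{lemma: z n equicontinuous}, only need some uniform bound, so this adjustment is harmless.
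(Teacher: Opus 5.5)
Your proposal is correct and is essentially the paper's argument: integral form of $z_n$, subtracting $h_{r_n}(0)$, Lipschitzness (Lemma~\ref{lemma: h Lipschitz}), the bound \eqref{eq: h c c_H}, and continuous Gronwall. The only difference is that the paper runs Gronwall directly on $\norm{z_n(t)}$, using $\norm{z_n(t)}\le\norm{z_n(0)}+\norm{\int_0^t h_{r_n}(z_n(s))ds}$, which avoids the extra $L(T+1)$ and $+1$ you pick up by splitting off the constant $y$-component. Your remark that $C_H$ must be enlarged so that $L(T+1)\le C_H$ and $(T+1)\sup_{c\ge1}\norm{h_c(0)}\le C_H$ is well taken, since the paper's own proof silently uses exactly these two inequalities in place of \eqref{eq: LT C_H} and \eqref{eq: h c c_H}.
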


\begin{proof}
    $\forall n, t \in [0, T+1)$,
    \begin{flalign}
    &\norm{z_n(t)}&  \\
    =&  \norm{ z_n(0) + \int_0^t  (h_{r_n}(z_n(s)),0)ds}&   \\
    \leq & \norm{z_n(0)}  + \norm{\int_0^t  h_{r_n}(z_n(s))ds}& \\
    \leq & \norm{z_n(0)}  + \int_0^t \norm{h_{r_n}(z_n(s))  -  h_{r_n}(0) }ds + \int_0^t  \norm{ h_{r_n}(0) }ds & \\
    \leq & \norm{z_n(0)}  +  \int_0^t L\norm{z_n(s)} ds + \int_0^t  \norm{ h_{r_n}(0) }ds&  \explain{by Lemma  \ref{lemma: h Lipschitz}} \\
    \leq & \norm{z_n(0)}  +  \int_0^t L\norm{z_n(s)} ds + (T+1)  \norm{  h_{r_n}(0)} &  \\
    \leq & \norm{z_n(0)}  +  \int_0^t L\norm{z_n(s)} ds + (T+1)  \frac{C_H}{T+1}  \explain{by \eqref{eq: h c c_H}} \\
    \leq & 1 +  \int_0^t L\norm{z_n(s)} ds + C_H \explain{by \eqref{eq: hat-z-norm-1}, \eqref{def: z n init}} \\
    \leq &\left[1 +  C_H \right] e^{L(T+1)}& \explain{by Gronwall inequality in Theorem \ref{theorem: Gronwall}}\\
    \leq &\left[1  +  C_H \right] e^{C_H}& \explain{by \eqref{eq: LT C_H}}\\
    =& C_{\hat{x}} \explain{by \eqref{def: C hat x}}
    \end{flalign}
    
\end{proof}

We show that applying $h_{r_n}$ to the ODE trajectories still is bounded.

\begin{lemma}\label{lemma: bound h z}
$\sup_{n, t \in [0, T+1)} \norm{h_{r_n}(z_n(t))} < \infty$.
\end{lemma}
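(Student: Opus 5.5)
The plan is a direct bound combining Lipschitz continuity of $h_{r_n}$ with the uniform bounds already established on the ODE trajectories. For every $n$ and $t \in [0,T+1)$, insert $h_{r_n}(0)$ and use the triangle inequality:
\begin{align}
\norm{h_{r_n}(z_n(t))} \leq \norm{h_{r_n}(z_n(t)) - h_{r_n}(0)} + \norm{h_{r_n}(0)}.
\end{align}
The two terms on the right are then bounded separately, each uniformly in $n$ and $t$.

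For the first term, invoke \eqref{eq: h c L} of Lemma~\ref{lemma: h Lipschitz}. It applies for every $c \geq 1$ with the same constant $L$, so in particular for $c = r_n$. This gives $\norm{h_{r_n}(z_n(t)) - h_{r_n}(0)} \leq L\norm{z_n(t)}$. By Lemma~\ref{lemma: bound z}, $\norm{z_n(t)} \leq C_{\hat x}$ uniformly in $n$ and $t \in [0,T+1)$. Hence the first term is at most $L C_{\hat x}$.

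For the second term, use \eqref{eq: h,g property h}, or more quantitatively \eqref{eq: h c c_H}. These give $\sup_{c\geq1}\norm{h_c(0)} = \norm{h(0)} \leq C_H/T$, since $h_c(0) = h(0)/c$ and $c \geq 1$. Combining the two bounds,
\begin{align}
\sup_{n,\, t\in[0,T+1)} \norm{h_{r_n}(z_n(t))} \leq L C_{\hat x} + \norm{h(0)} < \infty.
\end{align}

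There is no real obstacle here. The one point to check is that every bound used is uniform in the rescaling factor $r_n$, which may be unbounded. This holds because the Lipschitz constant of $h_c$ is independent of $c$ (Lemma~\ref{lemma: h Lipschitz}), because $\norm{h_c(0)}$ is nonincreasing in $c$, and because Lemma~\ref{lemma: bound z} already holds uniformly over $n$. The component $y_n(t)$ is constant and equal to $\tilde y_n(0)$, so it is covered by the same bound on $\norm{z_n(t)}$.
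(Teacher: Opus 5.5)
Your proposal is correct and is essentially the paper's proof: insert $h_{r_n}(0)$, bound $\norm{h_{r_n}(z_n(t)) - h_{r_n}(0)} \leq L\norm{z_n(t)} \leq L C_{\hat x}$ by Lemma~\ref{lemma: h Lipschitz} and Lemma~\ref{lemma: bound z}, and bound $\norm{h_{r_n}(0)}$ uniformly via \eqref{eq: h c c_H}. Your constant $L C_{\hat x} + \norm{h(0)}$ differs from the paper's $L C_{\hat x} + C_H/T$ only through the relaxation $\norm{h(0)} \leq C_H/T$, which changes nothing.
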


\begin{proof}
$\forall n, \forall t \in [0, T+1)$,
\begin{align}
&\norm{h_{r_n}(z_n(t))} \\
\leq& \norm{h_{r_n}(z_n(t)) -  h_{r_n}(0)} + \norm{h_{r_n}(0)}\\
\leq& L\norm{z_n(t)}  + \norm{h_{r_n}(0)} \explain{by Lemma \ref{lemma: h Lipschitz}} \\
\leq& L C_{\hat{x}}   + \norm{h_{r_n}(0)} \explain{by Lemma~\ref{lemma: bound z}}  \\
\leq& L C_{\hat{x}}    +  \frac{C_H}{T}.  \explain{by \eqref{def: r n} and \eqref{eq: h c c_H}} 
\end{align}
Thus, because $C_{\hat{x}}, C_H$ are independent of $n,t$, $\sup_{n, t \in [0, T+1)} \norm{h_{r_n}(z_n(t))} < \infty$.

\end{proof}

In the fast timescale, we show that the limiting trajectory is bounded.

\begin{lemma}\label{lemma: z lim bound}
$\sup_{t \in [0, T+1)}   \norm{z^{\lim}(t)} \leq C_{\hat x}$.    
\end{lemma}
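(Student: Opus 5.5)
The plan is to pass to the limit in the bound of Lemma~\ref{lemma: bound z} rather than redo a Gronwall argument on $z^{\lim}$ itself. By Lemma~\ref{lemma: three convergence}, along the subsequence $\qty{n_k}$ we have $z_{n_k}(t) \to z^{\lim}(t)$ uniformly in $t$ on $[0, T+1)$. Lemma~\ref{lemma: bound z} gives $\norm{z_{n_k}(t)} \leq C_{\hat x}$ for every $k$ and every $t \in [0,T+1)$. Fixing $t$ and letting $k \to \infty$, continuity of the norm yields $\norm{z^{\lim}(t)} = \lim_k \norm{z_{n_k}(t)} \leq C_{\hat x}$. Since $C_{\hat x}$ does not depend on $t$, taking the supremum over $t$ gives the claim.

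As a cross-check, the direct route also works. Since $z^{\lim}(0) = \tilde z^{\lim}(0) = \lim_k \tilde z_{n_k}(0)$, the bound \eqref{eq: hat-z-norm-1} gives $\norm{z^{\lim}(0)} \leq 1$. From the integral form \eqref{eq: z lim t}, together with Lemma~\ref{lemma: h Lipschitz} for $c=\infty$, we get
\begin{align}
\norm{z^{\lim}(t)} \leq 1 + \int_0^t L \norm{z^{\lim}(s)} ds + (T+1)\norm{h_\infty(0)}.
\end{align}
Gronwall (Theorem~\ref{theorem: Gronwall}) then bounds this by $[1 + (T+1)\norm{h_\infty(0)}]e^{L(T+1)}$.

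Matching this to exactly $C_{\hat x}$ requires two facts. The first is $\norm{h_\infty(0)} \leq C_H/(T+1)$, which follows from $h_\infty(0) = \lim_c h_c(0)$ and \eqref{eq: h c c_H}. The second is $L(T+1) \leq C_H$, where the constant is enlarged as in Lemma~\ref{lemma: bound z}. This is exactly why the limiting argument is preferable: it inherits whatever constant Lemma~\ref{lemma: bound z} used.

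The only delicate point is that the convergence of $z_{n_k}$ to $z^{\lim}$ is genuinely available. Lemma~\ref{lemma: three convergence} asserts it, with its proof deferred to Lemma~\ref{lemma: z n limit}, so I take it as given and pointwise convergence suffices. No equicontinuity or rate information is needed.
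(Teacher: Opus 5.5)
Your main route is circular in the paper's logical order. You take the convergence $z_{n_k}(t)\to z^{\lim}(t)$ as given, but it is proved in Lemma~\ref{lemma: z n limit}. That proof relies on Lemma~\ref{lemma: h c z lim uniform}, the uniform-in-$t$ convergence $h_{r_{n_k}}(z^{\lim}(t))\to h_\infty(z^{\lim}(t))$. And Lemma~\ref{lemma: h c z lim uniform} gets its uniformity by invoking the very statement you are proving: it uses $\norm{z^{\lim}(t)}\le C_{\hat x}$ to place the whole trajectory in a fixed compact ball, where Assumption~\ref{assumption: lim h,g uniformly convergent} gives uniform convergence of $h_c$ to $h_\infty$. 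So the $z_{n_k}$-part of Lemma~\ref{lemma: three convergence} sits downstream of this lemma, and deriving this lemma from it proves nothing.

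The circle can be broken, because Lemma~\ref{lemma: h c z lim uniform} needs only \emph{some} a priori bound on $z^{\lim}$ over $[0,T+1)$. But producing that bound is exactly a Gronwall estimate on the integral equation \eqref{eq: z lim t}, or a continuity argument for the solution of a globally Lipschitz ODE on a compact interval. Once you have done that, the limit argument is superfluous.

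Your ``cross-check'' is the paper's proof essentially verbatim, and it should be your main argument. Two remarks on its constants:

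\begin{itemize}
\item $h_\infty(0)=\lim_{c\to\infty}h(0)/c=0$, so the inhomogeneous term vanishes and you get $\norm{z^{\lim}(t)}\le e^{L(T+1)}$ directly. Citing \eqref{eq: h c c_H} alone would only give $C_H/T$, not $C_H/(T+1)$.
\item Passing to the limit does not avoid the requirement $L(T+1)\le C_H$. Lemma~\ref{lemma: bound z} itself uses $e^{L(T+1)}\le e^{C_H}$ and $(T+1)\sup_{c}\norm{h_c(0)}\le C_H$, while \eqref{eq: LT C_H} only states $LT\le C_H$. Both lemmas therefore rely on the same harmless enlargement of $C_H$ in Lemma~\ref{lemma: def C H}; your limit route merely relocates it.
\end{itemize}
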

\begin{proof}
$\forall t \in [0, T+1)$,
\begin{flalign}
&\norm{z^{\lim}(t)}&  \\
=&  \norm{ z^{\lim}(0) + \int_0^t  (h_{\infty}(z^{\lim}(s)), 0)ds}&   \\
\leq & \norm{z^{\lim}(0)}  + \norm{\int_0^t  h_{\infty}(z^{\lim}(s))ds}& \\
=&  \norm{z^{\lim}(0)}  + \norm{\int_0^t  \left[ h_{\infty}(z^{\lim}(s))  -  h_{\infty}(0) \right]ds + \int_0^t  h_{\infty}(0) ds }& \\
\leq & \norm{z^{\lim}(0)}  + \int_0^t \norm{h_{\infty}(z^{\lim}(s))  -  h_{\infty}(0) }ds + \int_0^t  \norm{ h_{\infty}(0) }ds & \\
\leq & \norm{z^{\lim}(0)}  +  \int_0^t L\norm{z^{\lim}(s)} ds + \int_0^t  \norm{ h_{\infty}(0) }ds&  \explain{by Lemma \ref{lemma: h Lipschitz}} \\
\leq &1  +  \int_0^t L\norm{z^{\lim}(s)} ds + \int_0^t  \norm{ h_{\infty}(0) }ds&
\explain{by \eqref{eq: hat-z-norm-1}, \eqref{def: z n init}} \\
\leq &1  +  \int_0^t L\norm{z^{\lim}(s)} ds + (T+1)\norm{ h_{\infty}(0) } \\
\leq &1  +  \int_0^t L\norm{z^{\lim}(s)} ds + C_H 
\explain{by Assumption \ref{assumption: lim h,g uniformly convergent} and  \eqref{eq: h c c_H}} \\
\leq &\left[1 +  C_H  \right] e^{\int_0^t L ds }& \explain{by Gronwall inequality in Theorem \ref{theorem: Gronwall}}\\
\leq &\left[1 + C_H \right] e^{L(T+1)} &  \\ 
\leq& C_{\hat{x}}. \explain{by \eqref{eq: LT C_H}, \eqref{def: C hat x}}& 
\end{flalign}

\end{proof}

\begin{lemma}\label{lemma: h c z lim uniform}
$\lim_{k \to \infty}h_{r_{n_k}}(z^{\lim}(t)) = h_{\infty}(z^{\lim}(t))$ uniformly in $t\in [0, T+1)$.
\end{lemma}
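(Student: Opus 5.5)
The lemma follows from two facts. First, $h_{r_{n_k}}$ converges to $h_\infty$ uniformly on compact subsets of $\R^{d_1+d_2}$ as $r_{n_k}\to\infty$ (Assumption \ref{assumption: lim h,g uniformly convergent}). Second, the trajectory $\qty{z^{\lim}(t): t\in[0,T+1)}$ lies in a fixed compact set. The plan is to combine these and then pass to evaluation along the trajectory.

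First, by Lemma \ref{lemma: z lim bound} we have $\sup_{t\in[0,T+1)}\norm{z^{\lim}(t)}\le C_{\hat x}$. Hence every point $z^{\lim}(t)$ lies in the closed ball $\mathcal{B}\doteq\qty{z:\norm{z}\le C_{\hat x}}$, which is compact and independent of $t$ and $k$. Next, under the standing hypothesis $\sup_n r_n=\infty$, the sequence $\qty{r_n}$ is monotone (Definition \ref{definition: z tilde n}), so $r_n\to\infty$. Consequently $r_{n_k}\to\infty$ along the subsequence of Lemma \ref{lemma: three convergence}.

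Now fix $\epsilon>0$. Uniform convergence of $h_c$ to $h_\infty$ on $\mathcal{B}$ gives some $c_\epsilon$ such that $\sup_{z\in\mathcal{B}}\norm{h_c(z)-h_\infty(z)}<\epsilon$ for all $c\ge c_\epsilon$. Choose $k_\epsilon$ with $r_{n_k}\ge c_\epsilon$ for all $k\ge k_\epsilon$. Then for all $k\ge k_\epsilon$ and all $t\in[0,T+1)$,
\begin{align}
\norm{h_{r_{n_k}}(z^{\lim}(t))-h_\infty(z^{\lim}(t))}\le \sup_{z\in\mathcal{B}}\norm{h_{r_{n_k}}(z)-h_\infty(z)}<\epsilon,
\end{align}
which is exactly the claimed uniform convergence in $t$.

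There is no real obstacle. The only points that need care are the following.
\begin{enumerate}
\item The bound from Lemma \ref{lemma: z lim bound} must be uniform in $t$, which that lemma provides.
\item The divergence $r_{n_k}\to\infty$ relies on the contradiction hypothesis $\sup_n r_n=\infty$ together with monotonicity of $\qty{r_n}$.
\item The uniform convergence must be used in the form $c\to\infty$ over reals, so that it applies along any sequence $r_{n_k}\to\infty$.
\end{enumerate}
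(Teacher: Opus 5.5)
Your proposal is correct and follows the paper's own argument. You confine the trajectory $z^{\lim}$ to a fixed compact ball via Lemma \ref{lemma: z lim bound}, then apply the uniform-on-compacts convergence $h_c\to h_\infty$ from Assumption \ref{assumption: lim h,g uniformly convergent} along $r_{n_k}\to\infty$. You also spell out why $r_{n_k}\to\infty$ (monotonicity of $r_n$ together with $\sup_n r_n=\infty$) and give the explicit $\epsilon$ argument, both of which the paper's short proof leaves implicit.
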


\begin{proof}
By Assumption \ref{assumption: lim h,g uniformly convergent}, $\lim_{k \to \infty}h_{r_{n_k}}(v) = h_{\infty}(v)$ uniformly in a compact set $\qty{v \vert v\in \R^d, \norm{v}\leq C_x }$. 
By Lemma \ref{lemma: z lim bound}, $\qty{z^{\lim}(t) \vert t \in [0,T+1)}  \subseteq \qty{v \vert v\in \R^d, \norm{v}\leq C_x }$.
Therefore, $\lim_{k \to \infty}h_{r_{n_k}}(z^{\lim}(t)) = h_{\infty}(z^{\lim}(t))$  uniformly in $\qty{z^{\lim}(t) \vert t \in [0,T+1)}$ and in $t \in [0,T+1)$.
\end{proof}

\begin{lemma}\label{lemma: z n limit}
$\forall t \in [0, T+1)$, we have
\begin{align}
\lim_{k \to \infty} z_{n_k}(t)   =   z^{\lim}(t).
\end{align}
Moreover, the convergence is uniform in $t$ on $[0, T+1)$.
\end{lemma}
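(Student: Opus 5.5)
The statement to prove is Lemma~\ref{lemma: z n limit}: $z_{n_k}(t)\to z^{\lim}(t)$ uniformly on $[0,T+1)$. I will compare the two integral equations directly and close the argument with Gronwall's inequality (Theorem~\ref{theorem: Gronwall}).

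Since $y_{n_k}(t)\equiv\tilde y_{n_k}(0)$ and $y^{\lim}(t)\equiv\tilde y^{\lim}(0)$, the $y$-components already converge uniformly by \eqref{eq: hat z r lim} at $t=0$. For the full vector, \eqref{def: z n integral} and \eqref{eq: z lim t} give
\begin{align}
\norm{z_{n_k}(t)-z^{\lim}(t)} \leq{}& \norm{\tilde z_{n_k}(0)-\tilde z^{\lim}(0)} + \int_0^t \norm{h_{r_{n_k}}(z_{n_k}(s))-h_{r_{n_k}}(z^{\lim}(s))}ds \\
&+ \int_0^t \norm{h_{r_{n_k}}(z^{\lim}(s))-h_\infty(z^{\lim}(s))}ds .
\end{align}
I will bound the three terms on the right as follows.
\begin{itemize}
\item The first term tends to $0$ by \eqref{eq: hat z r lim}.
\item The second term is at most $L\int_0^t\norm{z_{n_k}(s)-z^{\lim}(s)}ds$ by Lemma~\ref{lemma: h Lipschitz}, since $h_c$ has Lipschitz constant $L$ uniformly in $c$.
\item The third term is at most $(T+1)\sup_{s\in[0,T+1)}\norm{h_{r_{n_k}}(z^{\lim}(s))-h_\infty(z^{\lim}(s))}$. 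This supremum tends to $0$ by Lemma~\ref{lemma: h c z lim uniform}. That lemma needs $r_{n_k}\to\infty$, which holds because $r_n$ is monotone and unbounded, and it needs $z^{\lim}$ to stay in a compact set, which Lemma~\ref{lemma: z lim bound} gives.
\end{itemize}

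Call $\epsilon_k$ the sum of the first and third bounds. Then $\epsilon_k\to0$ and $\epsilon_k$ does not depend on $t$. Gronwall's inequality gives
\[
\norm{z_{n_k}(t)-z^{\lim}(t)}\leq \epsilon_k e^{L(T+1)} \quad \text{for all } t\in[0,T+1),
\]
which is uniform convergence to $0$.

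The only delicate point is applying Gronwall's inequality. It requires the function $u(t)=\norm{z_{n_k}(t)-z^{\lim}(t)}$ to be continuous and the constants to be independent of $t$. Both ODE solutions are continuous, and $\epsilon_k$ is a supremum over $t$, so both conditions hold. The half-open interval is handled by applying the inequality on each $[0,T']$ with $T'<T+1$; the bound does not depend on $T'$.

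Existence and uniqueness of $z^{\lim}$ follow from the Lipschitz continuity of $h_\infty$. Its boundedness is already supplied by Lemma~\ref{lemma: z lim bound}, so no further obstacle is expected.
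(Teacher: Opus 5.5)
Your proposal is correct and follows essentially the same argument as the paper: the same three-term split of $\norm{z_{n_k}(t)-z^{\lim}(t)}$ via the integral forms, Lemma~\ref{lemma: h Lipschitz} for the middle term, Lemma~\ref{lemma: h c z lim uniform} (resting on Lemma~\ref{lemma: z lim bound} and $r_{n_k}\to\infty$) for the last term, and Gronwall's inequality to obtain a $t$-independent bound of the form $\epsilon_k e^{L(T+1)}$. Your remarks on the continuity of $u$ and on handling the half-open interval $[0,T+1)$ add care that the paper leaves implicit.
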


\begin{proof}
By \eqref{eq: hat z r lim}, $\forall \delta > 0$, there exists a $k_1$ such that $\forall k \geq k_1$, $\forall t \in [0,T+1)$,
\begin{align}
\norm{\tilde{z}_{n_k}(t)  -     \tilde{z}^{\lim}(t)} \leq \delta .\label{eq: z lim epsilion 1}
\end{align}
By Lemma \ref{lemma: h c z lim uniform}, there exists a $k_2$ such that $\forall k \geq k_2$, $\forall t \in [0,T+1)$,
\begin{align}
\norm{h_{r_{n_k}}(z^{\lim}(t))  -   h_{\infty}(z^{\lim}(t))}  \leq \delta. \label{eq: z lim epsilion 2}  
\end{align}
$\forall k \geq \max\qty{ k_1, k_2}$, $\forall t \in [0,T+1)$
\begin{align}
& \norm{z_{n_k}(t)  - z^{\lim}(t) } \\
=& \norm{\tilde{z}_{n_k}(0) + \int_0^t (h_{r_{n_k}}(z_{n_k}(s)),0)ds  -     \tilde{z}^{\lim}(0)  -  \int_0^t (h_{\infty}(z^{\lim}(s)),0)ds} \\
\leq&  \norm{\tilde{z}_{n_k}(0)  -     \tilde{z}^{\lim}(0)}  + \norm{\int_0^t h_{r_{n_k}}(z_{n_k}(s))ds -  \int_0^t h_{\infty}(z^{\lim}(s))ds} \\
\leq& \delta + \norm{\int_0^t h_{r_{n_k}}(z_{n_k}(s)) -   h_{\infty}(z^{\lim}(s))ds} \explain{by \eqref{eq: z lim epsilion 1}} \\
\leq& \delta + \int_0^t  \norm{h_{r_{n_k}}(z_{n_k}(s)) - h_{r_{n_k}}(z^{\lim}(s))  }ds + \int_0^t \norm{  h_{r_{n_k}}(z^{\lim}(s))  -   h_{\infty}(z^{\lim}(s))} ds \\
\leq& \delta + L \int_0^t   \norm{z_{n_k}(s) -z^{\lim}(s)  }ds + \int_0^t \norm{  h_{r_{n_k}}(z^{\lim}(s))  -   h_{\infty}(z^{\lim}(s))} ds \explain{by Lemma \ref{lemma: h Lipschitz}} \\
\leq& \delta + t\delta +  L \int_0^t   \norm{z_{n_k}(s) -z^{\lim}(s)  }ds \explain{by \eqref{eq: z lim epsilion 2}} \\
\leq& (\delta + t\delta)e^{Lt}  \explain{by Gronwall inequality in Theorem \ref{theorem: Gronwall}}\\
\leq& (\delta + (T+1)\delta)e^{L(T+1)}, \label{eq: z z lim difference}
\end{align}
which completes the proof.
\end{proof}

\begin{lemma}\label{lemma: split limit to double limit}
For any function $f: \R \times \R \to \R$, 
if $\lim \limits_{\begin{subarray}{l}
a \to \infty\\
b \to \infty
\end{subarray}} f(a,b) = L
$    then $\lim \limits_{c \to \infty} f(c,c) = L$ where $L$ is a constant.
\end{lemma}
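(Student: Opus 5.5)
This is an elementary real-analysis fact, and I will prove it directly from the definition of the double limit. I read $\lim_{a\to\infty,\, b\to\infty} f(a,b) = L$ in the standard sense used by the Moore--Osgood theorem (Theorem~\ref{appendix: Moore-Osgood Theorem}): for every $\epsilon > 0$ there exists $M_\epsilon$ such that $\abs{f(a,b) - L} < \epsilon$ whenever $a > M_\epsilon$ and $b > M_\epsilon$. Here the two conditions are imposed jointly, not iterated.

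Fix $\epsilon > 0$ and take the threshold $M_\epsilon$ from this definition. For any $c > M_\epsilon$, the pair $(a,b) = (c,c)$ satisfies both $a > M_\epsilon$ and $b > M_\epsilon$. Hence $\abs{f(c,c) - L} < \epsilon$ for all $c > M_\epsilon$. Since $\epsilon$ was arbitrary, this is precisely the statement $\lim_{c\to\infty} f(c,c) = L$. The same argument applies verbatim when the arguments range over the integers $j,k$, as in the application to \eqref{eq: double limit}. There, the diagonal $j = k$ recovers the single limit \eqref{eq: single limit}, with the double limit equal to $0$ by Lemmas~\ref{lemma: double limit 1} and~\ref{lemma: double limit 2} together with Moore--Osgood.

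The only step that needs care is making sure the double-limit hypothesis is the joint one rather than an iterated limit. An iterated limit alone would not control the diagonal. Since Moore--Osgood delivers the joint double limit, the hypothesis is satisfied in its intended use, and no real obstacle remains.
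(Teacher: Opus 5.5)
Your proof is correct and is essentially the paper's argument. The paper takes thresholds $a_0, b_0$ from the joint double-limit definition and sets $c_0 = \max\{a_0, b_0\}$, which is exactly your single threshold $M_\epsilon$ applied to the diagonal $(c,c)$; your remark that the hypothesis must be the joint (not iterated) double limit, as supplied by Moore--Osgood, is the right reading of how the lemma is used.
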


\begin{proof}
By definition, $\forall \epsilon > 0, \exists a_0, b_0$ such that $\forall a > a_0, b > b_0$, $\norm{f(a,b) - L } < \epsilon$. Thus, $\forall \epsilon > 0, \exists c_0 = \max \qty{a_0, b_0}$ such that $\forall c > c_0$, $\norm{f(c,c) - L } < \epsilon$.
\end{proof}

Here we are pulling the limit into the integral.

\begin{lemma}\label{lemma: h k x lim h inf x lim uniform}
$\forall t \in [0,T+1)$,
\begin{align}
\lim_{k \to \infty}  \int_0^t h_{r_{n_k}}(\tilde{z}^{\lim}(s))ds =
 \int_0^t  h_{\infty}(\tilde{z}^{\lim}(s))ds. \label{eq: f limit interchange limit int 1}    
\end{align}

\end{lemma}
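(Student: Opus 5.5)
The convergence in Lemma~\ref{lemma: h k x lim h inf x lim uniform} follows once we know that $h_{r_{n_k}}(\tilde{z}^{\lim}(s)) \to h_{\infty}(\tilde{z}^{\lim}(s))$ uniformly in $s \in [0, T+1)$. Given that, we bound
\begin{align}
\norm{\int_0^t h_{r_{n_k}}(\tilde{z}^{\lim}(s))ds - \int_0^t h_{\infty}(\tilde{z}^{\lim}(s))ds} \leq (T+1)\sup_{s\in[0,T+1)} \norm{h_{r_{n_k}}(\tilde{z}^{\lim}(s)) - h_{\infty}(\tilde{z}^{\lim}(s))},
\end{align}
and the right-hand side tends to zero. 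This mirrors the proof of Lemma~\ref{lemma: h c z lim uniform}, with $\tilde{z}^{\lim}$ in place of $z^{\lim}$.

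\textbf{Step 1: the range of $\tilde{z}^{\lim}$ is compact.} By Lemma~\ref{lemma: bound z hat}, $\norm{\tilde{z}_n(t)} \leq C_{\hat z}$ for all $n$ and all $t \in [0,T+1)$. By Lemma~\ref{lemma: three convergence}, $\tilde{z}_{n_k}(t) \to \tilde{z}^{\lim}(t)$ pointwise, so the bound passes to the limit and $\norm{\tilde{z}^{\lim}(s)} \leq C_{\hat z}$ for all $s$. Hence $\qty{\tilde{z}^{\lim}(s): s \in [0,T+1)}$ lies in the compact ball $\mathcal{B} \doteq \qty{v : \norm{v} \leq C_{\hat z}}$.

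\textbf{Step 2: uniform convergence of the integrand.} Since $r_n$ is monotone and unbounded, we have $r_{n_k} \to \infty$, as noted in the proof of Lemma~\ref{lemma: three convergence}. Assumption~\ref{assumption: lim h,g uniformly convergent} gives $h_c \to h_\infty$ uniformly on the compact set $\mathcal{B}$. Therefore
\begin{align}
\sup_{s} \norm{h_{r_{n_k}}(\tilde{z}^{\lim}(s)) - h_\infty(\tilde{z}^{\lim}(s))} \leq \sup_{v\in\mathcal{B}} \norm{h_{r_{n_k}}(v) - h_\infty(v)} \to 0.
\end{align}

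\textbf{Step 3: integrability.} We need the integrands to be measurable, which holds because they are continuous. The limit $\tilde{z}^{\lim}$ is continuous by Lemma~\ref{lemma: three convergence}, and each $h_c$, including $h_\infty$, is $L$-Lipschitz by Lemma~\ref{lemma: h Lipschitz}. Combining Steps 1--3 with the integral bound above gives the claim, and the convergence is in fact uniform in $t \in [0,T+1)$.

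There is no real obstacle here. The only point needing care is that the uniform bound on $\tilde{z}^{\lim}$ is inherited from the pre-limit functions $\tilde{z}_{n_k}$ rather than from an ODE/Gronwall argument as in Lemma~\ref{lemma: z lim bound}.
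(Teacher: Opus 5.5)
Your proof is correct and follows essentially the paper's route: both obtain the uniform bound on $\tilde{z}^{\lim}$ from Lemma~\ref{lemma: bound z hat} and then pass to the limit using Assumption~\ref{assumption: lim h,g uniformly convergent}. The only difference is how the limit is moved inside the integral. The paper uses the dominated convergence theorem, with pointwise convergence and a uniform bound on $h_{r_{n_k}}(\tilde{z}^{\lim}(t))$ obtained as in Lemma~\ref{lemma: bound h z}. You instead use uniform convergence of $h_c$ on the compact ball directly, which is slightly more elementary and also gives uniformity in $t$.
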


\begin{proof}
From Lemma~\ref{lemma: bound z hat},
it is easy to see that
\begin{align}
    \sup_{t\in[0, T+1)} \norm{\tilde z^{\lim}(t)} < \infty,
\end{align}
which, similar to Lemma~\ref{lemma: bound h z}, implies that
\begin{align}
    \sup_{k, t\in[0, T+1)} \norm{h_{r_{n_k}}\left(\tilde z^{\lim}(t)\right)} < \infty.
\end{align}

By the dominated convergence theorem,
$\forall t\in [0,T+1)$,
\begin{align}
&  \lim_{k \to \infty}   \int_0^t h_{r_{n_k}} (\tilde{z}^{\lim}(s))ds = \int_0^t \lim_{k \to \infty}    h_{r_{n_k}} (\tilde{z}^{\lim}(s))ds = \int_0^t  h_{\infty} (\tilde{z}^{\lim}(s))ds,
\end{align}
which completes the proof.
\end{proof}

Finally, we put all the previous lemmas together and get convergence.

\begin{lemma}\label{lemma: h k z k h inf z lim uniform}
$\forall t \in [0,T+1)$,
\begin{align}
\lim_{k \to \infty}  \int_0^t h_{r_{n_k}}(z_{n_k}(s))ds =
 \int_0^t  h_{\infty}(z^{\lim}(s))ds. \label{eq: f limit interchange limit int 2}    
\end{align}
\end{lemma}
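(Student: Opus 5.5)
We prove Lemma~\ref{lemma: h k z k h inf z lim uniform} by splitting the integrand into a term controlled by Lipschitzness plus uniform convergence of $z_{n_k}$, and a term controlled by convergence of $h_{r_{n_k}}$ to $h_\infty$ along the limiting trajectory. Fix $t \in [0,T+1)$. By the triangle inequality,
\begin{align}
\norm{\int_0^t h_{r_{n_k}}(z_{n_k}(s))ds - \int_0^t h_{\infty}(z^{\lim}(s))ds}
\leq \int_0^t \norm{h_{r_{n_k}}(z_{n_k}(s)) - h_{r_{n_k}}(z^{\lim}(s))}ds + \int_0^t \norm{h_{r_{n_k}}(z^{\lim}(s)) - h_{\infty}(z^{\lim}(s))}ds.
\end{align}

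For the first term, we invoke Lemma~\ref{lemma: h Lipschitz}. The Lipschitz constant $L$ of $h_c$ does not depend on $c$, so the integrand is at most $L\norm{z_{n_k}(s) - z^{\lim}(s)}$. By Lemma~\ref{lemma: z n limit}, $z_{n_k} \to z^{\lim}$ uniformly on $[0,T+1)$. Hence the first term is at most $L(T+1)\sup_{s}\norm{z_{n_k}(s)-z^{\lim}(s)}$, which tends to $0$.

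For the second term, we use Lemma~\ref{lemma: h c z lim uniform}. It gives $h_{r_{n_k}}(z^{\lim}(s)) \to h_\infty(z^{\lim}(s))$ uniformly in $s \in [0,T+1)$. This relies on $r_{n_k}\to\infty$, which follows from the monotonicity of $r_n$ and $\sup_n r_n=\infty$. It also relies on $z^{\lim}$ staying in a compact ball, which holds by Lemma~\ref{lemma: z lim bound}. The second term is therefore at most $(T+1)\sup_s\norm{h_{r_{n_k}}(z^{\lim}(s)) - h_\infty(z^{\lim}(s))} \to 0$.

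Combining the two bounds gives the claimed limit, and the convergence is even uniform in $t$. The argument has no real obstacle, since all the needed ingredients are earlier lemmas. The only point to check is that the Lipschitz bound is uniform in $c$. This holds because $H_c$ inherits the constant $L(w)$ of $H$ through the rescaling, as shown in Lemma~\ref{lemma: h Lipschitz}.
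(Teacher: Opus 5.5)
Your proposal is correct and follows the paper's proof essentially verbatim. You use the same triangle-inequality split through $h_{r_{n_k}}(z^{\lim}(s))$, bound the first term with the $c$-independent Lipschitz constant of Lemma~\ref{lemma: h Lipschitz} together with the uniform convergence of Lemma~\ref{lemma: z n limit}, and control the second term by Lemma~\ref{lemma: h c z lim uniform}; your sup-norm bounds are simply the paper's $\epsilon$/$k_0$/$k_1$ argument written without the explicit $\epsilon$.
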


\begin{proof}

$\forall \epsilon>0$,  by Lemma \ref{lemma: h c z lim uniform}, $\exists k_0$ such that $\forall k \geq k_0$, $\forall t \in [0,T)$, 
\begin{align}
\norm{  h_{r_{n_k}}(z^{\lim}(s)) -   h_{\infty}(z^{\lim}(s)) } \leq \epsilon. \label{eq: f k 0 k_0}
\end{align}
By Lemma \ref{lemma: z n limit}, $\exists k_1$ such that $\forall k \geq k_1$, $\forall t \in [0,T+1)$, 
\begin{align}
\norm{ z_{n_k}(t)-  z^{\lim}(t) } \leq \epsilon. \label{eq: f k 0 k_1}
\end{align}
Thus, $\forall k \geq \max \qty{k_0,k_1}$, $\forall t \in [0,T+1)$, 
\begin{align}
&  \norm{ \int_0^t h_{r_{n_k}}(z_{n_k}(s))ds -
 \int_0^t  h_{\infty}(z^{\lim}(s))ds } \\
\leq&    \norm{ \int_0^t h_{r_{n_k}}(z_{n_k}(s))ds - \int_0^t h_{r_{n_k}}(z^{\lim}(s))ds}  +  \norm{ \int_0^t h_{r_{n_k}}(z^{\lim}(s))ds -  \int_0^t  h_{\infty}(z^{\lim}(s))ds } \\
\leq&   \int_0^t  \norm{h_{r_{n_k}}(z_{n_k}(s)) - h_{r_{n_k}}(z^{\lim}(s))}ds   +  \int_0^t  \norm{h_{r_{n_k}}(z^{\lim}(s)) -    h_{\infty}(z^{\lim}(s)) }ds  \\
\leq& \int_0^t  \norm{h_{r_{n_k}}(z_{n_k}(s)) -  h_{r_{n_k}}(z^{\lim}(s))} ds   + (T+1)\epsilon \explain{by \eqref{eq: f k 0 k_0}} \\
\leq& \int_0^t  L\norm{z_{n_k}(s) -  z^{\lim}(s)} ds   + T\epsilon \explain{by Lemma \ref{lemma: h Lipschitz}}\\
\leq& L (T+1)\epsilon    + (T+1)\epsilon. \explain{by \eqref{eq: f k 0 k_1}}
\end{align}

Thus, $\forall t \in [0,T+1)$,
\begin{align}
\lim_{k \to \infty}  \int_0^t h_{r_{n_k}}(z_{n_k}(s))ds =
 \int_0^t  h_{\infty}(z^{\lim}(s))ds.  
\end{align}
\end{proof}

This lemma uses the Gronwall inequality to show that in the fast timescale, the iterates can't grow too much in one period $T$.

\begin{lemma}\label{lemma: bar z bounded}
    $\forall n, \forall t \in [0, T_{n+1} - T_n]$,
    \begin{align*}
        \norm{ \bar{z}(T_{n} + t)} &\leq \left(\norm{ \bar{z}(T_n)}  C'  + C' \right) e^{C'} + \norm{ \bar{z}(T_n)},
    \end{align*}
    and in particular,
    \begin{align}
    \norm{ \bar{z}(T_{n+1})} &\leq \left(\norm{ \bar{z}(T_n)}  C'  + C' \right) e^{C'} + \norm{ \bar{z}(T_n)}
    \end{align}
    where $C'$ is a positive constant defined in Lemma \ref{lemma: def C H}.
\end{lemma}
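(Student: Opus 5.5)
The plan is to work directly with the unscaled recursion on the interval $[T_n, T_{n+1}]$ and apply the discrete Gronwall inequality (Theorem~\ref{theorem: discrete Gronwall}). Fix $n$ and $t \in [0, T_{n+1}-T_n]$. Unrolling \eqref{eq: x n updates}--\eqref{eq: y n updates} from index $m(T_n)$ to $m(T_n+t)-1$ and using the triangle inequality gives
\begin{align}
\norm{\bar z(T_n+t)} \leq \norm{\bar z(T_n)} + \norm{\sum_{i=m(T_n)}^{m(T_n+t)-1}\alpha(i)H(\bar z(t(i)),W_{i+1})} + \norm{\sum_{i=m(T_n)}^{m(T_n+t)-1}\beta(i)G(\bar z(t(i)),W_{i+1})}.
\end{align}
Inside each sum I will add and subtract $H(0,W_{i+1})$, respectively $G(0,W_{i+1})$. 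By Assumption~\ref{assumption: H Lipschitz}, the differences are bounded by $\alpha(i)L(W_{i+1})\norm{\bar z(t(i))}$ and $\beta(i)L(W_{i+1})\norm{\bar z(t(i))}$. The leftover sums $\sum\alpha(i)H(0,W_{i+1})$ and $\sum\beta(i)G(0,W_{i+1})$ are exactly the $c=1$ cases of \eqref{eq: H c c_H} and \eqref{eq: G c c_G}, so together they are bounded by $C_H+C_G=C'$.

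This yields
\begin{align}
\norm{\bar z(T_n+t)} \leq \norm{\bar z(T_n)} + C' + \sum_{i=m(T_n)}^{m(T_n+t)-1}(\alpha(i)+\beta(i))L(W_{i+1})\norm{\bar z(t(i))}.
\end{align}
Since $\bar z$ is piecewise constant, this is a discrete inequality in the iterates $z_i$ for $i$ between $m(T_n)$ and $m(T_{n+1})$. The discrete Gronwall inequality then gives
\begin{align}
\norm{\bar z(T_n+t)} \leq (\norm{\bar z(T_n)}+C')\exp\Big(\sum(\alpha(i)+\beta(i))L(W_{i+1})\Big).
\end{align}
The exponent is bounded by $C'$ using \eqref{eq: alpha L bound C_H} together with its $\beta$ analogue. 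This bound has the same shape as the stated $(\norm{\bar z(T_n)}C'+C')e^{C'}+\norm{\bar z(T_n)}$. To match the stated form exactly, I will instead separate the initial term first: write $u_i = \norm{\bar z(t(i))}-\norm{\bar z(T_n)}$, or equivalently bound the Lipschitz sum by $\norm{\bar z(T_n)}C'$ plus a sum over the increments, and then Gronwall the increment. Either way the result is a bound of the form $A\norm{\bar z(T_n)}+B$, which is all that Lemmas~\ref{lemma: endThm1b} and~\ref{lemma: endThm1c} use. If the exact constant matters, one can enlarge $C'$, since $C_H$ and $C_G$ are only required to be sufficiently large.

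The one delicate point is that the bounds \eqref{eq: H c c_H}, \eqref{eq: G c c_G}, and \eqref{eq: alpha L bound C_H} are stated as suprema over all $n$ and over $0\le t_1\le t_2\le T_{n+1}-T_n$. They must be finite uniformly in $n$, including the early $n$ where $T_{n+1}-T_n$ may differ from $T$. This follows from \eqref{eq: h property H bounded} and \eqref{eq: L property bounded}: the limsup is finite, and finitely many early $n$ each contribute finite sums. Also, the $\beta$-sum is taken over windows defined by the $\alpha$-clock. It is still bounded because $\beta(i)\le \sup_j(\beta(j)/\alpha(j))\,\alpha(i)$, and this ratio is finite by \eqref{eq lr ratio}. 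That makes the argument uniform in $n$, and the endpoint case $t=T_{n+1}-T_n$ gives the second claim.
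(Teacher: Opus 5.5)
Your proposal is correct and is essentially the paper's argument: the paper applies the discrete Gronwall inequality to the increment $\norm{\bar z(T_n+t)-\bar z(T_n)}$ after adding and subtracting $H(\bar z(T_n),W_{i+1})$ and $G(\bar z(T_n),W_{i+1})$, using the same ingredients \eqref{eq: H c c_H}, \eqref{eq: G c c_G}, \eqref{eq: alpha L bound C_H}; this is your ``separate the initial term'' variant, and the paper then adds back $\norm{\bar z(T_n)}$ by the triangle inequality. Your first, direct bound $(\norm{\bar z(T_n)}+C')e^{C'}$ already implies the stated one, because $e^{C'}(1-C')\le 1$ for $C'\ge 0$, so neither the detour nor any enlargement of $C'$ is needed.
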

\begin{proof}
    We first show the difference between  $\bar{z}(T_{n+1})$ and $\bar{z}(T_n)$ by the following derivations.
    $\forall n$, 
    $\forall t \in [0, T_{n+1} - T_n]$,
    \begin{align}
    &\norm{\bar{z}(T_n + t) - \bar{z}(T_n)}  \\
    =& \norm{\bar{z}(t(m(T_n + t))) - \bar{z}(T_n)}  \\
    =&  \norm{ \bar{z}(T_n) + \sum_{i = m(T_n)}^{m(T_n+t) - 1} (\alpha(i) H(\bar{z}(t(i)), W_{i+1}), \beta(i) G(\bar{z}(t(i)), W_{i+1})) - \bar{z}(T_n)}   \\
    = &\norm{\sum_{i = m(T_n)}^{m(T_n+t) - 1} (\alpha(i) H(\bar{z}(t(i)), W_{i+1}), \beta(i) G(\bar{z}(t(i)), W_{i+1}))}  \\
    \leq &\sum_{i = m(T_n)}^{m(T_n+t) - 1} \alpha(i)\norm{H(\bar{z}(t(i)), W_{i+1})  -  H( \bar{z}(T_n), W_{i+1}) } + \beta(i)\norm{G(\bar{z}(t(i)), W_{i+1})  -  G( \bar{z}(T_n), W_{i+1}) } \\
    &+ \norm{\sum_{i = m(T_n)}^{m(T_n+t) - 1} \alpha(i)  H( \bar{z}(T_n), W_{i+1}) } + \norm{\sum_{i = m(T_n)}^{m(T_n+t) - 1} \beta(i)  G( \bar{z}(T_n), W_{i+1}) }  \\
    \leq &\sum_{i = m(T_n)}^{m(T_n+t) - 1} (\alpha(i) + \beta(i))L(W_{i+1})\norm{\bar{z}(t(i)) -  \bar{z}(T_n)} \\
    &+ \norm{\sum_{i = m(T_n)}^{m(T_n+t) - 1} \alpha(i)  H( \bar{z}(T_n), W_{i+1}) } + \norm{\sum_{i = m(T_n)}^{m(T_n+t) - 1} \beta(i)  G( \bar{z}(T_n), W_{i+1}) }  \\
    \leq &\sum_{i = m(T_n)}^{m(T_n+t) - 1} (\alpha(i) + \beta(i) )L(W_{i+1})\norm{\bar{z}(t(i)) -  \bar{z}(T_n)}  + \sum_{i = m(T_n)}^{m(T_n+t) - 1} (\alpha(i) +  \beta(i))  L(W_{i+1}) 
    \norm{ \bar{z}(T_n)} \\
    &+  \norm{ \sum_{i = m(T_n)}^{m(T_n+t) - 1} \alpha(i)  H(0, W_{i+1}) } + \norm{ \sum_{i = m(T_n)}^{m(T_n+t) - 1} \beta(i)  G(0, W_{i+1}) }      \explain{by Assumption \ref{assumption: H Lipschitz}}\\
    = &\sum_{i = m(T_n)}^{m(T_n+t) - 1} (\alpha(i) + \beta(i))L(W_{i+1})\norm{\bar{z}(t(i)) -  \bar{z}(T_n)}  + \norm{ \bar{z}(T_n)}  \sum_{i = m(T_n)}^{m(T_n+t) - 1} (\alpha(i) + \beta(i))  L(W_{i+1})   \\
    &+  \norm{ \sum_{i = m(T_n)}^{m(T_n+t) - 1} \alpha(i)  H(0, W_{i+1}) } + \norm{ \sum_{i = m(T_n)}^{m(T_n+t) - 1} \beta(i)  G(0, W_{i+1}) } \\
    \leq &\sum_{i = m(T_n)}^{m(T_n+t) - 1} (\alpha(i) + \beta(i))L(W_{i+1})\norm{\bar{z}(t(i)) -  \bar{z}(T_n)}  + \norm{ \bar{z}(T_n)}  C'  +  \norm{ \sum_{i = m(T_n)}^{m(T_n+t) - 1} \alpha(i)  H(0, W_{i+1}) }  \\
    &+ \norm{ \sum_{i = m(T_n)}^{m(T_n+t) - 1} \beta(i)  G(0, W_{i+1}) } \explain{by \eqref{eq: alpha L bound C_H}} \\
    \leq & \sum_{i = m(T_n)}^{m(T_n+t) - 1} (\alpha(i) + \beta(i))L(W_{i+1})\norm{\bar{z}(t(i)) -  \bar{z}(T_n)}  +  \left[\norm{ \bar{z}(T_n)}  C'  +  C'  \right] \explain{by \eqref{eq: H c c_H}} \\
    \leq &\left[\norm{ \bar{z}(T_n)}  C'  +  C'  \right]  e^{\sum_{i = m(T_n)}^{m(T_n+t) - 1} (\alpha(i) + \beta(i))L(W_{i+1})} \explain{by discrete Gronwall inequality in Theorem \ref{theorem: discrete Gronwall}}\\
    \leq &  \explaind{\left[\norm{ \bar{z}(T_n)}  C'  +  C'  \right] e^{C'}}{by \eqref{eq: alpha L bound C_H}} \label{eq: bar x bound}
    \end{align}
    
\end{proof}

The following lemmas are in the slow timescale.

The next lemma shows the boundedness of $\norm{y^{\lim}(t)}$ where $C, C_0$ are arbitrary constants. 

\begin{lemma}\label{lemma: y lim bound}
$\sup_{t \in [0, T+1)}   \norm{y^{\lim}(t)} \leq C$.    
\end{lemma}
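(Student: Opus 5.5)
We mirror the proof of Lemma~\ref{lemma: z lim bound}, now in the slow timescale. Here $y^{\lim}$ is the solution of \eqref{def: slow ylim}, $\dot y = g_\infty(\lambda_\infty(y), y)$, with $y^{\lim}(0) = \tilde y^{\lim}(0)$.

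The first step is to bound the initial condition. By Definition~\ref{definition: slow z tilde n}, $r_n \ge \norm{z^{max}_{m(T_n)}} \ge \norm{\bar z(T_n)}$, so $\norm{\tilde y_n(0)} \le 1$ for every $n$. Since $\tilde y_{n_k}(0)\to\tilde y^{\lim}(0)$ by Lemma~\ref{lemma: slow three convergence}, it follows that $\norm{y^{\lim}(0)}\le 1$.

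The second step is to write the integral form
\begin{align}
y^{\lim}(t) = y^{\lim}(0) + \int_0^t g_\infty(\lambda_\infty(y^{\lim}(s)), y^{\lim}(s))\,ds
\end{align}
and to add and subtract $g_\infty(\lambda_\infty(0),0)=g_\infty(0,0)$, using $\lambda_\infty(0)=0$.

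The third step uses the Lipschitz bounds. By Lemma~\ref{lemma: h Lipschitz} with $c=\infty$, together with the Lipschitz constant $L_\lambda$ of $\lambda_\infty$, we get
\begin{align}
\norm{g_\infty(\lambda_\infty(y),y)-g_\infty(0,0)}\le L\norm{(\lambda_\infty(y),y)}\le L(L_\lambda+1)\norm{y}.
\end{align}
For the constant term we need $\norm{g_\infty(0,0)}$. Uniform convergence of $g_c\to g_\infty$ on compacts (Assumption~\ref{assumption: lim h,g uniformly convergent}) gives $g_\infty(0,0)=\lim_c g(0,0)/c=0$. Even without this, \eqref{eq: h c c_H} bounds it by $C_G/T$.

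Combining these gives
\begin{align}
\norm{y^{\lim}(t)}\le 1 + C_G + L(L_\lambda+1)\int_0^t\norm{y^{\lim}(s)}\,ds .
\end{align}
Applying Gronwall (Theorem~\ref{theorem: Gronwall}) on $[0,T+1)$ yields
\begin{align}
\sup_t\norm{y^{\lim}(t)}\le (1+C_G)e^{L(L_\lambda+1)(T+1)} \doteq C .
\end{align}
This constant is independent of $t$ and of the subsequence.

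The only mildly delicate point is the first step. We must confirm that the slow-timescale rescaling still keeps $\norm{\tilde y_n(0)}\le1$, and that the limit $\tilde y^{\lim}(0)$ exists along $\{n_k\}$. Both follow directly from the definition of $r_n$ via the running max and from Lemma~\ref{lemma: slow three convergence}. Everything else is routine.
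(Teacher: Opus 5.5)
Your proposal is correct and follows the paper's proof essentially line for line. Both use the integral form of the limiting slow ODE, subtract $g_\infty(0)$, apply the Lipschitz bounds on $g_\infty$ and $\lambda_\infty$ (with $\lambda_\infty(0)=0$) to obtain the factor $L(L_\lambda+1)$, use $\norm{y^{\lim}(0)}\le 1$, and conclude with Gronwall on $[0,T+1)$. Your justification of $\norm{y^{\lim}(0)}\le 1$ from the running-max rescaling, and your observation that $g_\infty(0,0)=\lim_{c\to\infty} g(0,0)/c=0$, simply make explicit what the paper leaves as unspecified constants $C_0$ and $C$.
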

\begin{proof}
$\forall t \in [0, T+1)$,
\begin{flalign}
&\norm{y^{\lim}(t)}&  \\
=&  \norm{ y^{\lim}(0) + \int_0^t  g_{\infty}(\lambda_{\infty}(y^{\lim}(s)), y^{\lim}(s))ds}&   \\
\leq & \norm{y^{\lim}(0)}  + \norm{\int_0^t   g_{\infty}(\lambda_{\infty}(y^{\lim}(s)), y^{\lim}(s))ds}& \\
=&  \norm{y^{\lim}(0)}  + \norm{\int_0^t  \left[ g_{\infty}(\lambda_{\infty}(y^{\lim}(s)), y^{\lim}(s))  -  g_{\infty}(0) \right]ds + \int_0^t  g_{\infty}(0) ds }& \\
\leq & \norm{y^{\lim}(0)}  + \int_0^t \norm{g_{\infty}(\lambda_{\infty}(y^{\lim}(s)), y^{\lim}(s))  -  g_{\infty}(0) }ds + \int_0^t  \norm{ g_{\infty}(0) }ds & \\
\leq & \norm{y^{\lim}(0)}  +  \int_0^t L\norm{\lambda_{\infty}(y^{\lim}(s))} + L\norm{y^{\lim}(s)} ds + \int_0^t  \norm{ g_{\infty}(0) }ds&  \explain{by Lemma \ref{lemma: h Lipschitz}} \\
\leq &1  +  \int_0^t L \cdot L_{\lambda} \norm{y^{\lim}(s)} +  L\norm{y^{\lim}(s)} ds + \int_0^t  \norm{ g_{\infty}(0) }ds& \\
\leq &1  +  \int_0^t L(L_{\lambda}+1)\norm{y^{\lim}(s)} ds + (T+1)\norm{ g_{\infty}(0) } \\
\leq &1  +  \int_0^t L(L_{\lambda}+1)\norm{y^{\lim}(s)} ds + C_0 \\
\leq &\left[1 +  C_0  \right] e^{\int_0^t L(L_{\lambda}+1) ds }& \explain{by Gronwall inequality in Theorem \ref{theorem: Gronwall}}\\
\leq &\left[1 + C_0 \right] e^{L(L_{\lambda}+1)(T+1)} &  \\ 
\leq& C. & 
\end{flalign}

\end{proof}

The following shows a uniform convergence on the sequence $g_{r_{n_k}}$.

\begin{lemma}\label{lemma: h c y lim uniform}
$\lim_{k \to \infty}g_{r_{n_k}}(\lambda_{\infty}(y^{\lim}(t)), y^{\lim}(t)) = g_{\infty}(\lambda_{\infty}(y^{\lim}(t)), y^{\lim}(t))$ uniformly in $t\in [0, T+1)$.
\end{lemma}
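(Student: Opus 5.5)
This is the slow-timescale analogue of Lemma~\ref{lemma: h c z lim uniform}, and I would prove it the same way: reduce it to uniform convergence of $g_c$ to $g_\infty$ on a compact set. There are three steps.

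First, show the whole trajectory lives in a fixed compact set. By Lemma~\ref{lemma: y lim bound}, $\sup_{t\in[0,T+1)}\norm{y^{\lim}(t)} \le C$. Since $\lambda_\infty$ is Lipschitz with constant $L_\lambda$ and $\lambda_\infty(0)=0$, we get $\norm{\lambda_\infty(y^{\lim}(t))} \le L_\lambda C$. Hence every point $(\lambda_\infty(y^{\lim}(t)), y^{\lim}(t))$, for $t \in [0,T+1)$, lies in
\begin{align}
\mathcal{K} \doteq \qty{(x,y)\in\R^{d_1+d_2} : \norm{x}\le L_\lambda C,\ \norm{y}\le C},
\end{align}
which is compact and does not depend on $t$ or $k$.

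Second, invoke Assumption~\ref{assumption: lim h,g uniformly convergent}: $g_c \to g_\infty$ uniformly on compact subsets as $c\to\infty$. This needs $r_{n_k}\to\infty$. In the slow timescale, $r_n$ is the running maximum of $\max\{1,\norm{z_m}\}$, so it is monotone, and we are working under the standing hypothesis $\sup_n r_n=\infty$. Therefore $r_n\to\infty$, and so does every subsequence $r_{n_k}$. Then for any $\epsilon>0$ there is $k_0$ such that for all $k\ge k_0$,
\begin{align}
\sup_{v\in\mathcal{K}}\norm{g_{r_{n_k}}(v)-g_\infty(v)}\le\epsilon .
\end{align}
Evaluating at $v=(\lambda_\infty(y^{\lim}(t)),y^{\lim}(t))\in\mathcal{K}$ gives the claimed convergence, uniformly in $t\in[0,T+1)$.

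The only point that needs care is that the bound $C$ is genuinely uniform in $t$. That is exactly what Lemma~\ref{lemma: y lim bound} supplies, via Gronwall with the finite initial bound $\norm{y^{\lim}(0)}\le 1$ (since $\norm{\tilde y_n(0)}\le 1$). Nothing else is required.
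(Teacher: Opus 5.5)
Your proposal is correct and follows essentially the same route as the paper. Both bound $y^{\lim}$ uniformly in $t$ via Lemma~\ref{lemma: y lim bound}, use the Lipschitzness of $\lambda_\infty$ to place $(\lambda_\infty(y^{\lim}(t)), y^{\lim}(t))$ in a fixed compact set, and then apply the uniform-on-compacts convergence of $g_c$ to $g_\infty$ from Assumption~\ref{assumption: lim h,g uniformly convergent}. Your explicit check that $r_{n_k}\to\infty$ and your $L_\lambda C$-sized compact set (using $\lambda_\infty(0)=0$) tidy up details the paper's proof leaves implicit, where it reuses the same constant $C$ for the pair.
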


\begin{proof}
By Assumption \ref{assumption: lim h,g uniformly convergent}, $\lim_{k \to \infty}g_{r_{n_k}}(v) = g_{\infty}(v)$ uniformly in a compact set $\qty{v \vert v\in \R^{d_1+d_2}, \norm{v}\leq C }$. 
By Lemma \ref{lemma: y lim bound}, $\qty{y^{\lim}(t) \vert t \in [0,T+1)}  \subseteq \qty{v \vert v\in \R^d, \norm{v}\leq C }$. Additionally, $\lambda_{\infty}$ is Lipschitz, so $\qty{(\lambda_{\infty}(y^{\lim}(t)), y^{\lim}(t)) \vert t \in [0,T+1)}  \subseteq \qty{v \vert v\in \R^{d_1+d_2}, \norm{v}\leq C }$.

Therefore, $\lim_{k \to \infty}g_{r_{n_k}}(\lambda_{\infty}(y^{\lim}(t)), y^{\lim}(t)) = g_{\infty}(\lambda_{\infty}(y^{\lim}(t)), y^{\lim}(t))$  uniformly in $\qty{y^{\lim}(t) \vert t \in [0,T+1)}$ and in $t \in [0,T+1)$.
\end{proof}

The next lemma gives us the uniform convergence of $y_{n_k}$ we desire.

\begin{lemma}\label{lemma: y n limit}
$\forall t \in [0, T+1)$, we have
\begin{align}
\lim_{k \to \infty} y_{n_k}(t)   =   y^{\lim}(t).
\end{align}
Moreover, the convergence is uniform in $t$ on $[0, T+1)$.
\end{lemma}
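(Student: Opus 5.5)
We argue in direct analogy with Lemma~\ref{lemma: z n limit}, now in the slow timescale with the ODE \eqref{eq: slow y n}. We compare
\begin{align}
y_{n_k}(t) = \tilde{y}_{n_k}(0) + \int_0^t g_{r_{n_k}}(\lambda_\infty(y_{n_k}(s)), y_{n_k}(s))\,ds
\end{align}
with the integral form \eqref{eq: y lim t} of $y^{\lim}(t)$. The aim is a Gronwall inequality for $e_k(t) \doteq \norm{y_{n_k}(t) - y^{\lim}(t)}$ whose constant term vanishes as $k\to\infty$, uniformly in $t\in[0,T+1)$.

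First, fix $\delta>0$. By \eqref{eq: hat y r lim} there is $k_1$ such that $\norm{\tilde{y}_{n_k}(0) - \tilde{y}^{\lim}(0)} \le \delta$ for $k\ge k_1$; note that $y^{\lim}(0) = \tilde{y}^{\lim}(0)$. By Lemma~\ref{lemma: h c y lim uniform} there is $k_2$ such that, for $k \ge k_2$ and all $t\in[0,T+1)$,
\begin{align}
\norm{g_{r_{n_k}}(\lambda_\infty(y^{\lim}(t)),y^{\lim}(t)) - g_\infty(\lambda_\infty(y^{\lim}(t)),y^{\lim}(t))}\le\delta.
\end{align}

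Next, split the integrand difference by adding and subtracting $g_{r_{n_k}}(\lambda_\infty(y^{\lim}(s)),y^{\lim}(s))$. The first piece is handled by the Lipschitz bound \eqref{eq: g c L} for $g_c$ (valid uniformly in $c\ge1$, by Lemma~\ref{lemma: h Lipschitz}) combined with the Lipschitz constant $L_\lambda$ of $\lambda_\infty$. Together these give
\begin{align}
\norm{g_{r_{n_k}}(\lambda_\infty(y_{n_k}(s)), y_{n_k}(s)) - g_{r_{n_k}}(\lambda_\infty(y^{\lim}(s)), y^{\lim}(s))} \le L(L_\lambda+1)\, e_k(s).
\end{align}
The second piece is at most $\delta$ by the choice of $k_2$.

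Combining, for $k\ge\max\{k_1,k_2\}$ we get $e_k(t)\le \delta + (T+1)\delta + L(L_\lambda+1)\int_0^t e_k(s)ds$. Gronwall's inequality (Theorem~\ref{theorem: Gronwall}) then yields $e_k(t)\le (T+2)\delta\, e^{L(L_\lambda+1)(T+1)}$ for all $t\in[0,T+1)$. Since $\delta$ is arbitrary, the convergence is uniform.

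The only delicate point is that Lemma~\ref{lemma: h c y lim uniform} relies on $y^{\lim}$ staying in a compact set, which is supplied by Lemma~\ref{lemma: y lim bound}. Note that $y_{n_k}$ itself need not be bounded a priori, because the Lipschitz argument only involves the difference $e_k$. Continuity of $e_k$, required for Gronwall, holds because both functions are ODE solutions.
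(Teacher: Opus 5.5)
Your proof is correct and follows essentially the same route as the paper's. You take $k_1$ from the uniform convergence of $\tilde{y}_{n_k}$ to $\tilde{y}^{\lim}$ at $t=0$ and $k_2$ from Lemma~\ref{lemma: h c y lim uniform}, add and subtract $g_{r_{n_k}}(\lambda_\infty(y^{\lim}(s)),y^{\lim}(s))$, bound the first piece with the Lipschitz constant $L(L_\lambda+1)$, and close with Gronwall. Your final bound $(T+2)\delta\, e^{L(L_\lambda+1)(T+1)}$ is in fact slightly more careful than the paper's $(1+T)\delta\, e^{L(L_\lambda+1)T}$, which does not account for $t$ ranging up to $T+1$.
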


\begin{proof}
By \eqref{eq: hat y r lim}, $\forall \delta > 0$, there exists a $k_1$ such that $\forall k \geq k_1$, $\forall t \in [0,T+1)$,
\begin{align}
\norm{\tilde{y}_{n_k}(t)  -     \tilde{y}^{\lim}(t)} \leq \delta .\label{eq: y lim epsilion 1}
\end{align}
By Lemma \ref{lemma: h c y lim uniform}, there exists a $k_2$ such that $\forall k \geq k_2$, $\forall t \in [0,T+1)$,
\begin{align}
\norm{g_{r_{n_k}}(\lambda_{\infty}(y^{\lim}(t)), y^{\lim}(t))  -   g_{\infty}(\lambda_{\infty}(y^{\lim}(t)), y^{\lim}(t))}  \leq \delta. \label{eq: y lim epsilion 2}  
\end{align}
$\forall k \geq \max\qty{ k_1, k_2}$, $\forall t \in [0,T+1)$
\begin{align}
& \norm{y_{n_k}(t)  - y^{\lim}(t) } \\
=& \norm{\tilde{y}_{n_k}(0) + \int_0^t g_{r_{n_k}}(\lambda_{\infty}(y_{n_k}(s)),y_{n_k}(s))ds  -     \tilde{y}^{\lim}(0)  -  \int_0^t g_{\infty}(\lambda_{\infty}(y^{\lim}(s)),y^{\lim}(s))ds} \\
\leq&  \norm{\tilde{y}_{n_k}(0)  -     \tilde{y}^{\lim}(0)}  + \norm{\int_0^t g_{r_{n_k}}(\lambda_{\infty}(y_{n_k}(s)),y_{n_k}(s))ds -  \int_0^t g_{\infty}(\lambda_{\infty}(y^{\lim}(s)), y^{\lim}(s))ds} \\
\leq& \delta + \norm{\int_0^t g_{r_{n_k}}(\lambda_{\infty}(y_{n_k}(s)),y_{n_k}(s)) - g_{\infty}(\lambda_{\infty}(y^{\lim}(s)), y^{\lim}(s))ds} \explain{by \eqref{eq: y lim epsilion 1}} \\
\leq& \delta + \int_0^t  \norm{g_{r_{n_k}}(\lambda_{\infty}(y_{n_k}(s)),y_{n_k}(s)) - g_{r_{n_k}}(\lambda_{\infty}(y^{\lim}(s)), y^{\lim}(s)) }ds \\
&+ \int_0^t \norm{ g_{r_{n_k}}(\lambda_{\infty}(y^{\lim}(s)), y^{\lim}(s))  -   g_{\infty}(\lambda_{\infty}(y^{\lim}(s)), y^{\lim}(s))} ds \\
\leq& \delta + L\int_0^t  \norm{(\lambda_{\infty}(y_{n_k}(s)),y_{n_k}(s)) - (\lambda_{\infty}(y^{\lim}(s)), y^{\lim}(s)))}ds \\
&+ \int_0^t \norm{ g_{r_{n_k}}(\lambda_{\infty}(y^{\lim}(s)), y^{\lim}(s))  -   g_{\infty}(\lambda_{\infty}(y^{\lim}(s)), y^{\lim}(s))} \explain{by Lemma \ref{lemma: h Lipschitz}} \\
\leq& \delta + t\delta +  L (L_{\lambda}+1) \int_0^t   \norm{y_{n_k}(s) - y^{\lim}(s)  }ds \explain{by \eqref{eq: y lim epsilion 2}} \\
\leq& (\delta + t\delta)e^{ L (L_{\lambda}+1)t}  \explain{by Gronwall inequality in Theorem \ref{theorem: Gronwall}}\\
\leq& (\delta + T\delta)e^{ L (L_{\lambda}+1)T}, \label{eq: y y lim difference}
\end{align}
which completes the proof.
\end{proof}

The next lemma applies the Gronwall inequality to control the growth of the iterates in the slow timescale.

\begin{lemma}\label{lemma: slow gronwall}
There are some constants $A, B$ such that
\begin{align}
    \norm{\ymax_l} \leq A \norm{\ymax_{m(T_n)}} + B
\end{align}
where $m(T_n) \leq l \leq m(T_{n+1})$. As a consequence, there are constants $C$ and $D$ such that 
\begin{align}
    \norm{z^{max}_l} \leq C \norm{z^{max}_{m(T_n)}} + D. \label{eq: slow gronwall z}
\end{align}
This implies that $\norm{\tilde{z}_n(t)}$ is bounded since for all $n$, $\norm{\tilde{z}_n(0)} \leq 1$.

\end{lemma}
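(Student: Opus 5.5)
We prove the bound on $\norm{\ymax_l}$ by a discrete Gronwall argument on the slow iterates over one slow period $[T_n, T_{n+1}]$, using Lemma~\ref{thm: x stability} to control the fast iterates inside $G$. Throughout, $l$ ranges over $m(T_n) \le l \le m(T_{n+1})$ and we write $M_n \doteq \norm{\ymax_{m(T_n)}}$.

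\textbf{Step 1: a recursive bound on $y$.} Since $y_{j+1} = y_j + \beta(j) G(x_j, y_j, W_{j+1})$, Assumption~\ref{assumption: H Lipschitz} gives, for each $l$ in the period,
\begin{align}
\norm{y_l} \le \norm{y_{m(T_n)}} + \sum_{j=m(T_n)}^{l-1}\beta(j) L(W_{j+1})\bigl(\norm{x_j}+\norm{y_j}\bigr) + \norm{\sum_{j=m(T_n)}^{l-1}\beta(j) G(0,W_{j+1})}.
\end{align}
We then bound the three pieces as follows.
\begin{itemize}
\item The last sum is at most $C_G$ by the $\beta$-version of \eqref{eq: H c c_H}/\eqref{eq: G c c_G} with $c=1$.
\item By Lemma~\ref{thm: x stability}, $\norm{x_j} \le K(1+\norm{\ymax_j})$.
\item Trivially, $\norm{y_j} \le \norm{\ymax_j}$.
\end{itemize}

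\textbf{Step 2: closing the recursion with the running max.} Set $u_l \doteq \norm{\ymax_l}$. Taking the maximum over indices up to $l$ on the left (which only increases the right side, since it is monotone in $l$) gives
\begin{align}
u_l \le M_n + K C_G + C_G + (K+1)\sum_{j=m(T_n)}^{l-1}\beta(j)L(W_{j+1})\, u_j .
\end{align}
Here $\sum_{j} \beta(j) L(W_{j+1}) K \le K C_G$ by the $\beta$-version of \eqref{eq: alpha L bound C_H}. The discrete Gronwall inequality (Theorem~\ref{theorem: discrete Gronwall}), together with $\sum_j\beta(j)L(W_{j+1})\le C_G$ over one slow period, then yields
\begin{align}
u_l \le \bigl(M_n + (K+1)C_G\bigr)e^{(K+1)C_G},
\end{align}
so we may take $A = e^{(K+1)C_G}$ and $B = (K+1)C_G e^{(K+1)C_G}$. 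The indices $l \le m(T_n)$ are handled trivially, since there $u_l \le M_n$.

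\textbf{Step 3: the consequence for $z^{max}$.} Apply Lemma~\ref{thm: x stability} again to get $\norm{z_j} \le \norm{x_j}+\norm{y_j} \le (K+1)(1+\norm{\ymax_j})$. Hence
\begin{align}
\norm{z^{max}_l} \le (K+1)(1 + A M_n + B).
\end{align}
Since $M_n \le \norm{z^{max}_{m(T_n)}}$, this gives \eqref{eq: slow gronwall z} with $C=(K+1)A$ and $D=(K+1)(1+B)$. Dividing by $r_n \ge \max\{1,\norm{z^{max}_{m(T_n)}}\}$ shows $\norm{\tilde z_n(t)} \le C + D$ for all $n$ and all $t \in [0,T+1)$, using $\norm{\tilde z_n(0)}\le 1$.

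\textbf{Main obstacle.} The delicate point is that the sums $\sum\beta(j)L(W_{j+1})$ and $\norm{\sum\beta(j)G(0,W_{j+1})}$ must be uniformly bounded over slow periods of length about $T$ (and $T+1$). This needs the $\beta$-analogue of Lemma~\ref{lemma: alpha L(y) bound} in the slow timescale, obtained from the $\beta$-part of Assumption~\ref{assumption: lln} exactly as in Lemma~\ref{lemma: bounded-h0 inf y}. A second point to check is that $K$ is sample-path dependent but independent of $n$, which makes $A$, $B$, $C$, $D$ constants on the fixed path.
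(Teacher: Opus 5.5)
Your proof is correct and follows the paper's route: a discrete Gronwall argument over one slow period, with $\norm{x_j}$ controlled by $K(1+\norm{\ymax_j})$ via Lemma~\ref{thm: x stability}, the running max closed using monotonicity of the right-hand side, and the $z^{max}$ bound obtained from the same lemma. Your bookkeeping is a bit cleaner than the paper's. The paper bounds the sum of norms $\sum_i \beta(i)\norm{G(0,W_{i+1})}$ using an estimate that only controls the norm of the partial sum, and places the resulting constant outside the exponential; you keep this term as a norm of a partial sum and fold it into the Gronwall constant (and, as you note, the claim for all $t\in[0,T+1)$ needs the same estimate on windows of length $T+1$ rather than a single period).
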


\begin{proof}
    We know that
    \begin{align}
        \norm{y_l} &= \norm{y_{m(T_n)} + \sum_{i=m(T_n)}^l \beta(i)G(x_i, y_i, W_{i+1})} \\
        &\leq \norm{\ymax_n} + \sum_{i = m(T_n)}^l \beta(i)\norm{G(x_i, y_i, W_{i+1})} \label{eq: ymax broken up}
    \end{align}
    Since the second term of \eqref{eq: ymax broken up} is monotonically increasing in $l$, we know that the following holds:
    \begin{align}
        \norm{\ymax_l}   &\leq \norm{\ymax_{m(T_{n})}}+ \sum_{i = m(T_n)}^l \beta(i)\norm{G(x_i, y_i, W_{i+1})} \\
        &\leq  \norm{\ymax_{m(T_{n})}}+\sum_{i = m(T_n)}^l \beta(i)L(W_{i+1})(\norm{x_i}+\norm{y_i})  + \sum_{i = m(T_n)}^l \beta(i) G(0, W_{i+1}) \\
        &\leq  \norm{\ymax_{m(T_{n})}}+\sum_{i = m(T_n)}^l \beta(i)L(W_{i+1})(\norm{x_i}+\norm{y_i})  +   E \explain{\ref{eq: h property H bounded}} \\
        &\leq \norm{\ymax_{m(T_{n})}}+ \sum_{i = m(T_n)}^l \beta(i)L(W_{i+1})( K(1+\norm{\ymax_i})+\norm{\ymax_i}) +   E \explain{Lemma \ref{thm: x stability}} \\
        &\leq \norm{\ymax_{m(T_{n})}}+\sum_{i = m(T_n)}^l \beta(i)L(W_{i+1})( K+(K+1)\norm{\ymax_i}) +   E \\
        &\leq \norm{\ymax_{m(T_{n})}}+K \sum_{i = m(T_n)}^l \beta(i)L(W_{i+1}) + (K+1) \sum_{i = m(T_n)}^l \beta(i)L(W_{i+1})\norm{\ymax_i}  +   E \\
        &\leq \norm{\ymax_{m(T_{n})}} + K C' + (K+1) \sum_{i = m(T_n)}^l \beta(i)L(W_{i+1})\norm{\ymax_i} \explain{\ref{eq: alpha L bound C_H}} +   E \\
        &\leq \left(\norm{\ymax_{m(T_{n})}}+KC' \right)e^{(K+1) \sum_{i = m(T_n)}^l \beta(i)L(W_{i+1})} \explain{by discrete Grownwall inequality in Appendix \ref{theorem: discrete Gronwall}} +   E\\
        &\leq \left(\norm{\ymax_{m(T_{n})}} + KC' \right)e^{(K+1)C'} \explain{\ref{eq: alpha L bound C_H}} +  E
    \end{align}

    Now we show \eqref{eq: slow gronwall z}:
    \begin{align}
        \norm{z^{max}_l} &\leq \norm{\ymax_l} + K (1+\norm{\ymax_l}) \explain{Lemma \ref{thm: x stability}} \\
        &\leq K + (K+1)(A \norm{\ymax_{m(T_n)}} + B) \\
        &\leq K + (K+1)(A \norm{z^{max}_{m(T_n)}} + B)
    \end{align}
\end{proof}


\end{document}